\let\cite\citep
\newtheorem{theorem}{\textbf{Theorem}}
\newtheorem{assumption}{\textbf{Assumption}}
\newcommand{\bmbm}[1]{\bm{\bm{\bm{\bm{#1}}}}}
\newcolumntype{C}[1]{>{\centering\arraybackslash}m{#1}}
\DeclareMathOperator*{\argmax}{arg\,max}
\DeclareMathOperator*{\argmin}{arg\,min}
\newcommand{\C}{\boldsymbol{C}}
\newcommand{\W}{\boldsymbol{W}}
\newcommand{\x}{\boldsymbol{x}}
\newcommand{\X}{\boldsymbol{X}}
\newcommand{\z}{\boldsymbol{z}}
\newcommand{\Z}{\boldsymbol{Z}}
\newcommand{\bfzero}{\boldsymbol{0}}
\newcommand{\eye}{\boldsymbol{I}}
\newcommand{\bftheta}{\boldsymbol{\theta}}
\newcommand{\bfphi}{\boldsymbol{\phi}}
\newcommand{\bfalpha}{\boldsymbol{\alpha}}
\newcommand{\bfvarepsilon}{\boldsymbol{\varepsilon}}
\newcommand{\bfbeta}{\boldsymbol{\beta}}
\newcommand{\bfmu}{\boldsymbol{\mu}}
\newcommand{\bfSigma}{\boldsymbol{\Sigma}}
\newcommand{\multiline}[1]{%
  \begin{tabularx}{\dimexpr\linewidth-\ALG@thistlm}[t]{@{}X@{}}
    #1
  \end{tabularx}
}
\title{Variational Neural Stochastic Differential Equations with Change Points}
\author{\name Yousef El-Laham \email yousef.el-laham@jpmchase.com \\
      \addr J.P. Morgan AI Research 
      \AND
      \name Zhongchang Sun \email zhongcha@buffalo.edu \\
      \addr University at Buffalo
      \AND
      \name Haibei Zhu \email haibei.zhu@jpmchase.com \\
      \addr J.P. Morgan AI Research
      \AND
      \name Tucker Balch \email tucker.balch@jpmchase.com \\
      \addr J.P. Morgan AI Research 
      \AND
      \name Svitlana Vyetrenko \email svitlana.vyetrenko@jpmchase.com \\
      \addr J.P. Morgan AI Research
      \AND}
\begin{document}

\maketitle

\begin{abstract}
In this work, we explore modeling change points in time-series data using neural stochastic differential equations (neural SDEs). We propose a novel model formulation and training procedure based on the variational autoencoder (VAE) framework for modeling time-series as a neural SDE. Unlike existing algorithms training neural SDEs as VAEs, our proposed algorithm only necessitates a Gaussian prior of the initial state of the latent stochastic process, rather than a Wiener process prior on the entire latent stochastic process. We develop two methodologies for modeling and estimating change points in time-series data with distribution shifts. Our iterative algorithm alternates between updating neural SDE parameters and updating the change points based on either a maximum likelihood-based approach or a change point detection algorithm using the sequential likelihood ratio test. We provide a theoretical analysis of this proposed change point detection scheme. Finally, we present an empirical evaluation that demonstrates the expressive power of our proposed model, showing that it can effectively model both classical parametric SDEs and some real datasets with distribution shifts. 
\end{abstract}

\section{Introduction}
\label{sec:intro}
Stochastic differential equations (SDEs) are a class of probabilistic models frequently used to model continuous-time stochastic processes \cite{lelievre2016parial, soboleva2003population,huillet2007on}. They have a broad range of applications in fields such as quantitative finance, physics, biology, and engineering \cite{sauer2011numerical, browning2020identifiability}. SDEs comprise two main components: a \emph{drift} function, which models the deterministic evolution of the stochastic process over time, and a \emph{diffusion} function, which captures the stochastic component of the process. In traditional SDE modeling, domain experts design {simple} parametric models for the drift and diffusion functions to encapsulate the key properties of the system of interest. Model parameters are then learned using statistical estimation approaches, such as the method of moments estimation or maximum likelihood estimation \cite{casella2024statistical, kay1993statistical}. While this SDE learning process is feasible for a variety of applications, such as population ecology or mathematical finance, it can be challenging to apply in more complex systems. Recently, the concept of neural SDEs was introduced by integrating neural networks with SDEs \cite{li2020scalable, tzen2019neural, hodgkinson2020stochastic}. This offers a more adaptable approach to modeling real-world time-series, eliminating the need to define the structure of the drift and diffusion functions a prior.

Following the introduction of neural ordinary differential equations (neural ODEs), a wealth of research has emerged on neural SDEs to model the dynamics of a stochastic process $\{\X_t\}_{t\in [0, T]}$. In \cite{kidger2021neural}, a connection was established between neural SDEs and Wasserstein generative adversarial networks (W-GANs), demonstrating that certain types of neural SDEs can be interpreted and trained within an infinite-dimensional GAN framework. An alternative approach to training neural SDEs involves the use of the variational autoencoder (VAE) framework, which has been adopted in various studies \cite{hasan2021identifying, li2020scalable}. The VAE framework was introduced in \cite{hasan2021identifying} to learn latent SDEs from noisy observations, assuming a prior distribution for the latent variable at each time step. In \cite{li2020scalable}, the training of SDEs as VAEs was also explored, assuming a prior over a latent stochastic process characterized by an SDE with a diffusion term for tractability of the evidence lower bound (ELBO). However, both approaches assume a prior over the entire latent stochastic process $\{\Z_t\}_{t\in [0, T]}$, which may be too strong an assumption, as the training data may not always conform to this prior. Therefore, in this paper, we propose a new framework for training SDEs as VAEs that does not require such a strong prior in the latent space.

While much of the existing research on neural SDEs has primarily focused on time-series modeled by a single SDE, the underlying dynamics of real-world time-series data often surpass the complexity that a single model can capture. Scenarios where the dynamics of time-series abruptly change over time, such as the distributional shifts in stock prices during the COVID period, present significant challenges for existing approaches. In training neural SDEs, it's often assumed that the drift and diffusion terms exhibit Lipschitz continuity, a requirement necessary to ensure the convergence of SDE solvers \cite{kidger2021neural}. However, this assumption can be restrictive, as a single SDE with Lipschitz continuous drift and diffusion terms may struggle to accurately model time-series with sharp distributional shifts. This limitation motivates our investigation into the problem of change point detection for neural SDEs. With the detected change point, the time-series can be further modeled using multiple SDEs conditioned on the occurrence of a change point. Similar work in this line of research includes the previously proposed neural jump SDE \cite{jia2019neural}, which augments the neural ODE model with a temporal point process to model sharp changes in the ODE dynamics, without considering the stochastic nature (i.e., diffusion) of the time-series. In \cite{sun2024neural}, a neural SDE model with change points is proposed based on the W-GAN framework; however, since the W-GAN framework is based on an implicit generative model, it is difficult to derive theoretical results regarding the convergence of the training algorithm. 

In this paper, we introduce a framework for training SDEs as VAEs and develop an algorithm for change point detection in neural SDEs based on this VAE framework. Specifically, we propose an iterative algorithm for change point detection under unknown SDE dynamics, which alternately updates the change point estimate and the neural SDE model parameters. The algorithm is summarized in two steps: (1) \emph{Update model parameters}: Given the current change point estimate, we train different SDE models based on our proposed VAE framework; and (2) \emph{Update the change points}: Given the current model parameters, we run a likelihood ratio test sequentially to refine the change point estimates. Our specific contributions are as follows:
\begin{enumerate}
    \item We propose a novel framework to train SDEs as VAEs. Unlike existing approaches, which require a prior over the latent stochastic process $\{\Z_t\}_{t\in [0, T]}$, our formulation only necessitates specifying a prior over the initial state $\z_0$;
    \item Leveraging our proposed VAE framework, we develop two approaches for learning change points in time-series model as latent neural SDEs: a method based on the idea of maximum likelihood estimation and a change detection algorithm based on the sequential likelihood ratio test. We utilize the Euler-Maruyama approximation to SDE solutions and apply suitable stochastic filtering methodologies to obtain an unbiased estimator of both the marginal likelihood of the change point and the test statistic in the sequential likelihood ratio test;
    \item We develop an iterative algorithm to jointly learn the SDE model parameters and the unknown change points. Under certain conditions, we demonstrate that our iterative algorithm achieves performance guarantees regarding the estimation accuracy;
    \item Lastly, we demonstrate the generative power of the neural SDE model on our proposed distributional shift generation benchmark datasets, showing that our model outperforms state-of-the-art deep generative models across a variety of metrics.
\end{enumerate}

\section{Problem Formulation}
Let $\W=\{\W_t\}_{t\in [0, T]}$ denote a $d_w$-dimensional Brownian motion with admissible filtration $\mathbb{F}=({\cal F}_t)_{t\in [0, T]}$ on the interval $[0, T]$. This work is concerned with modeling the distribution of an $\mathbb{R}^{d_x}$-valued continuous-time stochastic process $\X=\{\X_t\}_{t\in [0, T]}$ defined on the filtered probability space $(\Omega, {\cal F}, \mathbb{F}, \mathbb{P})$, which is assumed to be the solution of an SDE of the following form:
\begin{align}\label{eq:formulation}
    d\boldsymbol{X}_t = f(t, \X_t)dt + g(t, \X_t) d\W_t, \quad t\in(0, T]
\end{align}
where $\X_0 \sim \mu_0$ is the initial state following the initial distribution $\mu_0$, $f:[0, T]\times \mathbb{R}^{d_x}\rightarrow\mathbb{R}^{d_x}$ is called the drift function, and $g:[0, T]\times \mathbb{R}^{d_x}\rightarrow\mathbb{R}^{d_x\times d_w}$ is called the diffusion function. The drift and diffusion functions are typically assumed to satisfy some {regularity} conditions. {Mainly, the drift and diffusion functions are bounded above by a linear function:}
{\begin{align}
    &\|f(t, \x)\| + \|g(t, \x)\| \leq \gamma_1 (1+\|\x\|), \quad \x\in\mathbb{R}^{d_x}, \quad t\in[0, T], \label{as: bounded_linear}
\end{align}}
{and also satisfy a Lipschitz smoothness condition:}
{\begin{align}
    &\|f(t, \x_1)-f(t, \x_2)\| + \|g(t, \x_1)-g(t, \x_2)\| \leq \gamma_2 \|\x_1-\x_2\|, \quad \x_1,\x_2\in\mathbb{R}^{d_x}, \quad  t\in[0, T], \label{as: lipschitz_sde}
\end{align}}
{for some constants $\gamma_1, \gamma_2>0$}. Under these assumptions, the stochastic process $\X$ is said to be a strong solution of the SDE in \eqref{eq:formulation} if it satisfies \eqref{eq:formulation} for each sample path of the Wiener process $\{\W_t\}_{t\in [0, T]}$ and for all $t$ in the defined time interval almost surely {\cite{oksendal2013stochastic}}. {These conditions are also sufficient for proving strong order convergence of the Euler-Maruyama method \cite{kloeden1992stochastic}.} 

Our goal in this work is to learn the drift and diffusion of the SDE defined in \eqref{eq:formulation} given an irregularly sampled time-series $\x_{\rm obs}=(\x_{t_1}, \x_{t_2}, \ldots, \x_{t_K})$, where $t_k\in(0, T]$ for all $k$. An ideal methodology would be robust to potential distribution shifts and could potentially model change points in the time-series (see Figure \ref{fig: example_time_series_with_change}).
\begin{figure}[t]
    \centering
    \includegraphics[width=\linewidth, trim={0cm 7cm 4cm 0cm},clip]{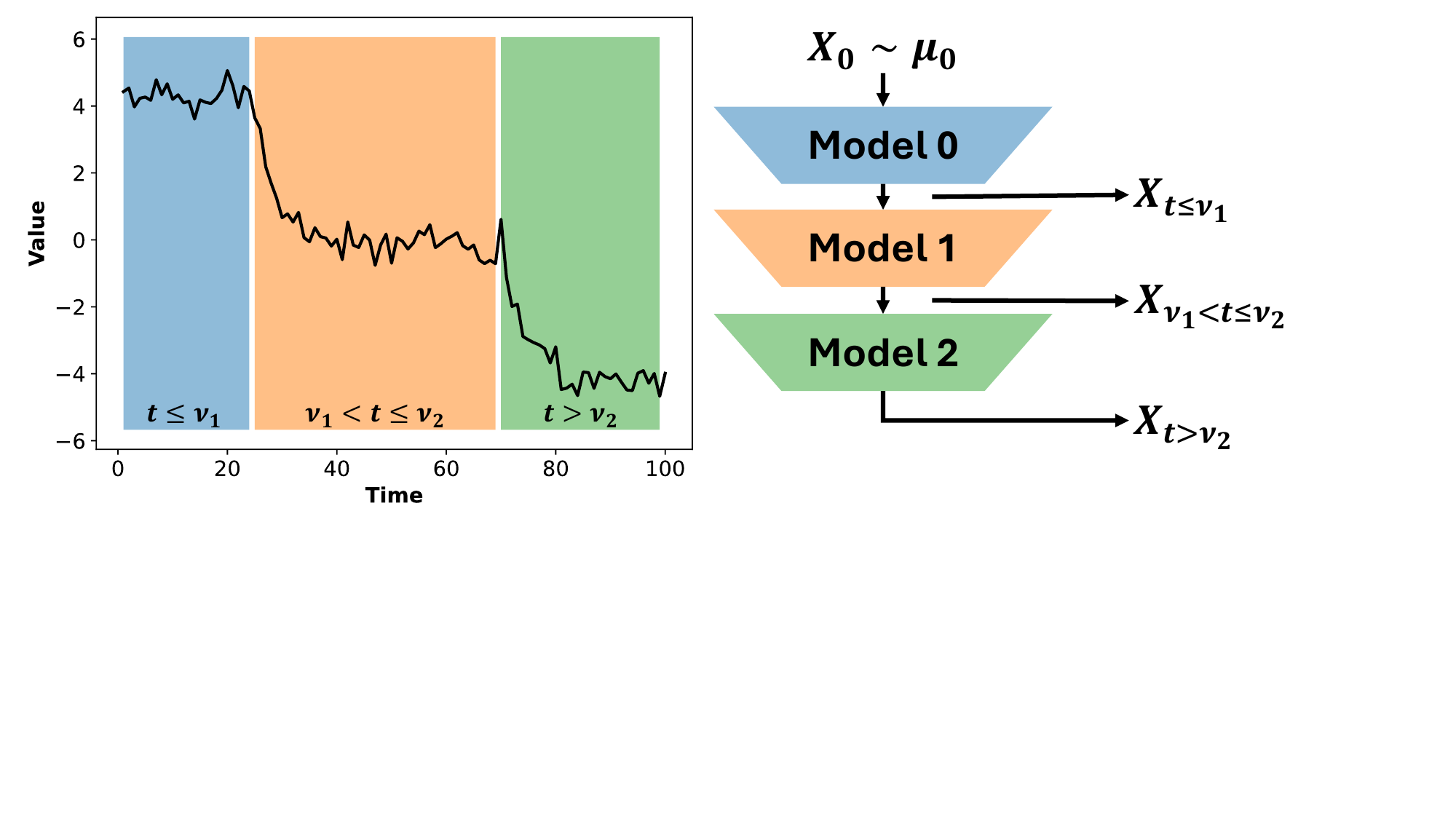}
    \caption{An example of a non-stationary time-series of length $T=100$. Within the highlighted segments, the time-series is stationary and can be easily modeled with generative models, such as neural SDEs. The sharp distributional shifts occurring at the change points complicate the modeling of such a time-series with a single out-of-the-box generative model.}
    \label{fig: example_time_series_with_change}
\end{figure}

\section{Related Work}
Due to the large capacity of neural networks for function approximation, neural SDEs have been proposed to allow for data-driven learning of SDEs.  In neural SDEs, the drift and diffusion are modeled via neural networks, rather than ``simple'' pre-defined parametric functions. Neural SDEs can be trained using the VAE framework \cite{li2020scalable, hasan2021identifying} where it is assumed that there is an underlying latent stochastic process $\{\Z_t\}_{t\in[0, T]}$ with some prior distribution. In the following, we review an existing approach for training neural SDEs using the VAE framework.

\subsection{Neural SDEs under the Variational Autoencoder Framework}
\label{ss: latent_sde}
Training SDEs as VAEs has been studied in \cite{li2020scalable}, where the prior is defined over the {$d_z$-dimensional} latent stochastic process $\Z=\{\Z_t\}_{t\in[ 0, T]}$, which is characterized by an SDE:
\begin{align}
d\Z_t &= f_{\bfalpha}(\Z_t, t)dt + g_{\bfalpha}(\Z_t, t)d\W_t, \quad t\in(0, T]
\end{align}
where {$\z_0\sim \mu_0$} denotes the initial state of $\Z$ with {corresponding} initial distribution {$\mu_{0}$} and $\bfalpha$ denotes a set of hyperparameters. 
The posterior of $\{\Z_t\}_{t\in [0, T]}$ is approximated as the solution of another SDE, which is of the form:
\begin{align}
\z_0 &\sim {q}_{\bfphi}(\z_0|\x_{\rm obs}) \\
d\Z_t &= f_{\bfphi}(\Z_t, t)dt + g_{\bfphi}(\Z_t, t) d\W_t, \quad t\in(0, T],
\end{align}
where $\bfphi$ denotes the parameters of the variational approximation. Given the latent variable $\Z_t$, we assume that the observation $\X_t$ has a distribution characterized by:  
\begin{align}
\X_t = h_{\bftheta}(\Z_t) + \bfvarepsilon_t,
\end{align}
where $\bftheta$ denotes a set of parameters, $\bfvarepsilon_t$ are i.i.d. noise terms usually assumed to be Gaussian distributed and independent of $\Z_t$. Here, the function $h_{\bftheta}$ can be thought of as a decoder that decodes each sampled $\Z_t$ to the mean of the original stochastic process $\X$ sampled at the same time point $\X_t$.

In \cite{li2020scalable}, it is assumed that the diffusion terms for the prior SDE and posterior SDE are the same, i.e., $g_{\bfalpha}(\z_t, t) = g_{\bfphi}(\z_t, t)$. Let $q_{\bfphi}(\z_t|\x_{\rm obs})$ denote the marginal posterior of $\z_t$ for all $t\in[0, T]$. Then, the lower-bound to the marginal likelihood called the ELBO, denoted by $\tilde{\cal E}(\bfphi, \bftheta; \x_{\rm obs})$, can be established as follows:
\begin{align}
\log p_{\bfalpha}(\x_{\rm obs}) &\geq \tilde{\cal E}(\bfphi, \bftheta; \x_{\rm obs}) \\
&\triangleq \mathbb{E}_{q_{\bfphi}}\left[\sum_{k=1}^K \log p_{{\bftheta}}(\x_{t_k}|\z_{t_k}) -\int_{0}^T\frac{1}{2}\|u_{\bfphi}(\z_{t}, t)\|_2^2 dt\right], \label{eq: latent_sde_elbo}
\end{align}
where $u_{\bfphi}(\z_t, t)=g_{\bfphi}^{-1}(\z_t, t)(f_{\bfphi}(\z_t, t)-f_{\bfalpha}(\z_t, t))$. Therefore, the parameters of the neural SDE model can be optimized by maximizing the ELBO. Exact evaluation of $\tilde{\cal E}(\bfphi, \bftheta; \x_{\rm obs})$ is intractable, but a Monte Carlo approximation can be obtained by sampling from the variational approximation:
\begin{equation*}
    \tilde{\cal E}(\bfphi, \bftheta; \x_{\rm obs}) \approx \frac{1}{J}\sum_{j=1}^J \sum_{k=1}^K \log p_{{\bftheta}}(\x_{t_k}|\z_{t_k}^{(j)}) - \frac{1}{J}\sum_{j=1}^J\int_{0}^T\frac{1}{2}\|u_{\bfphi}(\z_{t_k}^{(j)}, t_k)\|_2^2 dt,
\end{equation*}
where $\z^{(j)}=\{\z_{t_1}^{(j)}, \ldots, \z_{t_K}^{(j)}\}$ denotes a sampled trajectory of the stochastic process $\Z$ from the variational approximation and $J$ is the total number of sampled trajectories. {In \cite{li2020scalable}, stochastic gradients are obtained by using the adjoint sensitivity method; although other solvers, such as the Euler-Maruyama solver, can trivially be applied. At inference time, samples from the generative model are obtained by sampling from the latent trajectory from the Wiener process prior, which are then processed by the decoder. } 
{Hereafter, we refer to the approach proposed by \cite{li2020scalable} as the \texttt{LatentSDE} model. }

\subsection{Identifying Change Points with Latent SDEs}
A change point detection scheme based on the aforementioned variational framework was proposed in \cite{ryzhikov2022latent}. The authors propose to utilize a sequential likelihood ratio test (SLRT) to detect changes in a given time-series using a trained SDE model based on the VAE framework. {To elaborate, in their work, a single \texttt{LatentSDE} model is trained to model a given time-series dataset. Once the model is trained, it is used as a means for online change detection using the following derived test statistic approximated with a Monte Carlo average based on samples drawn from the variational posterior:}
{\begin{align}
    \textrm{CCPD}(\x_t) 
    &= \sum_{l=1}^L \log\left(\frac{p(\x_t|t)}{p(\x_t|t-l)}\right), \\
    &= {\sum_{l=1}^L \log\left(\frac{\int p(\x_t|\z_t)q_\phi(\z_{0:t}|\x_{\rm obs})d\z_{0:t}}{\int p(\x_t|\z_{t-l})q_\phi(\z_{0:t-l}|\x_{\rm obs})d\z_{0:t-l}}\right)} \\
    &\approx \sum_{l=1}^L \log\left(\frac{\frac{1}{M}\sum_{m=1}^M {\cal N}\left(\x_t|f(\z_t^{(m)}), \C\right)}{\frac{1}{M}\sum_{m=1}^M {\cal N}\left(\x_t|f(\z_{t-l}^{(m)}), \C\right)}\right)
\end{align}
where $p(\x_t|t)$ denotes the likelihood of observing $\x_t$ at time $t$, {$\z_{0:t}^{(m)}\sim q_\phi(\z_{0:t}|\x_{\rm obs})$ denotes a sampled trajectory from the variational posterior of the trained \texttt{LatentSDE} model}, $f:\mathbb{R}^{d_z}\rightarrow \mathbb{R}^{d_x}$ denotes the decoder function, $\C$ denotes a prior covariance matrix, and $L$ denotes a lag.}
An important distinction of this work from ours is that their proposed method focused on the \emph{online} detection task and {does not} explicitly include change points in the modeling of the latent SDE. This implies that their model cannot be used for generation of time-series with distributional shifts, but only as a means to detecting shifts in the data (or future data). Moreover, theoretical insights shown in their work focused only on the analytical form of the test statistic, rather than the theoretical properties of their algorithm {(e.g., convergence guarantees and rates)}. We want to re-emphasize that the goal of our work is to design a neural SDE model to accurately capture the dynamics of time-series data exhibiting distributional shifts, which requires capturing the change points in an \emph{offline} manner. This is in contrast to the goal of the work in \cite{ryzhikov2022latent}, which purely focuses on the detection task. 

\subsection{Neural SDEs Trained as GANs}
{As an alternative to the VAE formulation to modeling neural SDEs, an approach based on a W-GAN formulation was also proposed in \cite{kidger2020neural}, which we refer to \texttt{SDEGAN} hereafter. An important distinction between this approach and its VAE counterpart is the the input noise distribution in \texttt{SDEGAN} only defines the initial state of the latent SDE. } An approach for modeling change points in neural SDEs has already been proposed based on the W-GAN framework \cite{sun2024neural}{, which extends the original work on training neural SDEs as infinite-dimensional GANs by \cite{kidger2021neural}}. In this work, change points were directly modeled in the latent SDE dynamics (via the W-GAN generator network). The training of the model alternated between two phases: (1) updating the W-GAN parameters with fixed change points; and (2) updating the change points using a CUSUM-type algorithm \cite{page1954continuous} with test statistic based on the difference in discriminator scores between two consecutive windows of a time-series dataset. The proposed test statistic turns out to be connected to the Wasserstein-1 distance, {and so} the algorithm can be viewed as performing an approximate Wasserstein two-sample test (see \cite{ramdas2017wasserstein}) for making change point updates. While the approach proposed in \cite{sun2024neural} demonstrated good empirical performance for generation of time-series with distributional shift, the theoretical validity of the method remains an open question. Furthermore, recent works have shown that W-GANs provide inaccurate measures to the Wasserstein distance \cite{mallasto2019well, stanczuk2021wasserstein} and therefore, the justification of the approach based on Wasserstein two-sample testing becomes questionable. 

\subsection{Limitations of Existing Approaches}
Most works on VAE-based neural SDEs are structured in a manner similar to the aforementioned approach, where the prior is assumed over the entire latent process (e.g., one can assume that \emph{a priori} $\{\Z_t\}_{t\in[0, T]}$ is a Wiener process). This {modeling} assumption, however, may be too restrictive in practice since the training data might not always conform to this {prior} latent SDE, which may degrade the generative performance of the model.{To elaborate, the ELBO objective function in \eqref{eq: latent_sde_elbo} is comprised of two-terms: a reconstruction term (expected log-likelihood); and a penalty term (KLD between prior and posterior stochastic processes). The penalty term will ensure that solution of the posterior SDE will not be too far away from the Wiener process prior. The adverse effect of this regularization is that the decoder function needs to be powerful enough to transform trajectories from a Wiener process to trajectories from the underlying data distribution.} {Another important point is that}, in the training of neural SDEs, it's common to assume that the drift function $f$, and the diffusion function $g$, have Lipschitz continuity which ensures the existence of a unique and strong solution to the SDE \eqref{eq:formulation}. Assuming smooth drift and diffusion, however, may limit the model's capability to accurately model time-series with sudden distributional shift (e.g., sharp changes in the mean or volatility). {A simple way to account for distributional shift is to incorporate change points into the \texttt{LatentSDE} model by utilizing different decoders before and after the change point. Unfortunately, this type of modeling does not explicitly capture how the dynamics of the underlying SDE change before and after the change point - a single Wiener process prior would be utilized at inference time to generate trajectories and the decoder function is responsible for capturing the changes.}

\section{Proposed Methodology}
In this work, we design a novel algorithm for training neural SDEs that does not require a strong prior in the latent space to train the SDEs as VAEs. Furthermore, within the VAE framework, we propose an algorithm to incorporate change points to {identify} distribution shifts in the times series. Given the change points, we model the time-series as multiple SDEs based on the change points. Specifically, we propose an optimization procedure that alternately updates the change point estimate and the SDE model parameters. To simplify the presentation, in the following, we consider the case where there is one change point. Our algorithm can be generalized to the case with multiple change points. A high-level overview of our modeling approach is summarized in Figure \ref{fig: model_overview}. In Fig \ref{fig: model_overview}, a time-series sample is first passed into an encoder (e.g., LSTM or neural CDE), which outputs the variational posterior parameters of the initial state of the latent SDE $\{\Z_t\}_{t\geq 0}$. An SDE solver (with SDE dynamics based on $\bftheta_0$) is employed to sample the stochastic process $\{\Z_t\}_{t\geq 0}$ at times $t\leq \nu$ (before change point). The terminal latent state of this sample $\Z_\nu$ is then passed to a second SDE solver (with SDE dynamics based on $\bftheta_1$), which samples the latent SDE until time $T$. To obtain the corresponding samples in the original time-series space, a probabilistic decoder (e.g., fully connected network) is used to decode each sampled latent SDE code $\Z_t$ into its corresponding value in the original data space $\X_t$.

\begin{figure}
\centering\includegraphics[width=0.9\linewidth, trim={0cm 5cm 9cm, 0cm},clip]{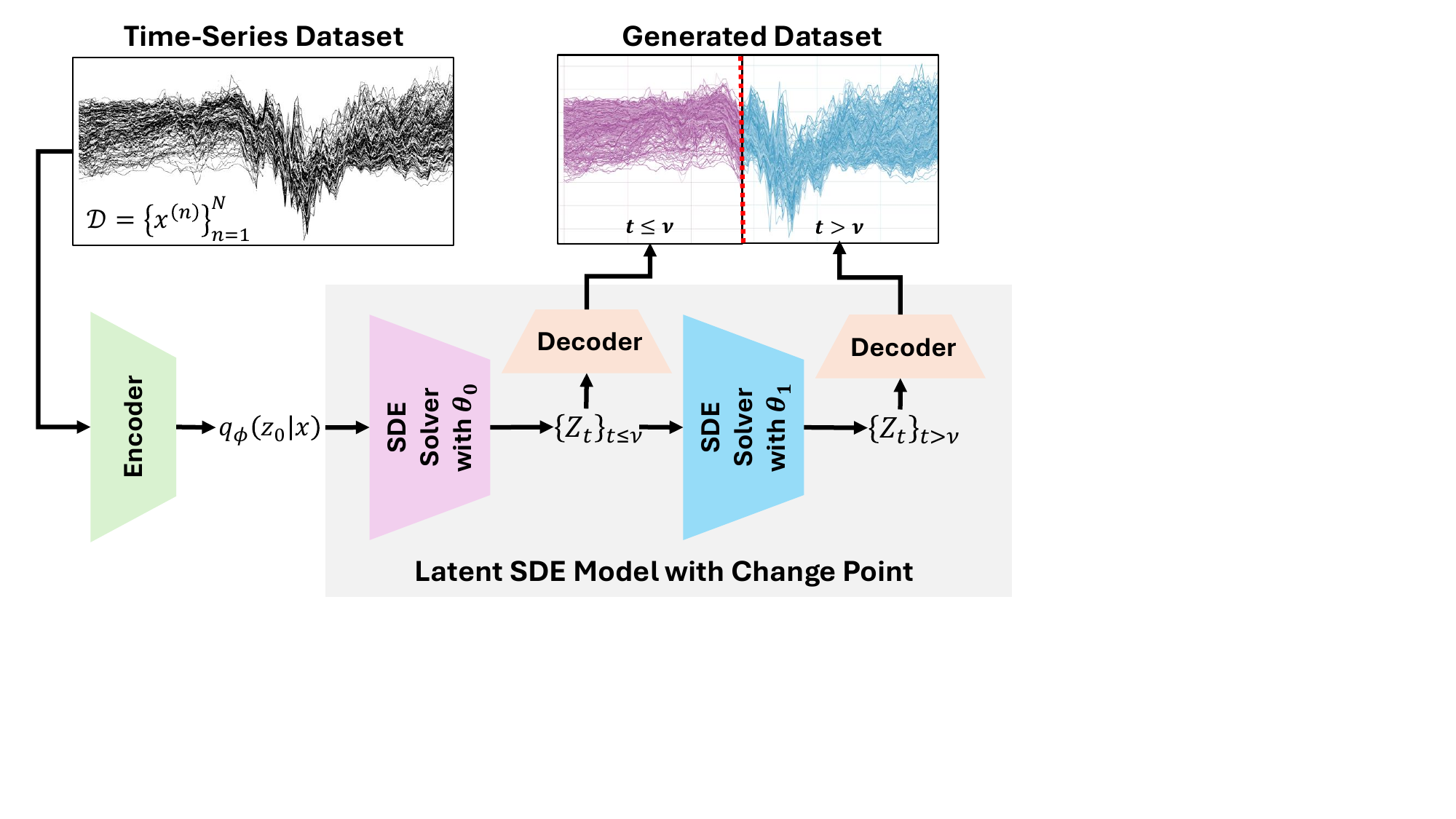}
    \caption{A simplified flow diagram of the latent SDE model considered in this work that accounts for potential change points in the time-series dataset. }
    \label{fig: model_overview}
\end{figure}

\subsection{System Model}
\label{ss: system_model}
To incorporate change points in our model, we assume that a change occurs at an unknown time $\nu\in(0, T]$. That is, the latent process $\{\Z_t\}_{t\in[0, T]}$ in our model is characterized by two different SDEs before and after the change point:
\begin{align}
    \z_0 &\sim p(\z_0), \\
    d\Z_t &= f_{\bftheta_0}(\Z_t, t)dt + g_{\bftheta_0}(\Z_t, t)d\W_t, \quad t\in(0, \nu] , \\ 
    d\Z_t &= f_{\bftheta_1}(\Z_t, t)dt + g_{\bftheta_1}(\Z_t, t) d\W_t, \quad t\in(\nu, T],
\end{align}
where $f_{\bftheta_0}$ and $g_{\bftheta_0}$ are the latent drift and diffusion neural networks (parameterized by $\bftheta_0$) before the change point $\nu$, and $f_{\bftheta_1}$ and $g_{\bftheta_1}$ are the latent drift and diffusion neural networks (parameterized by $\bftheta_1$) after the change point. The observation process is modeled as:
\begin{equation}
    \X_t = h_{\bftheta_h}(\Z_t) + \bfvarepsilon_t,
\end{equation}
where $h_{\bftheta_h}:\mathbb{R}^{d_z}\rightarrow \mathbb{R}^{d_x}$ is assumed to be a fully connected neural network with standard activations and $\bfvarepsilon_t\sim{\cal N}(\bfzero, \sigma_h^2\eye_{d_x})$. We highlight that the decoder is homogeneous across time and is thus not impacted by the change point. 

\subsection{Algorithm Summary}
Let $\bftheta=\{\bftheta_0, \bftheta_1, \bftheta_h\}$ denote the ``decoder" parameters. {Consider a variational approximation $q_{\bfphi}(\z_0|\x_{\rm obs})$ of the posterior of the initial state, where $\bfphi$ denote the parameters of the variational approximation}. We train the neural SDE model with change points using an iterative algorithm, where each iteration of the algorithm has two steps. In the first step, given the current change point estimate $\nu^{(i-1)}$, we update the model parameters $\bftheta^{(i)}\leftarrow{\bftheta}^{(i-1)}$ and the variational parameters $\bfphi^{(i)}\leftarrow\bfphi^{(i-1)}$ by maximizing the ELBO. In the second step, given the current value of the model parameters $\bftheta^{(i)}$, we update the change point ${\nu}^{(i)}\leftarrow {\nu}^{(i-1)}$ by maximizing the marginal likelihood of the observed data. We present pseudocode for the training algorithm in Algorithm \ref{alg:iterative_training} and discuss each of the two steps in more details in the following. 

\begin{algorithm}[t]
  \caption{Variational Neural SDEs with Change Points (\texttt{CP-SDEVAE})}
  \label{alg:iterative_training}
  \begin{algorithmic}
    \State Initialize model parameters $\bftheta^{(0)}=\{\bftheta_0^{(0)}, \bftheta_1^{(0)}, \bftheta_h^{(0)}\}$, variational parameters $\bfphi^{(0)}$ and change point estimate ${\nu}^{(0)}$.
    \For{$i = 1$ to $E$} \Comment{Number of training epochs}
      \State \textbf{Update model parameters:}
      \Indent
      \State \multiline{%
      Fixing $\nu = {\nu}^{(i-1)}$, update ${\bftheta}^{(i)}\leftarrow{\bftheta}^{(i-1)}$ and $\bfphi^{(i)}\leftarrow\bfphi^{(i-1)}$ by minimizing the loss function in (\ref{eq: model_parameters_loss}).}
      \EndIndent
      \State \textbf{Update change point:}
      \Indent
      \State \multiline{%
      Fixing $\bftheta=\bftheta^{(i)}$, update the change point ${\nu}^{(i)}\leftarrow {\nu}^{(i-1)}$ by maximizing the marginal likelihood given $\bftheta$. This can be done exactly using the greedy maximum likelihood-based update or approximately with the fast detection-based update.} 
      \EndIndent
    \EndFor
    \State {\bf Return:} ${\bftheta}^{(E)}, {\bfphi}^{(E)}, {\nu}^{(E)}$.
  \end{algorithmic}
\end{algorithm}

\subsection{Model Parameter Updates}
We update the model parameters $\bftheta$ (given $\nu$) using variational inference, by introducing a variational approximation over the posterior of the initial state of the latent stochastic process $\z_0$ given the observed data $\x_{\rm obs}$. 
Let $\nu^{(i)}$ denote our current guess of the change point at iteration $i$ of our algorithm. In our work, the parameters $\bftheta$ are updated by maximizing the following lower bound on the log-evidence in the case of fixed change point $\nu=\nu^{(i)}$:
\begin{equation*}
    \log p_{\bftheta}(\x_{\rm obs}|\nu=\nu^{(i)}) \geq  {\cal E}_{\bftheta, \bfphi, \nu^{(i)}}(\x_{\rm obs}),
\end{equation*}
where ${\cal E}_{\bftheta, \bfphi, \nu}(\x_{\rm obs})$ is defined as
\begin{align}
    \label{eq: vnsde_elbo_exact}
    {\cal E}_{\bftheta, \bfphi, \nu}(\x_{\rm obs}) \triangleq  - {\cal D}_{\rm KL}(q_{\bfphi}(\z_0|\x_{\rm obs})\|p(\z_0)) \nonumber + \mathbb{E}_{q_{\bfphi}}\left[\log p_{\bftheta}(\x_{\rm obs}|\z_0)\right].
\end{align}
A key distinction between this ELBO and the one utilized in \cite{li2020scalable} is that the variational posterior is defined only over the initial state. This pushes the influence of the latent SDE dynamics into the expected log-likelihood term, rather than the KLD penalty. This choice gives us mainly two advantages:
\begin{enumerate}
    \item If $p(\z_0)$ is Gaussian and the choice of the variational approximation $q_{\bfphi}(\z_0|\z_{\rm obs})$ is Gaussian, the KLD penalty can be analytically computed. In \cite{li2020scalable}, tractability of the KLD penalty is achieved by making the more restrictive choice that the prior and posterior diffusion are the same. 
    \item After training, the learned latent neural SDE dynamics are utilized to generate samples.  Our sampling procedure is a direct analog to the GAN-based approach presented in \cite{kidger2020neural}, which has been shown to work practically well on a variety of datasets, where the initial state of the latent SDE is generated from random noise and then propagated through the GAN generator (VAE decoder in our case). 
\end{enumerate}
The challenge of utilizing our variational formulation is now the tractability of the expected log-likelihood $\mathbb{E}_{q_{\bfphi}}[\log p_{\bftheta}(\x_{\rm obs}|\z_0)]$, which we discuss in the following. 

\subsubsection{Expected Log-Likelihood}
Let $\z_{\rm obs}$ denote the latent stochastic process $\{\Z_t\}_{t\in[0, T]}$ sampled at same time steps as $\x_{\rm obs}$. By the law of total probability, we can write:
\begin{align*}
    p_{\bftheta}(\x_{\rm obs}|\z_0) &= \int p_{\bftheta}(\x_{\rm obs}|\z_{\rm obs})p_{\bftheta}(\z_{\rm obs}|\z_0) d\z_{\rm obs} \\
    &= \int p_{\bftheta}(\z_{\rm obs}|\z_0)\left(\prod_{k=1}^K p_{\bftheta}(\x_{t_k}|\z_{t_k}) \right) d\z_{\rm obs}.
\end{align*}
Thus, the expected log-likelihood term can be written as a nested expectation:
\begin{align}
{\cal L}_{\bftheta, \bfphi}(\x_{\rm obs}) &\triangleq \mathbb{E}_{q_{\bfphi}}[\log p_{\bftheta}(\x_{\rm obs}|\z_0)] \nonumber \\
&=\mathbb{E}_{q_{\bfphi}}\left[\log\mathbb{E}\left[\prod_{k=1}^K p_{\bftheta}(\x_{t_k}|\z_{t_k})\bigg| \z_0 \right]\right],
\end{align}
where the inner expectation is taken with respect to $p_{\bftheta}(\z_{\rm obs}|\z_0)$. For almost all choices of latent drift and diffusion of the neural SDE, this expression is intractable, but can be approximated using a nested Monte Carlo estimator:
\begin{equation*}
    \widehat{\cal L}_{\bftheta, \bfphi}(\x_{\rm obs})  =\frac{1}{J} \sum_{j=1}^J \log \left( \frac{1}{M} \sum_{m=1}^M \prod_{k=1}^K p_{\bftheta}(\x_{t_k}|\z_{t_k}^{(j, m)})\right),
\end{equation*}
where $\z_{\rm obs}^{(j, m)}\sim p_{\bftheta}(\z_{\rm obs}|\z_0^{(m)})$ is sampled via an SDE solver and $\z_0^{(m)}\sim q_{\bfphi}(\z_0|\x_{\rm obs})$ for $j=1,\ldots,J$ and $m=1,\ldots, M$.  The mean-squared error (MSE) of this estimator converges to 0 at a rate of ${\cal O}(\frac{1}{J}+\frac{1}{M})$ \cite{rainforth2018nesting}, implying that the estimator is consistent (i.e., converges in probability to the true expected log-likelihood). By standard results in stochastic optimization,  this should guarantee that the optimization of the ELBO will converge (in expectation) to a local optimum of the model parameters, since one component of the ELBO can be approximated via a consistent estimator (expected log-likelihood) and the other component can be computed analytically (KLD). {In practice}, stochastic gradients of $\widehat{\cal E}_{\bftheta, \bfphi}$ can be obtained {"backpropagating through the SDE solver", for which there are a variety of different approaches including the Euler-Maruyama method, the pathwise method \cite{yang1991monte}, and the adjoint sensitivity method \cite{li2020scalable}. The Euler-Maruyama method solves the SDE using finite differences, generating a sample path using a sequence of Gaussian state transitions. The reparameterization trick typically employed in VAE training can easily be applied since each transition is Gaussian, enabling low variance stochastic gradients. The pathwise method is considered to be the continuous-time analog to the reparameterization trick, while the adjoint sensitivty method works by solving a SDE whose solution is the desired gradient. In this work, we consider the Euler-Maruyama solver, but note that both the pathwise method and the adjoint sensitivity method can easily be utilized as well. For more information on the advantages and disadvantages of each solver, we refer to \cite{li2020scalable}, which provides a nice summary.}


\subsubsection{Expected Predictive Log-Likelihood}
{In recent works, the problem of noise underestimation in \texttt{LatentSDE} models has been investigated \cite{heck2024improving}, a phenomenon that is analogous to a recurring observation made in underestimation of posterior variance in variational inference. Since our work also employs variational inference, a weakness of the training loss for the model parameters of the \texttt{SDEVAE} model is it will also lead to an underestimation of noise variance, since it is also balancing an expected log-likelihood term (reconstruction error) and a KLD term.}

{To understand specifically why this is problematic for our model, let us consider a simple example. Consider that the decoder of our model is linear, that is let $h_{\bftheta_h}(\z_t) = \bfalpha_h \z_t + \bfbeta_h$, where $\bfalpha_h\in \mathbb{R}^{d_x\times d_x}$ and $\bfbeta_h\in\mathbb{R}^{d_x}$. When utilizing a single trajectory a single trajectory $\z_{\rm obs}$ for the nested Monte Carlo estimator ($J=1$, $M=1$), the expected log-likelihood when using a linear decoder is approximated as:
\begin{align}
    \widehat{\cal L}_{\bftheta, \bfphi}(\x_{\rm obs}) &= \sum_{k=1}^K \log p_{\bftheta}(\x_{t_k}|\z_{t_k}), \\
    &= \sum_{k=1}^K \log {\cal N}(\x_{t_k}|\bfalpha_h \z_{t_k} + \bfbeta_h, \sigma_h^2\eye_{d_x}), \\
    &= -\frac{K d_x}{2}\log(2\pi\sigma_h^2) -\sum_{k=1}^K \frac{(\x_{t_k}-(\bfalpha_h\z_{t_k}+\bfbeta_h))^2}{2\sigma_h^2}. \label{eq: linear_decoder_gaussian_enll}
\end{align}
As can be seen from \eqref{eq: linear_decoder_gaussian_enll}, the expected log-likelihood term focuses on optimizing the model parameters, such that the marginals distributions of each $\x_{t_k}$ are well-calibrated (the underlying dynamics of the latent SDE are not influencing the expected log-likelihood). We note that hyperparameter tuning $\sigma_h^2$ does not solve the noise estimation issue, as changing $\sigma_h^2$ also only effects the marginal distribution of $\x_{t_k}$ and thus places emphasis on calibrating the marginal distribution of $\x_{t_k}$ given $\z_{t_k}$ for all $k$}. 

To improve the generative quality of our model, we propose to improve {noise estimation by regularizing the ELBO with the following additional term}: 
\begin{equation*}
    {\cal L}^{\scalebox{0.5}{\rm pred}}_{\bftheta, \bfphi}(\x_{\rm obs}) \triangleq \mathbb{E}_{q_{\bfphi}}\left[\sum_{k=1}^K \log \left(\mathbb{E}\left[p_{\bftheta}({\x_{t_k}}|{\z_{t_{k-1}}})\bigg| \z_0\right]\right)\right],
\end{equation*}
where the inner expectation is taken with respect to $p(\z_{t_{k-1}}|\z_0)$. We refer to ${\cal L}^{\scalebox{0.5}{\rm pred}}_{\bftheta, \bfphi}(\x_{\rm obs})$ as the \emph{expected predictive log-likelihood}. Just like the standard expected log-likelihood, ${\cal L}^{\scalebox{0.5}{\rm pred}}_{\bftheta, \bfphi}(\x_{\rm obs})$ can be approximated with a nested MC estimator. For our estimator, we use a first-order Taylor  approximation to obtain a Gaussian approximation for the distribution $p_{\bftheta}(\x_{t_k}|\z_{t_{k-1}})$, an approximation typically used in extended Kalman filtering, which is designed for non-linear state-space models with additive Gaussian \cite{kalman1960new, smith1962application}. A key difference between ${\cal L}_{\bftheta, \bfphi}$ and ${\cal L}_{\bftheta, \bfphi}^{\rm pred}$ is that maximizing ${\cal L}_{\bftheta, \bfphi}^{\rm pred}$ encourages well-calibrated conditional distributions $p_{\bftheta}(\z_{t'}|\z_{t})$ rather than well-calibrated marginal distributions $p_{\bftheta}(\z_t)$. We have found that empirically, this improves the generative performance of our model in terms of capturing noise properties in the time-series. To summarize, when updating the model parameters, for a fixed change point $\nu$ and observed time-series $\x_{\rm obs}$ we minimize the following loss function:
\begin{equation}
    \label{eq: model_parameters_loss}
    {\ell}(\bftheta, \bfphi; \x_{\rm obs}, \nu) = \lambda_{\rm kl}{\cal D}_{\rm KL}(q_{\bfphi}(\z_0|\x_{\rm obs})\|p(\z_0))  - 
    \lambda_{\rm nll} {\cal L}_{\bftheta, \bfphi}(\x_{\rm obs}) - \lambda_{\rm pred} {\cal L}_{\bftheta, \bfphi}^{\rm pred}(\x_{\rm obs}),
\end{equation}
where $\lambda_{\rm kl}>0$, $\lambda_{\rm nll}>0$, and $\lambda_{pred}>0$ are regularization constants. {In practice, we often train generative models on a collection of time-series $\{\x_{\rm obs}^{(n)}\}_{n=1}^N$, in which case we minimize the average loss across all time-series (amortized inference) to make model parameter updates:
\begin{equation}
    \bftheta^{(i)}, \bfphi^{(i)} = \argmin_{\bftheta\in\Theta, \bfphi\in\Phi} \frac{1}{N} \sum_{n=1}^N \ell(\bftheta, \bfphi; \x_{\rm obs}^{(n)}, \nu^{(i-1)}).
\end{equation}
where $\Theta$ and $\Phi$ denote the feasibile sets of the model parameters and the variational approximation parameters, respectively.}

\subsection{Change Point Updates}
We present two approaches for updating the change points: a greedy approach based on exact maximum likelihood estimate; and an online approach based on the sequential likelihood ratio test.  For simplicity, we assume that the change point $\nu$ belongs to the set of sampled time points ${\cal T}=(t_1, \ldots, t_K)$. We refer the reader to the Appendix for an extension to the case where the change point can occur at any time index in $(0, T)$. Before delving into each approach, we provide an overview of particle filtering methods and how they can be used for obtaining an estimator of the change point likelihood $p_{\bftheta}(\x_{\rm obs}|\nu=t)$, which is a critical quantity for the change point update.  

\subsubsection{Particle Filtering for Change Point Likelihood Estimation}
Particle filtering is a stochastic filtering methodology for approximating the posterior distribution of a latent process given sampled observations from another stochastic process. Consider the system model in Section \ref{ss: system_model} under the assumption that the change point is fixed to $\nu=\tau$. The system model can approximately be expressed in terms of a system of probability distributions:
\begin{align*}
    &\mathrm{State \ Equation:} &&\z_{t_k} \sim p_{\bftheta}(\z_{t_k}|\z_{t_{k-1}}, \nu=\tau) = \begin{cases}
        p_{\bftheta_0}(\z_{t_k}|\z_{t_{k-1}}), &t_k\leq \tau \qquad\mathrm{(before \ change)}\\
        p_{\bftheta_1}(\z_{t_k}|\z_{t_{k-1}}), & t_k > \tau \qquad\mathrm{(after \ change)}
    \end{cases}  \\
    &\mathrm{Observation \ Equation:} &&\x_{t_k} \sim p_{\bftheta_h}(\x_{t_k}|\z_{t_k}) 
\end{align*}
The goal of a particle filtering method is to obtain a sample-based (discrete random measure) approximation to the filtering distribution $p_{\bftheta}(\z_{t_k}|\x_{t_{1:k}}, \nu=\tau)$ or the smoothing distribution $p_{\bftheta}(\z_{t_{0:k}}|\x_{t_{1:k}}, \nu=\tau)$ by using importance sampling. For example, in this system model, the smoothing distribution $p_{\bftheta}(\z_{t_{0:k}}|\x_{t_{1:k}})$ can be expressed in terms of the joint distribution $p_{\bftheta}(\z_{t_{0:k}}, \x_{t_{1:k}}|\nu=\tau)$ and the normalizing constant $p_{\bftheta}(\x_{t_{1:k}}|\nu=\tau)$:
\begin{align*}
    p_{\bftheta}(\z_{t_{0:k}}|\x_{t_{1:k}}) &= \frac{p_{\bftheta}(\z_{t_{0:k}}, \x_{t_{1:k}}|\nu=\tau)}{p_{\bftheta}(\x_{t_{1:k}}|\nu=\tau)}\\
    &\propto p(\z_0) \left(\prod_{s=1}^k p_{\bftheta_h}(\x_{t_s}|\z_{t_s})\right) \underbrace{\left(\prod_{s: t_s \leq \nu} p_{\bftheta_0}(\z_{t_s}|\z_{t_{s-1}})\right) \left(\prod_{s: t_s > \nu} p_{\bftheta_1}(\z_{t_s}|\z_{t_{s-1}}) \right)}_{p_{\bftheta}(\z_{t_1:t_k}|\z_0, \nu=\tau)=\prod_{s=1}^k p_{\bftheta}(\z_{t_s}|\z_{t_{s-1}}, \nu=\tau)}
\end{align*}
The fundamental idea behind the particle filtering  approach is sequential importance sampling, which utilizes a proposal distribution at time $t_k$ that is factorized in a manner similar to the Markov process defining the state equation:
\begin{equation*}
    q(\z_{0:t_k}|\x_{t_{1:k-1}}) = q(\z_0)\prod_{s=1}^k q(\z_{t_{s}}|\z_{t_{s-1}}, \x_{t_{s}})
\end{equation*}
At time instant $t_k$, the (unnormalized) importance weight of a trajectory sampled from $\z_{0:t_k}^{(j)}\sim q(\z_{0:t_k}|\x_{t_{1:k-1}})$, denoted by $\tilde{w}_{t_k}^{(j)}$ , is weighted according to the smoothing distribution $p_{\bftheta}(\z_{t_{0:k}}|\x_{t_{1:k}})$ can be recursively computed as follows:
\begin{equation*}
    \tilde{w}_{t_k}^{(j)} \propto \tilde{w}_{t_{k-1}}^{(j)} \frac{p_{\bftheta_h}(\x_{t_k}|\z_{t_k}^{(j)})p_{\bftheta}(\z_{t_k}^{(j)}|\z_{t_{k-1}}^{(j)}, \nu=\tau)}{q(\z_{t_{k}}^{(j)}|\z_{t_{k-1}}^{(j)}, \x_{t_{k}})}, \quad j=1,\ldots, J.
\end{equation*}
The pairs of sampled trajectories and their weights in particle filtering provides a means for obtaining estimators of quantities related to the smoothing distribution. An variation of particle filtering is bootstrap particle filtering (BPF), which samples trajectories according to the assumed state model, i.e., $q(\z_{t_{s}}|\z_{t_{s-1}}, \x_{t_{s}})=p_{\bftheta}(\z_{t_s}|\z_{t_{s-1}}, \nu=\tau)$ and includes an additional resampling step to avoid the path degeneracy problem. In this case, the importance weights are proportional to the likelihood function:
\begin{equation*}
        \tilde{w}_{t_k}^{(j)} \propto p_{\bftheta_h}(\x_{t_k}|\z_{t_k}^{(j)}), 
\end{equation*}
due to the fact that if the particle streams are resampled at each time instant, then $\tilde{w}_{t_{k-1}}^{(j)}\propto \frac{1}{J}$ for all $j$. Finally, we discuss the utility of particle filtering in the context of this work, which is that it can be used to evaluate the marginal likelihood of a particular change point (which is used in our maximum likelihood update of the change point) and it can be used to compute likelihood ratios (which is used to in our detector based update of the change point). 

\paragraph{Marginal likelihood of a change point:} An important quantity in this work is the marginal likelihood of the change point $\nu$ being equal to a particular value $\tau$ (over a time horizon $T=t_K$), which can be approximated as a product of the average importance weight:
\begin{align}
    \label{eq: marginal_likelihood_pdf_estimator}
    p_{\bftheta}(\x_{\rm obs}|\nu=\tau) &\approx  \widehat{Z}_{t_k}^{\nu=\tau} =\left(\prod_{k=1}^K \frac{1}{J}\sum_{j=1}^J \tilde{w}_{t_k}^{(j)}\right)
\end{align}
Under weak assumptions, this estimator is unbiased and converges almost surely to the true marginal likelihood \cite{crisan2002survey}.

\paragraph{Approximation of the likelihood ratio for change point detection:} The likelihood ratio is a fundamental quantity in statistics, typically used to construct a test statistic for a hypothesis test. For instance, for change point detection, being able to compute the log-likelihood ratio $\Lambda(\x_{t_{1:k}})$, which we define as:
\begin{equation}
    \label{eq: llr}
    \Lambda({\x_{t_{1:k}}}) \triangleq \log\left(\frac{p_{\bftheta}(\x_{t_{1:k}}|\nu=\tau)}{p_{\bftheta}(\x_{t_{1:k}}|\nu>\tau)}\right),
\end{equation}
where the numerator in \eqref{eq: llr} corresponds to the likelihood the change point occurs at time $\tau$ and the denominator corresponds to the likelihood the change point does not occur at time $\tau$, but at a later time. Under both models, the latent trajectories generated up until time $t_{k}$ are the same - they are both generated by latent SDE with parameter $\bftheta_0$. The difference in these likelihoods comes from the fact that in the case of the numerator, $\z_{t_{k+1}}$ is sampled by propagating the previous latent state $\z_{t_k}$  with post-change SDE (with parameters $\bftheta_1$) rather than the pre-change SDE (with parameters $\bftheta_0$).  It turns out this quantity can be approximated with BPF by  taking the ratio of their average importance weights. {To be precise, we can form the following approximation of the marginal likelihoods for each hypothesis at time step $\tau=t_k$ for each model:
\begin{align}
    &\widehat{p}^{J}(\x_{t_{1:k}}|\nu=t_k) = \left(\frac{1}{J}\sum_{j=1}^J \tilde{w}_{t_{k}}^{(j, 1)}\right) \prod_{s=1}^{k-1}\frac{1}{J}\sum_{j=1}^J \tilde{w}_{t_{s}}^{(j, 0)}, \\
   &\widehat{p}^{J}(\x_{t_{1:k}}|\nu>t_k) = \prod_{s=1}^{k}\frac{1}{J}\sum_{j=1}^J \tilde{w}_{t_{s}}^{(j, 0)},
\end{align}
where $\tilde{w}_{t_s}^{(j, 0)}$ and $\tilde{w}_{t_s}^{(j, 1)}$ denote the importance weights of the $j$th particle stream when propagated by the pre-change SDE and post-change SDE at the instant $t_s$, respectively. Given these two approximations, we can estimate the log-likelihood ratio $\Lambda({\x_{t_{1:k}}})$ using an estimator $\widehat\Lambda(\x_{t_{1:k}})$} at time instant $t_k$ as\footnote{Note that in the approximation of log-likelihood ratio in \eqref{eq: llr_approx}, the number of particles generated for both pre-/post- SDE are assumed to be the same (i.e., $J$ trajectories); however, one can generalize the estimator to consider different numbers of generated trajectories for the pre-/post- change (i.e., $J_0$ for the pre-change SDE and $J_1$ for the post change SDE).}:  
\begin{align}
    \label{eq: llr_approx}
    \widehat\Lambda(\x_{t_{1:k}}) &= {\log\left(\frac{\widehat{p}^{J}(\x_{t_{1:k}}|\nu=t_k)}{\widehat{p}^{J}(\x_{t_{1:k}}|\nu>t_k)}\right)}\\
    &=\log\left(\frac{\frac{1}{J}\sum_{j=1}^J \tilde{w}_{t_{k}}^{(j, 1)}}{\frac{1}{J}\sum_{j=1}^J \tilde{w}_{t_{k}}^{(j, 0)}} \times \prod_{s=1}^{k-1} \frac{\frac{1}{J}\sum_{j=1}^J \tilde{w}_{t_s}^{(j, 0)}}{\frac{1}{J}\sum_{j=1}^J \tilde{w}_{t_s}^{(j, 0)}}\right) \\
    &= \log\left(\frac{1}{J}\sum_{j=1}^J \tilde{w}_{t_{k}}^{(j, 1)}\right) - \log\left(\frac{1}{J}\sum_{j=1}^J \tilde{w}_{t_{k}}^{(j, 0)}\right), \label{eq: llr_simplified}
\end{align}

\subsubsection{Greedy Update: Maximum Likelihood}
Now that we have discussed particle filtering methods, we can now elaborate how change points can be updated in our algorithm. Change point updates are made by finding the optimal value of the change points given the most recently updated model parameter. We define the optimal change point update $\nu^{(i)}$ as the one that maximizes the marginal likelihood of the data:
\begin{equation}
    \label{eq: optimal change_point}
    \nu^{(i)} = \argmax_{\tau\in{\cal T}} p(\x_{\rm obs}|\nu=\tau).
\end{equation}
By the chain rule of probability, we can write:
\begin{equation}
    \label{eq: chain_rule_marg}
    p(\x_{\rm obs}|\nu=t) = \prod_{k=1}^K p(\x_{t_{k}}|\x_{{t<t_{k}}}, \nu=t) 
\end{equation}
where $\x_{t<t_k}$ denotes the observed data such before time $t_k$. While for general models $p(\x_{\rm obs}|\nu=t)$ is an intractable integral, it can be recursively estimated using Bayesian filtering techniques. In this work, we use  particle filtering \cite{djuric2003particle}, which provides a straightforward way to obtain a consistent estimator $\widehat{p}(\x_{\rm obs}|\nu=t)$ for $p(\x_{\rm obs}|\nu=t)$ (please see \eqref{eq: marginal_likelihood_pdf_estimator}).  We call the maximum likelihood update for $\nu$ the \emph{greedy update} because it requires ${\cal O}(|{\cal T}|^2)$ runs of the BPF to estimate the marginal likelihood for all candidate values $\nu\in{\cal T}$ (see Algorithm \ref{alg: offline_udpate}). This may not be practical for long sequences - and so we propose an alternative approach for a faster update of $\nu$ based on the sequential likelihood ratio test.  

\begin{minipage}{0.48\textwidth}
\begin{algorithm}[H]
  \caption{Maximum Likelihood CP Update}
  \label{alg: offline_udpate}
  \begin{algorithmic}
    \State Initialize particle filtering particles. Initialize $\log \widehat{Z}_{0:0}^{\nu=0} = 0$.
    \For{$k = 1$ to $K$} \Comment{Number of sampled times}
      \State \multiline{\textbf{Run particle filter with model parameters fixed to $\widehat\bftheta$ and obtain marginal likelihood estimator:}}
      \Indent
      \State \multiline{%
      Run PF from time $t_{k}$ to time $t_K$ and approximate of the logarithm of the marginal likelihood $\log p(\x_{\rm obs}|\nu=t_k)$:}
        \EndIndent
      \begin{align*}
          \log p(\x_{\rm obs}|\nu=t_k) &\approx \log \widehat{Z}^{\nu=t_k} \\
          &=\log \widehat{Z}_{0:{t_{k-1}}}^{\nu=t_{k}}+\log \widehat{Z}_{t_k:T}^{\nu=t_k}
      \end{align*}
      \Indent
      \multiline{\underline{Note}: Our estimator is composed of two components: $\log \widehat{Z}_{0:{t_{k-1}}}^{\nu=t_{k}}$ and $\log \widehat{Z}_{t_k:T}^{\nu=t_k}$. The component $\log \widehat{Z}_{0:{t_{k-1}}}^{\nu=t_{k}}$ can be obtained from particles recycled from the previous PF run. }
      \EndIndent
    \EndFor
    \State \textbf{Change point greedy approximation:}
      \begin{equation*}
          \widehat\nu = \argmax_{t\in{\cal T}} \log Z^{\nu=t}
      \end{equation*}
    \State {\bf Return:} $\widehat\nu$.
  \end{algorithmic}
\end{algorithm}
\end{minipage}
\hfill
\begin{minipage}{0.48\textwidth}
\begin{algorithm}[H]
  \caption{Detection-based CP Update}
  \label{alg: online_update}
  \begin{algorithmic}
    \State Initialize particle filtering particles. Initialize $\log \widehat{Z}_0 = 0$.
    \For{$k = 1$ to $K$} \Comment{Number of sampled times}
      \State \multiline{\textbf{Propagate particle using assuming no change point and assuming a change point:}}
      \Indent
      \State \multiline{%
      Run PF to approximate log marginal likelihood under ${\cal H}_{0, k}: \nu > t_k$:}
    \EndIndent
      \begin{align*}
          \log p(\x_{1:t_k}|\nu>t_k) &\approx \log \widehat{Z}_{t_k}^{\nu>t_k} \\
          &=\log \widehat{Z}_{k-1} +\log \widehat{Z}_{t_k}^{\nu>t_k}
      \end{align*}
            \Indent
      \State \multiline{%
      Run PF to approximate log marginal likelihood under ${\cal H}_{1, k}: \nu = t_k$:}
      \EndIndent
      \begin{align*}
          \log p(\x_{1:t_k}|\nu=t_k) &\approx \log \widehat{Z}_{t_k}^{\nu=t_k} \\
          &=\log \widehat{Z}_{k-1} +\log \widehat{Z}_{t_k}^{\nu=t_k}
      \end{align*}
      \State \textbf{Approximate log-likelihood ratio:}
      \begin{equation*}
          \log \widehat \Lambda(\x_{t_1:t_{k}}) = \log \widehat{Z}_{t_k}^{\nu=t_k} - \log \widehat{Z}_{t_k}^{\nu>t_k} 
      \end{equation*}
      \State \textbf{If $\log \widehat \Lambda(\x_{t_1:t_{k}})>\gamma$}:
      \Indent
      \State {\bf Return:} $\widehat\nu = t_k$
      \EndIndent
      \State \textbf{Else:}
       \Indent
      \State {\bf Set:} $\log \widehat{Z}_k = \log \widehat{Z}_{t_k}^{\nu>t_k}$
      \EndIndent
    \EndFor
    \State {\bf Return:} $\widehat\nu = \argmax_{t\in{\cal T}} \log \Lambda(\x_{t_{1}:t})$.
  \end{algorithmic}
\end{algorithm}
\end{minipage}
\subsubsection{Fast Update: Sequential Likelihood Ratio Detector}
A fast and online method for updating the change points at each training iteration is to use a sequential change point detection scheme \cite{polunchenko2012state}. Notably, the CUSUM algorithm has been applied for detecting change points in neural SDEs trained as W-GANs, where an approximated Wasserstein distance based on the learned W-GAN critic is used to detect the change point in a single forward pass of $\mathcal{O}(|T|)$ segments of the time-series (obtained via a sliding window). Practically speaking, it is only useful for neural SDEs trained under the W-GAN framework, since a proxy for computing the Wasserstein distance is required. Furthermore, the learned change point does not have any theoretical guarantees. Unlike W-GANs, which are \emph{implicit} generative models, VAEs are \emph{explicit} generative models and provide easy access to the probability measures of the latent and observed processes. This allows us to utilize the sequential likelihood ratio test for detecting the change point, a test for which theoretical implications have been well-studied. 

Specifically, our change point updates are inspired by the classical sequential testing framework, {where we consider the candidate change points $\tau$ to belong to the set of sampled time points ${\cal T}$. At} each time $t_k\in{\cal T}$ we decide between two hypotheses:
\begin{align*}
    &{\cal H}_0: \x_{t_{1:k}} \sim p(\x_{t_{1:k}}|\nu>t_k), \\
    &{\cal H}_1: \x_{t_{1:k}} \sim p(\x_{t_{1:k}}|\nu=t_k),
\end{align*}
where $\x_{t_{1:k}}=(\x_{t_1}, \ldots, \x_{t_k})$. The null hypothesis ${\cal H}_0$ is that the change occurs after time $t_{k}$ (and thus, the detection algorithm continues to run) and the alternative hypothesis ${\cal H}_1$ is that the change occurs precisely at $\nu=t_k$ (and thus, we stop the detection algorithm and adopt $\nu=t_k$ as the change point). We adopt the change point update as the value of $t_k$ that rejects the null hypothesis, i.e., when
\begin{equation}
    \label{eq: slrt}
    \log \Lambda(\x_{t_{1:k}}) \triangleq  \log p(\x_{t_{1:k}}|\nu=t_k)-\log p(\x_{t_{1:k}}|\nu>t_k) \geq \gamma, 
\end{equation}
where $\Lambda(\x_{t_{1:k}})$ denotes the likelihood ratio of the test at time $t_k$ and $\gamma$ is a threshold determined by the pre-specified false alarm probability of the test $\alpha$. In practice, the log-likelihood ratio is typically monitored as the test statistic. Importantly, evaluation of the likelihood ratio involves the integration over ${\Z}_{t_{0:k}}$ (in both the numerator and denominator) and thus, is generally an intractable quantity. Similar to the greedy approach for updating the change points, we use a BPF to sequentially obtain an estimator of $\Lambda(\x_{t_{1:k}})$ {based on \eqref{eq: llr_approx}-\eqref{eq: llr_simplified}:}
\begin{equation}
 \widehat{\Lambda}^{J}(\x_{t_{1:k}})= \frac{\widehat{p}^{J}(\x_{t_{1:k}}|\nu={t_k})}{\widehat{p}^{J}(\x_{t_{1:k}}|\nu>{t_k})},
\end{equation}
where $J$ denotes the number of trajectories sampled in the BPF. The advantage of the sequential testing approach is that a maximum of $|{\cal T}|$ BPF steps are needed to detect the change, which can all be done using a single run of the BPF, reducing the change point update complexity to ${\cal O}(|{\cal T}|)$ BPF steps. 

\subsection{Theoretical Insights}
In this section, we provide some theoretical insights of our proposed work. Mainly, we show that under certain assumptions, the training algorithm converges to a stationary point w.r.t. the ELBO. We also show that our detection scheme, under certain assumptions, also achieves optimal error probability, further justifying it as a method for estimating the change point in our algorithm. {We note that all theoretical results presented in this work consider  the ELBO defined in \eqref{eq: vnsde_elbo_exact} without the predictive expected log-likelihood regularizer term. A theoretical analysis of this regularizer is left for future work. }

\subsubsection{Convergence of Training Algorithm to a Stationary Point}
To prove that our algorithm converges to a stationary point, we need to make a few assumptions about the efficiency of the updates at each iteration of the algorithm. Mainly, we assume that both model updates and change point updates lead to an improvement based on their respective criterion. Mainly, model parameter updates improve the ELBO and change point updates improve the marginal likelihood. We also make the assumption that the inference gap as a result of the variational approximation does not widen after change points are updated. {A detailed description of the assumptions can be found in Appendix \ref{ss_assumptions: details}.}

In the following theorem, we show that our training algorithm converges to a stationary point of the ELBO -- mainly that after each update in the algorithm the ELBO either stays the same or increases in value. We provide a visualization of the result in Fig \ref{fig: prf_stationary_point}.

\begin{figure}[t]
    \centering
    \includegraphics[width=\linewidth, trim={0 9cm 6.5cm 0}, clip]{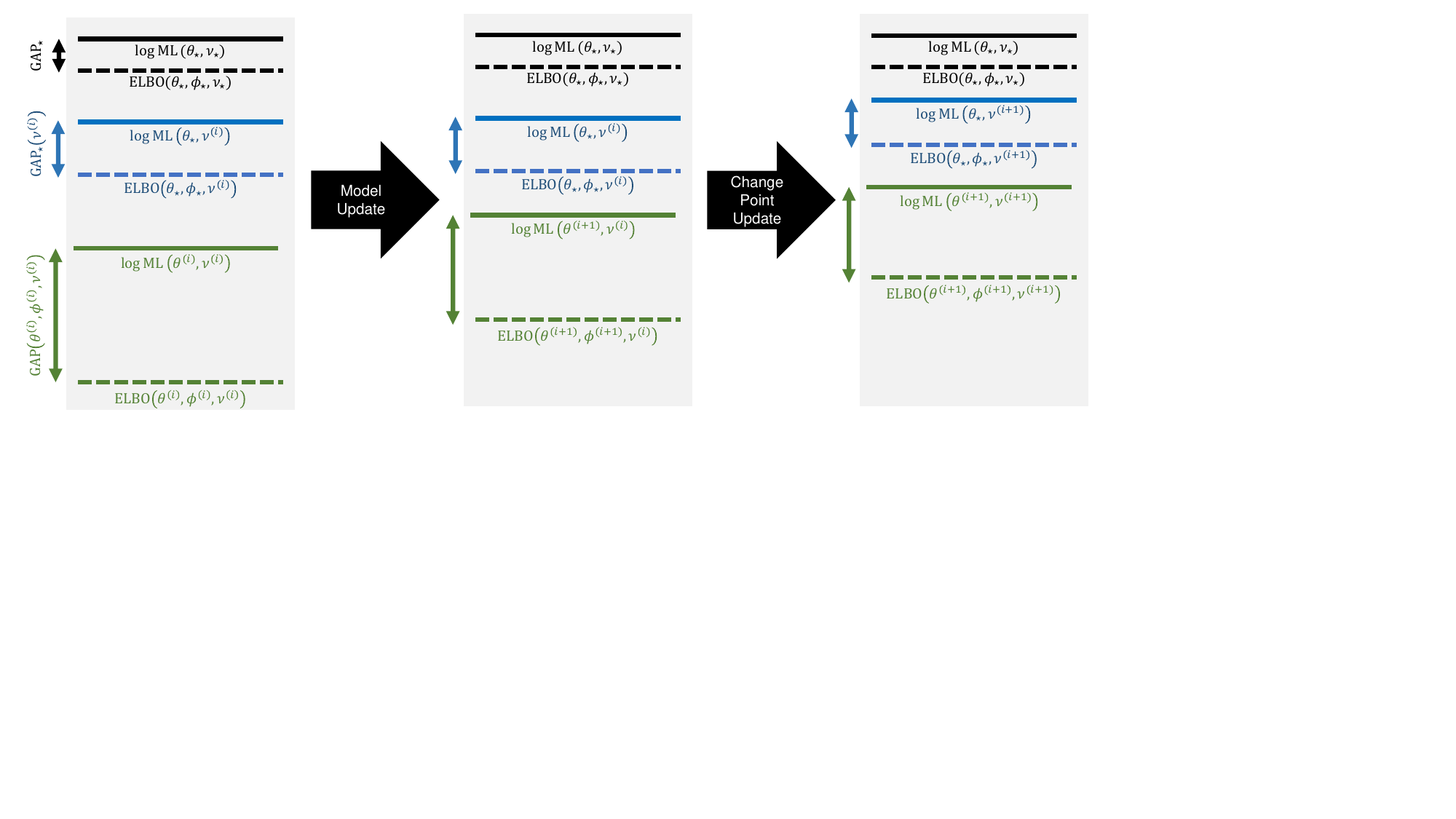}
    \caption{Stationary point convergence based on training algorithm. Under the assumption that change point updates do not widen the inference gap, the result is evident and demonstrated in this diagram.}
    \label{fig: prf_stationary_point}
\end{figure}

\begin{theorem} \label{theorem:stationary_point} As $E\rightarrow\infty$, our algorithm (under maximum likelihood updates for the change points) reaches a stationary point w.r.t. a lower bound on the marginal likelihood, i.e.,
\begin{equation*}
    {\cal E}_{\bftheta^{(i)}, \bfphi^{(i)}, \nu^{(i)}}(\x_{\rm obs}) \geq {\cal E}_{\bftheta^{(i-1)}, \bfphi^{(i-1)}, \nu^{(i-1)}}(\x_{\rm obs})
\end{equation*}
for all $i\in\mathbb{N}$, where $\mathbb{N}$ denotes the natural numbers.
\end{theorem}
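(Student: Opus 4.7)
The plan is to establish the claimed monotonicity by decomposing each iteration $i$ of Algorithm \ref{alg:iterative_training} into its two sub-steps and showing that neither can decrease the ELBO. Since the iteration first sends $(\bftheta^{(i-1)}, \bfphi^{(i-1)}) \to (\bftheta^{(i)}, \bfphi^{(i)})$ at fixed $\nu = \nu^{(i-1)}$, and then sends $\nu^{(i-1)} \to \nu^{(i)}$ at fixed $\bftheta = \bftheta^{(i)}$, it suffices to chain the two inequalities
\begin{align*}
\mathcal{E}_{\bftheta^{(i)}, \bfphi^{(i)}, \nu^{(i-1)}}(\x_{\rm obs}) &\geq \mathcal{E}_{\bftheta^{(i-1)}, \bfphi^{(i-1)}, \nu^{(i-1)}}(\x_{\rm obs}), \\
\mathcal{E}_{\bftheta^{(i)}, \bfphi^{(i)}, \nu^{(i)}}(\x_{\rm obs}) &\geq \mathcal{E}_{\bftheta^{(i)}, \bfphi^{(i)}, \nu^{(i-1)}}(\x_{\rm obs}).
\end{align*}

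The first inequality is immediate from the definition of the model-parameter step: minimizing the loss in \eqref{eq: model_parameters_loss} is equivalent to non-decreasing the ELBO in \eqref{eq: vnsde_elbo_exact} with $\nu$ held fixed, and the theorem's stated assumption is that each such update actually produces an improvement (or equality) in the ELBO. No further work is needed here.

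The second inequality is the more delicate step, because the change-point update maximizes the marginal likelihood $p_{\bftheta}(\x_{\rm obs}|\nu)$ rather than the ELBO directly. The plan is to invoke the standard VAE decomposition
\begin{equation*}
\log p_{\bftheta}(\x_{\rm obs}|\nu) \;=\; \mathcal{E}_{\bftheta, \bfphi, \nu}(\x_{\rm obs}) \;+\; \mathcal{D}_{\rm KL}\!\bigl(q_{\bfphi}(\z_0|\x_{\rm obs}) \,\big\|\, p_{\bftheta}(\z_0|\x_{\rm obs}, \nu)\bigr),
\end{equation*}
and denote the non-negative second term, the inference gap, by $G_{\bftheta, \bfphi}(\nu)$. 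By the greedy maximum likelihood update in Algorithm \ref{alg: offline_udpate},
\begin{equation*}
\log p_{\bftheta^{(i)}}(\x_{\rm obs}|\nu^{(i)}) \;\geq\; \log p_{\bftheta^{(i)}}(\x_{\rm obs}|\nu^{(i-1)}),
\end{equation*}
and by the hypothesis preceding the theorem that change-point updates do not widen the inference gap, $G_{\bftheta^{(i)}, \bfphi^{(i)}}(\nu^{(i)}) \leq G_{\bftheta^{(i)}, \bfphi^{(i)}}(\nu^{(i-1)})$. Rearranging $\mathcal{E} = \log p - G$ and subtracting the gap inequality from the marginal-likelihood inequality yields the required bound, as illustrated in Figure \ref{fig: prf_stationary_point}. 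Chaining the two sub-step inequalities gives the theorem, and the monotone bounded sequence $\{\mathcal{E}_{\bftheta^{(i)}, \bfphi^{(i)}, \nu^{(i)}}(\x_{\rm obs})\}_{i \in \mathbb{N}}$ converges as $E \to \infty$ by the monotone convergence theorem, establishing stationarity of the limit.

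The main obstacle is justifying the inference-gap assumption: because $q_{\bfphi}(\z_0|\x_{\rm obs})$ encodes the observed trajectory through an encoder that is independent of $\nu$, while the true posterior $p_{\bftheta}(\z_0|\x_{\rm obs}, \nu)$ depends on $\nu$ through the pre- and post-change SDE dynamics, there is no intrinsic reason the KL should shrink when $\nu$ changes. My plan is to state this as an explicit standing assumption aligned with the informal discussion preceding the theorem, and to argue heuristically that it is reasonable in regimes where the encoder-implied distribution over $\z_0$ is relatively insensitive to moderate perturbations in $\nu$ — a regime empirically consistent with training at convergence. If desired, one could relax the assumption to hold only locally around accumulation points of $\{\nu^{(i)}\}$, which would still suffice for asymptotic stationarity of the ELBO sequence.
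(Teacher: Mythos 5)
Your proposal is correct and follows essentially the same route as the paper's proof: the same two-step chaining ${\cal E}_{\bftheta^{(i)},\bfphi^{(i)},\nu^{(i)}}\geq{\cal E}_{\bftheta^{(i)},\bfphi^{(i)},\nu^{(i-1)}}\geq{\cal E}_{\bftheta^{(i-1)},\bfphi^{(i-1)},\nu^{(i-1)}}$, with the model-update inequality taken as an assumption and the change-point inequality obtained from the decomposition $\log p_{\bftheta}(\x_{\rm obs}|\nu)={\cal E}_{\bftheta,\bfphi,\nu}(\x_{\rm obs})+{\cal D}_{\rm KL}$ together with the non-widening-gap assumption. The only (minor) difference is that you impose the gap condition in the slightly stronger form $\Delta{\cal D}_{\rm KL}\leq 0$, whereas the paper's Assumption 3 only requires the gap increase to be dominated by the marginal-likelihood improvement, $\Delta{\cal D}_{\rm KL}\leq\Delta{\cal L}$; your closing caveat correctly identifies this assumption as the genuinely nontrivial hypothesis, just as the paper's own justification does.
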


\begin{proof}[Proof Sketch] To prove this result, we needed to show that change point updates (which we assume yield an improvement in marginal likelihood) imply an improvement w.r.t. the ELBO as well. The difference between the logarithm of the marginal likelihood can be shown to be a sum of two components: the improvement in the ELBO and the change in accuracy in the variational approximation  (based on the KLD between the variational approximation and the true posterior distribution) after change point updates are made. Under the assumption that change point updates do not vastly impact the accuracy of the variational approximation, we directly arrive at the desired result. {The detailed proof can be found in Appendix \ref{ss_proof: stationary_point}.}
\end{proof}

\subsubsection{Optimality of the Detector}
In the following theorem, we provide a theoretical insight into the performance of our online change point update. Specifically, we demonstrate that at each time $t_k$, our update asymptotically achieves the optimal error probability as the number of sampled trajectories $J$ tends to infinity. This result is significant as it provides a theoretical guarantee for the performance of our proposed method.

\begin{theorem} \label{theorem:opterr} As $J\rightarrow \infty$, we have that $\mathbb{P}(\widehat\Lambda(\x_{t_{1:k}})\geq \gamma| {\cal H}_0) \rightarrow \mathbb{P}(\Lambda(\x_{t_{1:k}})\geq \gamma| {\cal H}_0)$ and $\mathbb{P}(\widehat\Lambda(\x_{t_{1:k}})< \gamma| {\cal H}_1) \rightarrow \mathbb{P}(\Lambda(\x_{t_{1:k}})< \gamma| {\cal H}_1)$. \end{theorem}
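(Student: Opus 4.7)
The plan is to reduce the claim to a standard consistency fact about bootstrap particle filters combined with the Portmanteau/continuous-mapping machinery. The two probabilities on each side are expectations over the observation process $\x_{t_{1:k}}$ (drawn under $\mathcal{H}_0$ or $\mathcal{H}_1$) of an event that depends, in addition, on the internal Monte Carlo randomness of the filter. So I will decompose the total probability by first conditioning on $\x_{t_{1:k}}$ and then taking the outer expectation, handling the two sources of randomness separately.

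First, for each fixed observation sequence $\x_{t_{1:k}}$ I would invoke the standard BPF consistency result cited earlier in the paper (Crisan, 2002), which states that the BPF marginal likelihood estimator $\widehat{p}^{J}(\x_{t_{1:k}}|\nu=\tau)$ converges almost surely (and in $L^1$) to $p(\x_{t_{1:k}}|\nu=\tau)$ under the usual regularity conditions on the transition and observation densities. Applying this separately to the numerator (with $\tau=t_k$) and denominator (with $\tau>t_k$) and using the continuous mapping theorem to the map $(a,b)\mapsto \log(a/b)$ on the positive orthant gives
\begin{equation*}
\widehat{\Lambda}^{J}(\x_{t_{1:k}}) \xrightarrow[J\to\infty]{\text{a.s.}} \Lambda(\x_{t_{1:k}}),
\end{equation*}
where the positivity of both estimators (hence the continuity of the map at the limit) is guaranteed as long as the observation density $p_{\bftheta_h}(\x_{t_k}|\z_{t_k})$ is strictly positive, which follows from the Gaussian decoder assumption.

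Next, for the outer expectation over $\x_{t_{1:k}}$, I would apply a Portmanteau-type argument. Define the indicator events $A_J = \{\widehat{\Lambda}^{J}(\x_{t_{1:k}})\geq \gamma\}$ and $A = \{\Lambda(\x_{t_{1:k}})\geq \gamma\}$. On the event $\{\Lambda(\x_{t_{1:k}})\neq \gamma\}$, the almost-sure convergence above implies $\indicator_{A_J}\to \indicator_{A}$ almost surely. Thus by bounded convergence (the indicators are uniformly bounded by $1$)
\begin{equation*}
\mathbb{P}(\widehat{\Lambda}^{J}(\x_{t_{1:k}})\geq \gamma \mid \mathcal{H}_0) \;\longrightarrow\; \mathbb{P}(\Lambda(\x_{t_{1:k}})\geq \gamma \mid \mathcal{H}_0),
\end{equation*}
provided that $\mathbb{P}(\Lambda(\x_{t_{1:k}})=\gamma\mid \mathcal{H}_0)=0$, and an identical argument yields the corresponding statement under $\mathcal{H}_1$ (with the reverse inequality).

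The main obstacle is precisely this boundary/atomlessness condition on the law of $\Lambda(\x_{t_{1:k}})$ at $\gamma$. I would discharge it either by (i) assuming $\gamma$ is a continuity point of the cumulative distribution function of $\Lambda$ under each hypothesis, which is standard in sequential testing, or (ii) observing that under the non-degenerate Gaussian observation noise and the latent SDE transitions with strictly positive transition density, the distribution of the likelihood ratio is absolutely continuous with respect to Lebesgue measure, so no atom exists. A secondary technical point is verifying the regularity conditions needed for the BPF almost-sure convergence (bounded potentials, in particular); under Gaussian observation noise and bounded latent state over a finite horizon these conditions hold locally and can be extended by truncation arguments if needed.
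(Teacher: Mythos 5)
Your proposal follows the same skeleton as the paper's proof---establish almost-sure convergence of the numerator and denominator likelihood estimators, pass to the ratio via the continuous mapping theorem, and then transfer this to the error probabilities---but your handling of the final transfer step is genuinely different and, in fact, more careful than the paper's. The paper works with a plain i.i.d.\ Monte Carlo estimator (remarking that the argument extends to the BPF), obtains a.s.\ convergence of the ratio via the SLLN, continuous mapping applied to $x\mapsto 1/x$, and Slutsky, and then concludes by evaluating $\mathbb{P}\bigl(\lim_{M\to\infty}\widehat\Lambda_M\geq\gamma\bigr)$, decomposing on the probability-one event that the limit equals $\Lambda$. But the theorem asserts convergence of $\lim_{J\to\infty}\mathbb{P}(\widehat\Lambda^{J}\geq\gamma)$---the limit of the probabilities, not the probability of the limit---and these coincide only under an extra condition. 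Your Portmanteau/bounded-convergence argument addresses exactly this point: a.s.\ convergence of $\widehat\Lambda^{J}$ to $\Lambda$ gives $\indicator_{A_J}\to\indicator_{A}$ only off the boundary event $\{\Lambda=\gamma\}$, so interchanging the limit with the outer expectation requires $\mathbb{P}(\Lambda=\gamma\mid{\cal H}_i)=0$. That atomlessness (equivalently, continuity-point) hypothesis is genuinely needed---without it the claimed convergence of the type-I and type-II error probabilities can fail---and it is implicit but unstated in the paper's version. Your two proposed ways of discharging it (assuming $\gamma$ is a continuity point of the c.d.f.\ of $\Lambda$ under each hypothesis, or deriving absolute continuity of the law of $\Lambda$ from the non-degenerate Gaussian decoder) are both reasonable, though the second deserves a short verification rather than an appeal to intuition. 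In short: same route for the first half, and a strictly more rigorous route for the second half, which is where the real content of the theorem lies.
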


\begin{proof}[Proof Sketch] We begin by showing that as $J\rightarrow\infty$, the likelihoods $\widehat{p}^J(\x_{t_{{1:k}}}|\nu=t)$ and $\widehat{p}^J(\x_{t_{{1:k}}}|\nu>t)$ converge almost surely to ${p}(\x_{t_{{1:k}}}|\nu=t)$ and ${p}(\x_{t_{{1:k}}}|\nu>t)$, respectively. This is achieved by applying standard convergence results of bootstrap particle filters (BPFs). The continuous mapping theorem then implies that the likelihood ratio $\widehat\Lambda(\x_{t_{1:k}})$ converges to $\Lambda(\x_{t_{1:k}})$ almost surely. We then demonstrate that our test achieves the optimal error probability. The detailed proof can be found in the {Appendix \ref{ss_proof: optimality_detector}}. \end{proof}

{\textbf{Remark on rate of convergence}: With a little abuse of notation, let $\bfmu(\x_{t_{1:k}})$ and $\bfSigma(\x_{t_{1:k}})$ the mean and covariance matrix of $\x_{t_{1:k}}$, respectively. By standard results in Monte Carlo, we have that $\sqrt{J}(\x_{t_{1:k}}-\bfmu(\x_{t_{1:k}}))$ converges in distribution to $\mathcal{N}(0, \bfSigma(\x_{t_{1:k}}))$. Since we sample $J$ independent trajectories of $\x_{t_{1:k}}$ in the BPF, from the convergence of the delta method \cite{fisher1925theory}, we have that the estimate $\widehat{\Lambda}^{J}(\x_{t_{1:k}})$ converges to the ground truth $\Lambda(\x_{t_{1:k}})$ at the rate of $\frac{1}{\sqrt{J}}$.}

\subsection{Practical Considerations}
In this section, we highlight several important aspects to consider in order to ensure success training of the \texttt{CP-SDEVAE} algorithm.

\paragraph{Model architecture:} 
Our model architecture comprises several key components designed to effectively capture and process time-series data. The encoder utilizes an LSTM network, which is well-suited for sequential data processing. For scenarios involving irregularly sampled time-series, an alternative approach such as a neural CDE could be considered. The decoder is implemented as a fully connected network, providing flexibility in output generation. The core of the model lies in the latent SDE components. Both the drift and diffusion networks of the latent SDE are implemented as fully connected networks with LipSwish activation functions. This design choice introduces an important tradeoff: while more complex drift and diffusion networks can potentially capture more intricate dynamics, they tend to reduce the meaningfulness of detected change points. This phenomenon was observed in our ablation study conducted on both real and synthetic data, as detailed in Section \ref{ablations} of the Appendix. For the SDE solver, we employ the Euler-Maruyama. Although we experimented with alternative approaches based on the adjoint sensitivity method, we found no significant performance differences, leading us to favor the simpler Euler method for its efficiency and ease of implementation.

\paragraph{Optimizer and stochastic weight averaging:}
Our optimization strategy is carefully crafted to ensure robust model training. We utilize the Adam optimizer with a learning rate of $1\times10^{-4}$ and a weight decay of $1\times10^{-4}$. The training process continues for a maximum of $E=10000$ epochs or until convergence is reached, as determined by the ELBO loss. To enhance training stability, we incorporate stochastic weight averaging, a technique that has shown promise in previous work on training neural SDEs, such as the SDEGAN approach.

\paragraph{Initialization of change points:} 
The initialization of change points plays a crucial role in model performance. We explored two methods: random initialization and initialization based on mean shift using the ruptures library in Python \cite{truong2020selective}. Our findings strongly favor the latter approach, as the model exhibits sensitivity to poorly initialized change points. The ruptures library provides a more informed starting point, leading to improved overall performance.

To further enhance the robustness of our change point detection, we implement a warm-start period of $E=50$ epochs before making any change point updates in the training process. This warm-start period is essential because the accuracy of change point detection is intrinsically linked to the overall model performance. Mismatches in model parameters can lead to degradation in both the maximum likelihood estimation and detection-based approaches for estimating change points. By allowing the model to stabilize initially, we mitigate these potential issues and improve the reliability of our change point estimates.

\paragraph{Detection threshold:}
For a given threshold $\gamma$, the fast detection-based update corresponds to a certain level of tolerance for false alarms. In online settings, it's crucial to set this threshold before deploying the detection algorithm. Much of the literature on sequential testing frameworks focuses on calibrating this threshold for various statistical models to meet specific tolerances for false alarm probabilities. However, the focus of this work is on using the detector to estimate the change point in an offline manner. In this context, the threshold can be seen as a hyperparameter of the \texttt{CP-SDEVAE} model, which can be tuned to enhance the quality of generative performance. It's important to note that the detection threshold introduces a trade-off. A larger value of $\gamma$ means that a change point will only be detected in the event of a more extreme distributional shift. Conversely, a smaller value of $\gamma$ increases the likelihood of detecting a change point in response to minor and possibly insignificant changes.
\paragraph{Extension to multiple change points:} Our proposed mathematical formulation provides a method to incorporate a single change point in modeling neural SDEs. To extend to $D$ change points, $\nu_1,\ldots, \nu_D$, a variety of approaches can be used. For the greedy approach based on maximum likelihood, if there are multiple change points, one can update each change point $\nu_d$ by maximizing the marginal likelihood, holding all other change points and the model parameters fixed to their most recently updated values:
\begin{equation*}
    \widehat\nu_{d} = \argmax_{\widehat\nu_{d-1}\leq t \leq \widehat{\nu}_{d+1}}  p(X_{\rm obs}|\nu_d=t, \nu_{-d}=\widehat{\nu}_{-d}),
\end{equation*}
where $\widehat\nu_{-d}$ denotes the current estimate of all other change points and we define $\widehat\nu_0 = 0$ and $\widehat\nu_{d+1}=T$. For the detection-based update, one continues running the detector until the desired number of change points are detected. If the number of detected change points is less than the number of change points assumed in the model, an adaptive threshold can be utilized. {With regards to choosing the number of change points, standard model selection approaches, such as cross-validation or  marginal likelihood maximization, can be considered. With regards to marginal likelihood maximization, we recall that particle filtering methods can be utilized to obtain an unbiased estimator of the marginal likelihood (i.e., using the estimator in \eqref{eq: marginal_likelihood_pdf_estimator}). This particular estimator can be used to compare generalization performance between difference classes of models (i.e., different number of change points). This strategy has been repeatedly employed in Bayesian inference settings (see \cite{llorente2023marginal} for a review on the use of the marginal likelihood for model selection).}

\paragraph{Greedy vs. fast update:}
In our work, we compare two primary approaches for estimating change points: the greedy approach and the fast (detection-based) approach. The greedy approach proves superior in terms of accuracy, as it precisely determines the change points that maximize the marginal likelihood. For a single change point, this method tests $K$ hypotheses, each requiring $\mathcal{O}(T)$ propagation steps in the particle filter. When dealing with multiple change points ($1 < D < K$), the computational complexity increases significantly, with the number of hypotheses to be tested growing to $\mathcal{O}\left(\binom{K}{D}\right)$. As previously discussed, the number of hypotheses needed to be tested at each epoch can be reduced using coordinate-ascent style updates; however, this does not solve the long time horizon issue. 

In contrast, the fast approach, while potentially less accurate, offers significant computational advantages. It requires only a single $\mathcal{O}(T)$ propagation step through the particle filter, regardless of the number of change points. This makes it particularly suitable for scenarios involving long time-series with multiple change points, where the greedy approach may become computationally infeasible.

The choice between these approaches ultimately depends on the specific characteristics of the data being analyzed, including the length of the time-series and the number of time-series samples. For shorter time-series or when computational resources allow, the greedy approach provides the most accurate results. However, for longer time-series or when dealing with large datasets, the fast approach offers a practical alternative that balances accuracy with computational efficiency.

\section{Experiments}
Here, we present numerical experiments to verify the validity of the proposed \texttt{CP-SDEVAE} model. To that end, we conduct two different types of experiments. First, we conduct experiments on synthetic data generated from an Ornstein-Uhlenbeck (OU) process. We use the OU process experiment to compare different variants of the \texttt{CP-SDEVAE} method (e.g., with/without change points, MLE-based change point updates vs. detection-based change point updates). We also use this dataset as a means to conduct basic ablations to understand the effect of different hyperparameters and the impact of the proposed predictive negative log-likelihood regularizer. The description and results of the ablation studies can be found in the Appendix \ref{ablations}. For all methods, we use a detection threshold of $\gamma=0$ for the log-likelihood ratio as a means to detect the change point. 

\subsection{Toy Data}
We consider a synthetic univariate time-series dataset generated from an OU process. {In the first example, we compare the generative performance of our proposed \texttt{SDEVAE} model with the \texttt{LatentSDE} model described in Section \ref{ss: latent_sde}. In this example, we assume the same switching OU process that is utilized for our ablation studies in Appendix \ref{ablations}.}In {second} example, we compare different variants of our proposed approach for an OU process with a single change point. In the {third} example, we test the robustness of the proposed method by introducing multiple change points. 

\subsubsection{{OU Process with Single Change Point (Slow Change)}}
{We consider time-series generated from a switching OU process that is the solution to the following SDE
\begin{align*}
    &dX_t = \theta_0 (\mu_0 - X_t) dt + \sigma_0 dW_t, \qquad t\in(0, \nu] \\
    &dX_t = \theta_1 (\mu_1 - X_t) dt + \sigma_1 dW_t, \qquad t\in(\nu, T],
\end{align*}
where we consider the parameter settings $\theta_0 = 0.05$, $\mu_0 = 4$, $\sigma_0 = 0.15$, $\theta_1 = 0.03$, $\mu_1 = -2$, $\sigma_1 = 15$, a change point of $\nu = 25$, and a time-horizon of $T=50$. We assume that for initial state $X_0$ is Gaussian distributed with mean $1$ and variance $0.01$. Using an Euler solver with step-size $\Delta_t = 1$ for all $t$, we simulate $N=500$ trajectories to construct a time-series dataset. Our goal is to compare the performance of the \texttt{LatentSDE} model and \texttt{SDEVAE} under different settings of the observation noise.\footnote{{The implementation of the \texttt{LatentSDE} model utilized is based on an implementation found in the  \texttt{torchsde} library at \url{https://github.com/google-research/torchsde} for modeling a Lorenz attractor. We adopted the implementation into our codebase and utilized an analogous architecture for fair comparison.}} Similar to the \texttt{LatentSDE} implementation, we utilize a GRU-based encoder and 2 layer MLP with 100 hidden units for the latent drift, latent diffusion, and decoder networks.}

\begin{figure}[t]
    \centering
    \includegraphics[width=\linewidth]{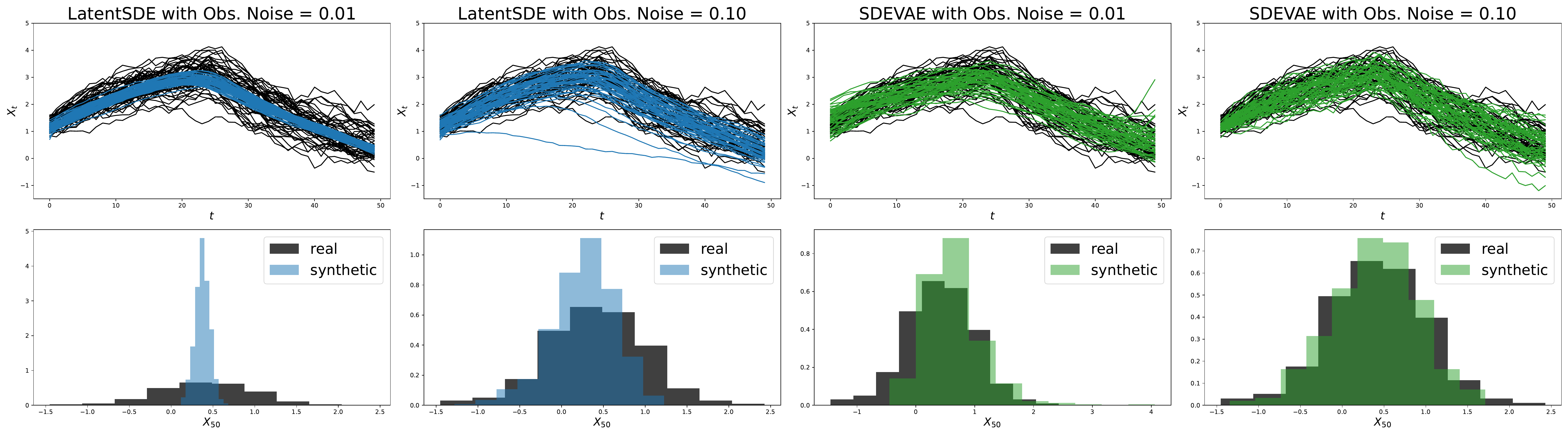}
    \caption{Comparison of generative performance of \texttt{LatentSDE} and \texttt{SDEVAE} (\emph{ours}) models for different observation noise settings.}
    \label{fig: latent_sde_compare}
\end{figure}

{We summarize the results in Figure \ref{fig: latent_sde_compare}. As evidenced by the figure, in the case of low observation noise of 0.01, the \texttt{LatentSDE} model underestimates both the noise of the time-series, with smooth trajectories that approximate that capture the mean of the time-series dataset well. When increasing the observation noise to 0.10, although the marginal distribution is better captured, the trajectories are still too smooth to accurately capture the distribution of the observed dataset. In contrast, \texttt{SDEVAE} is able to capture the trajectories well under both noise assumptions, with the observation noise of 0.10 providing more realistic samples. }

\subsubsection{OU Process with Single Change Point {(Sharp Change)}}
{We consider the same SDE as the previous experiments, except with }parameter settings $\theta_0 = 0.2$, $\mu_0 = 4$, $\sigma_0 = 1$, $\theta_1 = 0.5$, $\mu_1 = -4$, $\sigma_1 = 1$, a change point of $\nu = 25$, and a time-horizon of $T=50$.  {Under this parameterization of the switching OU process, the distribution shift is more sharp (since the reversion parameters $\theta_0$ and $\theta_1$ are larger).} We assume that for initial state $X_0$ is Gaussian distributed with mean $3$ and variance $1$. Using an Euler solver with step-size $\Delta_t = 1$ for all $t$, we simulate $N=100$ trajectories to construct a time-series dataset.  For each baseline model, we standardize the dataset using the global mean and variance taken across all time-series. For the baselines in this experiment, we consider four different variants of our method: (1) \texttt{CP-SDEVAE} assuming no change points {(which we also refer to as \texttt{SDEVAE})}; (2) \texttt{CP-SDEVAE} assuming a single change point with maximum likelihood-based change point updates; (3) \texttt{CP-SDEVAE} assuming a single change point with detection-based change point updates; and (4) \texttt{CP-SDEVAE} assuming two change points with detection-based ML updates. For each of the methods, we assume the following hyperparameter settings: for the encoder architecture with a 2-layer fully-connected neural network with standard ReLU activation functions; the latent dimension of the SDE is assumed to be 32; for all latent drift/diffusion functions, we use 2-layer fully-connected neural network with LipSwish activations; for the decoder network, we use a 1-layer fully-connected network with ReLU activations; we use the Adam optimizer with a weight decay of $1\times 10^{-4}$ and $J=5$ trajectories for each MC estimator of the ELBO. As previously mentioned, we utilize this example as a means to conduct an ablation study to test the effectiveness of different components of our model. For more information about the parameter settings of the ablation study and the key finds, please see Appendix \ref{ablations}. 

\begin{figure}[t]
    \centering
    \begin{subfigure}[b]{0.4\textwidth}
        \centering
        \includegraphics[width=\textwidth]{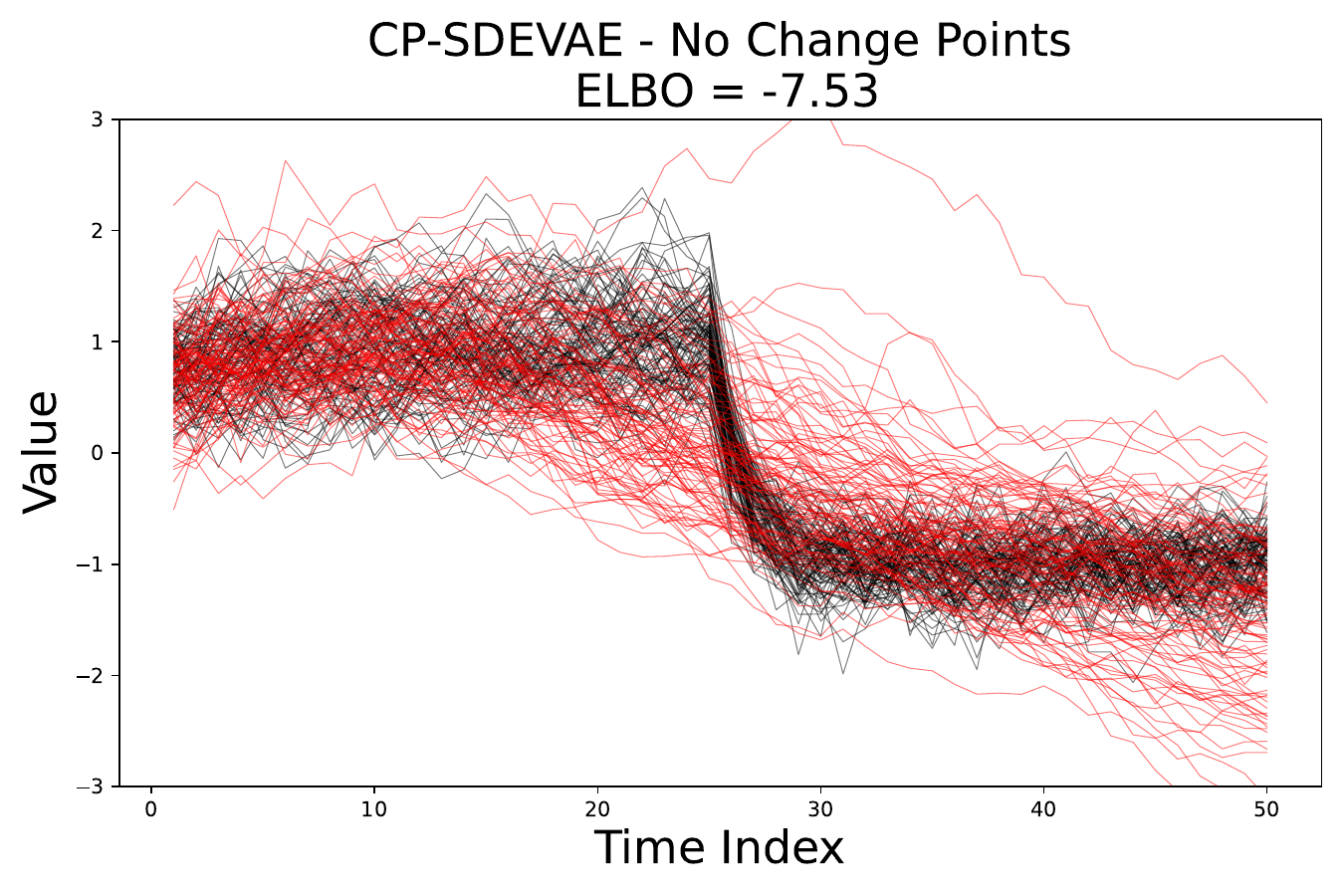}
    \end{subfigure}
    \begin{subfigure}[b]{0.4\textwidth}
        \centering
        \includegraphics[width=\textwidth]{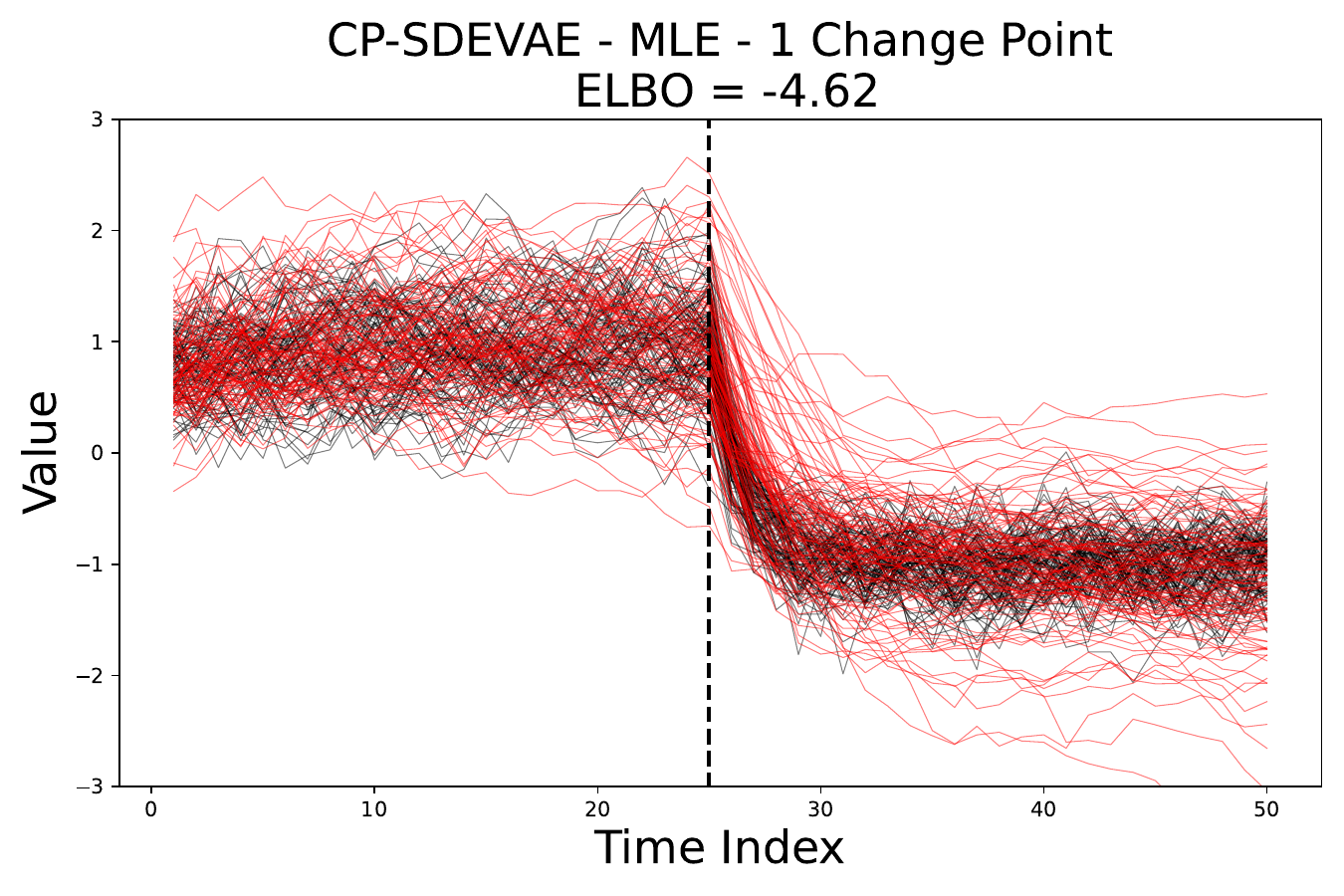}
    \end{subfigure}
    \begin{subfigure}[b]{0.4\textwidth}
        \centering
        \includegraphics[width=\textwidth]{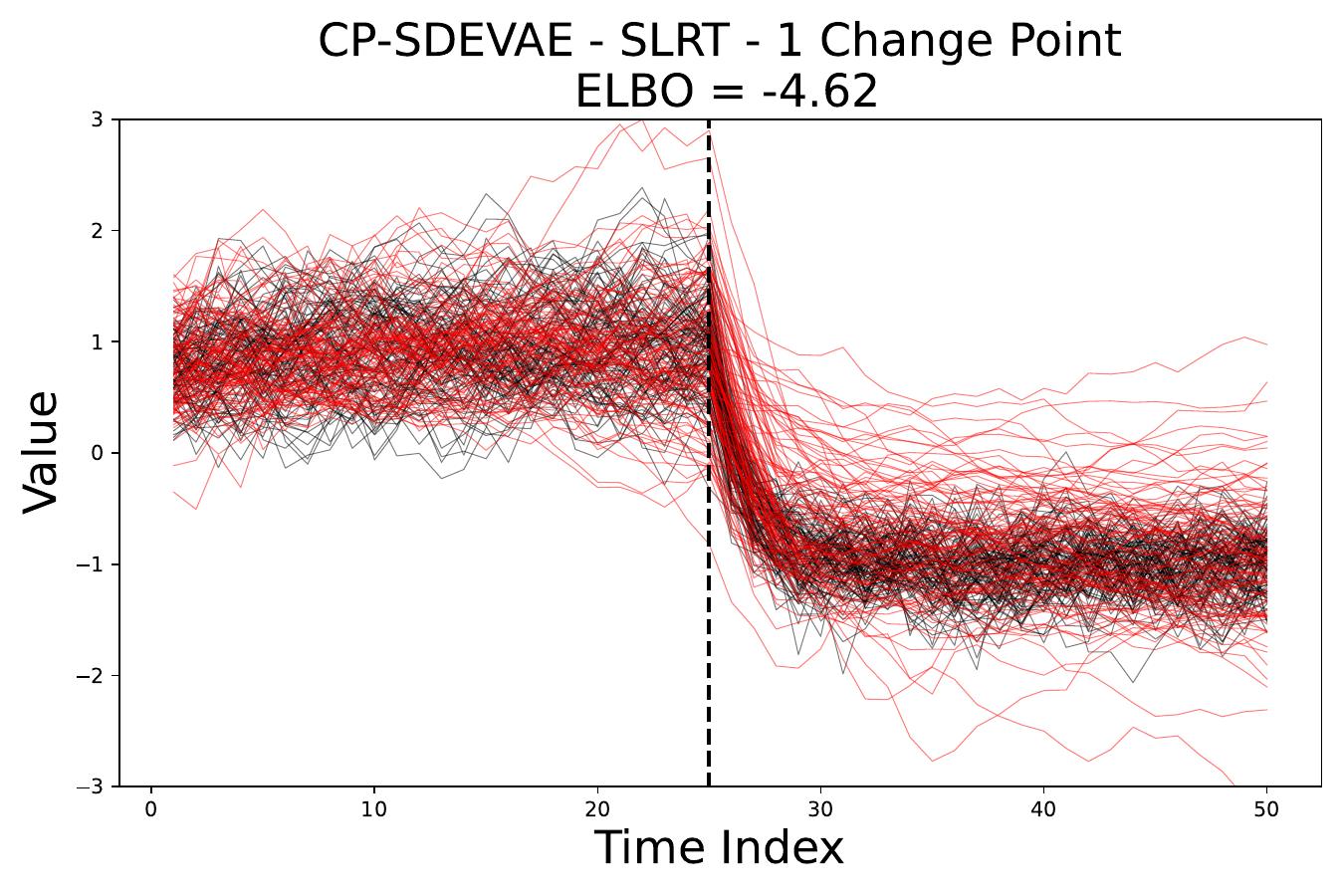}
    \end{subfigure}
    \begin{subfigure}[b]{0.4\textwidth}
        \centering
        \includegraphics[width=\textwidth]{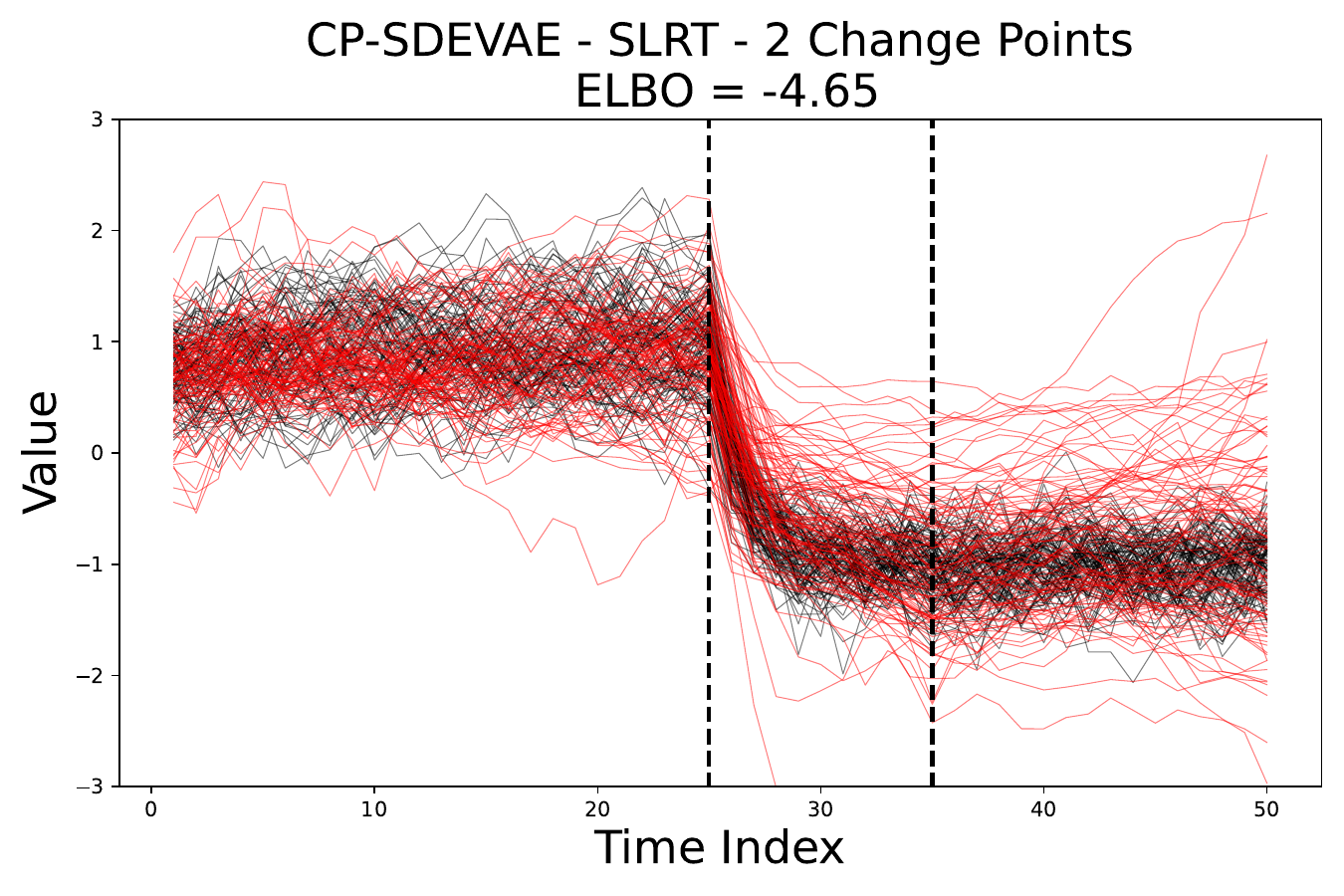}
    \end{subfigure}
    \caption{Time-series plots for each of the baseline methodologies.}
    \label{fig: toy_example_1_time_series_plots}
\end{figure}

\begin{figure}[t]
    \centering
    \begin{subfigure}[b]{0.4\textwidth}
        \centering
        \includegraphics[width=\textwidth]{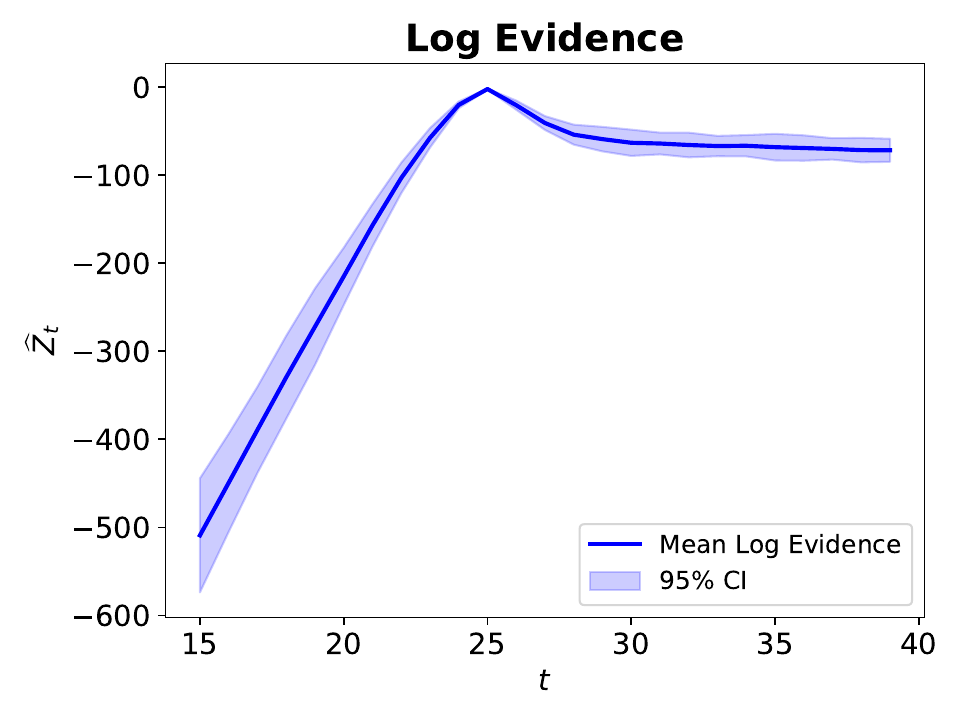}
    \end{subfigure}
    \begin{subfigure}[b]{0.4\textwidth}
        \centering
        \includegraphics[width=\textwidth]{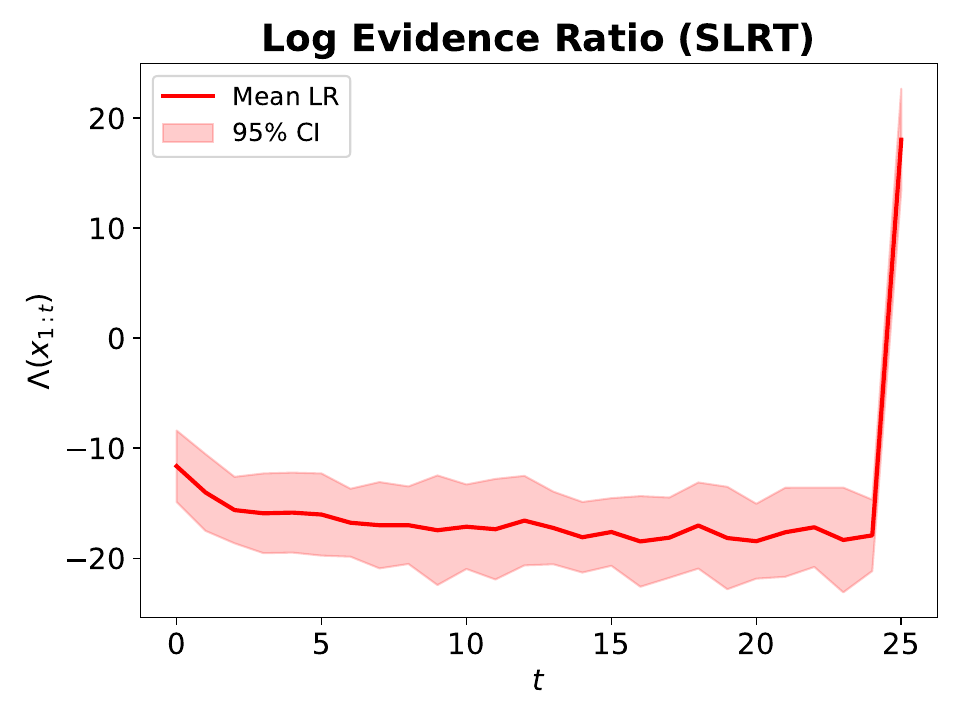}
    \end{subfigure}
    \caption{Comparison of log-likelihood ratio for detection-based change point updates and log evidence for MLE-based change point updates.}
    \label{fig: toy_example_1_cp_statistics_plots}
\end{figure}
 
\paragraph{Results:} A summary figure showing the results of the generated time-series from each model (along with the detected change point) is shown in Figure \ref{fig: toy_example_1_time_series_plots}. As can be demonstrated from Figure \ref{fig: toy_example_1_time_series_plots}, the change point variants of the proposed approach outperform the variant without any change points assumed. This is evident by looking at the ELBO metric shown in the title of each subfigure, where the change point based approaches achieve superior value, which indicates that the generated dataset with the change point variants demonstrate a higher degree of realism. Moreover, an interesting point to add is that overparameterization in terms of the number of change points does not impact the model's ability to capture the dataset; however, it leads to over representation in terms of the number of change points. We can see that when a second change point is assumed for the \texttt{CP-SDEVAE}, the second detected change point does not actually reflect a shift in distribution. Lastly, we point out that both the MLE-based change point update and the detection-based change point update lead to the same overall detected change point in the algorithm. As can be seen in Figure \ref{fig: toy_example_1_cp_statistics_plots}, the log marginal likelihood achieves its maximum value at the true change point value. For the detection-based update, we track the log-evidence ratio remains relatively stable until we approach the change point value. It is clear that for this example of distribution shift, both approaches are easily able to identify the change point. 

\subsubsection{OU Process with Multiple Change Points}
Consider a time-series generated from a switching OU process that is the solution to the following SDE:
\begin{align*}
    &dX_t = \theta_0 (\mu_0 - X_t) dt + \sigma_0 dW_t, \qquad t\in(0, \nu_1] \\
    &dX_t = \theta_1 (\mu_1 - X_t) dt + \sigma_1 dW_t, \qquad t\in(\nu_1, \nu_2], \\
    &dX_t = \theta_2 (\mu_2 - X_t) dt + \sigma_2 dW_t, \qquad t\in(\nu_2, T],
\end{align*}
where we consider the parameter settings $\theta_0 = 0.2$, $\mu_0 = 4$, $\sigma_0 = 1$, $\theta_1 = 0.5$, $\mu_1 = -4$, $\sigma_1 = 1$, $\theta_2 = 0.5$, $\mu_2 = 2$, $\sigma_2 = 0.5$,  change point values of $\nu_1 = 25$ and $\nu_2=75$, and a time-horizon of $T=100$. We utilize the same hyperparameter settings as the previous example. Using an Euler solver with step-size $\Delta_t = 1$ for all $t$, we simulate $N=100$ trajectories to construct a time-series dataset. Here, we also test three different  variants of our method: (1) \texttt{CP-SDEVAE} assuming no change points; (2) \texttt{CP-SDEVAE} assuming one change point with with detection-based change point updates; (3) \texttt{CP-SDEVAE} assuming two change points with detection-based change point updates. 

\begin{figure}[t]
    \centering
    \begin{subfigure}[b]{0.4\textwidth}
        \centering
        \includegraphics[width=\textwidth]{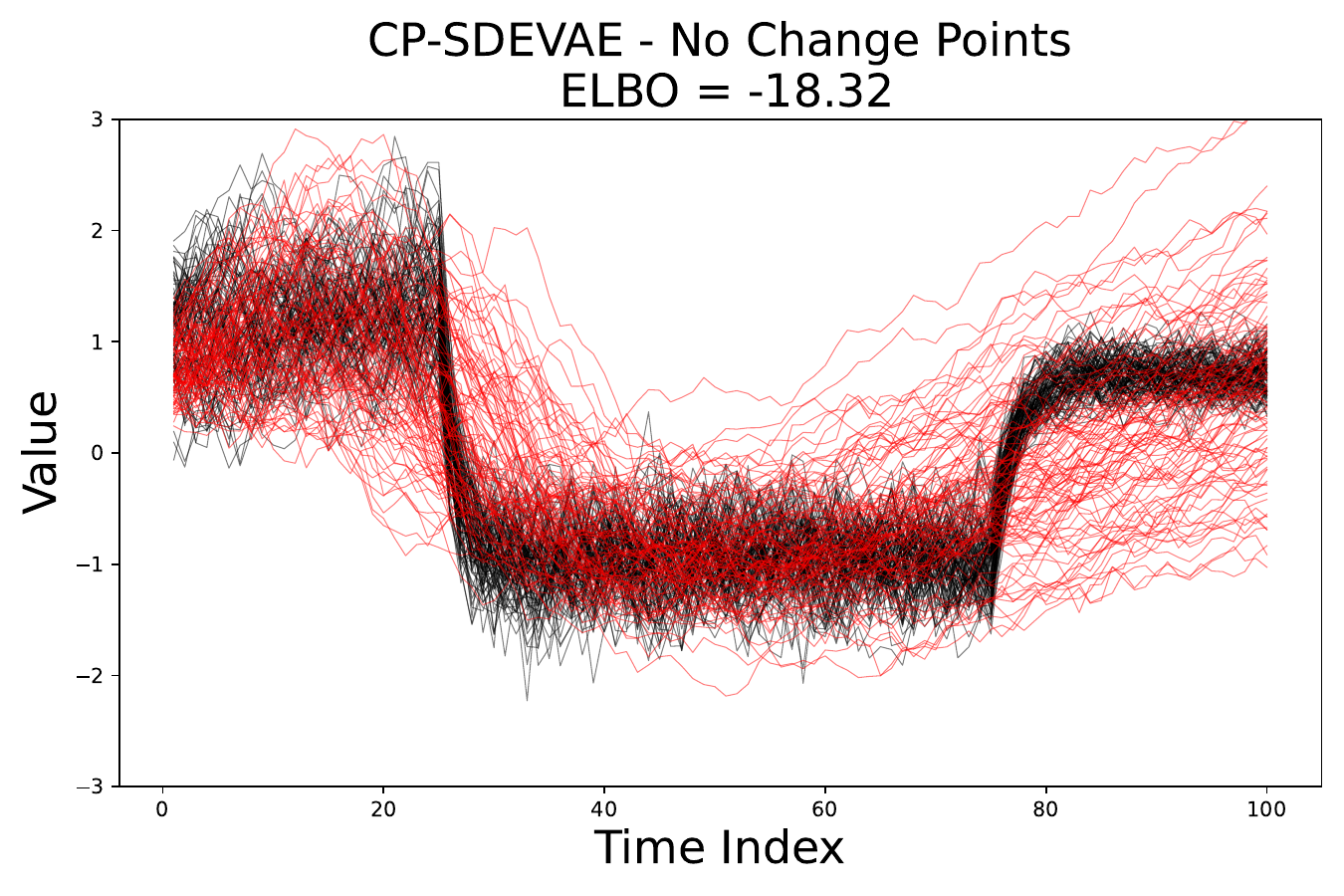}
    \end{subfigure}
    \begin{subfigure}[b]{0.4\textwidth}
        \centering
        \includegraphics[width=\textwidth]{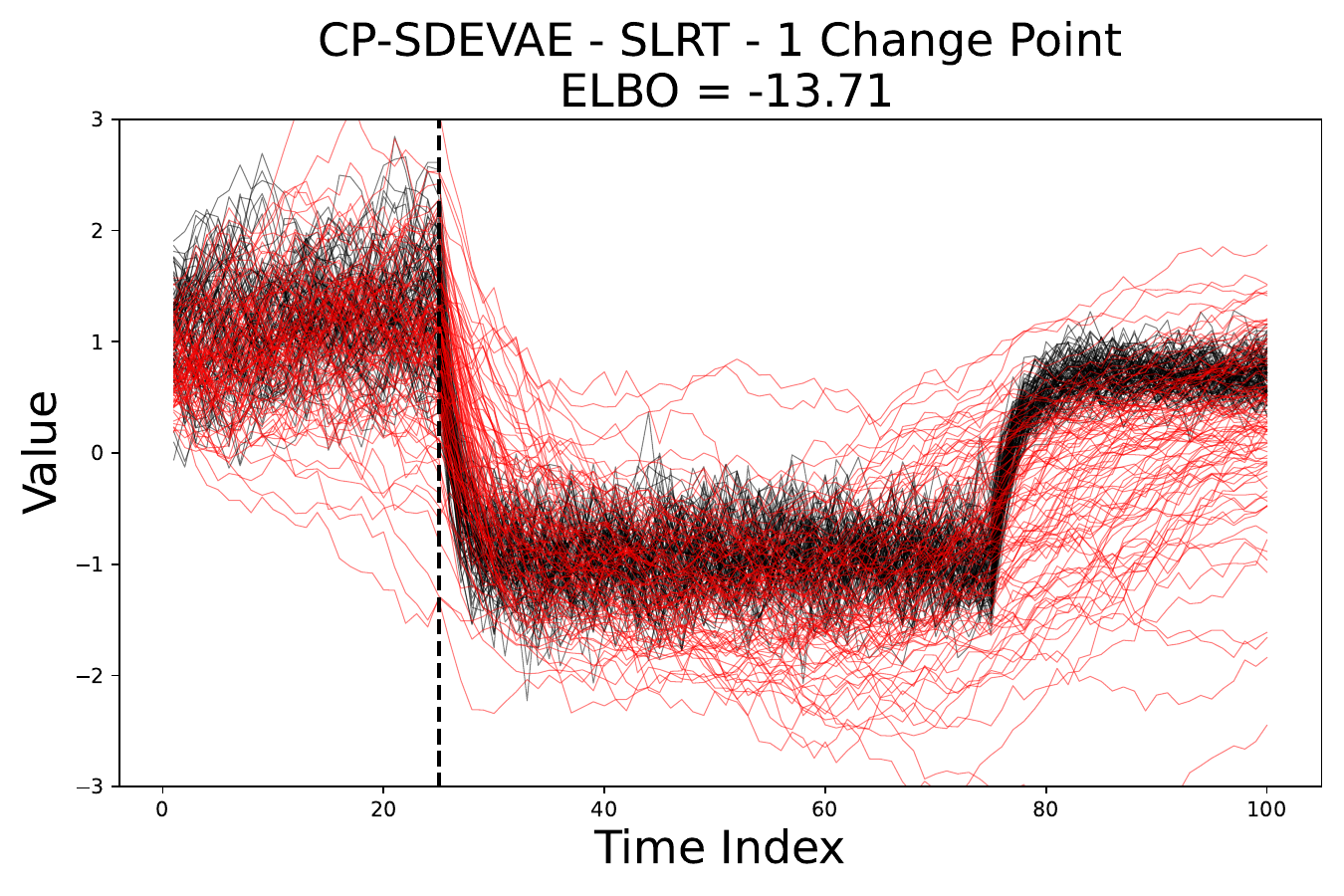}
    \end{subfigure}
    \begin{subfigure}[b]{0.4\textwidth}
        \centering
        \includegraphics[width=\textwidth]{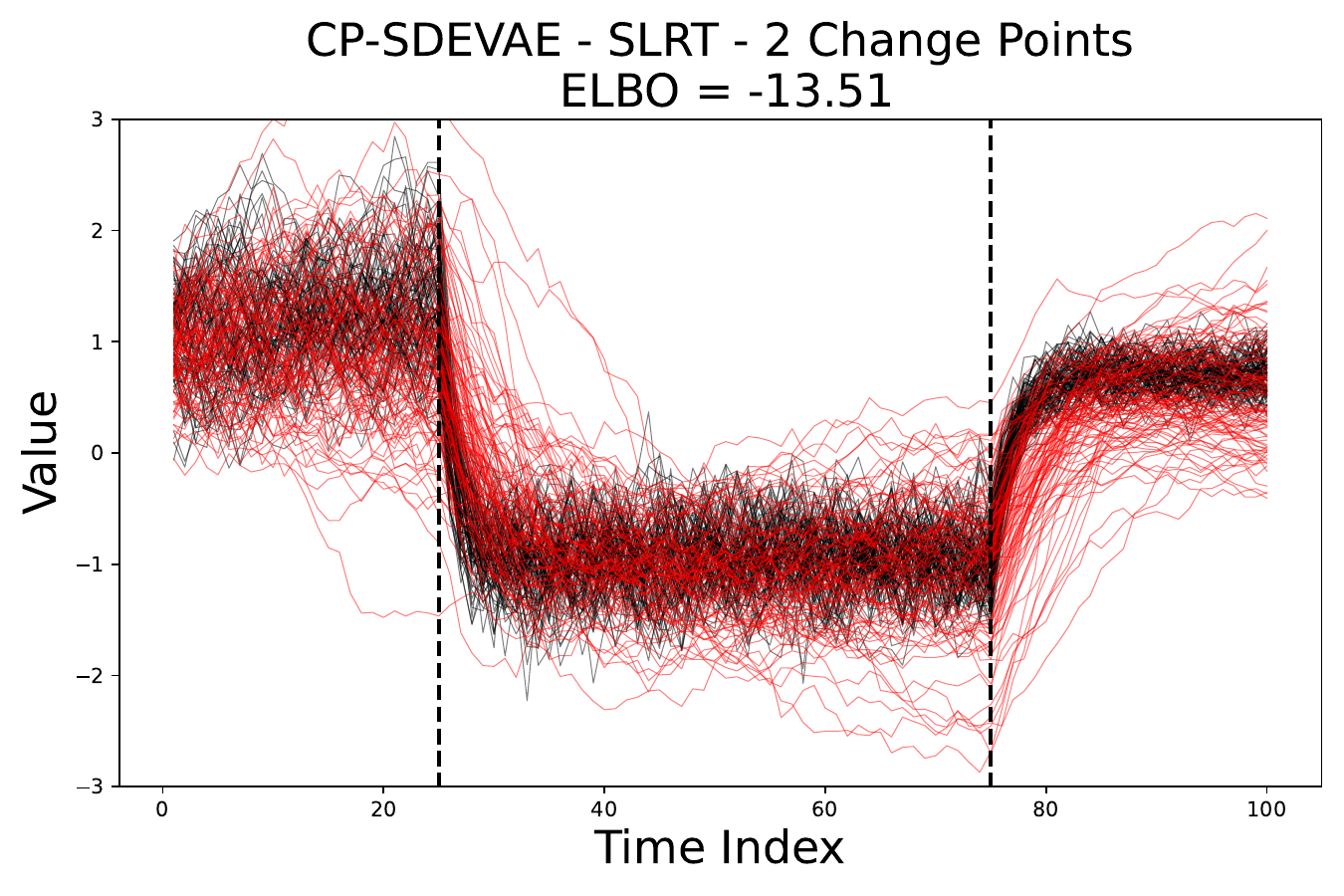}
    \end{subfigure}
    \begin{subfigure}[b]{0.4\textwidth}
        \centering
        \includegraphics[width=\textwidth]{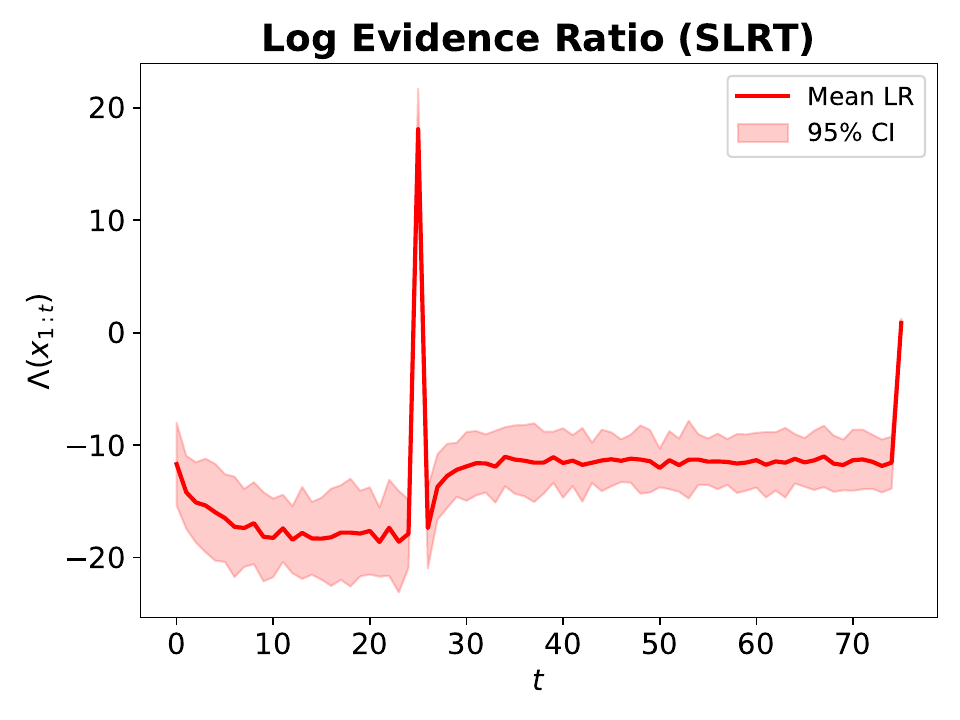}
    \end{subfigure}
    \caption{Time-series plots for each of the baseline methodologies for the multi-change example. The log-likelihood ratio statistic is also shown as a function of time.}
    \label{fig: toy_example_2_summary}
\end{figure}

\paragraph{Results:} A summary figure showing the results of the generated time-series from each model (along with the detected change point) and the the log-likelihood ratio over time is shown in Figure \ref{fig: toy_example_2_summary}. As we can see from Figure\ref{fig: toy_example_2_summary}, the \texttt{CP-SDEVAE} variants which assume a change point are able to better capture the shift in the distribution, where assuming a single change point shows better performance than assuming no change points at all and assuming two change points shows the best performance overall. We can also see that the log-likelihood ratio flips signs exactly at the location of the change points, demonstrating the methodologies' ability to reflect both small and large shifts in the distribution. 

\subsection{Real Data Experiments}

We conducted experiments utilizing five baseline models on four datasets as benchmarks to evaluate the proposed models in the time-series data generation task. Then, we evaluated the generated data based on three metrics, including marginal distribution, indistinguishability, and predictiveness. The evaluation results are shown in Table \ref{table_experiments}.

\subsubsection{Baseline Models and Datasets}

We included five representative time-series models as comparison baselines: TimeVAE \cite{desai2021timevae}, TimeGAN \cite{yoon2019time}, QuantGAN \cite{wiese2020quant}, LS4 \cite{zhou2023deep}, and SDEGAN \cite{li2020scalable, kidger2021neural}. Each model brings a distinct approach to the generation of synthetic time-series data.

Specifically, TimeVAE leverages a variational autoencoder structure with convolutional layers to capture the temporal dynamics and dependencies inherent in time-series. TimeGAN, QuantGAN, and SDEGAN are built on the generative adversarial network framework to maintain temporal correlations within the data. The LS4 model addresses the challenge of capturing long-term dependencies within time-series by introducing a state space ordinary differential equations (ODE) framework for latent variables. We note that our model is denoted by \texttt{CP-SDEVAE}$_L$, where $L$ denotes the number of change points assumed. 

We ran experiments on four datasets, including the S\&P500 prices, S\&P500 intraday prices, cryptocurrency prices, and air quality measurements. Supplementary Material provides a detailed description of these datasets. Time-series samples in all datasets have a fixed length of 120, while different datasets contain different numbers of samples. To evaluate the performance of the models, each dataset was split into two subsets, a training and a testing set. This split was conducted following the "80-20" rule that 80\% of the samples were randomly selected to form the training set, and the remaining 20\% was the testing set.

\subsubsection{Evaluation Metrics}

Our quantitative evaluation framework in this work encompasses three distinct metrics, each targeting a specific aspect of data quality and utility. The metrics include marginal similarity, data indistinguishability via classification, and predictive quality:
\begin{itemize}
    \item {\bf Marginal Distribution Similarity}: First, we assess the similarity between the marginal distributions of the generated and real time-series data. This evaluation is conducted based on a histogram-based difference. It calculates the density histogram of the real data, which serves as a reference for synthetic data. Here, we fixed the number of histogram bins for density calculation. Then, the comparison calculates the absolute difference in densities across corresponding distributions. Specifically, the final marginal score is obtained by averaging the discrepancies across all bins and data dimensions. The minimum value of the marginal score would be zero, indicating a perfect match in marginal distributions between real and synthetic data, while the upper bound of the marginal score cannot be directly determined without any constraint. Thus, the smaller the marginal score, the better the capability of the proposed model to replicate the distributional properties of the real data. We denote this score as ``Marginal ($\downarrow$)" in our experiments. 
    \item {\bf Synthetic Data Indistinguishability}: The second metric evaluates the indistinguishability of synthetic data from real data through a classification approach \cite{yoon2019time}. A downstream classifier, built upon the structured state space model, or the S4 model mentioned in \cite{zhou2023deep}, is trained to differentiate between synthetic and real data. Specifically, the S4 model maps the input time-series data to a higher dimensional space via a linear encoder to capture the temporal dynamics within the data and produces classification scores via a linear decoder. Real and generated synthetic data with the same sample size are concatenated to form a unified dataset, which is then split into training and testing sets. Binary labels are assigned to indicate the source of each time-series. The classifier is trained on the labeled dataset using binary cross-entropy (BCE) loss to distinguish real and synthetic samples. The outcome of this evaluation represents the model's accuracy in classifying synthetic versus real data in the testing set. We took the absolute value of the accuracy after subtracting 0.5, which is the value that indicates that the model cannot distinguish real and synthetic samples. Thus, lower classification scores indicate a higher degree of indistinguishability, suggesting that the synthetic data closely mimics the real data. This metric directly addresses the proposed model's capability of generating data that is qualitatively indistinguishable from real data. We denote this score as ``Classification ($\downarrow$)" in our experiments. 
    \item {\bf Synthetic Data Predictive Quality}: The third aspect of evaluation is the predictive quality of synthetic data. This metric provides insights into how well synthetic data can be a proxy for real data in prediction tasks. We utilized the S4 model with the same model structure as the predictor but enabled it to predict corresponding future values given time-series data. Unlike training on a combined dataset to evaluate the classification quality, the predictor was trained only on the synthetic data and tested on the real data. The prediction accuracy is calculated using the mean squared error (MSE) between the prediction results and the actual values. Thus, the lower the score, the better the proposed model's capability to generate high-fidelity synthetic data. We denote this score as ``Prediction ($\downarrow$)" in our experiments. 
\end{itemize}

\subsubsection{Results and Discussion}
The experimental results, presented in Table \ref{table_experiments}, demonstrate that \texttt{CP-SDEVAE} outperforms most baseline models, even without assuming any change points ($L=0$) in real datasets. Among the competitors, the LS4 model emerges as the closest rival, with \texttt{CP-SDEVAE} achieving comparable performance across nearly all datasets. Notably, \texttt{CP-SDEVAE} significantly outshines its GAN-based counterpart, SDE-GAN, even in scenarios without change points. This superior performance is attributed to the greater stability and efficacy of VAE-based generative models compared to GAN-based models, particularly with smaller datasets. For the S\&P 500 datasets and the cryptocurrency datasets, the baselines approaches such as a TimeVAE, TimeGAN, QuantGAN and SDEGAN tend to perform relatively worse than LS4 and \texttt{CP-SDEVAE}. This is due to the fact that those datasets contain stronger distributional shifts than the Air Quality dataset, which has the most comparable results across all methods. As a note, the Air Quality dataset demonstrates repeated fluctuations; making it a poor candidate for SDE modeling in the first place. 

Further discussion is warranted on the necessity of incorporating change points in our model. The architecture used in these experiments features an underlying latent SDE represented by an MLP with 3 hidden layers and 64 hidden units, which is adept at capturing nonlinear shifts in drift and diffusion. In the datasets tested, shifts occur more gradually rather than abruptly, reducing the importance of change points for some datasets. However, the utility of change points becomes evident with the S\&P 500 data, where employing $L=2$ change points significantly improved the classification score. To explore this hypothesis further, we conducted ablation studies on the S\&P 500 sectors dataset, varying the number of layers and hidden units, as shown in Table \ref{table_ablation_study} in Section \ref{ablations} of the Appendix. These studies reveal that as the complexity of the underlying neural SDE model increases, the need for change points to enhance performance diminishes.

{Finally, we compare the performance of \texttt{CP-SDEVAE} and LS4, as LS4 remained competitive across all datasets. Unlike \texttt{SDEVAE} and \texttt{CP-SDEVAE}, which are based on latent SDEs, LS4 employs a latent ODE with a structured state-space model. By leveraging a convolutional representation, LS4 bypasses explicit hidden state evaluations, significantly improving computational efficiency. Notably, LS4 has been shown in \cite{zhou2023deep} to perform well on stochastic data with sharp transitions. However, \texttt{CP-SDEVAE} offers two key advantages over LS4. First, if the application requires segmenting time-series into different regimes, \texttt{CP-SDEVAE} provides this capability, whereas LS4 assumes a single underlying model. This segmentation allows for learning distinct temporal dynamics and generating diverse time-series patterns during inference. Second, for interpretability, \texttt{CP-SDEVAE} explicitly models the latent diffusion term, whereas LS4 does not separately approximate it. This feature can be beneficial for analyzing the role of stochasticity in time-series evolution. } 

\begin{table*}[t]
    \caption{Time Series Experiments Summary Table}
    \label{table_experiments}
    \renewcommand{\arraystretch}{1.2}
    \footnotesize
    \begin{center}
        \begin{tabular}{C{14mm}C{20mm}C{11mm}C{11mm}C{11mm}C{11mm}C{11mm}|C{11mm}C{11mm}C{11mm}}
        \toprule
        \multirow{2}{*}{\parbox{14mm}{\vspace{3mm}\centering Data\\\scriptsize{(dimension)}}} & \multirow{2}{*}{\vspace{-2mm}Metric} & \multicolumn{5}{c}{Baseline Models} & \multicolumn{3}{c}{Proposed Models} \\
        \cmidrule(r){3-7}
        \cmidrule(l){8-10}
        & & \scriptsize{TimeVAE} & \scriptsize{TimeGAN} & \scriptsize{QuantGAN} & \scriptsize{LS4} & \scriptsize{SDEGAN} & \scriptsize{CP-}\tiny{SDEVAE$_0$} & \scriptsize{CP-}\tiny{SDEVAE$_1$} & \scriptsize{CP-}\tiny{SDEVAE$_2$} \\
        \midrule
        \multirow{3}{*}{\parbox{14mm}{\vspace{7mm}\centering S\&P 500\\(506, 120)}} & Marginal $\downarrow$ & $ 0.089 \pm 0.002 $ & $ 0.080 \pm 0.015 $ & $ 0.122 \pm 0.065 $ & $\bmbm{ 0.013 \pm 0.001 }$ & $ 0.076 \pm 0.019 $ & $ 0.030 \pm 0.004 $ & $ 0.027 \pm 0.006 $ & $ 0.026 \pm 0.006 $ \\
        & \parbox{20mm}{\vspace{0mm}\centering Classifica-\\tion $\downarrow$} & $ 0.300 \pm 0.107 $ & $ 0.261 \pm 0.130 $ & $ 0.500 \pm 0.000 $ & $ 0.105 \pm 0.071 $ & $ 0.285 \pm 0.071 $ & $ 0.149 \pm 0.048 $ & $ 0.168 \pm 0.059 $ & $\bmbm{ 0.061 \pm 0.035 }$ \\
        & Prediction $\downarrow$ & $ 0.617 \pm 0.273 $ & $ 0.365 \pm 0.475 $ & $ 0.582 \pm 0.422 $ & $\bmbm{ 0.045 \pm 0.004 }$ & $ 0.046 \pm 0.003 $ & $ 0.055 \pm 0.009 $ & $ 0.060 \pm 0.011 $ & $ 0.057 \pm 0.016 $ \\
        \midrule
        \multirow{3}{*}{\parbox{14mm}{\vspace{5mm}\centering S\&P 500\\intraday\\(500, 120)}} & Marginal $\downarrow$ & $ 0.086 \pm 0.002 $ & $ 0.036 \pm 0.006 $ & $ 0.100 \pm 0.032 $ & $ 0.016 \pm 0.004 $ & $ 0.091 \pm 0.023 $ & $ 0.012 \pm 0.002 $ & $ 0.012 \pm 0.004 $ & $\bmbm{ 0.010 \pm 0.003 }$ \\
        & \parbox{20mm}{\vspace{0mm}\centering Classifica-\\tion $\downarrow$} & $ 0.475 \pm 0.032 $ & $ 0.485 \pm 0.012 $ & $ 0.500 \pm 0.000 $ & $ 0.070 \pm 0.033 $ & $ 0.435 \pm 0.070 $ & $\bmbm{ 0.045 \pm 0.037 }$ & $ 0.070 \pm 0.040 $ & $ 0.060 \pm 0.060 $ \\
        & Prediction $\downarrow$ & $ 1.634 \pm 0.438 $ & $ 1.448 \pm 1.022 $ & $ 3.257 \pm 2.330 $ & $\bmbm{ 0.144 \pm 0.010 }$ & $ 0.886 \pm 1.422 $ & $ 0.199 \pm 0.030 $ & $ 0.195 \pm 0.027 $ & $ 0.166 \pm 0.022 $ \\
        \midrule
        \multirow{3}{*}{\parbox{14mm}{\vspace{5mm}\centering Crypto\\currency\\(12, 120)}} & Marginal $\downarrow$ & $ 0.120 \pm 0.012 $ & $\bmbm{ 0.102 \pm 0.012 }$ & $ 0.151 \pm 0.022 $ & $ 0.122 \pm 0.033 $ & $ 0.178 \pm 0.033 $ & $ 0.107 \pm 0.015 $ & $ 0.104 \pm 0.016 $ & $ 0.107 \pm 0.011 $ \\
        & \parbox{20mm}{\vspace{0mm}\centering Classifica-\\tion $\downarrow$} & $ 0.300 \pm 0.245 $ & $ 0.300 \pm 0.245 $ & $ 0.200 \pm 0.245 $ & $ 0.200 \pm 0.245 $ & $ 0.300 \pm 0.245 $ & $ 0.200 \pm 0.245 $ & $\bmbm{ 0.100 \pm 0.200 }$ & $\bmbm{ 0.100 \pm 0.200 }$ \\
        & Prediction $\downarrow$ & $\bmbm{ 0.073 \pm 0.022 }$ & $ 1.180 \pm 0.962 $ & $ 1.344 \pm 1.222 $ & $ 0.105 \pm 0.053 $ & $ 0.441 \pm 0.446 $ & $ 0.559 \pm 0.404 $ & $ 0.507 \pm 0.414 $ & $ 0.492 \pm 0.268 $ \\
        \midrule
        \multirow{3}{*}{\parbox{14mm}{\vspace{5mm}\centering Air\\quality\\(60, 120)}} & Marginal $\downarrow$ & $ 0.063 \pm 0.003 $ & $ 0.040 \pm 0.004 $ & $ 0.133 \pm 0.049 $ & $\bmbm{ 0.034 \pm 0.006 }$ & $ 0.109 \pm 0.015 $ & $ 0.057 \pm 0.024 $ & $ 0.044 \pm 0.005 $ & $ 0.056 \pm 0.012 $ \\
        & \parbox{20mm}{\vspace{0mm}\centering Classifica-\\tion $\downarrow$} & $\bmbm{ 0.220 \pm 0.098 }$ & $ 0.260 \pm 0.150 $ & $ 0.500 \pm 0.000 $ & $\bmbm{ 0.220 \pm 0.098 }$ & $ 0.380 \pm 0.098 $ & $\bmbm{ 0.220 \pm 0.160 }$ & $ 0.260 \pm 0.150 $ & $\bmbm{ 0.220 \pm 0.160 }$ \\
        & Prediction $\downarrow$ & $ 0.723 \pm 0.308 $ & $ 1.189 \pm 0.722 $ & $ 4.696 \pm 2.175 $ & $\bmbm{ 0.657 \pm 0.335 }$ & $ 3.845 \pm 4.274 $ & $ 1.272 \pm 0.582 $ & $ 0.894 \pm 0.232 $ & $ 0.938 \pm 0.230 $ \\
        \bottomrule
        \end{tabular}
    \end{center}
\end{table*}

\section{Conclusions }
In this work, we introduced a novel formulation of neural SDEs within the VAE framework, enabling seamless integration of change points using principles from maximum likelihood estimation and classical change point detection theory. We presented theoretical results demonstrating the convergence of change points and VAE model parameters to a stationary point, as well as the optimality of the Bayesian detector used in our method, which minimizes the probability of error in the test. We evaluated our algorithm on various real-world datasets, finding that our generative model achieves competitive performance compared to other deep generative models for time-series data and effectively captures distributional shifts.

\subsubsection*{{Future Work}}
{There are various research directions that can be considered as future work. Importantly, one limitation of the \texttt{\texttt{CP-SDEVAE}} method is that it requires that the number of change points be specified a priori. While selecting among multiple candidate models is straightforward (using model selection techniques like cross-validation or marginal likelihood maximization), it can be computationally expensive. An interesting extension of this work would consider that the number of change points need not be specified, and furthermore, would not require that all observed time-series share the same change point (i.e., heterogeneous change points). Another interesting direction for future work is to theoretically understand why the predictive log-likelihood regularizer improves the performance of the generative model (i.e., the model's noise estimation). Lastly, the current version of the \texttt{\texttt{CP-SDEVAE}} model uses a numerical algorithm that alternately optimizes the model parameters and the change points, due to the fact that the joint optimization of the model parameters (which are real valued) and the change points (which are discrete valued). While this work showed that theoretically the numerical algorithm converges to a stationary point of the ELBO, it introduces the additional computational burden of having to use particle filtering to resolve the change point updates, which can be expensive in practice. A practically useful extension of this work would allow for differentiable change point modeling, possibly using temporal point processes (see \citet{koley2023differentiable}). This would enable joint optimization of the model parameters and the change points, which comes with standard convergence guarantees of stochastic optimization algorithms.}

\subsubsection*{Broader Impact Statement}
{The presented work proposes an algorithm for modeling change points in time series data, which can potentially be applied in a variety of disciplines including ecology, finance, and robotics, among others. The proposed work can have a positive impact on society if used responsibly, as it offers the capability to model distribution shifts in time series data. We highlight that the proposed work can also be incorporated into risk forecasting systems and potentially provide more robustness in forecasting uncertainty when distribution shifts may occur. Importantly, the scenario considered in this work addressed change points modeled agnostic of exogenous drivers (i.e., covariate information). In applications where distribution shifts are known to be driven by certain covariates, responsible usage of this work would entail incorporating this covariate information into the generative model in order to detect and model change points.}



\bibliography{main}

\begin{thebibliography}{39}
\providecommand{\natexlab}[1]{#1}
\providecommand{\url}[1]{\texttt{#1}}
\expandafter\ifx\csname urlstyle\endcsname\relax
  \providecommand{\doi}[1]{doi: #1}\else
  \providecommand{\doi}{doi: \begingroup \urlstyle{rm}\Url}\fi

\bibitem[Browning et~al.(2020)Browning, Warne, Burrage, Baker, and Simpson]{browning2020identifiability}
Alexander~P Browning, David~J Warne, Kevin Burrage, Ruth~E Baker, and Matthew~J Simpson.
\newblock Identifiability analysis for stochastic differential equation models in systems biology.
\newblock \emph{Journal of the Royal Society Interface}, 17\penalty0 (173):\penalty0 20200652, 2020.

\bibitem[Casella \& Berger(2024)Casella and Berger]{casella2024statistical}
George Casella and Roger Berger.
\newblock \emph{Statistical inference}.
\newblock CRC Press, 2024.

\bibitem[Crisan \& Doucet(2002)Crisan and Doucet]{crisan2002survey}
Dan Crisan and Arnaud Doucet.
\newblock A survey of convergence results on particle filtering methods for practitioners.
\newblock \emph{IEEE Transactions on signal processing}, 50\penalty0 (3):\penalty0 736--746, 2002.

\bibitem[Desai et~al.(2021)Desai, Freeman, Wang, and Beaver]{desai2021timevae}
Abhyuday Desai, Cynthia Freeman, Zuhui Wang, and Ian Beaver.
\newblock Timevae: A variational auto-encoder for multivariate time series generation.
\newblock \emph{arXiv preprint arXiv:2111.08095}, 2021.

\bibitem[Djuric et~al.(2003)Djuric, Kotecha, Zhang, Huang, Ghirmai, Bugallo, and Miguez]{djuric2003particle}
Petar~M Djuric, Jayesh~H Kotecha, Jianqui Zhang, Yufei Huang, Tadesse Ghirmai, M{\'o}nica~F Bugallo, and Joaquin Miguez.
\newblock Particle filtering.
\newblock \emph{IEEE signal processing magazine}, 20\penalty0 (5):\penalty0 19--38, 2003.

\bibitem[Fisher(1925)]{fisher1925theory}
Ronald~Aylmer Fisher.
\newblock Theory of statistical estimation.
\newblock In \emph{Mathematical proceedings of the Cambridge philosophical society}, volume~22, pp.\  700--725. Cambridge University Press, 1925.

\bibitem[Hasan et~al.(2021)Hasan, Pereira, Farsiu, and Tarokh]{hasan2021identifying}
Ali Hasan, Joao~M Pereira, Sina Farsiu, and Vahid Tarokh.
\newblock Identifying latent stochastic differential equations.
\newblock \emph{IEEE Transactions on Signal Processing}, 70:\penalty0 89--104, 2021.

\bibitem[Heck et~al.(2024)Heck, Gelbrecht, Schaub, and Boers]{heck2024improving}
Linus Heck, Maximilian Gelbrecht, Michael~T Schaub, and Niklas Boers.
\newblock Improving the noise estimation of latent neural stochastic differential equations.
\newblock \emph{arXiv preprint arXiv:2412.17499}, 2024.

\bibitem[Hodgkinson et~al.(2020)Hodgkinson, van~der Heide, Roosta, and Mahoney]{hodgkinson2020stochastic}
Liam Hodgkinson, Chris van~der Heide, Fred Roosta, and Michael~W. Mahoney.
\newblock Stochastic normalizing flows, 2020.

\bibitem[Huillet(2007)]{huillet2007on}
Thierry Huillet.
\newblock On {W}right–{F}isher diffusion and its relatives.
\newblock \emph{Journal of Statistical Mechanics: Theory and Experiment}, 2007\penalty0 (11):\penalty0 11006, nov 2007.

\bibitem[Jia \& Benson(2019)Jia and Benson]{jia2019neural}
Junteng Jia and Austin~R Benson.
\newblock Neural jump stochastic differential equations.
\newblock \emph{Advances in Neural Information Processing Systems}, 32, 2019.

\bibitem[Kalman(1960)]{kalman1960new}
Rudolph~Emil Kalman.
\newblock A new approach to linear filtering and prediction problems.
\newblock 1960.

\bibitem[Kay(1993)]{kay1993statistical}
Steven~M Kay.
\newblock Statistical signal processing: estimation theory.
\newblock \emph{Prentice Hall}, 1:\penalty0 Chapter--3, 1993.

\bibitem[Kidger et~al.(2020)Kidger, Morrill, Foster, and Lyons]{kidger2020neural}
Patrick Kidger, James Morrill, James Foster, and Terry Lyons.
\newblock Neural controlled differential equations for irregular time series.
\newblock \emph{Advances in Neural Information Processing Systems}, 33:\penalty0 6696--6707, 2020.

\bibitem[Kidger et~al.(2021)Kidger, Foster, Li, and Lyons]{kidger2021neural}
Patrick Kidger, James Foster, Xuechen Li, and Terry~J Lyons.
\newblock Neural sdes as infinite-dimensional gans.
\newblock In \emph{International conference on machine learning}, pp.\  5453--5463. PMLR, 2021.

\bibitem[Kloeden et~al.(1992)Kloeden, Platen, Kloeden, and Platen]{kloeden1992stochastic}
Peter~E Kloeden, Eckhard Platen, Peter~E Kloeden, and Eckhard Platen.
\newblock \emph{Stochastic differential equations}.
\newblock Springer, 1992.

\bibitem[Koley et~al.(2023)Koley, Alimi, Singla, Bhattacharya, Ganguly, and De]{koley2023differentiable}
Paramita Koley, Harshavardhan Alimi, Shrey Singla, Sourangshu Bhattacharya, Niloy Ganguly, and Abir De.
\newblock Differentiable change-point detection with temporal point processes.
\newblock In \emph{International Conference on Artificial Intelligence and Statistics}, pp.\  6940--6955. PMLR, 2023.

\bibitem[Lelièvre \& Stoltz(2016)Lelièvre and Stoltz]{lelievre2016parial}
T.~Lelièvre and G.~Stoltz.
\newblock Partial differential equations and stochastic methods in molecular dynamics.
\newblock \emph{Acta Numerica}, 25:\penalty0 681–880, 2016.
\newblock \doi{10.1017/S0962492916000039}.

\bibitem[Li et~al.(2020)Li, Wong, Chen, and Duvenaud]{li2020scalable}
Xuechen Li, Ting-Kam~Leonard Wong, Ricky~TQ Chen, and David~K Duvenaud.
\newblock Scalable gradients and variational inference for stochastic differential equations.
\newblock In \emph{Symposium on Advances in Approximate Bayesian Inference}, pp.\  1--28. PMLR, 2020.

\bibitem[Llorente et~al.(2023)Llorente, Martino, Delgado, and Lopez-Santiago]{llorente2023marginal}
Fernando Llorente, Luca Martino, David Delgado, and Javier Lopez-Santiago.
\newblock Marginal likelihood computation for model selection and hypothesis testing: an extensive review.
\newblock \emph{SIAM review}, 65\penalty0 (1):\penalty0 3--58, 2023.

\bibitem[Mallasto et~al.(2019)Mallasto, Mont{\'u}far, and Gerolin]{mallasto2019well}
Anton Mallasto, Guido Mont{\'u}far, and Augusto Gerolin.
\newblock How well do wgans estimate the wasserstein metric?
\newblock \emph{arXiv preprint arXiv:1910.03875}, 2019.

\bibitem[Oksendal(2013)]{oksendal2013stochastic}
Bernt Oksendal.
\newblock \emph{Stochastic differential equations: an introduction with applications}.
\newblock Springer Science \& Business Media, 2013.

\bibitem[Page(1954)]{page1954continuous}
Ewan~S Page.
\newblock Continuous inspection schemes.
\newblock \emph{Biometrika}, 41\penalty0 (1/2):\penalty0 100--115, 1954.

\bibitem[Polunchenko \& Tartakovsky(2012)Polunchenko and Tartakovsky]{polunchenko2012state}
Aleksey~S Polunchenko and Alexander~G Tartakovsky.
\newblock State-of-the-art in sequential change-point detection.
\newblock \emph{Methodology and computing in applied probability}, 14:\penalty0 649--684, 2012.

\bibitem[Rainforth et~al.(2018)Rainforth, Cornish, Yang, Warrington, and Wood]{rainforth2018nesting}
Tom Rainforth, Robert Cornish, Hongseok Yang, Andrew Warrington, and Frank Wood.
\newblock On nesting monte carlo estimators, 2018.

\bibitem[Ramdas et~al.(2017)Ramdas, Garc{\'\i}a~Trillos, and Cuturi]{ramdas2017wasserstein}
Aaditya Ramdas, Nicol{\'a}s Garc{\'\i}a~Trillos, and Marco Cuturi.
\newblock On {W}asserstein two-sample testing and related families of nonparametric tests.
\newblock \emph{Entropy}, 19\penalty0 (2):\penalty0 47, 2017.

\bibitem[Ryzhikov et~al.(2022)Ryzhikov, Hushchyn, and Derkach]{ryzhikov2022latent}
Artem Ryzhikov, Mikhail Hushchyn, and Denis Derkach.
\newblock Latent neural stochastic differential equations for change point detection.
\newblock \emph{arXiv preprint arXiv:2208.10317}, 2022.

\bibitem[Sauer(2011)]{sauer2011numerical}
Timothy Sauer.
\newblock Numerical solution of stochastic differential equations in finance.
\newblock In \emph{Handbook of computational finance}, pp.\  529--550. Springer, 2011.

\bibitem[Smith et~al.(1962)Smith, Schmidt, and McGee]{smith1962application}
Gerald~L Smith, Stanley~F Schmidt, and Leonard~A McGee.
\newblock \emph{Application of statistical filter theory to the optimal estimation of position and velocity on board a circumlunar vehicle}, volume 135.
\newblock National Aeronautics and Space Administration, 1962.

\bibitem[Soboleva \& Pleasants(2003)Soboleva and Pleasants]{soboleva2003population}
T.~K. Soboleva and A.~B. Pleasants.
\newblock Population growth as a nonlinear stochastic process.
\newblock \emph{Mathematical and Computer Modelling}, 38\penalty0 (11):\penalty0 1437--1442, 2003.

\bibitem[Stanczuk et~al.(2021)Stanczuk, Etmann, Kreusser, and Sch{\"o}nlieb]{stanczuk2021wasserstein}
Jan Stanczuk, Christian Etmann, Lisa~Maria Kreusser, and Carola-Bibiane Sch{\"o}nlieb.
\newblock Wasserstein gans work because they fail (to approximate the wasserstein distance).
\newblock \emph{arXiv preprint arXiv:2103.01678}, 2021.

\bibitem[Sun et~al.(2024)Sun, El-Laham, and Vyetrenko]{sun2024neural}
Zhongchang Sun, Yousef El-Laham, and Svitlana Vyetrenko.
\newblock Neural stochastic differential equations with change points: A generative adversarial approach.
\newblock In \emph{ICASSP 2024-2024 IEEE International Conference on Acoustics, Speech and Signal Processing (ICASSP)}, pp.\  6965--6969. IEEE, 2024.

\bibitem[Truong et~al.(2020)Truong, Oudre, and Vayatis]{truong2020selective}
Charles Truong, Laurent Oudre, and Nicolas Vayatis.
\newblock Selective review of offline change point detection methods.
\newblock \emph{Signal Processing}, 167:\penalty0 107299, 2020.

\bibitem[Tzen \& Raginsky(2019)Tzen and Raginsky]{tzen2019neural}
Belinda Tzen and Maxim Raginsky.
\newblock Neural stochastic differential equations: Deep latent gaussian models in the diffusion limit.
\newblock \emph{arXiv preprint arXiv:1905.09883}, 2019.

\bibitem[Wiese et~al.(2020)Wiese, Knobloch, Korn, and Kretschmer]{wiese2020quant}
Magnus Wiese, Robert Knobloch, Ralf Korn, and Peter Kretschmer.
\newblock Quant gans: Deep generation of financial time series.
\newblock \emph{Quantitative Finance}, 20\penalty0 (9):\penalty0 1419--1440, 2020.

\bibitem[Yang \& Kushner(1991)Yang and Kushner]{yang1991monte}
Jichuan Yang and Harold~J Kushner.
\newblock A monte carlo method for sensitivity analysis and parametric optimization of nonlinear stochastic systems.
\newblock \emph{SIAM journal on control and optimization}, 29\penalty0 (5):\penalty0 1216--1249, 1991.

\bibitem[Yoon et~al.(2019)Yoon, Jarrett, and Van~der Schaar]{yoon2019time}
Jinsung Yoon, Daniel Jarrett, and Mihaela Van~der Schaar.
\newblock Time-series generative adversarial networks.
\newblock \emph{Advances in Neural Information Processing Systems}, 32, 2019.

\bibitem[Zhang et~al.(2017)Zhang, Guo, Dong, He, Xu, and Chen]{zhang2017cautionary}
Shuyi Zhang, Bin Guo, Anlan Dong, Jing He, Ziping Xu, and Song~Xi Chen.
\newblock Cautionary tales on air-quality improvement in beijing.
\newblock \emph{Proceedings of the Royal Society A: Mathematical, Physical and Engineering Sciences}, 473\penalty0 (2205), 2017.

\bibitem[Zhou et~al.(2023)Zhou, Poli, Xu, Massaroli, and Ermon]{zhou2023deep}
Linqi Zhou, Michael Poli, Winnie Xu, Stefano Massaroli, and Stefano Ermon.
\newblock Deep latent state space models for time-series generation.
\newblock In \emph{International Conference on Machine Learning}, pp.\  42625--42643. PMLR, 2023.

\end{thebibliography}
\bibliographystyle{tmlr}

\appendix
\section{Appendix}

\subsection{Theoretical Proofs}
In this part of the appendix, we provide proofs for the propositions presented in the paper. For convenience, we restate our assumptions for each proposition and provide a short justification of the assumption. 

\subsubsection{Assumptions for Convergence to a Stationary Point}
\label{ss_assumptions: details}
To prove that the training algorithm for \texttt{CP-SDEVAE} converges to a stationary point, we made a few assumptions (which we find reasonable). We state these assumptions in the following and provide a short justification:
\begin{assumption}\label{assumption:model}[Model update phase always leads to an ELBO improvement]
    Let $\bftheta'\leftarrow \bftheta$ and $\bfphi'\leftarrow \bfphi$ denote the  model and variational parameter updates obtained from numerically maximizing ${\cal E}_{\bftheta, \bfphi,\nu}(\x_{\rm obs})$. We assume that for fixed change point $\nu$, our model parameter updates lead to an improvement in the ELBO:
    \begin{equation*}
        \Delta{\cal E}(\bftheta', \bftheta, \bfphi', \bfphi) \triangleq {\cal E}_{\bftheta', \bfphi', \nu}(\x_{\rm obs}) - {\cal E}_{\bftheta, \bfphi, \nu}(\x_{\rm obs})\geq \delta_{\rm m}, \quad \delta_{\rm m} \geq 0 
    \end{equation*}
\end{assumption}
\paragraph{Justification:} The model update phase aims to update the model parameters $\bftheta$ and the variational approximation parameters $\bfphi$ by maximizing the ELBO for a fixed change point. Standard stochastic optimization approaches guarantee that these  updates improve the objective function (in expectation). Therefore, it is a reasonable assumption. 
\begin{assumption}\label{assumption:change}[Change point update phase always leads to a marginal likelihood improvement]
    Let $\nu'\leftarrow \nu$ denote the change point update obtained from numerically maximizing the marginal likelihood: $p_{\bftheta}(\x_{\rm obs}|\nu=t)$. We assume that for fixed model parameters $\bftheta$, our change point updates lead to an improvement in the marginal likelihood:
    \begin{equation*}
        \Delta{\cal L}(\nu', \nu)\triangleq \log p_{\bftheta}(\x_{\rm obs}|\nu=\nu') - \log p_{\bftheta}(\x_{\rm obs}|\nu=\nu)\geq \delta_{\rm cp}, \quad \delta_{\rm cp} \geq 0
    \end{equation*}
\end{assumption}
\paragraph{Justification:} With the maximum likelihood approach, the change point updates are made to maximize $\log p_{\bftheta}(\x_{\rm obs}|\nu=t)$ w.r.t. the change point $\nu$. Since we assume that the change point occurs at a sample time $t\in{\cal T}$ this is a discrete optimization problem that can numerically be solved with Bayesian filtering approaches. In this work, we utilized the bootstrap particle filter (BPF), which provides an estimator of the marginal likelihood that converge almost surely to the true marginal likelihood. Therefore, the greedy change point update guarantees the assumption in the limit infinite particles used in the BPF to form the estimator of the marginal likelihood (justified by the strong law of large numbers). 
\begin{assumption}\label{assumption:widen}[Change point updates do not widen the inference gap]
    Let $\nu'\leftarrow \nu$ denote the change point update obtained from numerically maximizing the marginal likelihood: $p_{\bftheta}(\x_{\rm obs}|\nu=t)$. We assume for fixed variational parameter $\bfphi$, the KLD between the variational approximation and the posterior distribution of the initial state $\z_0$ is smaller than the improvement in the marignal likelihood: 
    \begin{equation*}
        {\cal D}_{\rm KL}(q_{\bfphi}(\z_0|\x_{\rm obs})\|p_{\bftheta, \nu'}(\z_0|\x_{\rm obs})) - {\cal D}_{\rm KL}(q_{\bfphi}(\z_0|\x_{\rm obs})\|p_{\bftheta, \nu}(\z_0|\x_{\rm obs})) \ { \leq } \ \delta_{\rm KL}, \quad \delta_{\rm cp} \geq \delta_{\rm KL},
    \end{equation*}
    where $\delta_{\rm cp}$ denotes the maximum improvement in the marginal likelihood during the change point update phase.
\end{assumption}
\paragraph{Justification:} Unlike the aforementioned assumptions, this assumption is non-standard and requires proper mathematical justification based on this specific problem setting. Let $\Delta {\cal D}_{\rm KL}(\nu', \nu)$ denote the change in the KLD between the variational approximation and the true posterior. We can manipulate this expression as follows:
\begin{align*}
    \Delta {\cal D}_{\rm KL}(\nu', \nu) &= {\cal D}_{\rm KL}(q_{\bfphi}(\z_0|\x_{\rm obs})\|p_{\bftheta, \nu'}(\z_0|\x_{\rm obs})) - {\cal D}_{\rm KL}(q_{\bfphi}(\z_0|\x_{\rm obs})\|p_{\bftheta, \nu}(\z_0|\x_{\rm obs})) \\
    &= \mathbb{E}_{q_{\bfphi}}\left[\log\left(\frac{q_{\bfphi}(\z_0|\x_{\rm obs})}{p_{\bftheta, \nu'}(\z_0|\x_{\rm obs})}\right)\right] - \mathbb{E}_{q_{\bfphi}}\left[\log\left(\frac{q_{\bfphi}(\z_0|\x_{\rm obs})}{p_{\bftheta, \nu}(\z_0|\x_{\rm obs})}\right)\right] \\
    &= \underbrace{\mathbb{E}_{q_{\bfphi}}\left[\log\left(\frac{q_{\bfphi}(\z_0|\x_{\rm obs})}{q_{\bfphi}(\z_0|\x_{\rm obs})}\right)\right]}_{0} + \mathbb{E}_{q_{\bfphi}}\left[\log\left(\frac{p_{\bftheta, \nu}(\z_0|\x_{\rm obs})}{p_{\bftheta, \nu'}(\z_0|\x_{\rm obs})}\right)\right] &&\mathrm{(Manipulate \ logarithm)}\\
    &=\mathbb{E}_{q_{\bfphi}}\left[\log\left(\frac{\frac{p_{\bftheta, \nu}(\x_{\rm obs}|\z_0)p(\z_0)}{p_{\bftheta, \nu}(\x_{\rm obs})}}{\frac{p_{\bftheta, \nu'}(\x_{\rm obs}|\z_0)p(\z_0)} {p_{\bftheta, \nu'}(\x_{\rm obs})}}\right)\right] &&\mathrm{(Bayes' \ theorem)} \\
    &= \mathbb{E}_{q_{\bfphi}}\left[\log\left(\frac{p_{\bftheta, \nu'}(\x_{\rm obs})}{p_{\bftheta, \nu}(\x_{\rm obs})}\right)\right] + \mathbb{E}_{q_{\bfphi}}\left[\log\left(\frac{p_{\bftheta, \nu}(\x_{\rm obs}|\z_0)}{p_{\bftheta, \nu'}(\x_{\rm obs}|\z_0)}\right)\right] &&\mathrm{(Manipulate \ logarithm)}\\
    &= \underbrace{\log\left(\frac{p_{\bftheta, \nu'}(\x_{\rm obs})}{p_{\bftheta, \nu}(\x_{\rm obs})}\right)}_{\delta_{\rm cp}} + \underbrace{\mathbb{E}_{q_{\bfphi}}\left[\log\left(\frac{p_{\bftheta, \nu}(\x_{\rm obs}|\z_0)}{p_{\bftheta, \nu'}(\x_{\rm obs}|\z_0)}\right)\right]}_{\delta_{\rm lr}}
\end{align*}
Therefore, to justify our assumption that $\Delta {\cal D}_{\rm KL}(\nu', \nu) \ {\leq} \ \delta_{\rm KL}$, where $\delta_{\rm cp}\geq \delta_{\rm KL}$, we need to justify the inequality $\delta_{\rm lr}<0$ {, which is equivalent to showing that:
\begin{align*}
\log p_{\bftheta, \nu'}(\x_{\rm obs}|\z_0) \geq \log p_{\bftheta, \nu}(\x_{\rm obs}|\z_0), \quad \z_0\in\mathbb{R}^{d_z}.
\end{align*}
The key difference between this inequality and Assumption 2 is the additional conditioning on the initial state $\z_0$. We can write each of these conditional likelihood as integral measures with respect to $\z_{\rm obs}$:
\begin{equation*}
     p_{\bftheta,\nu}(\x_{\rm obs}|\z_0) = \int \left(\prod_{k=1}^K p_{\bftheta}(\x_{t_k}|\z_{t_k}) p_{\bftheta, \nu}(\z_{t_k}|\z_{t_{k-1}}) \right)d\z_{t_1}\cdots d\z_{t_K}
\end{equation*}
We can see that as $K$ grows, the dependence of the latent state $\z_0$ gets smaller. } This is something we should expect from SDE models, since the dependence of the initial state $\z_0$ on the observed trajectory $\x_{\rm obs}$ should be minimal for long sequences. Since the change point updates are made to maximize marginal likelihood, on average we expect that {$\log p_{\bftheta, \nu'}(\x_{\rm obs}|\z_0)$ be larger than $\log p_{\bftheta, \nu}(\x_{\rm obs}|\z_0)$ }, and therefore it is reasonable to assume ${\delta_{\rm lr}}<0$. {This implies that $\delta_{\rm KL}\leq \delta_{\rm cp}$ and the assumption is verified. We emphasize that the validity of this assumption is conditional on the fact that the time-series being considered are of sufficient length.}

\subsection{Proof for Convergence to a Stationary Point}
\label{ss_proof: stationary_point}
In this subsection, we prove Theorem \ref{theorem:stationary_point}. 
\begin{proof}
    We would like to show that for any $i\in\mathbb{N}$:
    \begin{align*}
        \Delta {\cal E}(\bftheta', \bftheta, \bfphi', \bfphi, \nu', \nu) \triangleq {\cal E}_{\bftheta', \bfphi', \nu'}(\x_{\rm obs}) - {\cal E}_{\bftheta, \bfphi, \nu}(\x_{\rm obs})  \geq 0
    \end{align*}
    We can show this in two steps by showing that:
    \begin{equation*}
    \tikzmarknode[black!70!black]{c}{{\cal E}_{\bftheta', \bfphi', \nu'}(\x_{\rm obs})} \geq \tikzmarknode[black!70!black]{b}{{\cal E}_{\bftheta', \bfphi', \nu}(\x_{\rm obs})}\geq \tikzmarknode[black!70!black]{a}{{\cal E}_{\bftheta, \bfphi, \nu}(\x_{\rm obs})}
        \end{equation*}
    \begin{tikzpicture}[remember picture,overlay]
    \draw[-latex]([yshift=-1.0ex]a.south) to[bend left]node[below]{\scriptsize$\textcolor{blue!70!black}{\mathrm{Model \ Update}}$} ([yshift=-1.0ex]b.south);
    \draw[-latex]([xshift=-0.5ex, yshift=-1.0ex]b.south) to[bend left]node[below]{\scriptsize$\textcolor{red!70!black}{\mathrm{Change \ Point \ Update}}$} ([xshift=0.0ex, yshift=-1.0ex]c.south);
    \end{tikzpicture}
    
    \vspace{0.2cm} By Assumption \ref{assumption:model}, if the updates to the model parameters are efficient (for fixed change point $\nu$), then we the first inequality is trivial, i.e.,
    \begin{equation*}
        \Delta{\cal E}(\bftheta', \bftheta, \bfphi', \bfphi) \geq \delta_m \geq 0 \implies {\cal E}_{\bftheta', \bfphi', \nu}(\x_{\rm obs}) \geq {\cal E}_{\bftheta, \bfphi, \nu}(\x_{\rm obs})
    \end{equation*}
    To show the second inequality, we capitalize on Assumption \ref{assumption:widen} which claims the variational inference gap does not widen after change point updates. The change in the logarithm of the marginal likelihood (pre/post- change point updates) is given by:
    \begin{align*}
        \Delta{\cal L}(\nu', \nu) &= \underbrace{\left({\cal E}_{\bftheta', \bfphi', \nu'}(\x_{\rm obs})-{\cal E}_{\bftheta', \bfphi', \nu}(\x_{\rm obs})\right)}_{\Delta{\cal E}(\nu', \nu)} + \\
        &\qquad \underbrace{\left({\cal D}_{\rm KL}(q_{\bfphi'}(\z_0|\x_{\rm obs})\|p_{\bftheta', \nu'}(\z_0|\x_{\rm obs})) - {\cal D}_{\rm KL}(q_{\bfphi'}(\z_0|\x_{\rm obs})\|p_{\bftheta', \nu}(\z_0|\x_{\rm obs}))\right)}_{\Delta{\cal D}_{\rm KL}(\nu', \nu)}
    \end{align*}
    It follows from Assumption \ref{assumption:change} that $\Delta{\cal L}(\nu', \nu)\geq \Delta{\cal D}_{\rm KL}(\nu', \nu)$ and therefore,
    \begin{equation*}
        \Delta{\cal E}(\nu', \nu) \geq 0
    \end{equation*}
    Thus, we conclude that $\Delta {\cal E}(\bftheta', \bftheta, \bfphi', \bfphi, \nu', \nu)\geq 0$. 
\end{proof}

\subsection{Proof of Asymptotic Optimality of Test Statistic}
\label{ss_proof: optimality_detector}
\begin{proof}[Proof]
In the following, we show a simple proof for the case of using a Monte Carlo estimator of the marginal likelihood of the change point. We note; however that this proof can be trivially extended to any estimator that converges almost surely (e.g., particle filtering-based estimators of the marginal likelihood). For brevity in the notation, we also remove the dependency in the discussed distributions on $\bftheta$ since we assume that it is fixed and the locally optimal value. 

Consider a set of i.i.d. samples $\z_{0:t}^{(m)} \sim p(\z_{0:t}| \nu=\tau)$ for $m=1,\ldots, M$. By the strong law of large numbers, we have that 
\begin{flalign}
\lim_{M\rightarrow\infty}\frac{1}{M}\sum_{m=1}^M p(\x_t|\z_t^{(m)}, \nu=\tau) = p(\x_t| \nu=\tau),
\end{flalign}
almost surely. Consider the function $f(x) = \frac{1}{x}$. The set of discontinuity points $D_g$ of $f(x)$ satisfies $P(x\in D_g) = 0$. By the continuous mapping theorem, we have that 
\begin{flalign}
\lim_{M\rightarrow\infty} \frac{1}{\frac{1}{M}\sum_{m=1}^M p(\x_t|\z_t^{(m)}, \nu=\tau)} = \frac{1}{p(\x_t|\nu=\tau)}
\end{flalign}
almost surely.
Therefore, by Slutsky's theorem, we can readily deduce that 
\begin{flalign}\label{eq:ratioas}
\lim_{M\rightarrow\infty}\frac{\frac{1}{M}\sum_{m=1}^M p(\x_t|\z_t^{(m)}|\nu=\tau)}{\frac{1}{M}\sum_{m=1}^M p(\x_t|\z_t^{(m)}|\nu>\tau)} = \frac{p(\x_t|\nu=\tau)}{p(\x_t|\nu>\tau)}
\end{flalign}
almost surely. Note that for the binary hypothesis testing problem (in the case of change point detection), the optimal likelihood ratio test is defined as follows
\begin{flalign}
\tau^*(X_{t})=\left\{\begin{array}{ll}
    1, &\text{ if } \frac{p(\x_t|\nu=\tau)}{p(\x_t|\nu>\tau)} \geq \gamma \\
    0, &\text{ if } \frac{p(\x_t|\nu=\tau)}{p(\x_t|\nu>\tau)}< \gamma.
\end{array}\right.
\end{flalign}
For the type-I error probability, we have that 
\begin{align}
    &\mathbb{P}\Big(\lim_{M\rightarrow\infty} \frac{\frac{1}{M}\sum_{m=1}^M p(\x_t|\z_t^{(m)}, \nu=\tau)}{\frac{1}{M}\sum_{m=1}^M p(\x_t|\z_t^{(m)}, \nu>\tau)}\geq \gamma\Big)\nonumber\\ &= \mathbb{P}\bigg(\lim_{M\rightarrow\infty} \frac{\frac{1}{M}\sum_{m=1}^M p(\x_t|\z_t^{(m)}, \nu=\tau)}{\frac{1}{M}\sum_{m=1}^M p(\x_t|\z_t^{(m)}, \nu>\tau)}\geq \gamma, \lim_{M\rightarrow\infty}\frac{\frac{1}{M}\sum_{m=1}^M p(\x_t|\z_t^{(m)}, \nu=\tau)}{\frac{1}{M}\sum_{m=1}^M p(\x_t|\z_t^{(m)}, \nu>\tau)} = \frac{p(\x_t|\nu=\tau)}{p(\x_t|\nu>\tau)} \bigg)\nonumber\\&\hspace{0.2cm}+ \mathbb{P}\bigg(\lim_{M\rightarrow\infty} \frac{\frac{1}{M}\sum_{m=1}^M p(\x_t|\z_t^{(m)}, \nu=\tau)}{\frac{1}{M}\sum_{m=1}^M p(\x_t|\z_t^{(m)}, \nu>\tau)}\geq \gamma, \lim_{M\rightarrow\infty}\frac{\frac{1}{M}\sum_{m=1}^M p(\x_t|\z_t^{(m)}, \nu=\tau)}{\frac{1}{M}\sum_{m=1}^M p(\x_t|\z_t^{(m)}, \nu>\tau)} \neq \frac{p(\x_t|\nu=\tau)}{p(\x_t|\nu>\tau)}\bigg)\nonumber\\&= \mathbb{P}\bigg(\lim_{M\rightarrow\infty} \frac{\frac{1}{M}\sum_{m=1}^M p(\x_t|\z_t^{(m)}, \nu=\tau)}{\frac{1}{M}\sum_{m=1}^M p(\x_t|\z_t^{(m)}, \nu>\tau)}\geq \gamma \Big| \lim_{M\rightarrow\infty}\frac{\frac{1}{M}\sum_{m=1}^M p(\x_t|\z_t^{(m)}, \nu=\tau)}{\frac{1}{M}\sum_{m=1}^M p(\x_t|\z_t^{(m)}, \nu>\tau)} = \frac{p(\x_t|\nu=\tau)}{p(\x_t|\nu>\tau)} \bigg)\nonumber\\&\hspace{0.2cm}\cdot \mathbb{P}\bigg( \lim_{M\rightarrow\infty}\frac{\frac{1}{M}\sum_{m=1}^M p(\x_t|\z_t^{(m)}, \nu=\tau)}{\frac{1}{M}\sum_{m=1}^M p(\x_t|\z_t^{(m)}, \nu>\tau)} = \frac{p(\x_t|\nu=\tau)}{p(\x_t|\nu>\tau)} \bigg)\nonumber\\& = \mathbb{P}\bigg( \frac{p(\x_t|\nu=\tau)}{p(\x_t|\nu>\tau)} \geq\gamma\bigg),
\end{align}
where the last inequality is from \eqref{eq:ratioas}.
We can also derive the same result for the type-II error probability.
\end{proof}

\section{Datasets and Preprocessing}
\label{sec_supp_dataset}

This work used four datasets to evaluate the proposed and baseline models. We selected these datasets to demonstrate the effectiveness of change point detection in synthetic time-series data generation so that the datasets contain clear value shifts. We preprocessed all datasets by normalizing time-series per sequence with a zero mean and one variance. Let $X$ be a dataset with $N$ data points $x_i$, where $i=1,2,\ldots,N$. The normalization of $X$ is performed as follows:

First, calculate the mean $\mu$ and standard deviation $\sigma$ of $X$:
\begin{align}
\mu &= \frac{1}{N}\sum_{i=1}^{N} x_i \\
\sigma &= \sqrt{\frac{1}{N}\sum_{i=1}^{N}(x_i-\mu)^2}
\end{align}
Then, normalize each data point $x_i$ to obtain the normalized value $z_i$ using:
\begin{equation}
z_i = \frac{x_i-\mu}{\sigma}
\end{equation}

\begin{figure}[ht]
    \centering
    \includegraphics[width=0.8\columnwidth]{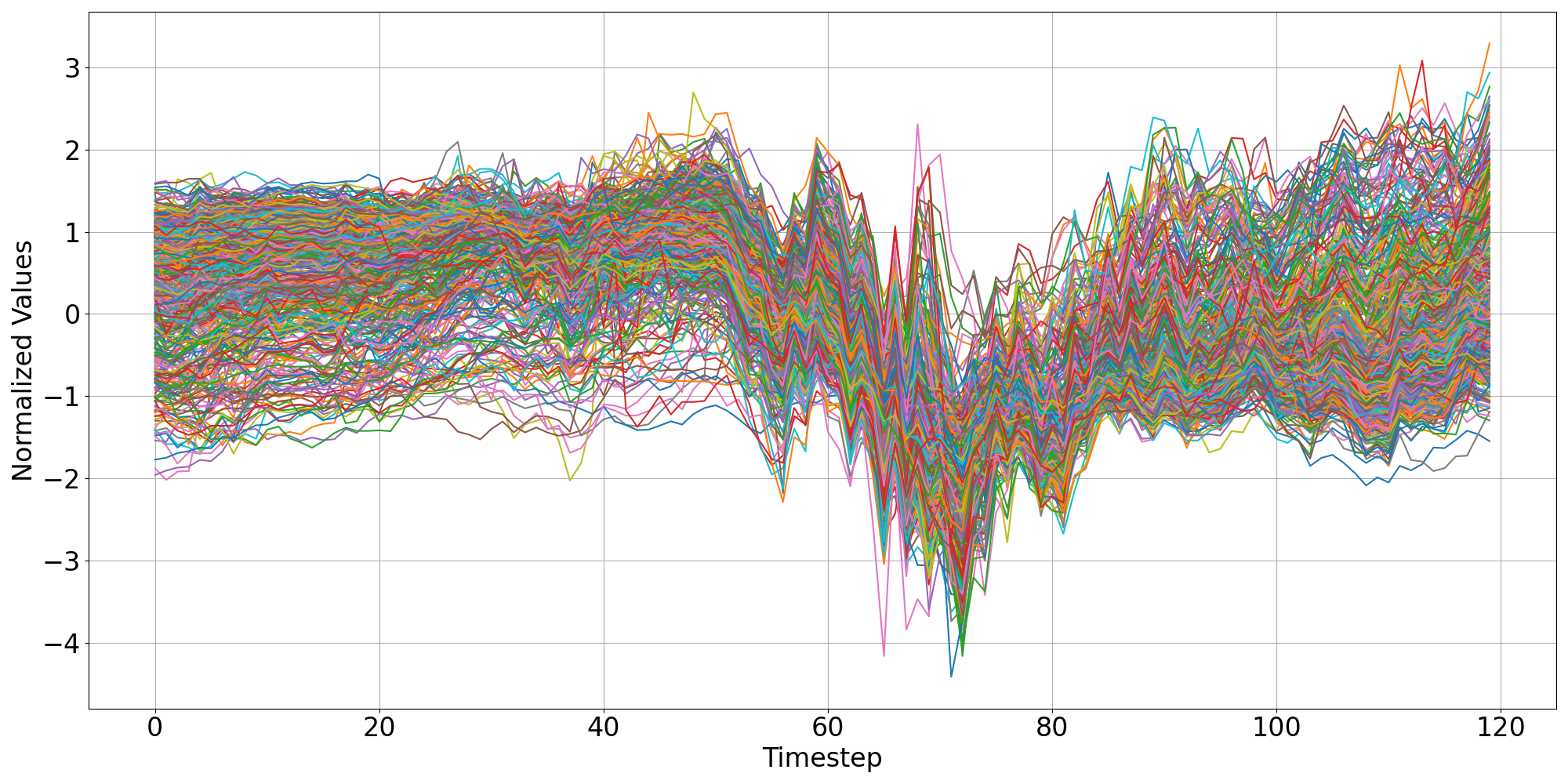}
    \caption{Normalized stock prices in S\&P 500.}
    \label{fig_data_sp500}
\end{figure}

\subsection{S\&P 500 Dataset}

We collected stock tickers from the S\&P 500 and utilized per-sequence normalized price as the first dataset, which contains 504 tickers in total, as shown in Figure \ref{fig_data_sp500}. We downloaded these original price time-series via the Yahoo Finance Python package \footnote{\url{https://github.com/ranaroussi/yfinance}}. The time range of this dataset starts from January 2020 and ends in June 2020. The length of each sample is 120. This dataset covers the significant stock price drop at the beginning of the COVID-19 pandemic in March 2020.

\begin{figure}[ht]
    \centering
    \includegraphics[width=0.8\columnwidth]{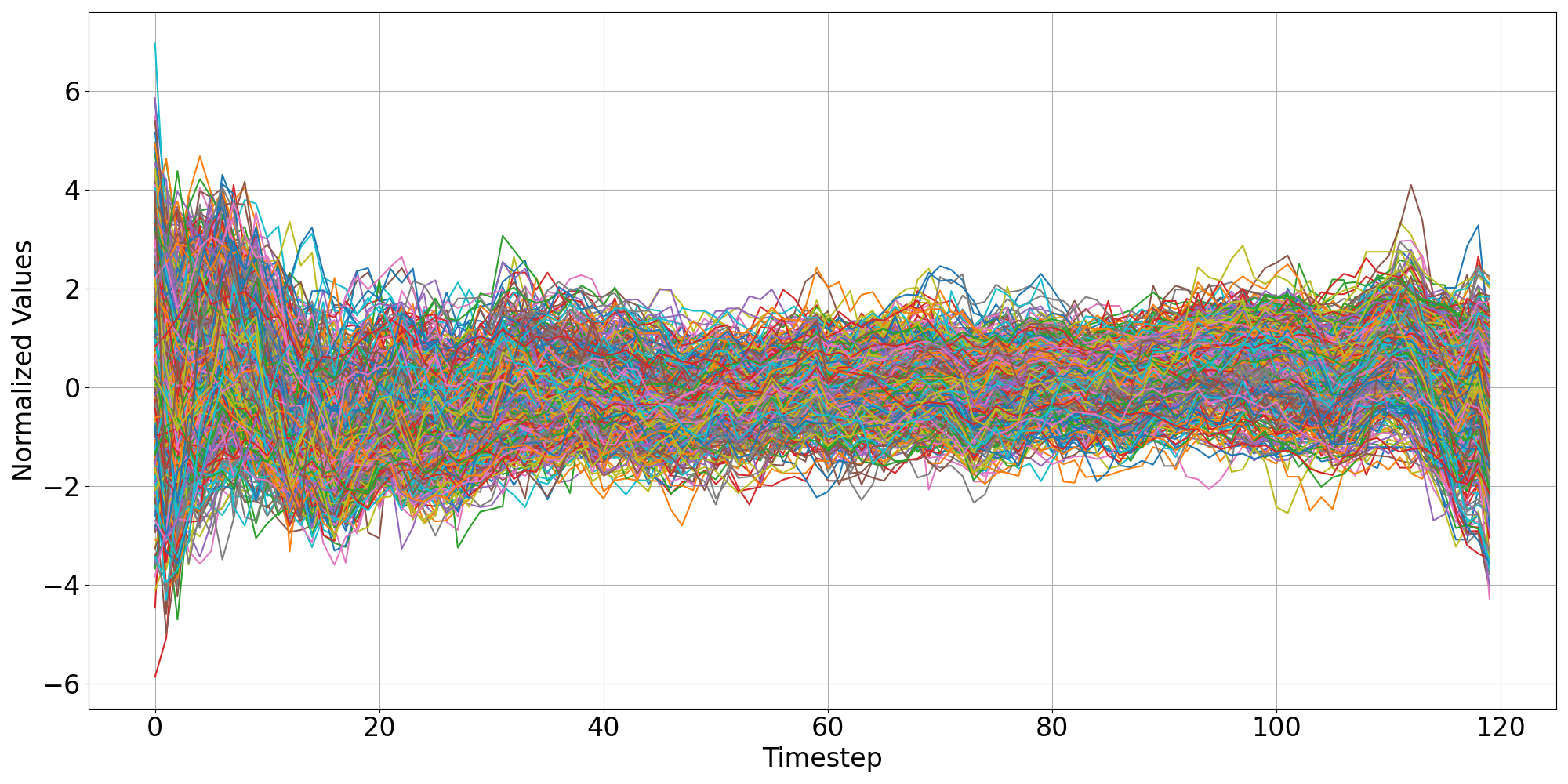}
    \caption{Normalized intraday stock prices in S\&P 500.}
    \label{fig_data_sp500_intraday}
\end{figure}

\subsection{S\&P 500 Intraday Dataset}

Like the S\&P 500 dataset mentioned above, we collected the intraday prices on Feb 5th, 2024, to form the second dataset. The length of this dataset was also set to 120. The intraday prices were originally parsed as per-min prices across the training day and then down-sampled to 120 steps for each stock. However, four stocks were removed because they had less than 120 data points because of a limited number of executed trades on the selected day. Thus, the total number of time-series samples is 500.

\begin{figure}[ht]
    \centering
    \includegraphics[width=0.8\columnwidth]{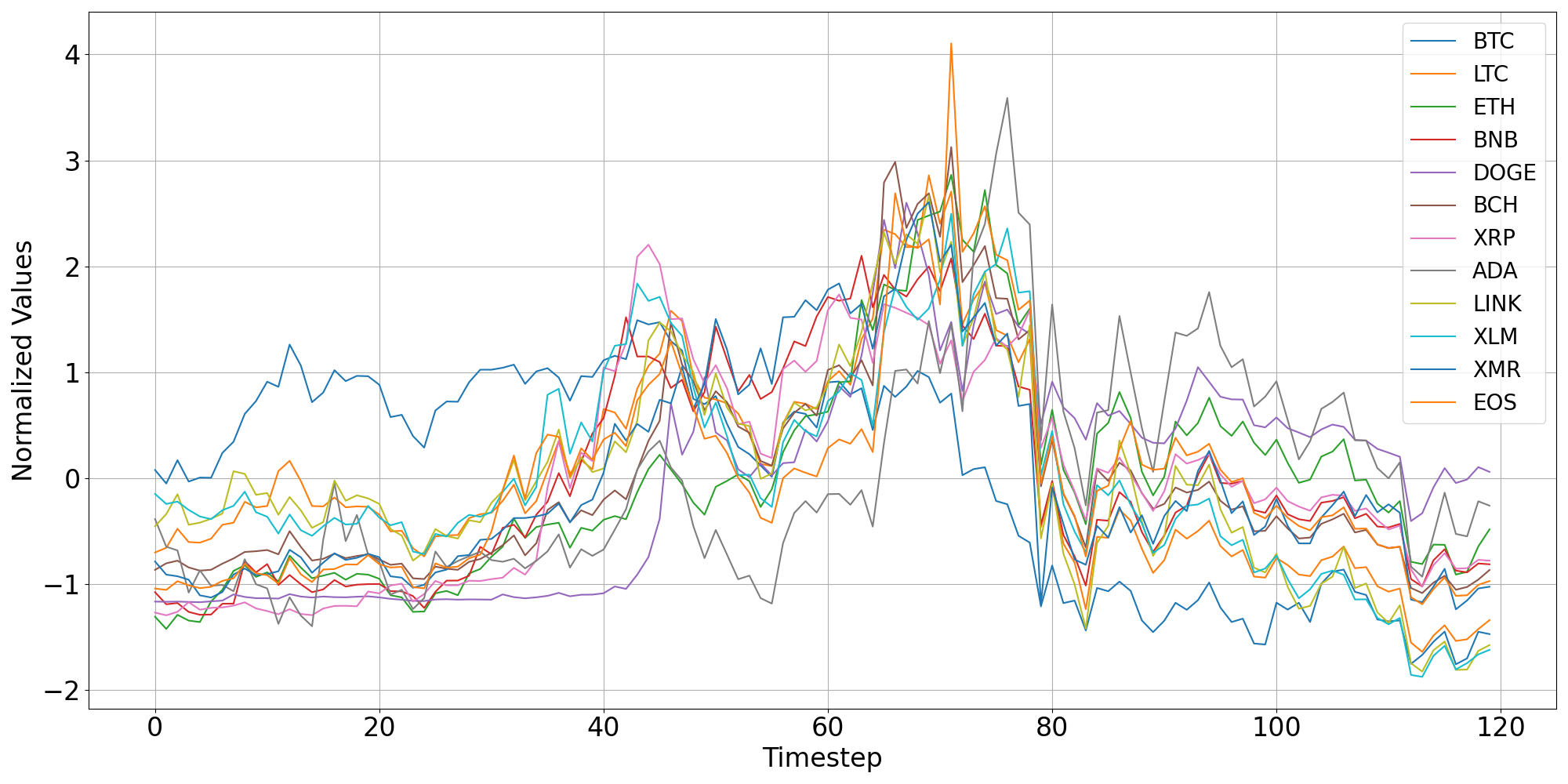}
    \caption{Normalized cryptocurrency prices.}
    \label{fig_data_crypto}
\end{figure}

\subsection{Crypto Dataset}

The third dataset is collected from cryptocurrency prices. We downloaded twelve cryptocurrency price time-series via the Yahoo Finance API and normalized them using the method mentioned above. The time range of this dataset starts in early March 2021 and ends in June 2022 with a total length of 120 time steps. This dataset covers when cryptocurrency prices significantly increased and varied in the first half of 2021.

\begin{figure}[ht]
    \centering
    \includegraphics[width=0.8\columnwidth]{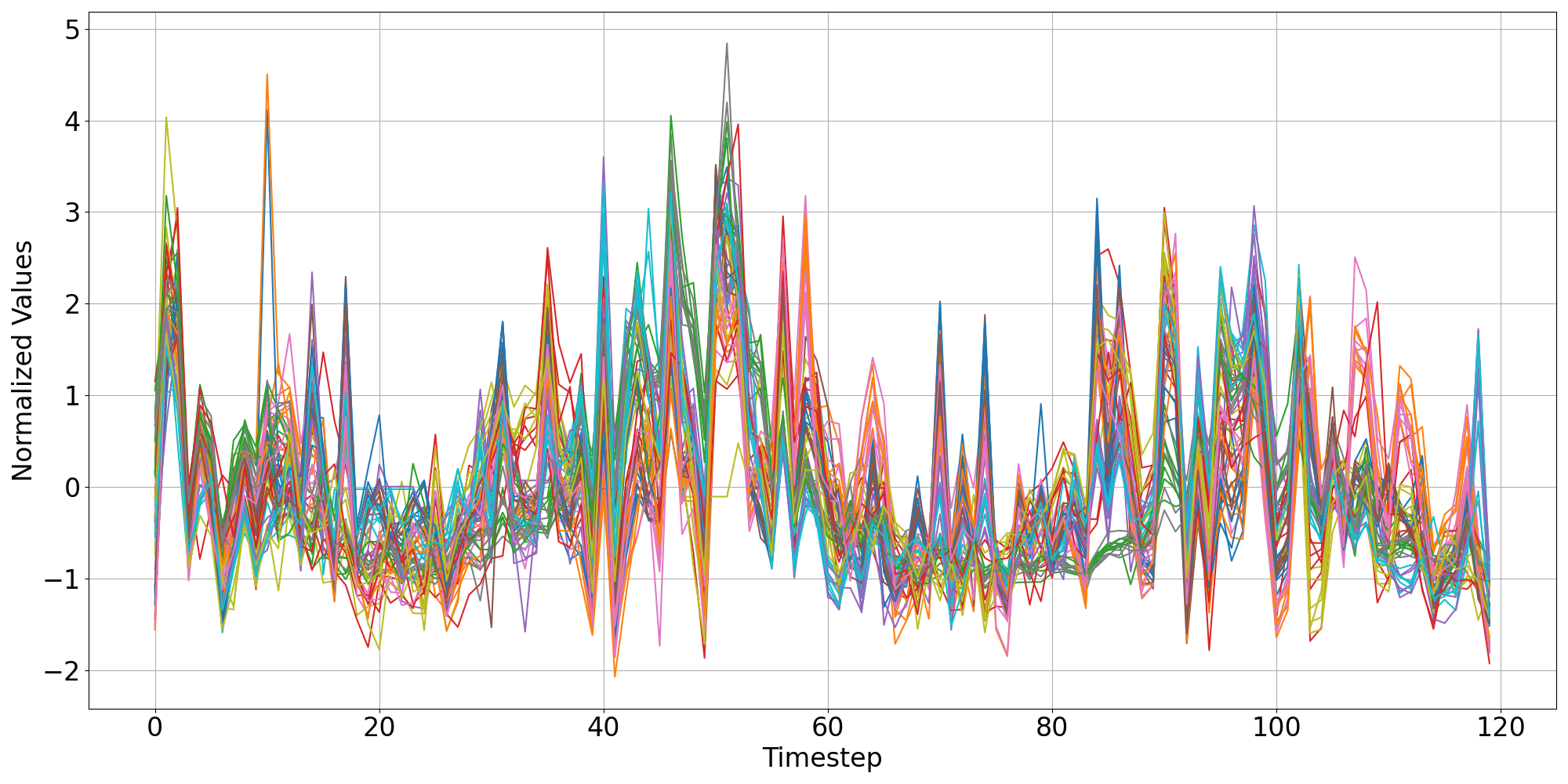}
    \caption{Normalized air quality measures.}
    \label{fig_data_air_quality}
\end{figure}

\subsection{Air Quality Dataset}

We also used the "Beijing Multi-Site Air-Quality Dataset", which is available on Kaggle \footnote{\url{https://www.kaggle.com/datasets/sid321axn/beijing-multisite-airquality-data-set}} \cite{zhang2017cautionary}. This dataset is an extensive collection of air quality measurements from 12 monitoring stations across Beijing. It records various pollutants like PM2.5 and PM10 hourly from 2013 to 2018. Here, we aggregated the original hourly-based data into weekly-based data by taking the mean across corresponding values. We constructed the dataset by selecting five pollutants, including PM2.5, PM10, SO2, NO2, and CO. This dataset contains 60 time-series samples with a fixed length 120.

\section{Ablation Studies}
\label{ablations}
In this section, we present ablation studies conducted on both the synthetic and real datasets considered in this work. 

\subsection{Synthetic OU Dataset}
\label{ss: synthetic_dataset_ablations}
We consider a synthetic OU dataset with a single change point and train our proposed neural SDE model without change points. The goal of this ablation study is to test the various hyperparameters of the base \texttt{CP-SDEVAE} model. Table \ref{tab:hyperparameters} shows the different hyperparameters tested, along with their assumed default value when the hyperparameter is assumed to be held fixed. In the following, we conduct our ablation study by varying the value of two hyperparameters at a time according to their corresponding search space as shown in Table \ref{tab:hyperparameters}. To test performance, we plot the generated trajectories from each trained model after $E=250$ epochs, along with the corresponding ELBO. 

\begin{table}[t]
\centering
\resizebox{\textwidth}{!}{%
\begin{tabular}{|c|c|c|c|}
\hline
\textbf{Parameter} & \textbf{Default} & \textbf{Search Space} & \textbf{Parameter Description} \\
\hline
latent\_dim & 32 & [4, 8, 16, 32] & latent space dimension\\
hidden\_dim\_encoder & 128 & [128] & \# of encoder hidden units \\
num\_layers\_encoder & 1 & [1, 2, 3] & \# of encoder layers \\
hidden\_dim\_sde & 64 & [32, 64, 128] &\# of drift/diffusion hidden units \\
num\_layers\_sde & 2 & [1, 2, 3] & \# of drift/diffusion layers \\
var\_decoder & 1.0 & [0.01, 0.1, 1.0] & decoder variance \\
is\_diffusion\_homoscedastic & True & [False, True] & latent diffusion type \\
latent\_diffusion\_val & 1.0 & [0.01, 0.1, 1.0] & latent diffusion value \\
train\_diffusion & False & [False, True] & flag for training diffusion \\
decoder\_type & `mlp' & [`linear', `mlp'] & \# of decoder hidden units \\
nll\_weight & 1.0 & [0.001, 1.0] & weight of NLL in loss \\
kld\_weight & 1.0 & [0.001, 1.0] & weight of KLD in loss \\
pred\_nll\_weight & 0.05 & [0.0, 0.01, 0.05, 0.1] & weight of predictive NLL \\
num\_sde\_trajectories & 5 & [1, 5, 10] & \# of SDE trajectories \\
euler\_step\_size & 1.0 & [0.05, 0.1, 1.0] & step size for Euler solver \\
\hline
\end{tabular}
}
\caption{Hyperparameters: Default values, ablation study configurations, and concise descriptions}
\label{tab:hyperparameters}
\end{table}

\subsubsection{Latent SDE Size vs. Number of SDE Layers}
In this part of the ablation study, we want to understand if varying both the number of layers in the SDE drifts and diffusion and the number of hidden neurons had a big impact on performance. We observe a weak trend that when the number of layers in the neural SDE drift and diffusion networks is small (1-3 layer), increasing the number of neurons achieves higher ELBO. The best parameter configuration here was to utilize 3 hidden layers with 64 neurons per layer, achieving an ELBO value of $-5.40$. The worst performing model was the most complex one (3 hidden layers with 128 neurons per layer), which achieved an ELBO of -7.24. 

\begin{figure}[htbp]
    \centering
    \begin{subfigure}[b]{0.3\textwidth}
        \centering
        \includegraphics[width=\textwidth]{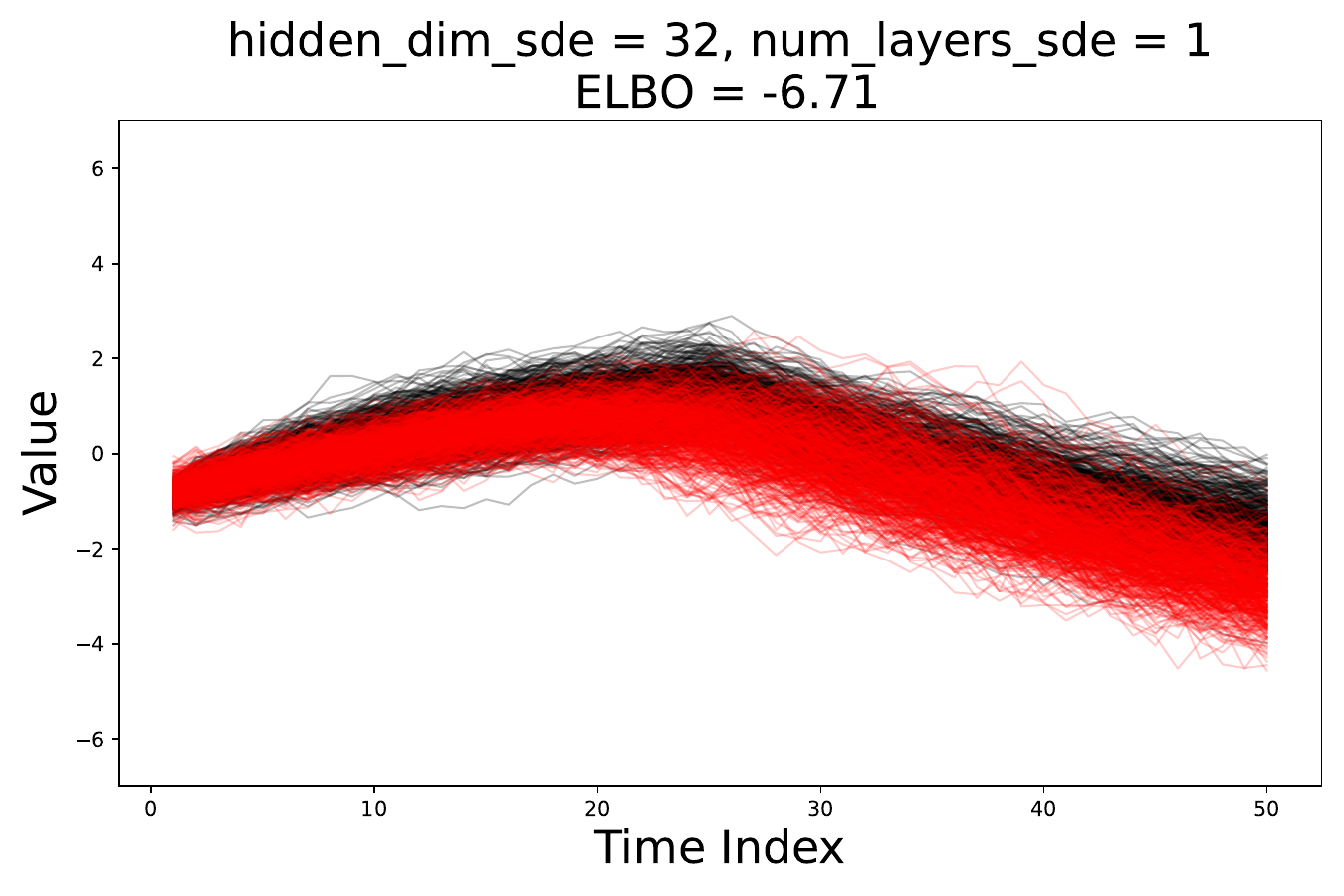}
    \end{subfigure}
    \hfill
    \begin{subfigure}[b]{0.3\textwidth}
        \centering
        \includegraphics[width=\textwidth]{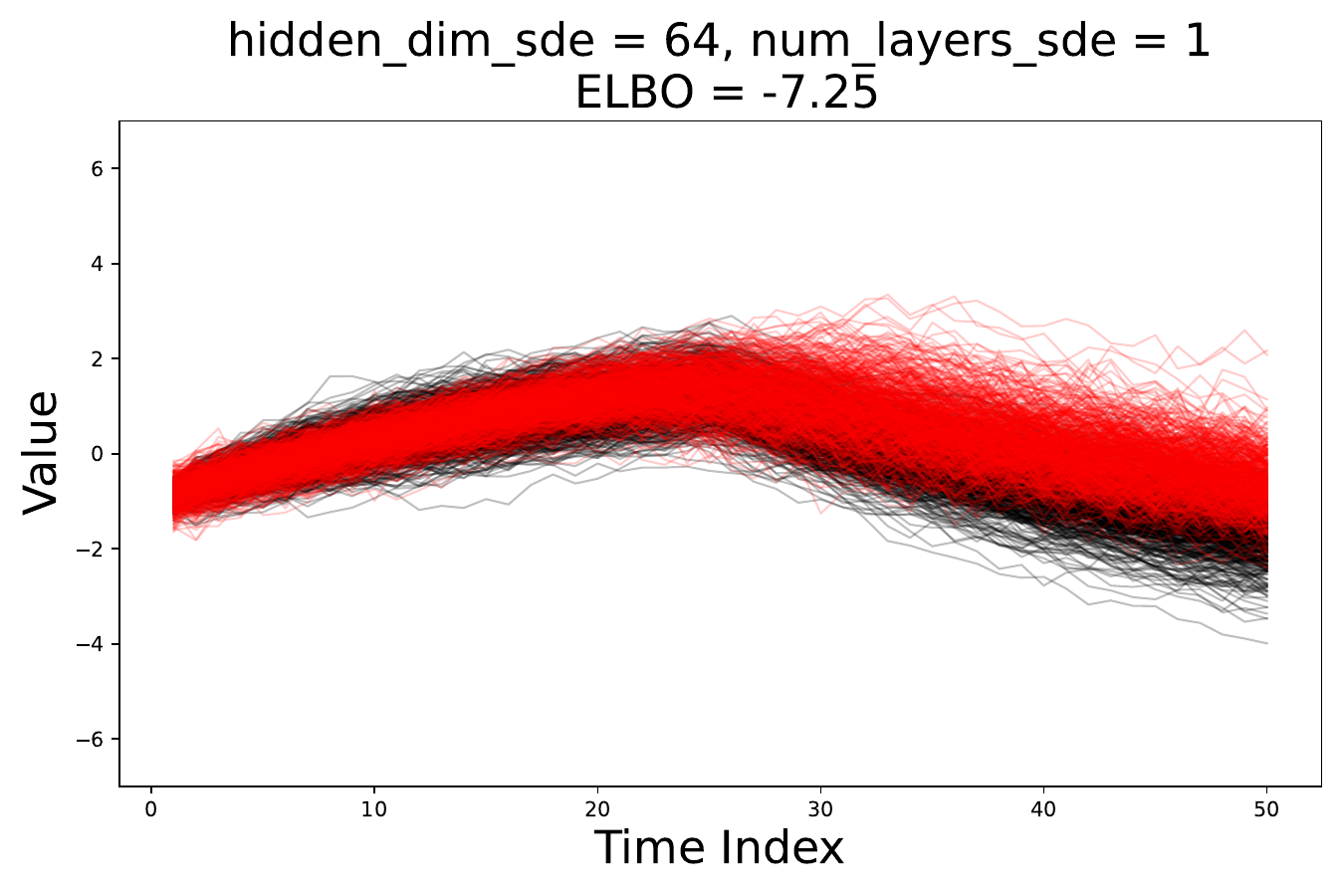}
    \end{subfigure}
    \hfill
    \begin{subfigure}[b]{0.3\textwidth}
        \centering
        \includegraphics[width=\textwidth]{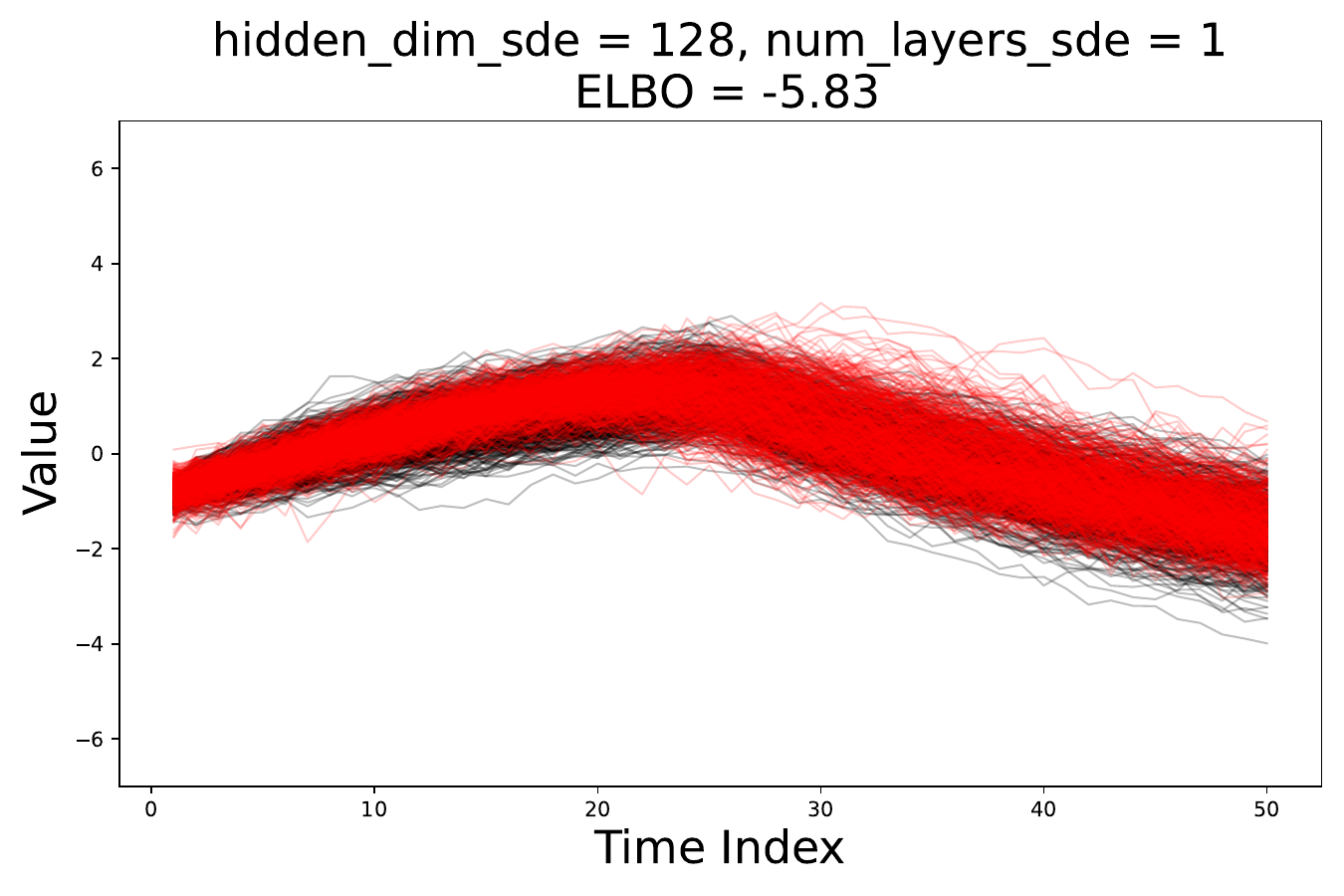}
    \end{subfigure}
    
    \vspace{1em}
    
    \begin{subfigure}[b]{0.3\textwidth}
        \centering
        \includegraphics[width=\textwidth]{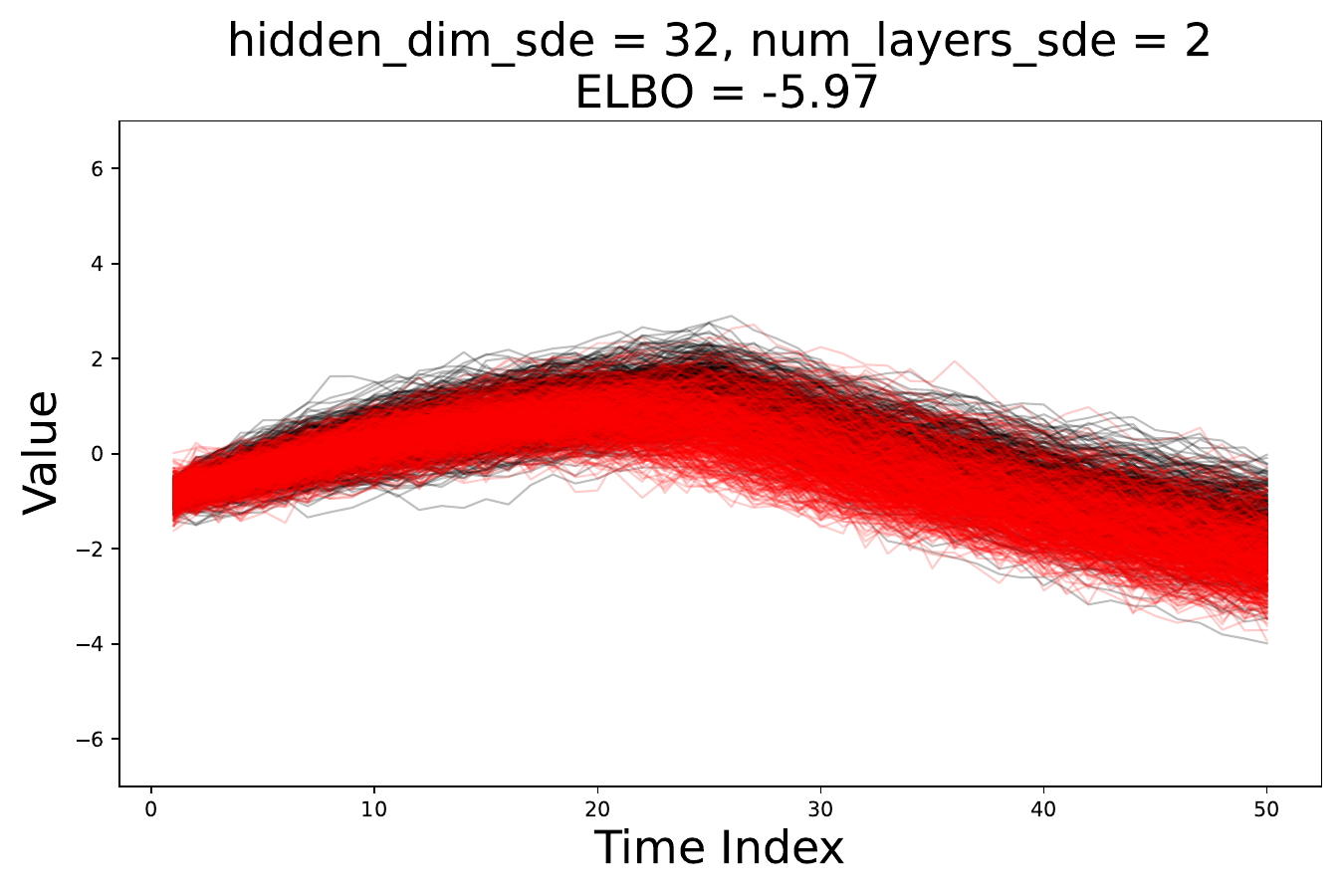}
    \end{subfigure}
    \hfill
    \begin{subfigure}[b]{0.3\textwidth}
        \centering
        \includegraphics[width=\textwidth]{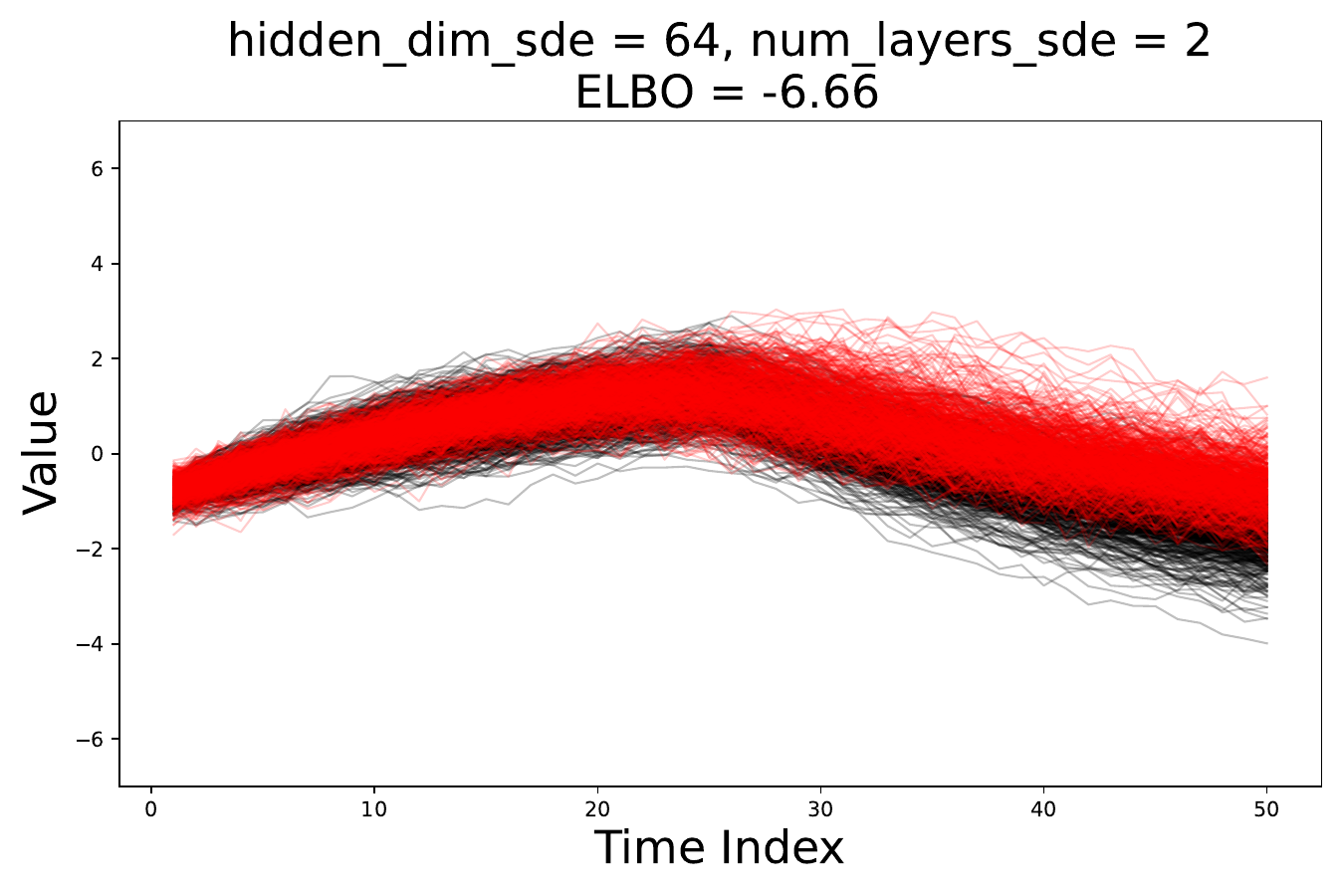}
    \end{subfigure}
    \hfill
    \begin{subfigure}[b]{0.3\textwidth}
        \centering
        \includegraphics[trim={0 0 0 0}, clip, width=\textwidth]{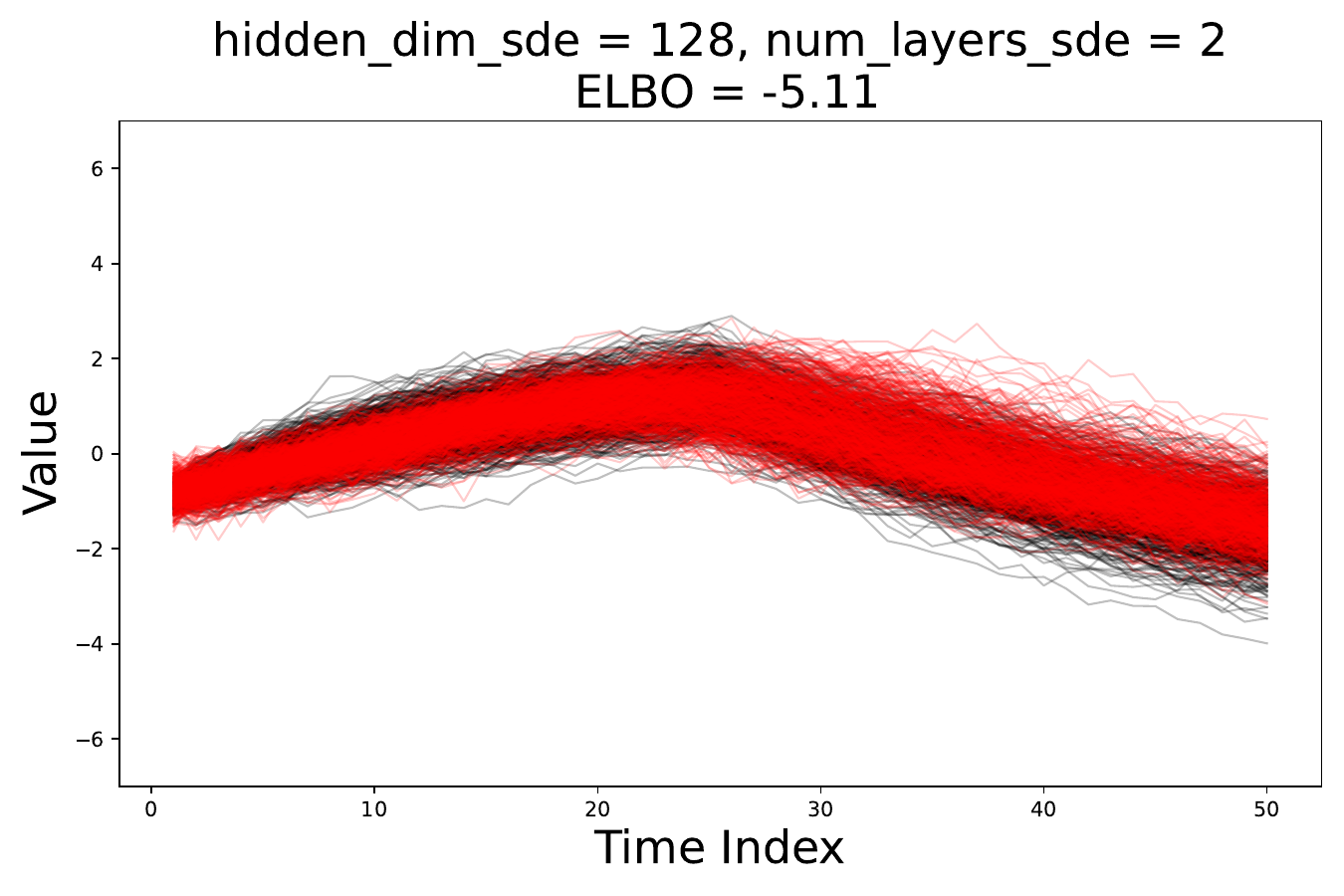}
    \end{subfigure}
    
    \vspace{1em}
    
    \begin{subfigure}[b]{0.3\textwidth}
        \centering
        \includegraphics[width=\textwidth]{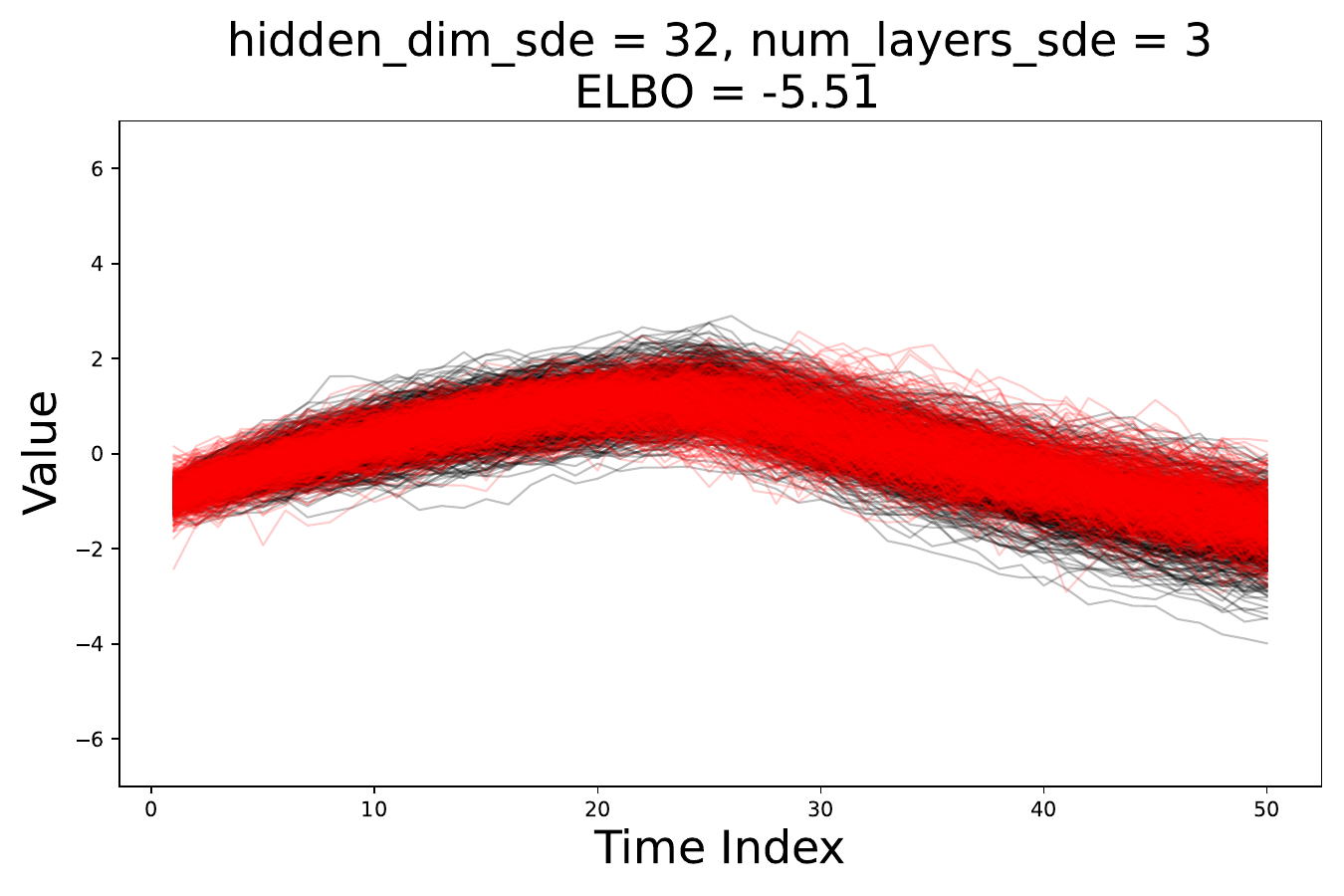}
    \end{subfigure}
    \hfill
    \begin{subfigure}[b]{0.3\textwidth}
        \centering
        \includegraphics[width=\textwidth]{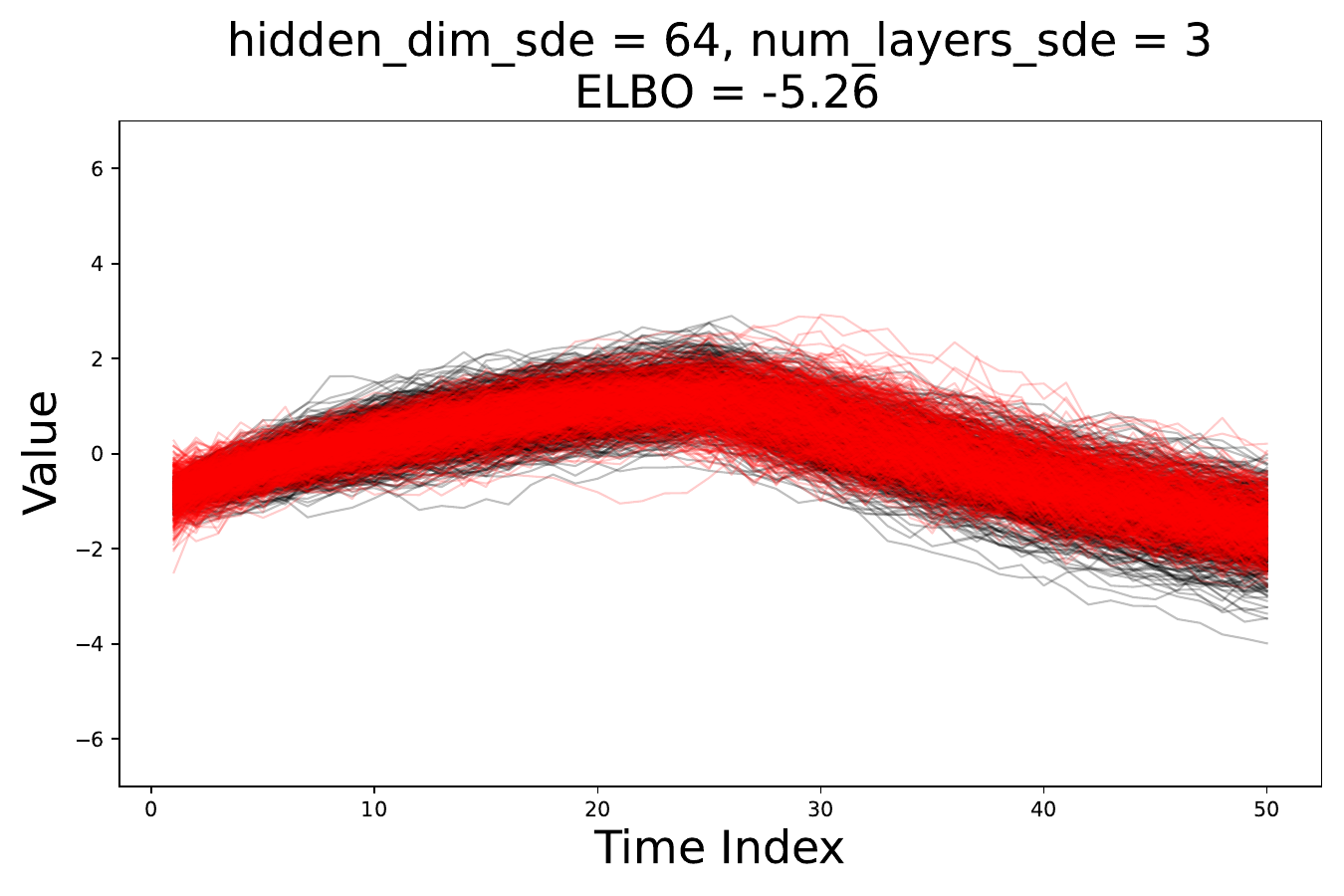}
    \end{subfigure}
    \hfill
    \begin{subfigure}[b]{0.3\textwidth}
        \centering
        \includegraphics[width=\textwidth]{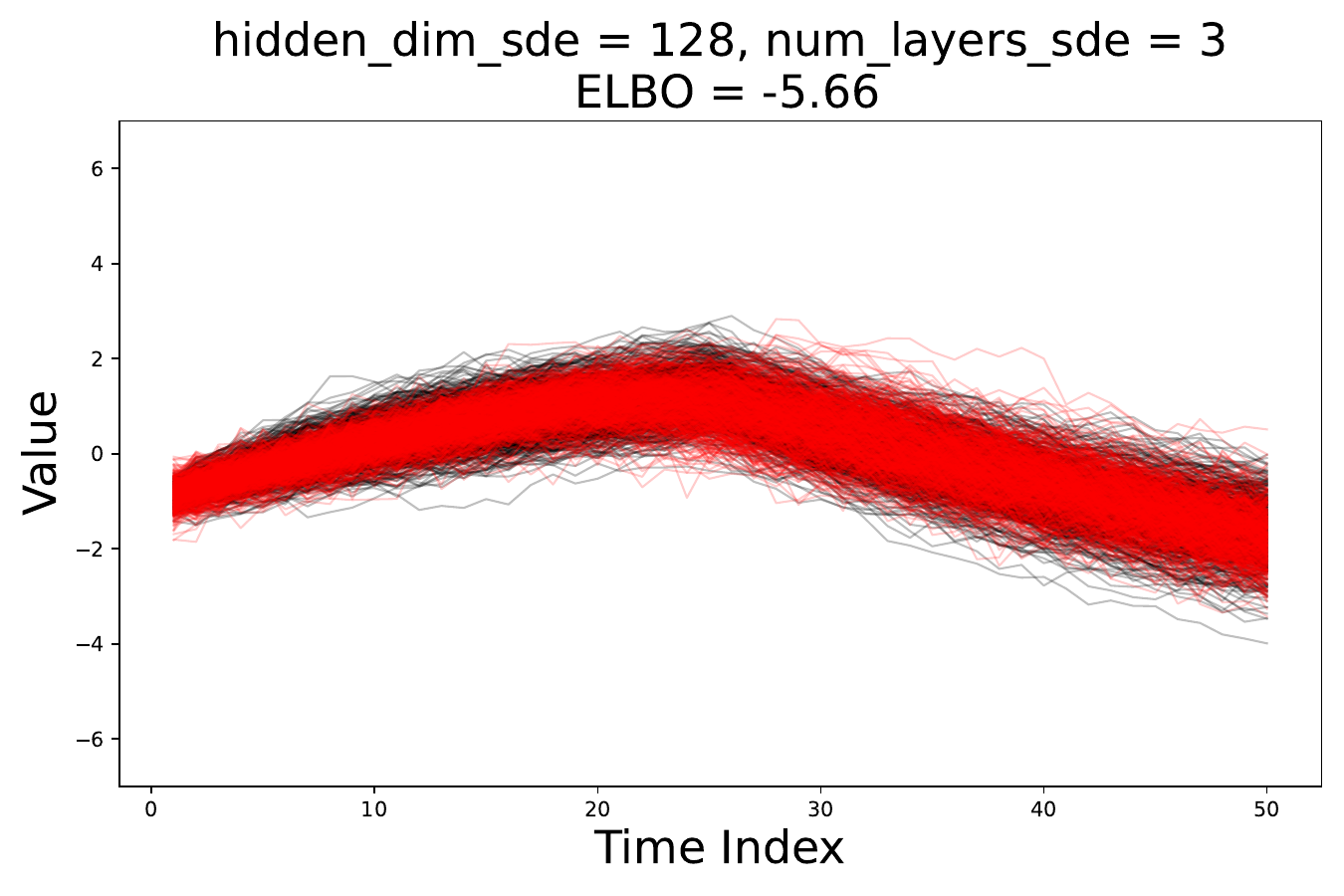}
    \end{subfigure}
    
    \caption{hidden\_dim\_sde vs. num\_layers\_sde.}
    \label{fig: ablation_study_hidden_dim_sde_num_layers_sde}
\end{figure}

\subsubsection{Latent Space Size vs. Decoder Type}
In this part of the ablation study, our goal is to compare different types of decoder: either a linear decoder (denoted by the configuration `hidden\_dim\_decoder=None') or an single-layer MLP with 128 neurons. Along with the type of decoder, we also vary the size of the latent space. In general, we observe the trend that the linear decoder (across all latent dimension sizes), achieves equal or better performance as compared to the MLP decoder. This can be attributed to the fact that the decoder in this example is actually compressing the latent variable (whose size is greater than the time-series dimension). We notice that as the latent dimension gets larger, the performance of the the two decoders becomes more similar. 

\begin{figure}[htbp]
    \centering
    \begin{subfigure}[b]{0.45\textwidth}
        \centering
        \includegraphics[width=\textwidth]{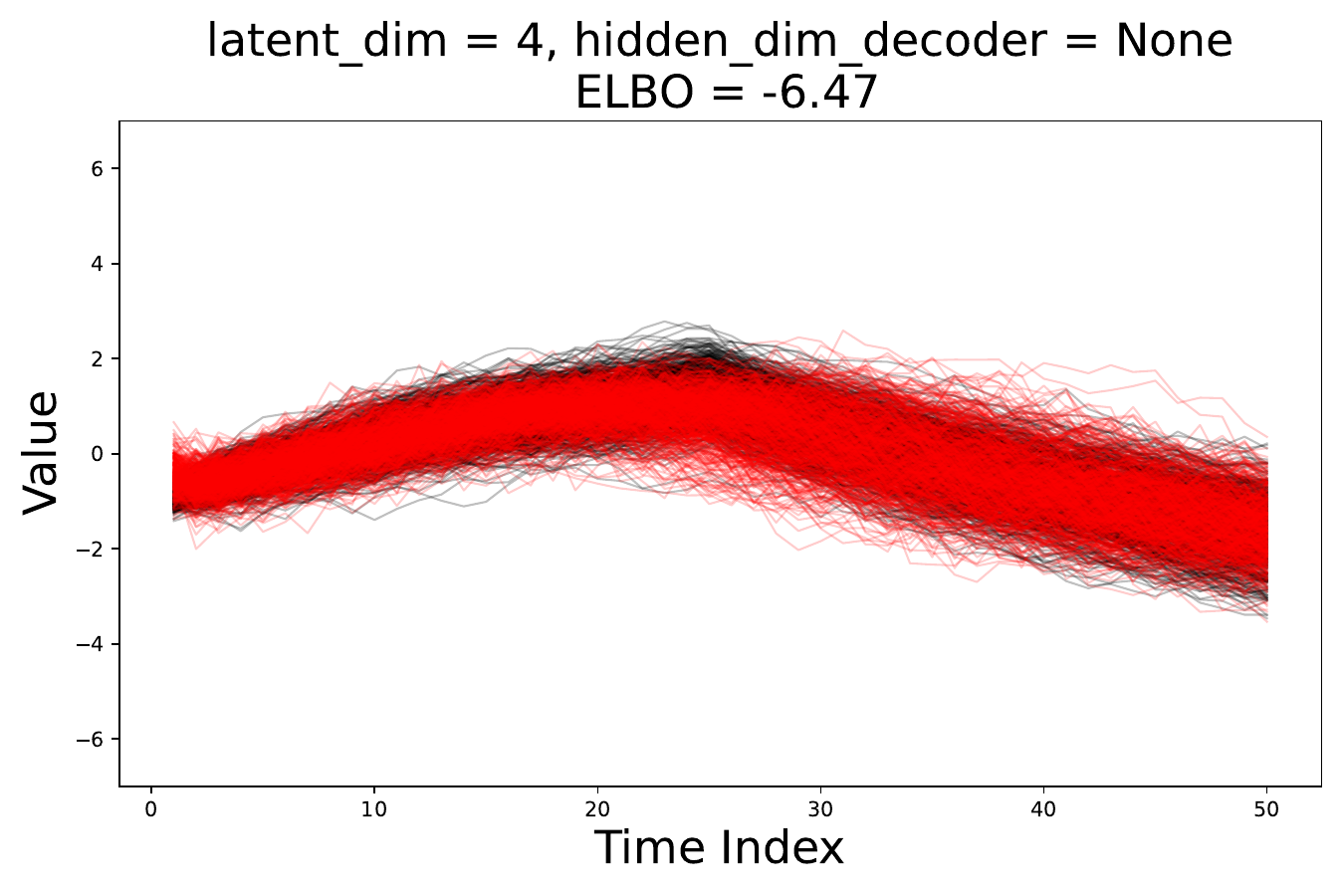}
    \end{subfigure}
    \hfill
    \begin{subfigure}[b]{0.45\textwidth}
        \centering
        \includegraphics[width=\textwidth]{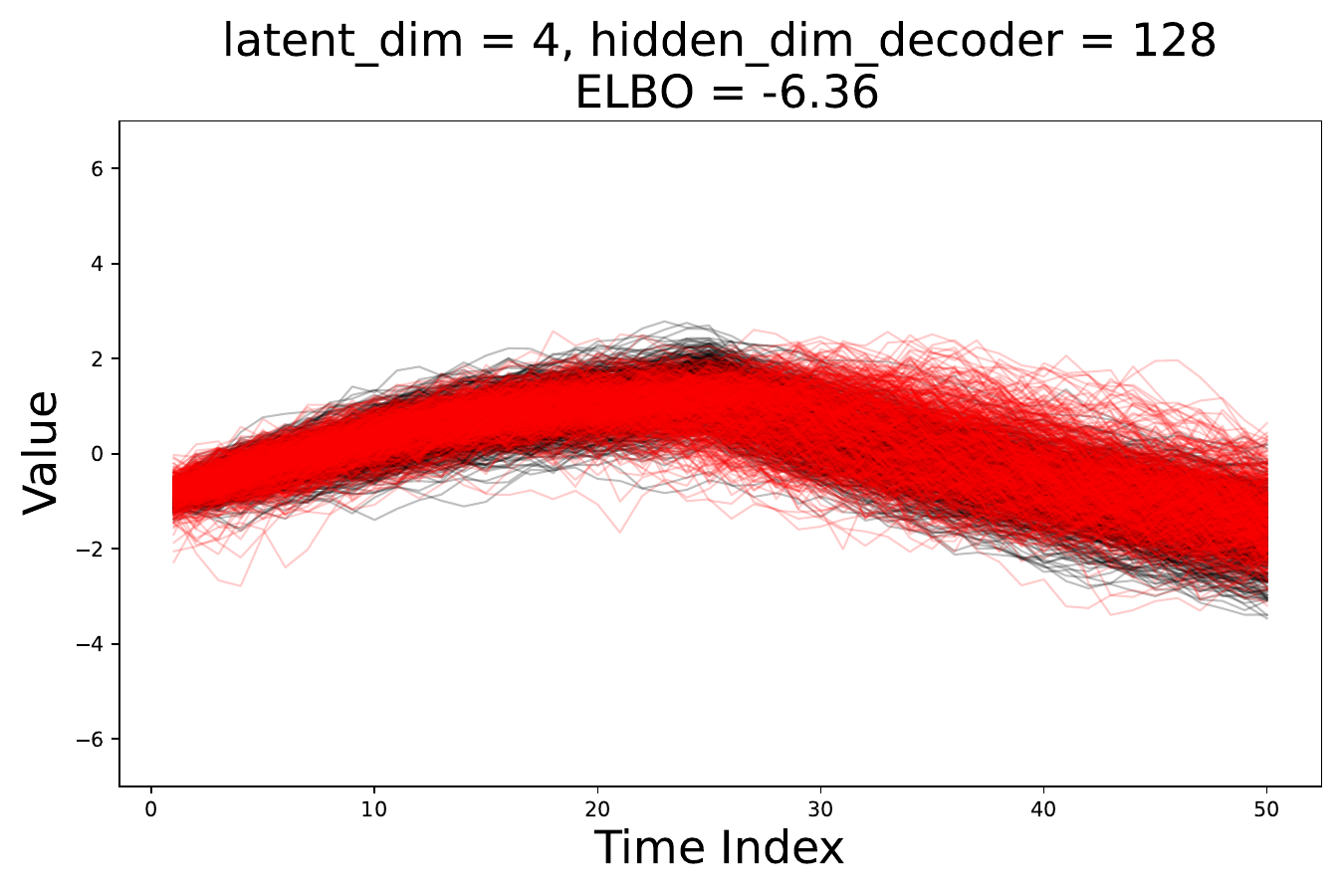}
    \end{subfigure}
    \vspace{1em}

    \begin{subfigure}[b]{0.45\textwidth}
        \centering
        \includegraphics[width=\textwidth]{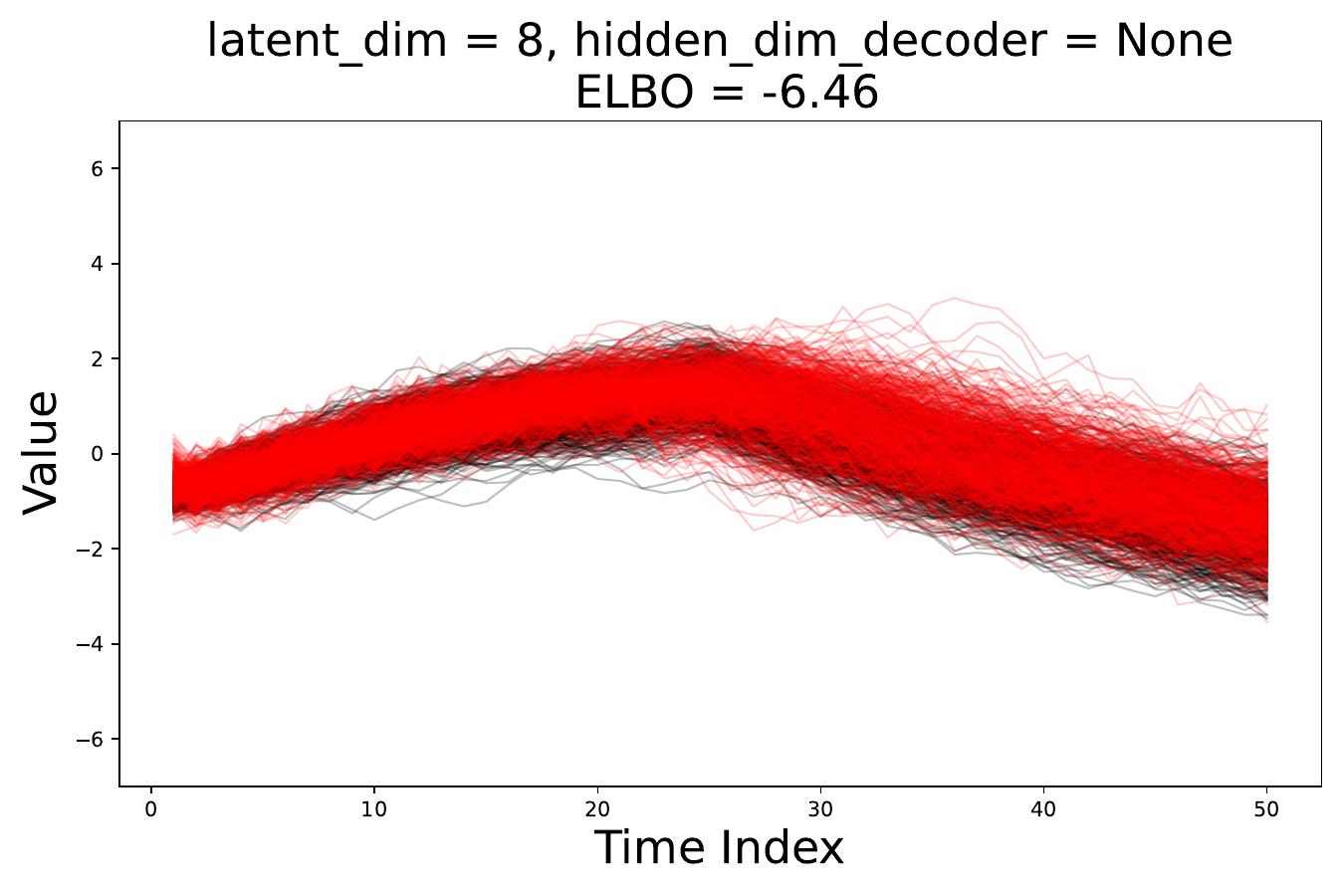}
    \end{subfigure}
    \hfill
    \begin{subfigure}[b]{0.45\textwidth}
        \centering
        \includegraphics[width=\textwidth]{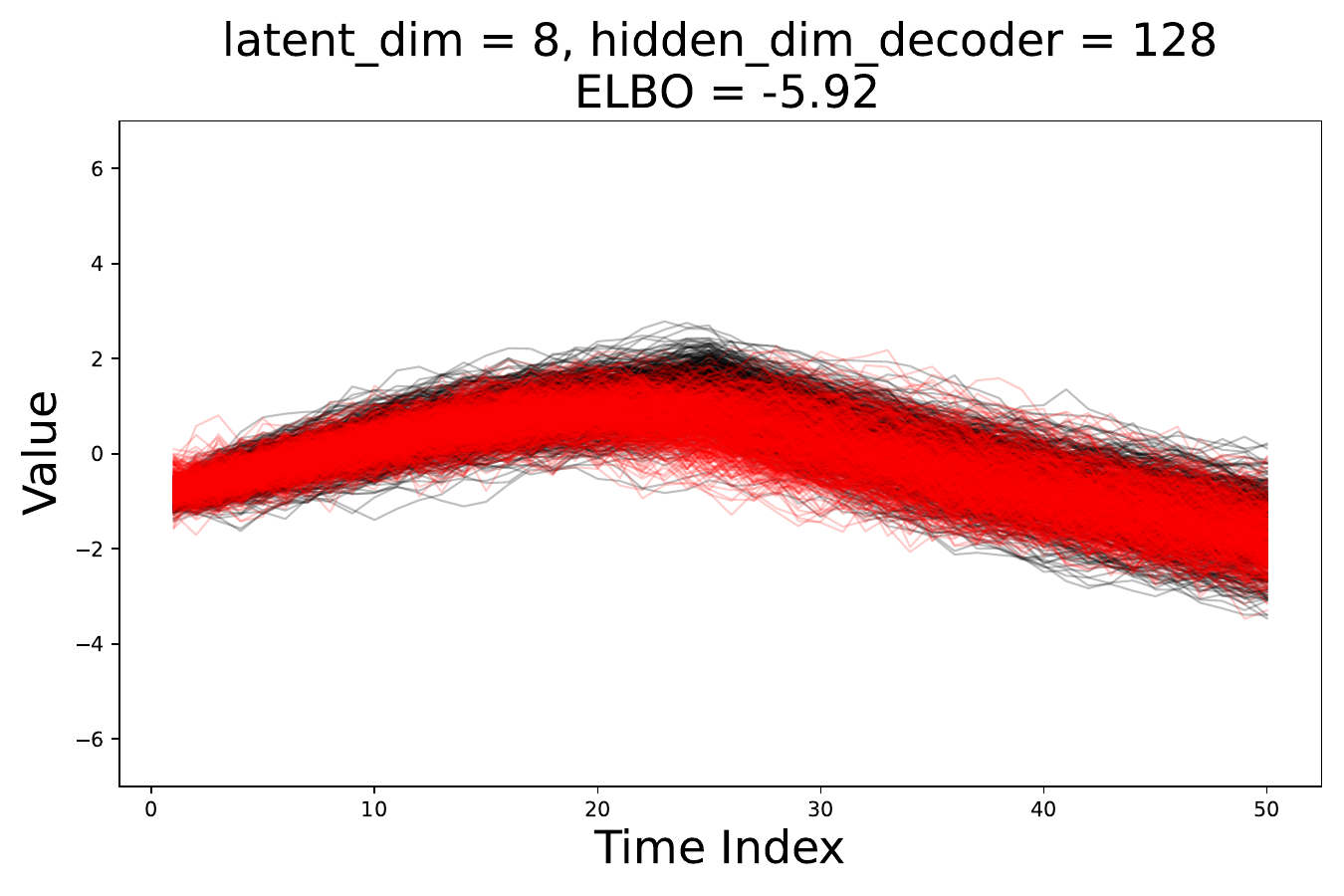}
    \end{subfigure}
    \vspace{1em}

    \begin{subfigure}[b]{0.45\textwidth}
        \centering
        \includegraphics[width=\textwidth]{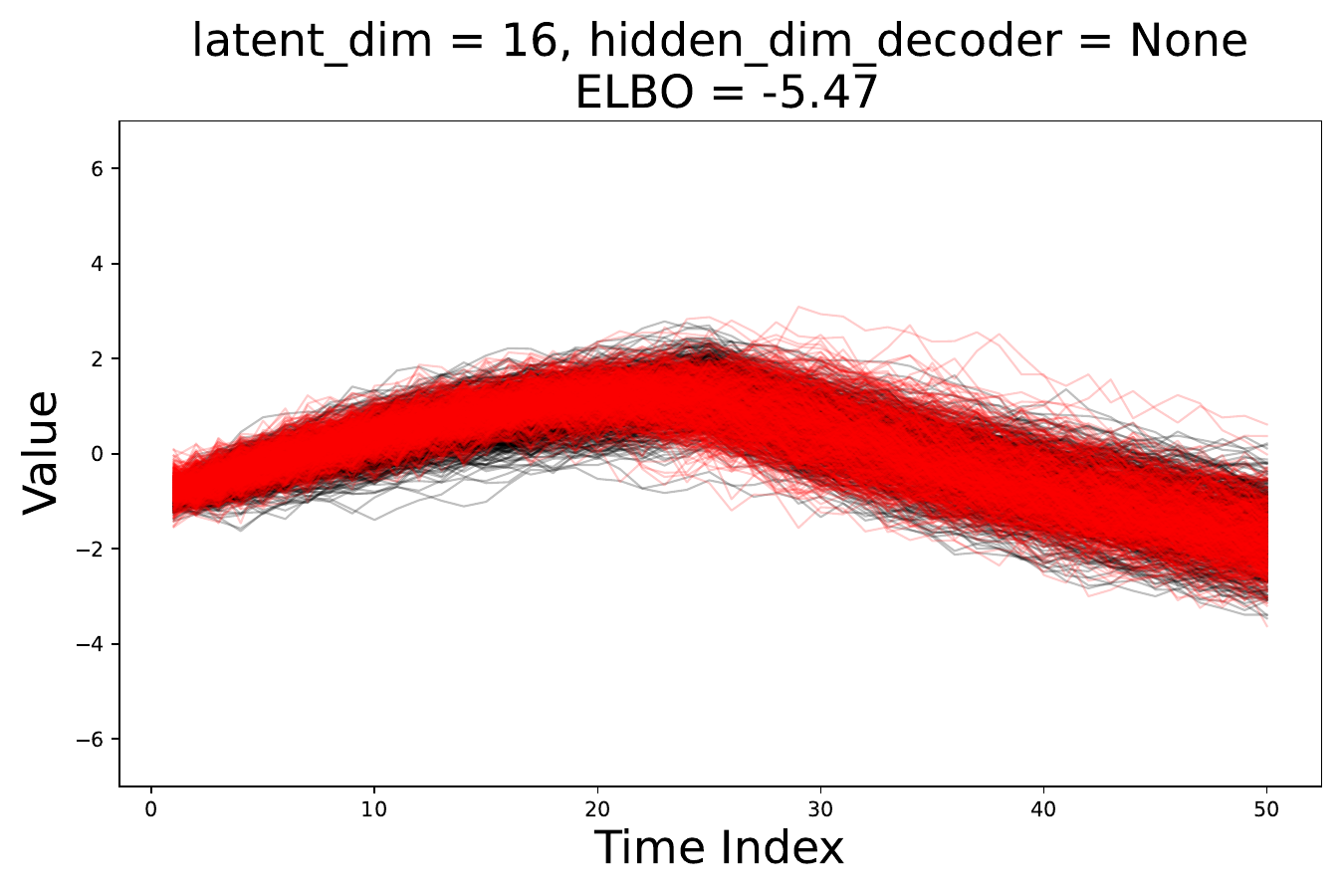}
    \end{subfigure}
    \hfill
    \begin{subfigure}[b]{0.45\textwidth}
        \centering
        \includegraphics[width=\textwidth]{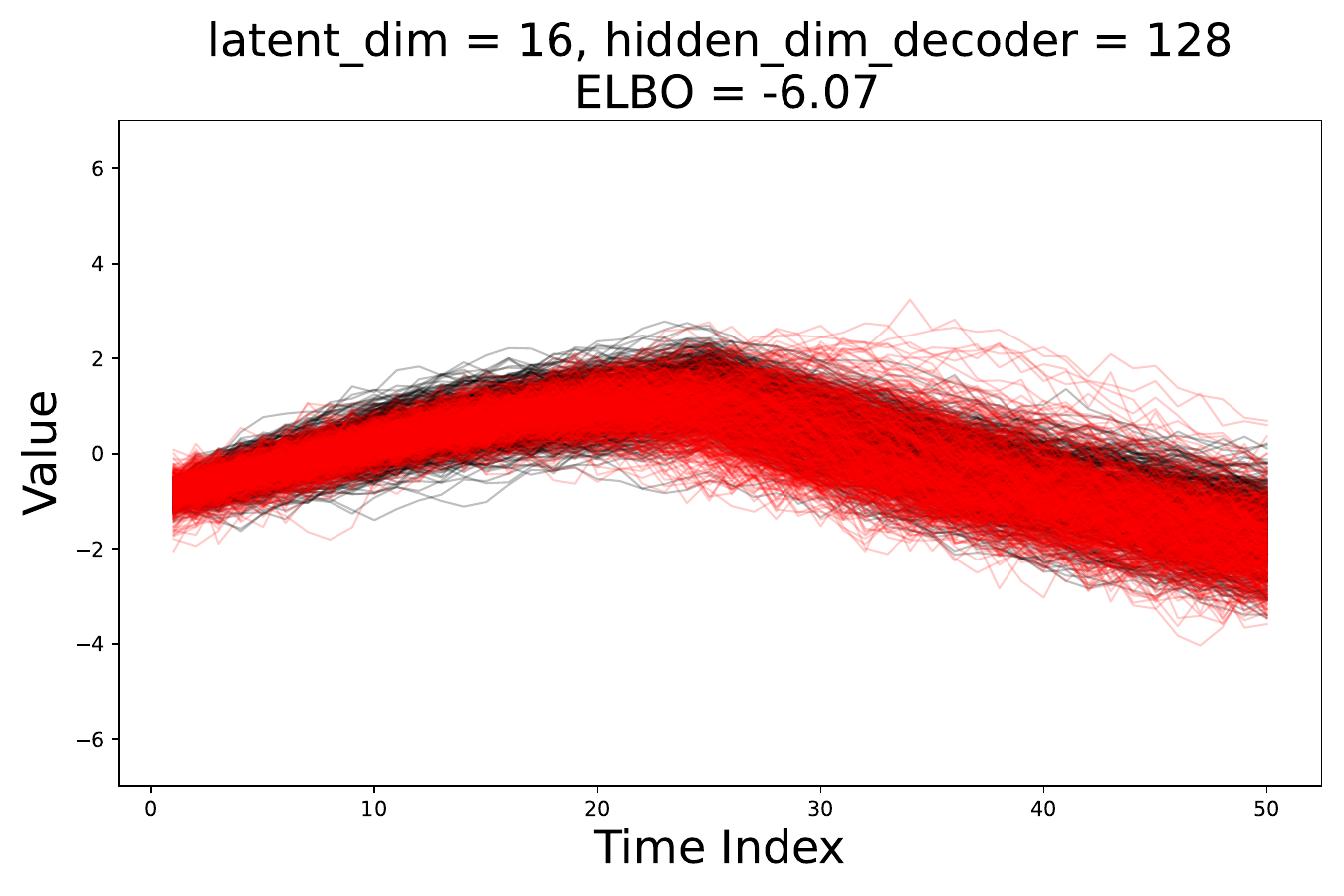}
    \end{subfigure}
    \vspace{1em}

    \begin{subfigure}[b]{0.45\textwidth}
        \centering
        \includegraphics[width=\textwidth]{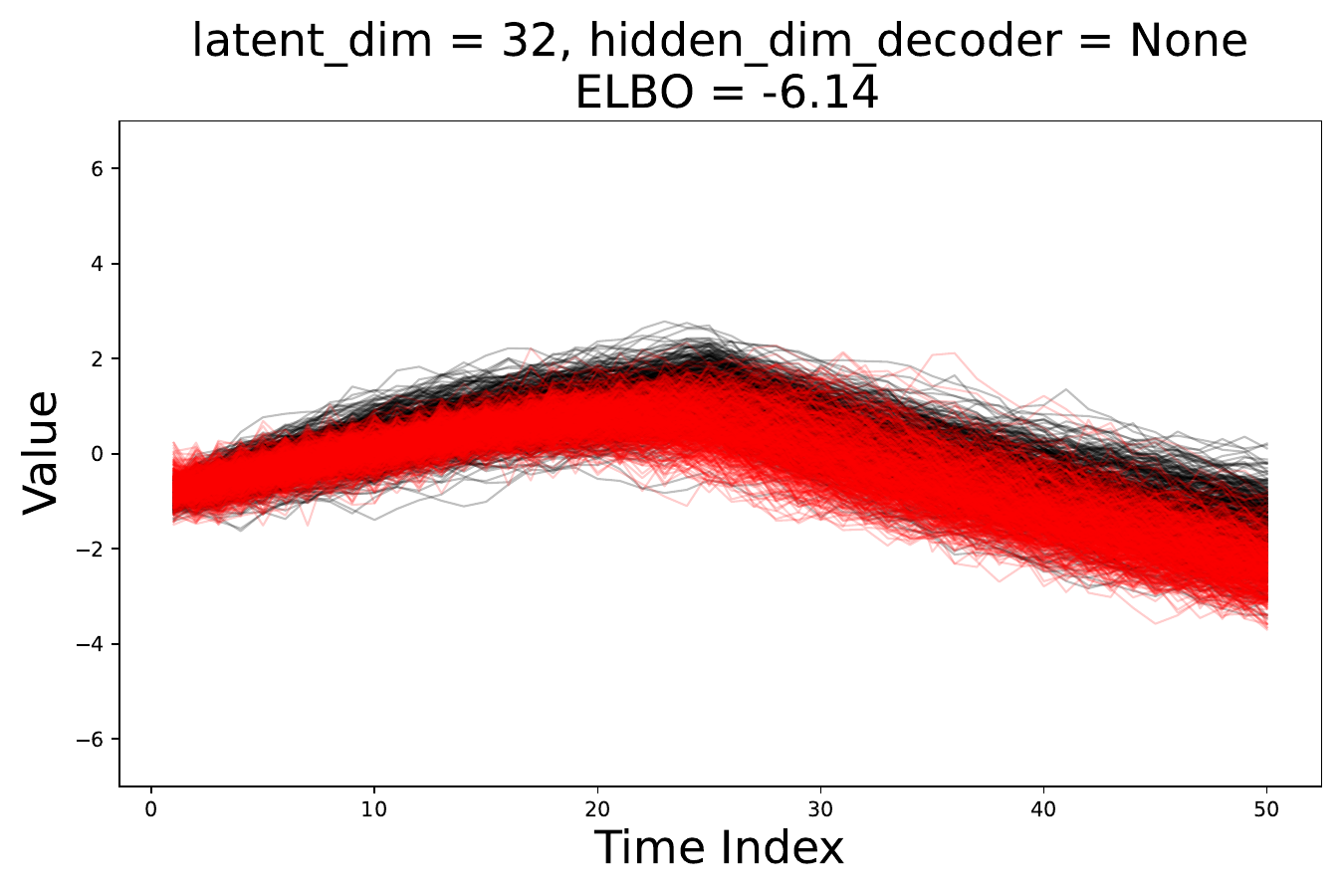}
    \end{subfigure}
    \hfill
    \begin{subfigure}[b]{0.45\textwidth}
        \centering
        \includegraphics[width=\textwidth]{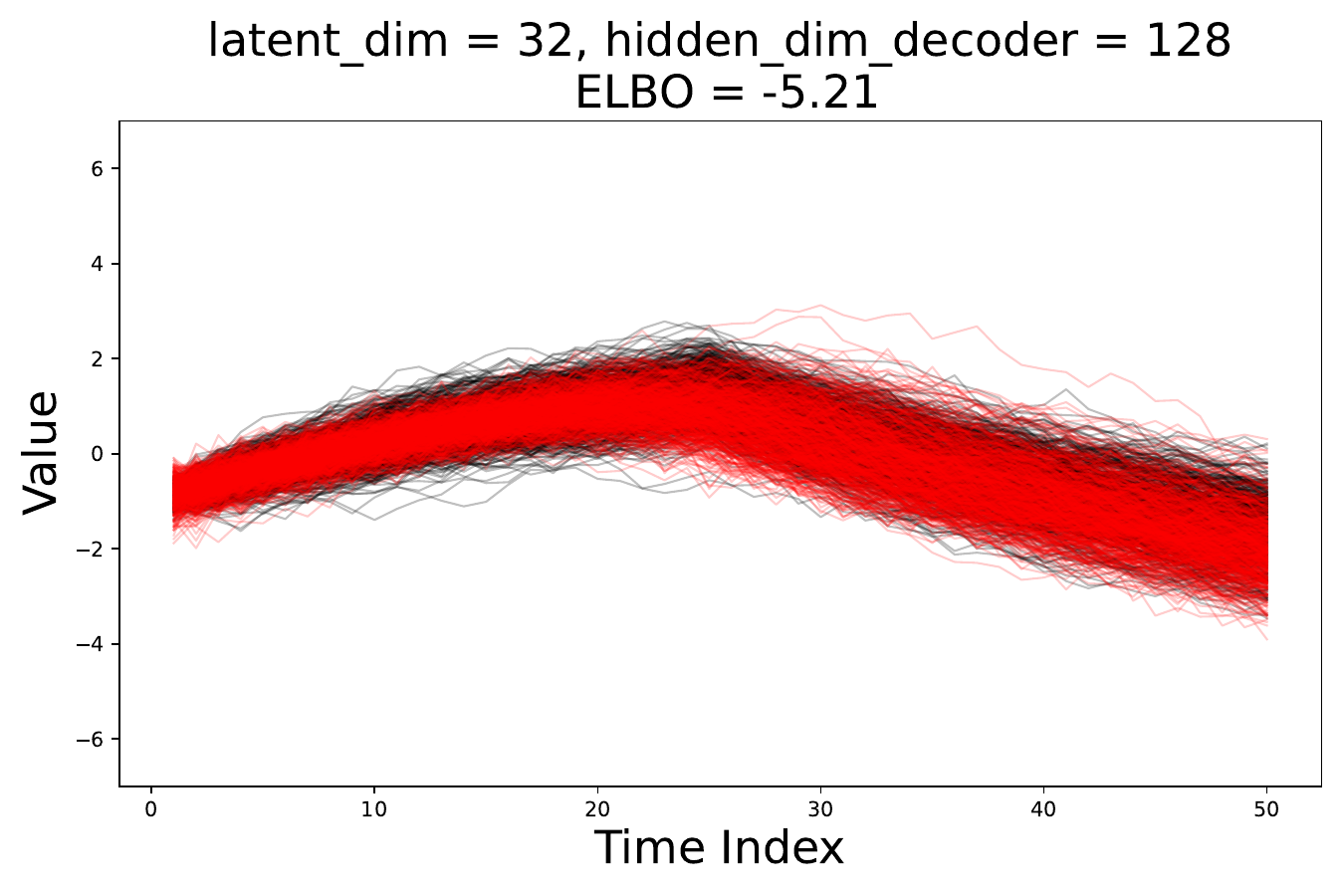}
    \end{subfigure}
    \caption{latent\_dim vs. decoder\_type.}
    \label{fig: ablation_study_latent_dim_decoder_type}
\end{figure}

\subsubsection{Latent Space Size vs. Number of Encoder Layers}
In this part of the ablation study, we vary both the number of encoder layers and the dimension of the latent random variable. Here, we observe the trend that utilizing a larger number of layers improves performance in terms of ELBO, across all latent dimension sizes. We note that the best performing model was achieved for the most complex model in our hyperparameter search space, where the size of the latent variable was 32 and the number of encoder layers was 3. 

\begin{figure}[htbp]
    \centering
    \begin{subfigure}[b]{0.3\textwidth}
        \centering
        \includegraphics[width=\textwidth]{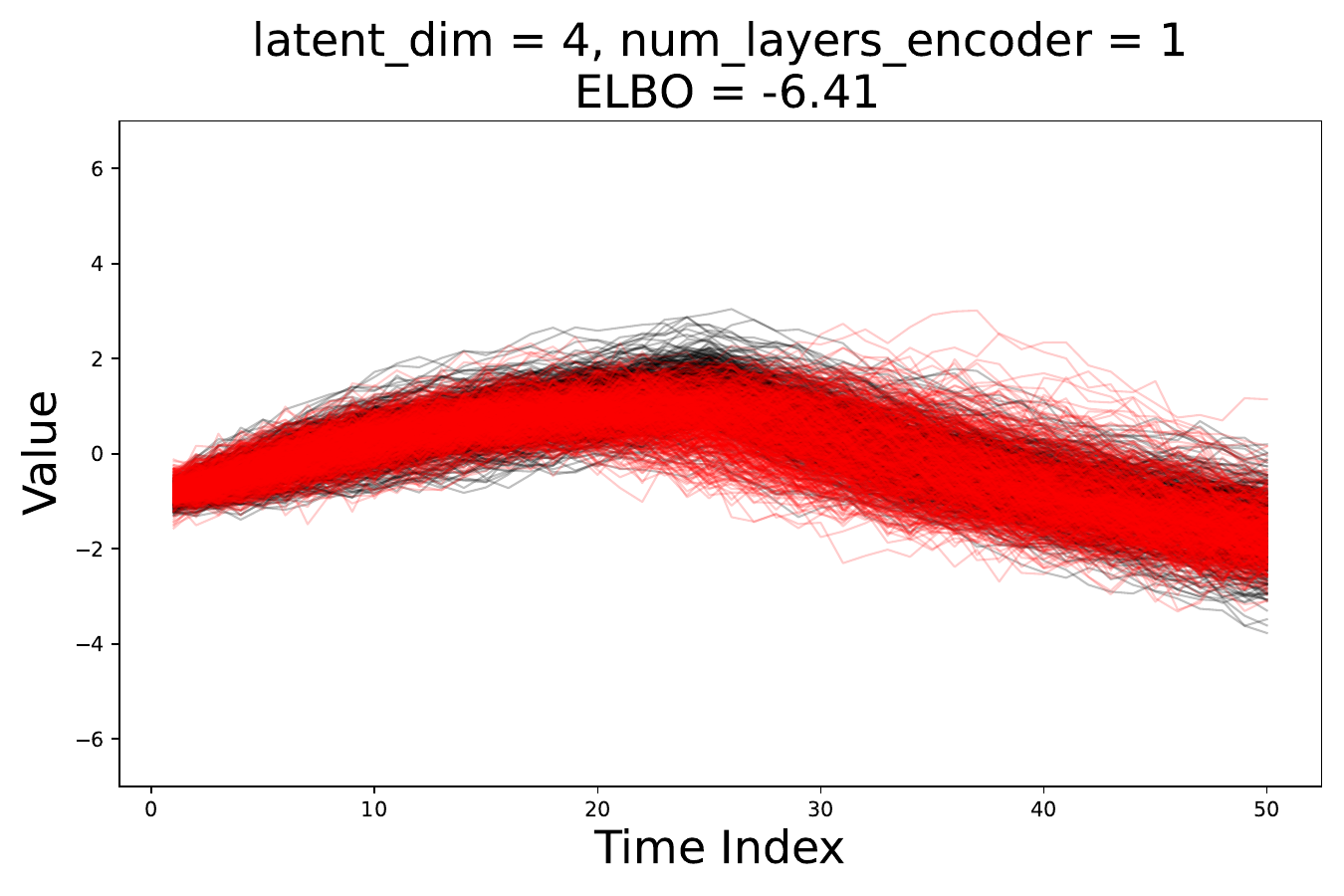}
    \end{subfigure}
    \hfill
    \begin{subfigure}[b]{0.3\textwidth}
        \centering
        \includegraphics[width=\textwidth]{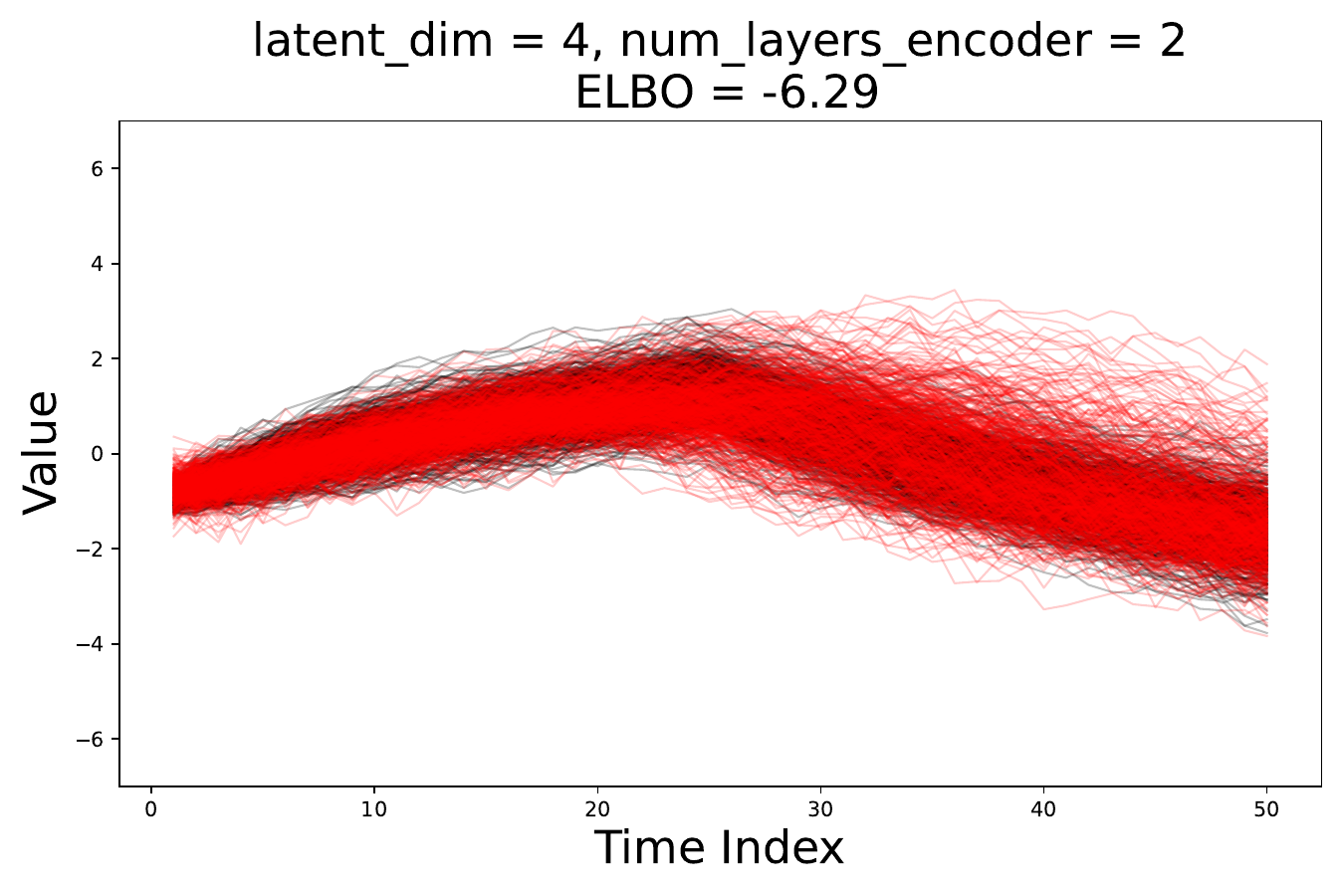}
    \end{subfigure}
    \hfill
    \begin{subfigure}[b]{0.3\textwidth}
        \centering
        \includegraphics[width=\textwidth]{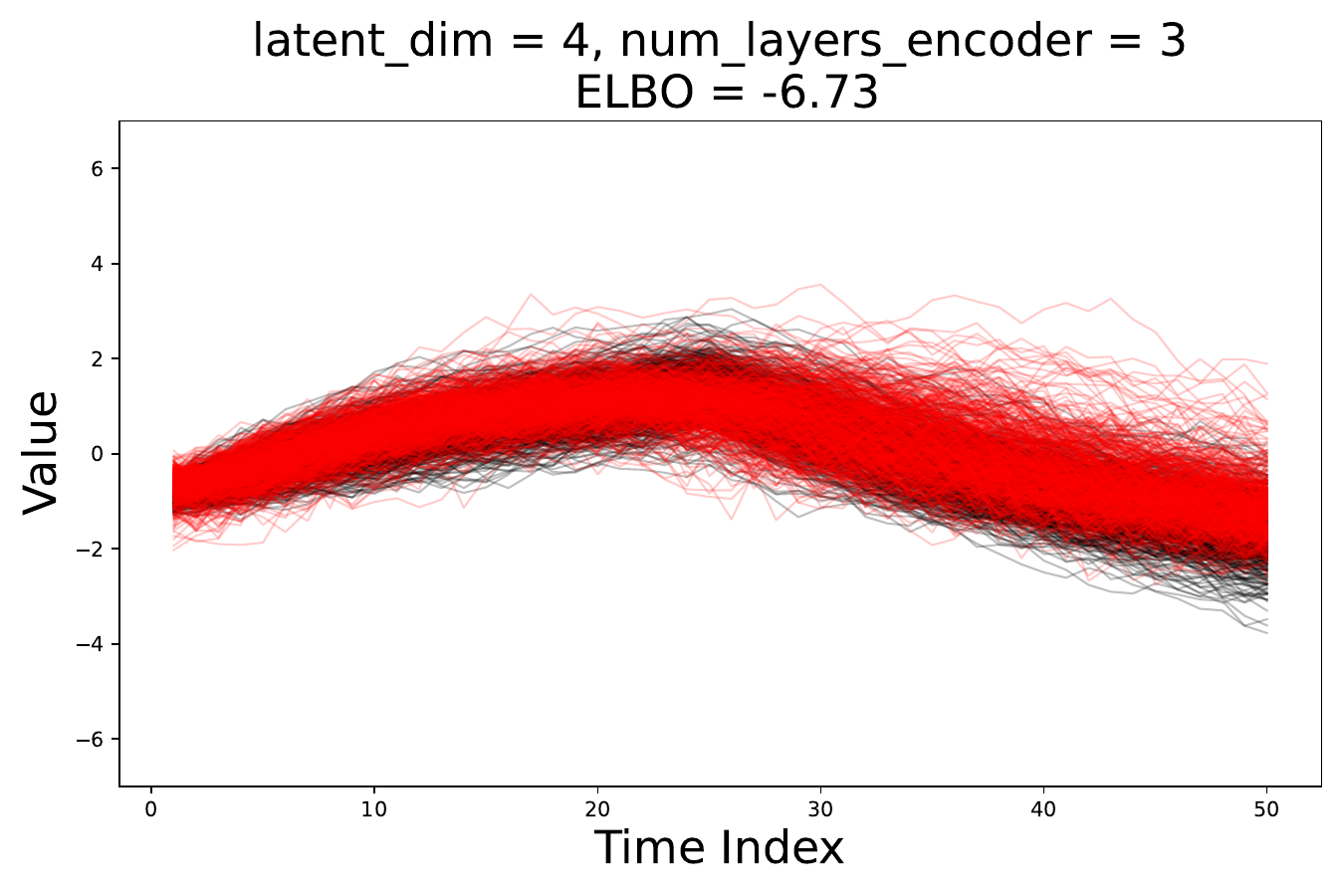}
    \end{subfigure}
    
    \vspace{1em}
    
    \begin{subfigure}[b]{0.3\textwidth}
        \centering
        \includegraphics[width=\textwidth]{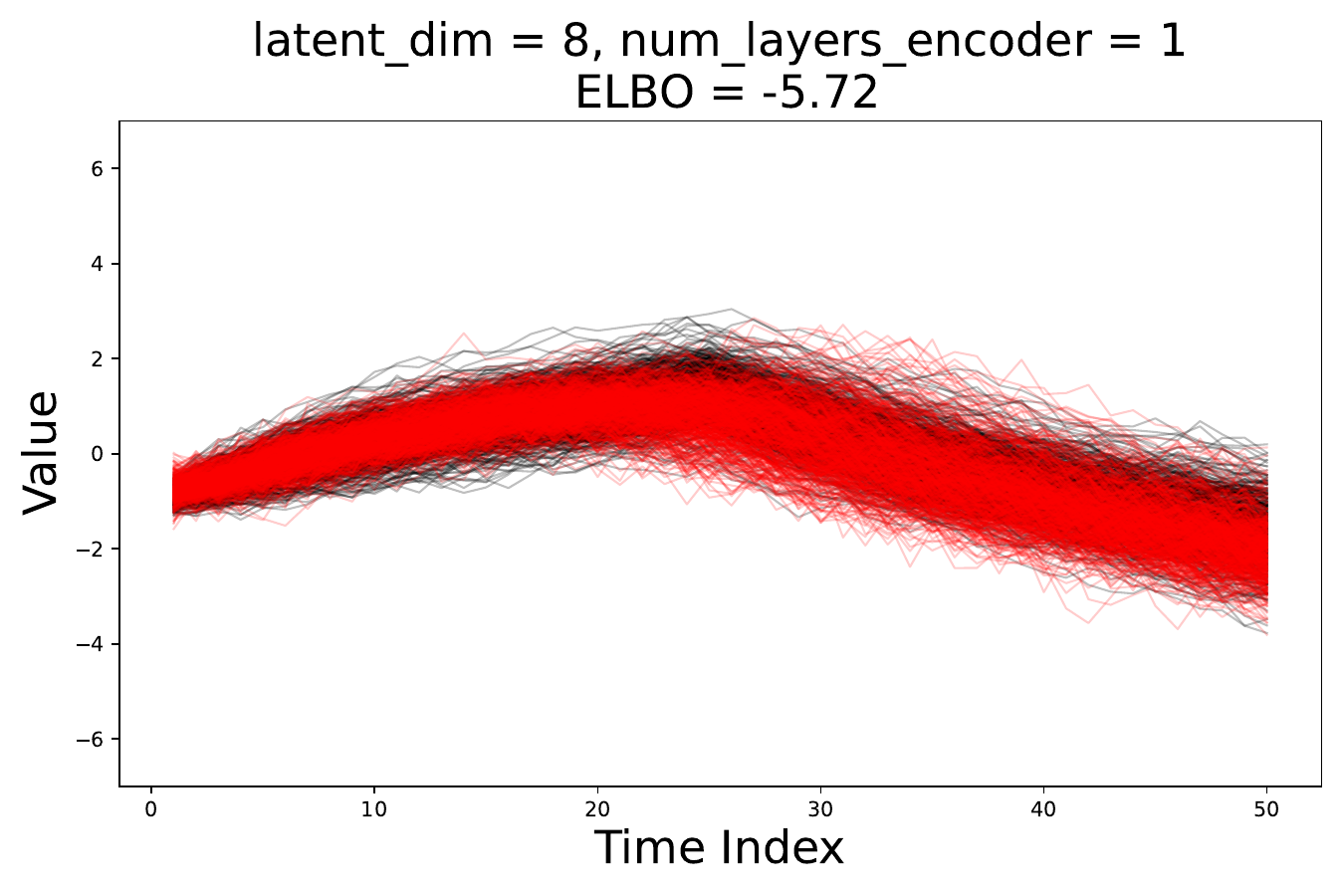}
    \end{subfigure}
    \hfill
    \begin{subfigure}[b]{0.3\textwidth}
        \centering
        \includegraphics[width=\textwidth]{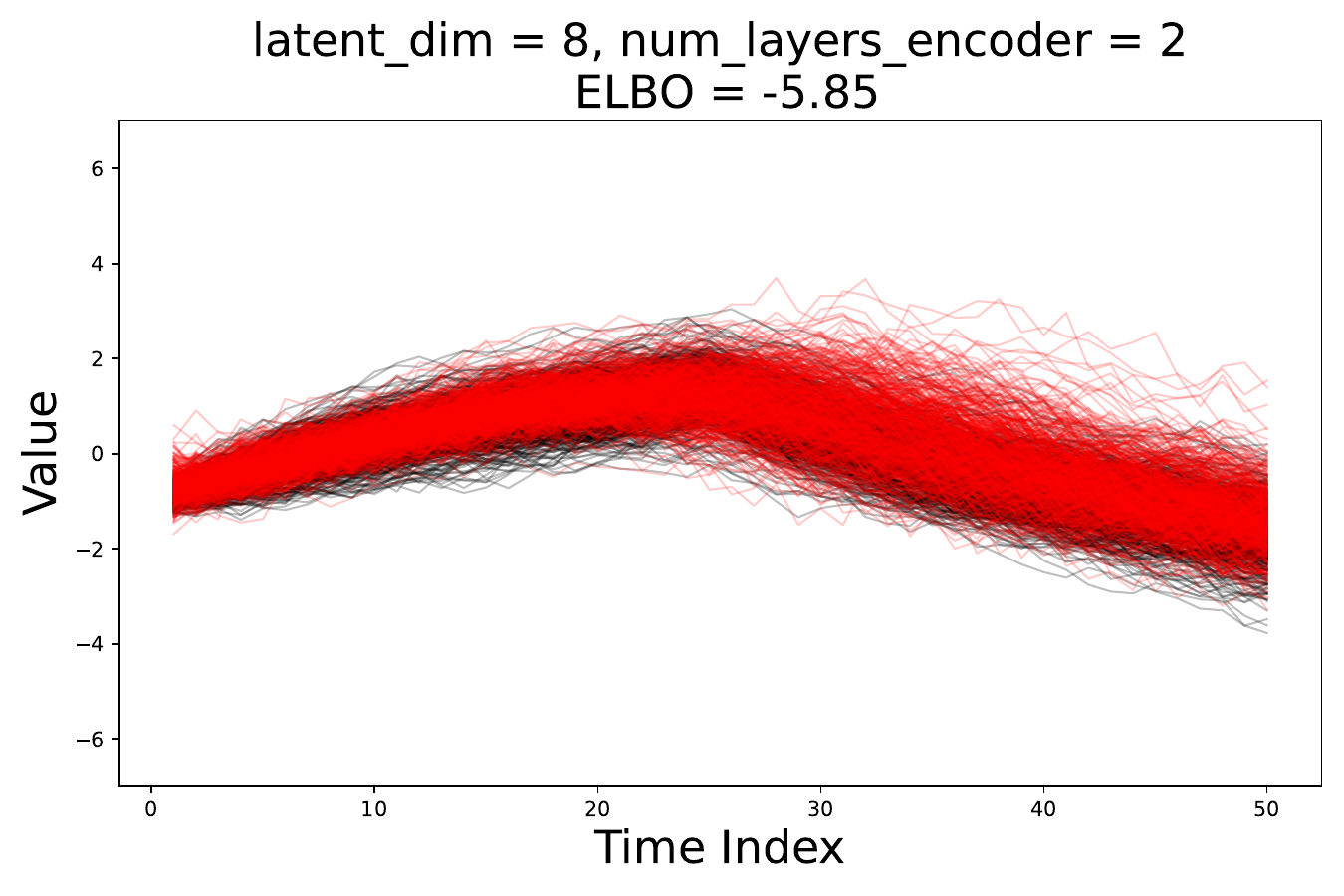}
    \end{subfigure}
    \hfill
    \begin{subfigure}[b]{0.3\textwidth}
        \centering
        \includegraphics[width=\textwidth]{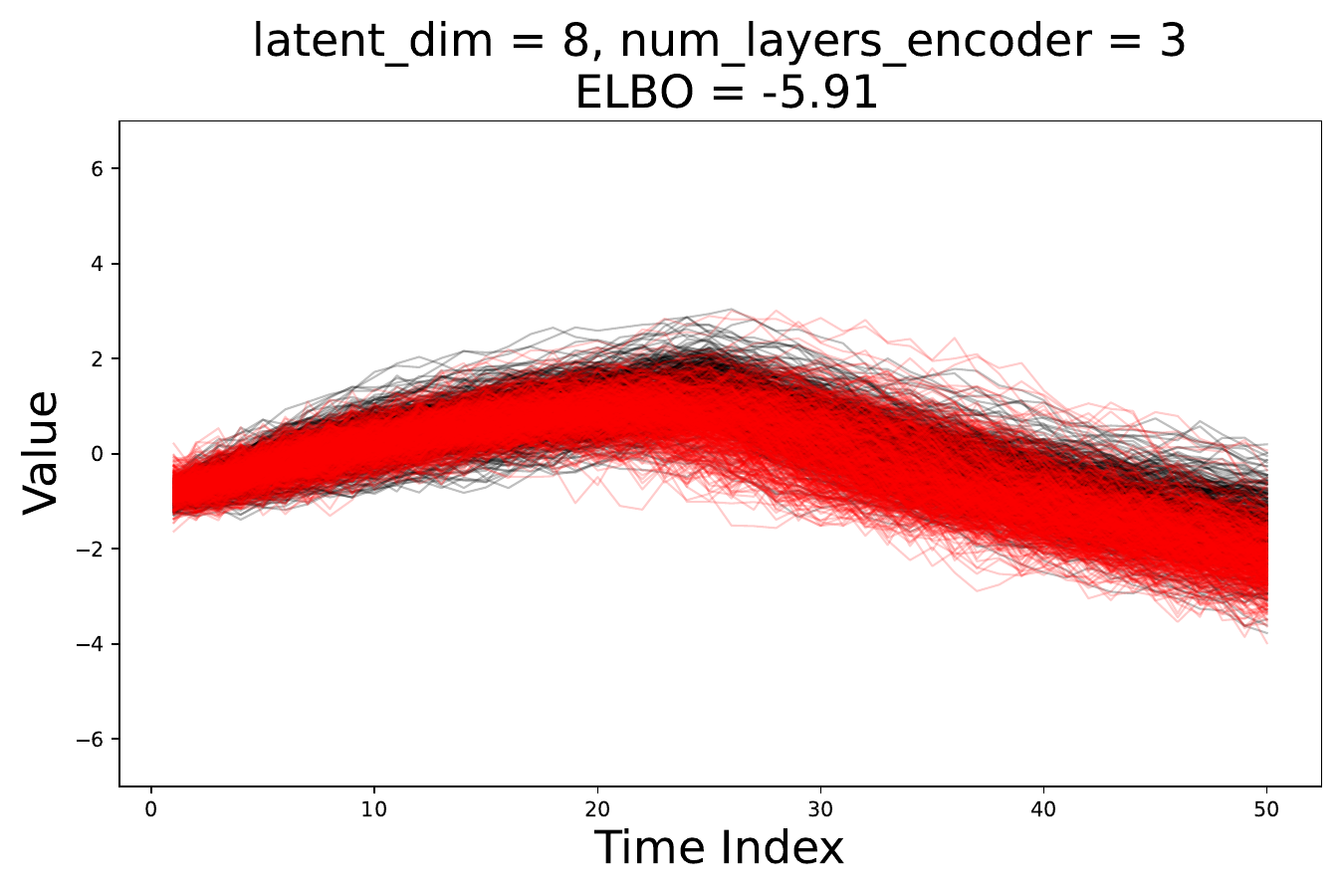}
    \end{subfigure}
    
    \vspace{1em}
    
    \begin{subfigure}[b]{0.3\textwidth}
        \centering
        \includegraphics[width=\textwidth]{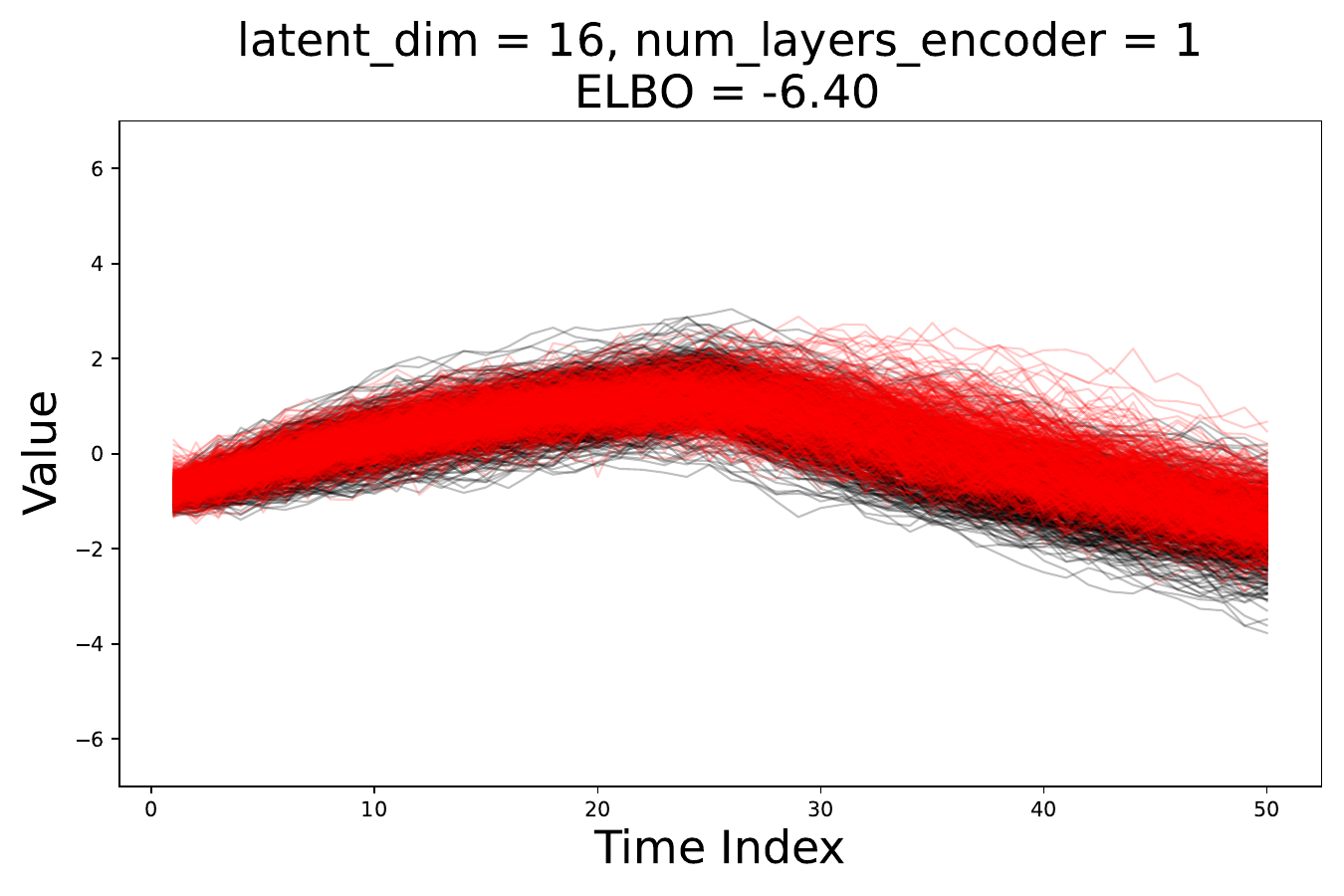}
    \end{subfigure}
    \hfill
    \begin{subfigure}[b]{0.3\textwidth}
        \centering
        \includegraphics[width=\textwidth]{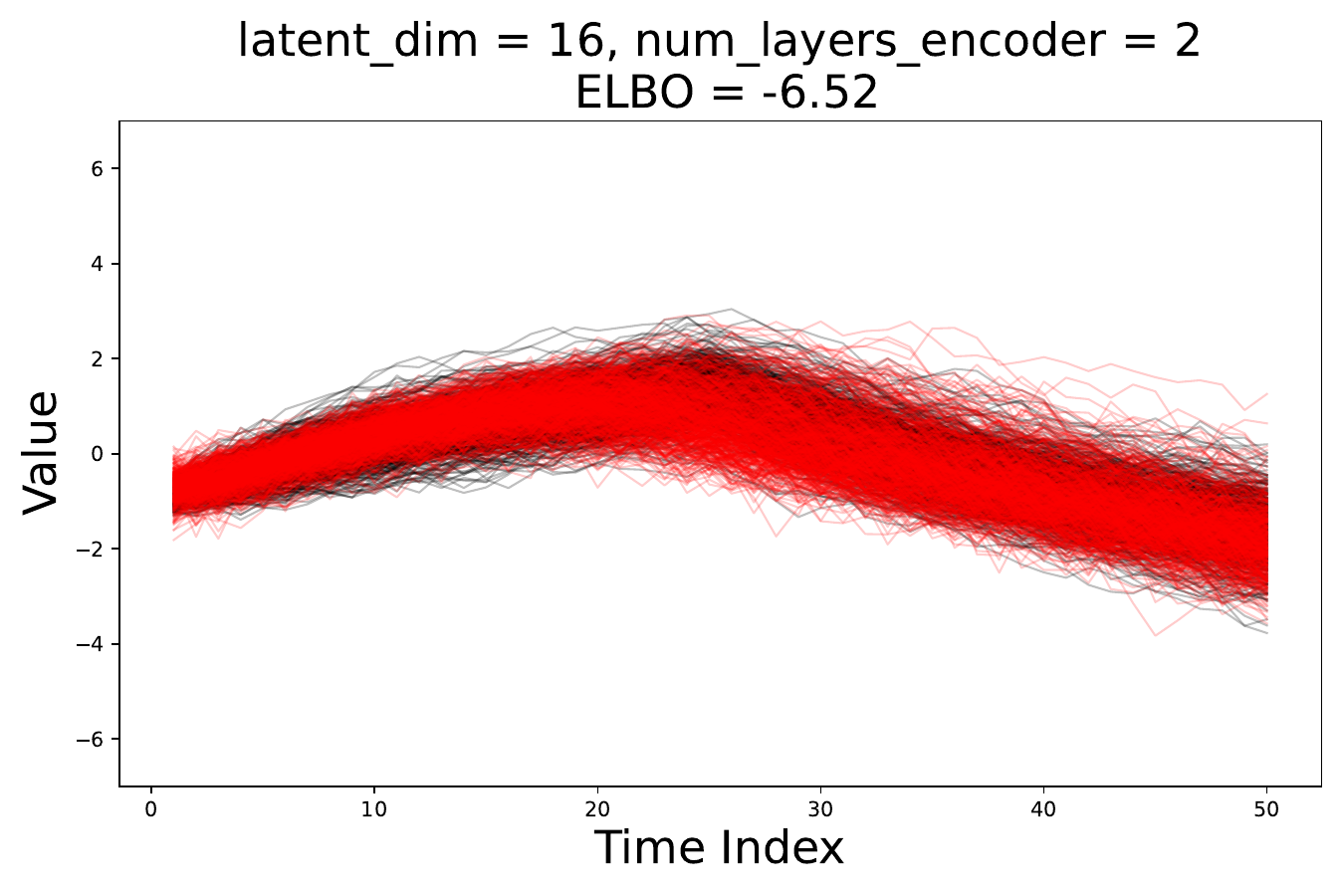}
    \end{subfigure}
    \hfill
    \begin{subfigure}[b]{0.3\textwidth}
        \centering
        \includegraphics[width=\textwidth]{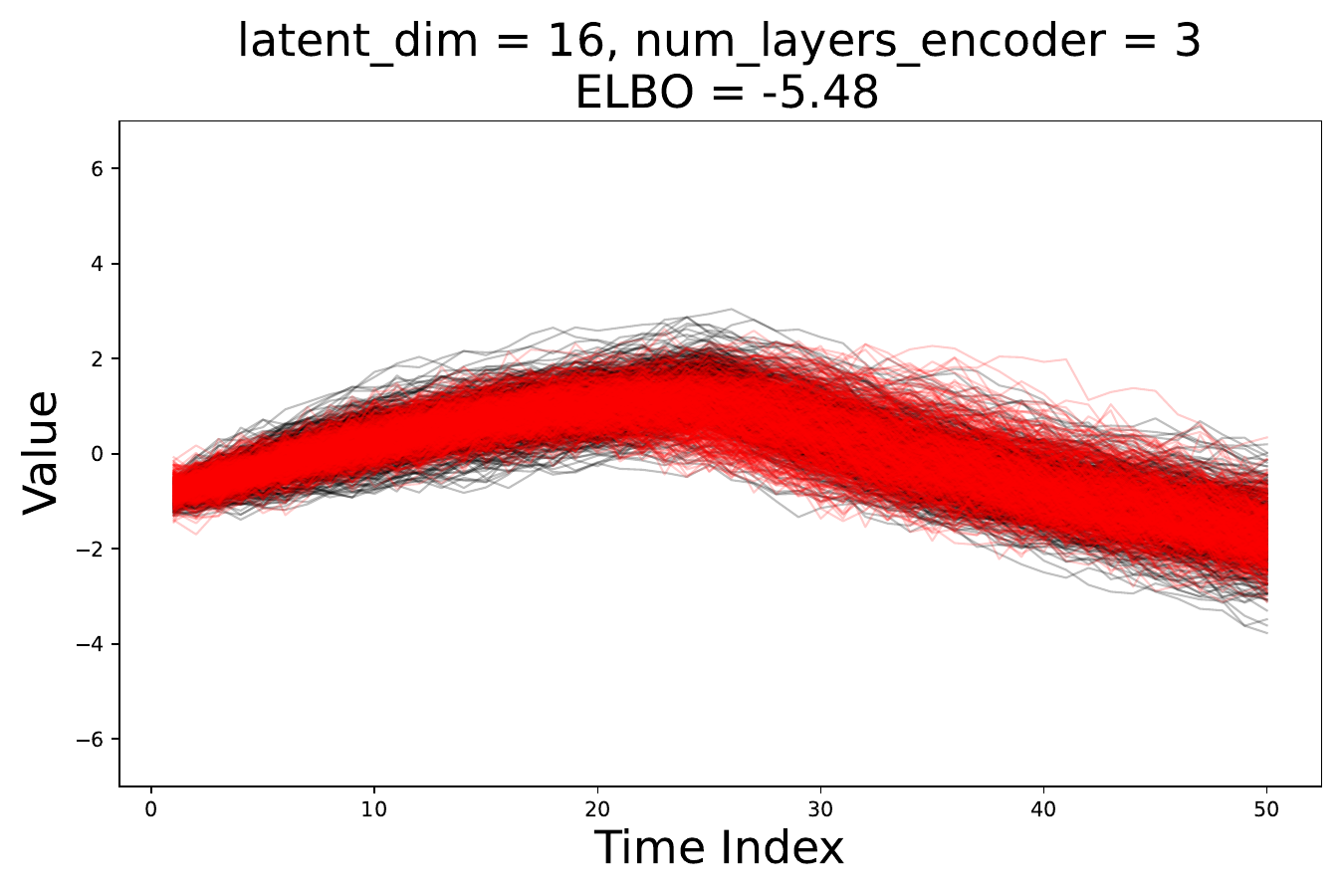}
    \end{subfigure}
    
    \vspace{1em}
    
    \begin{subfigure}[b]{0.3\textwidth}
        \centering
        \includegraphics[width=\textwidth]{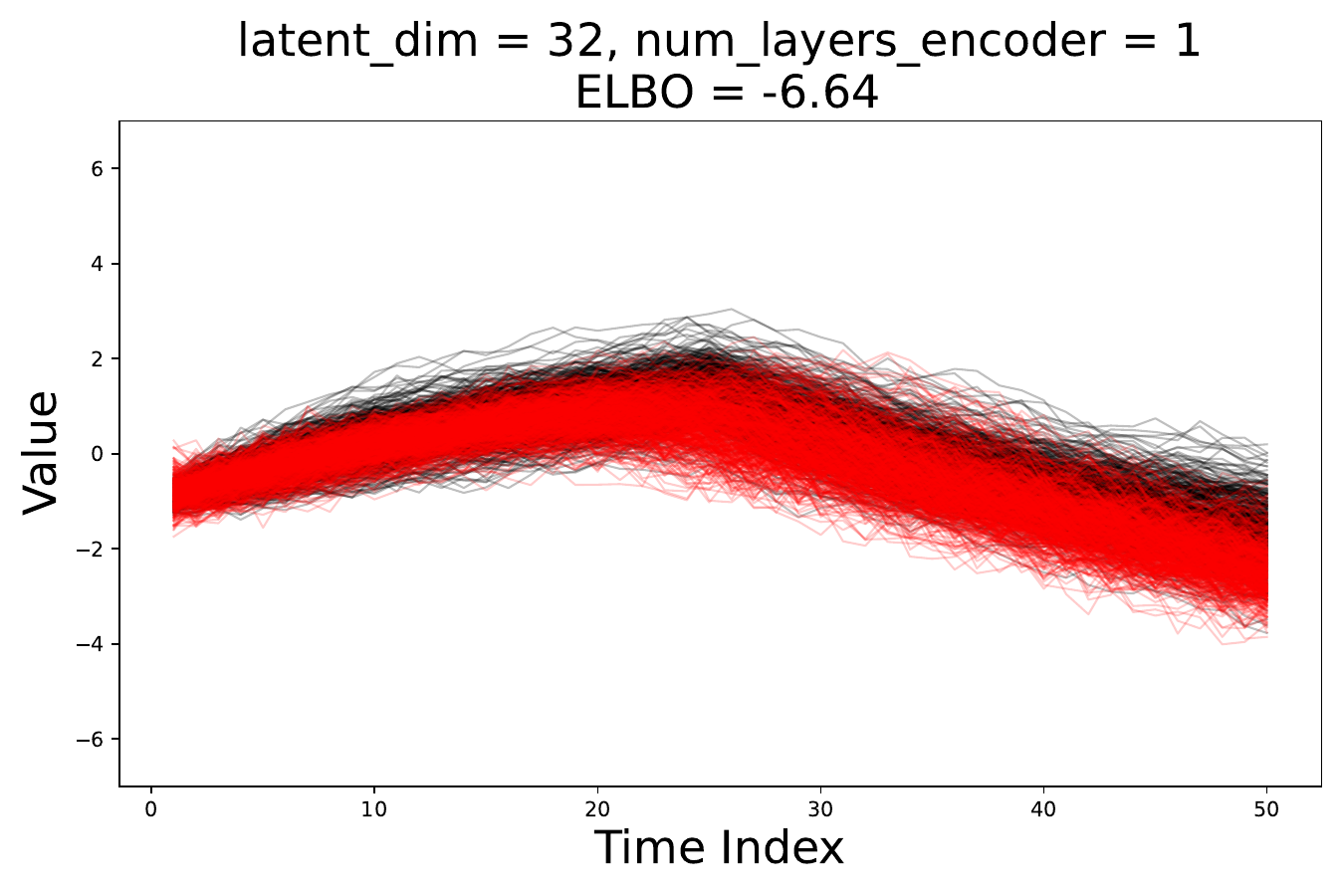}
    \end{subfigure}
    \hfill
    \begin{subfigure}[b]{0.3\textwidth}
        \centering
        \includegraphics[width=\textwidth]{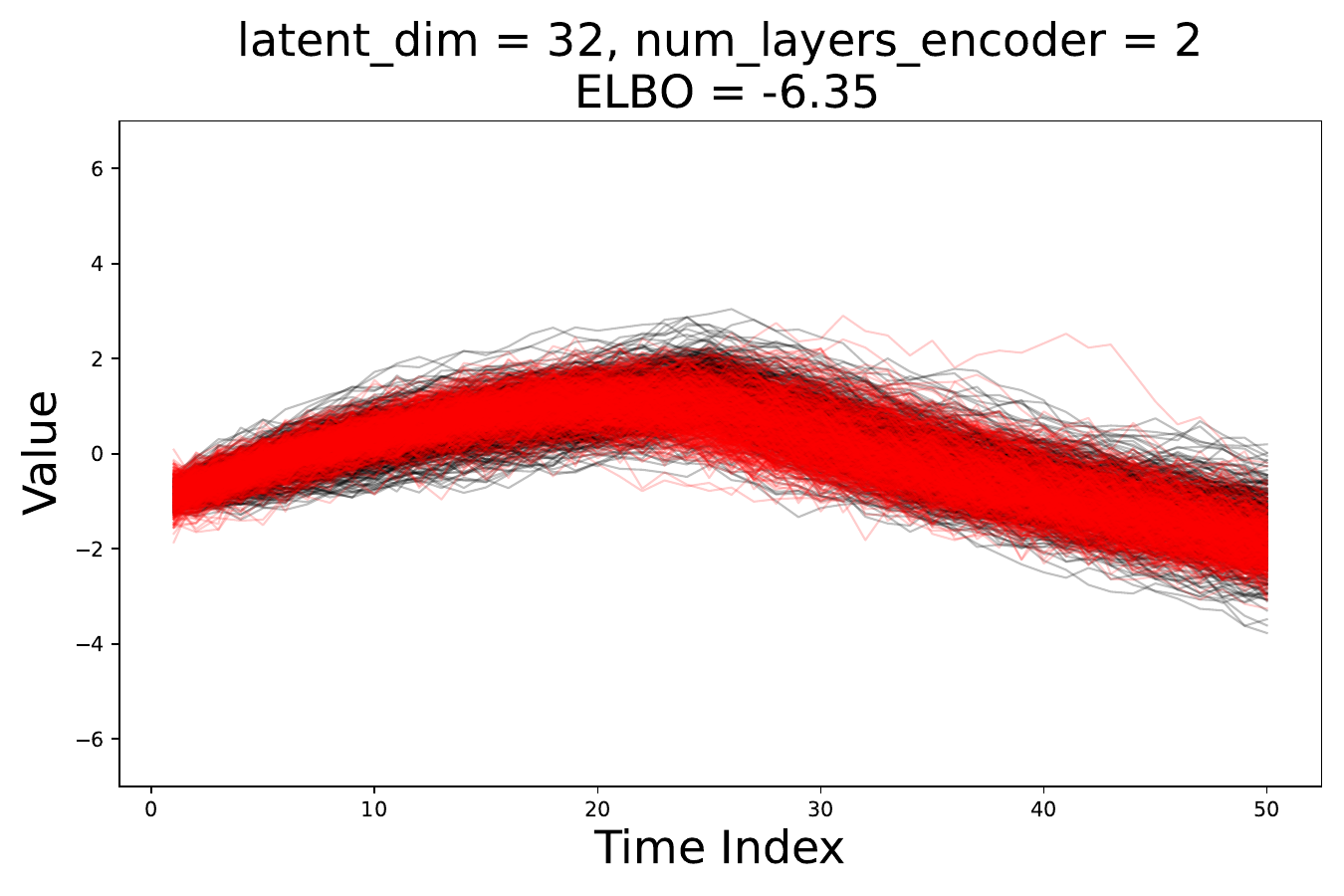}
    \end{subfigure}
    \hfill
    \begin{subfigure}[b]{0.3\textwidth}
        \centering
        \includegraphics[width=\textwidth]{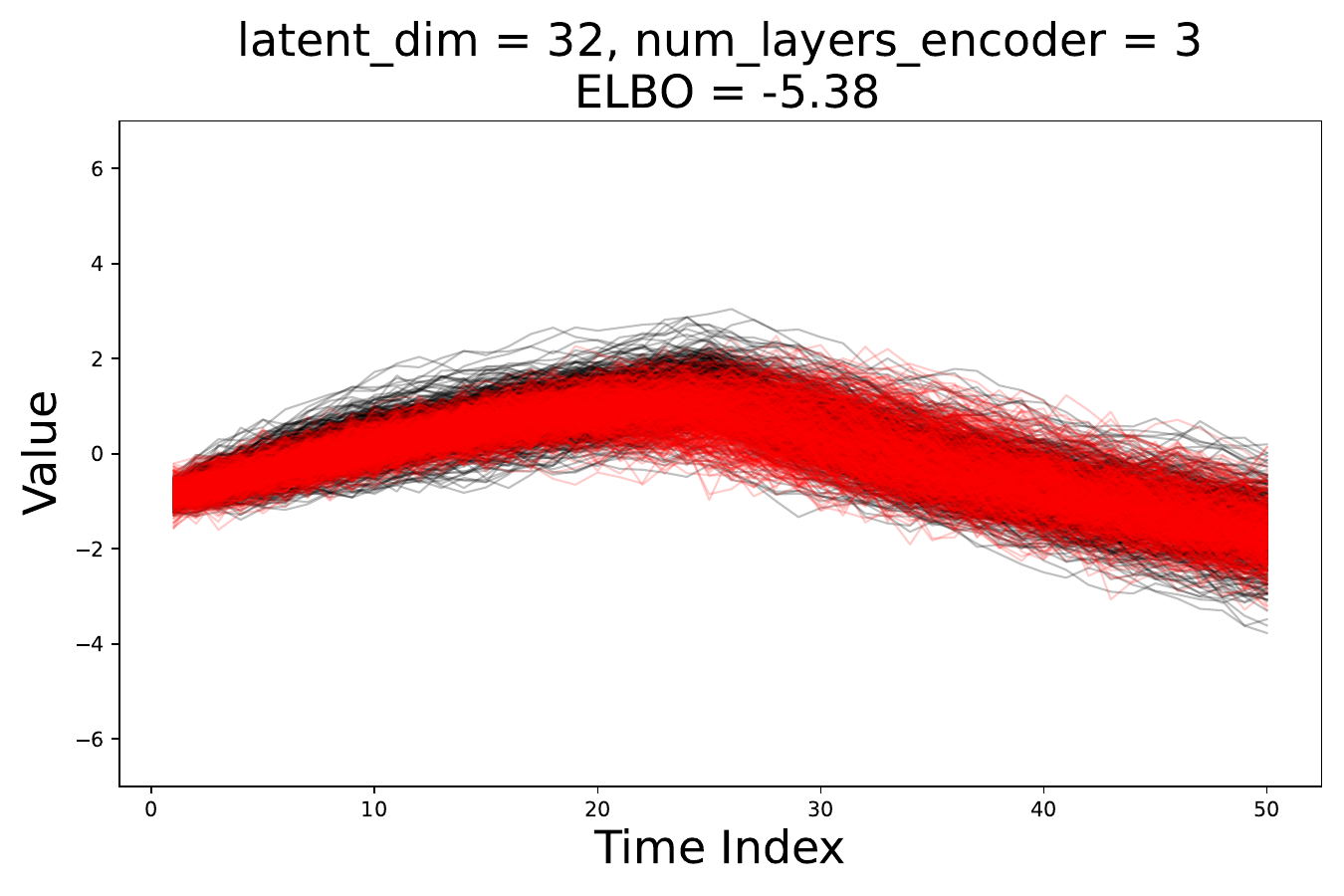}
    \end{subfigure}

    \caption{latent\_dim vs. num\_layers\_encoder.}
    \label{fig: ablation_study_latent_dim_num_layers_encoder}
\end{figure}

\subsubsection{Latent Space Size vs. Number of SDE Trajectories}
In this part of the ablation, we study the impact of sampling additional SDE trajectories in our nested Monte Carlo estimator of the ELBO over a fixed number of iterations. We study this across different sizes of the latent random variable. Across all sizes of the latent variable, we observe the trend that increasing the number of SDE trajectories in the Monte Carlo estimator improves the model performance in terms of ELBO. We add that, based on the results in the figure, the generated samples also appear more realistic and capture the distribution of the underlying data better. 

\begin{figure}[htbp]
    \centering
    \begin{subfigure}[b]{0.3\textwidth}
        \centering
        \includegraphics[width=\textwidth]{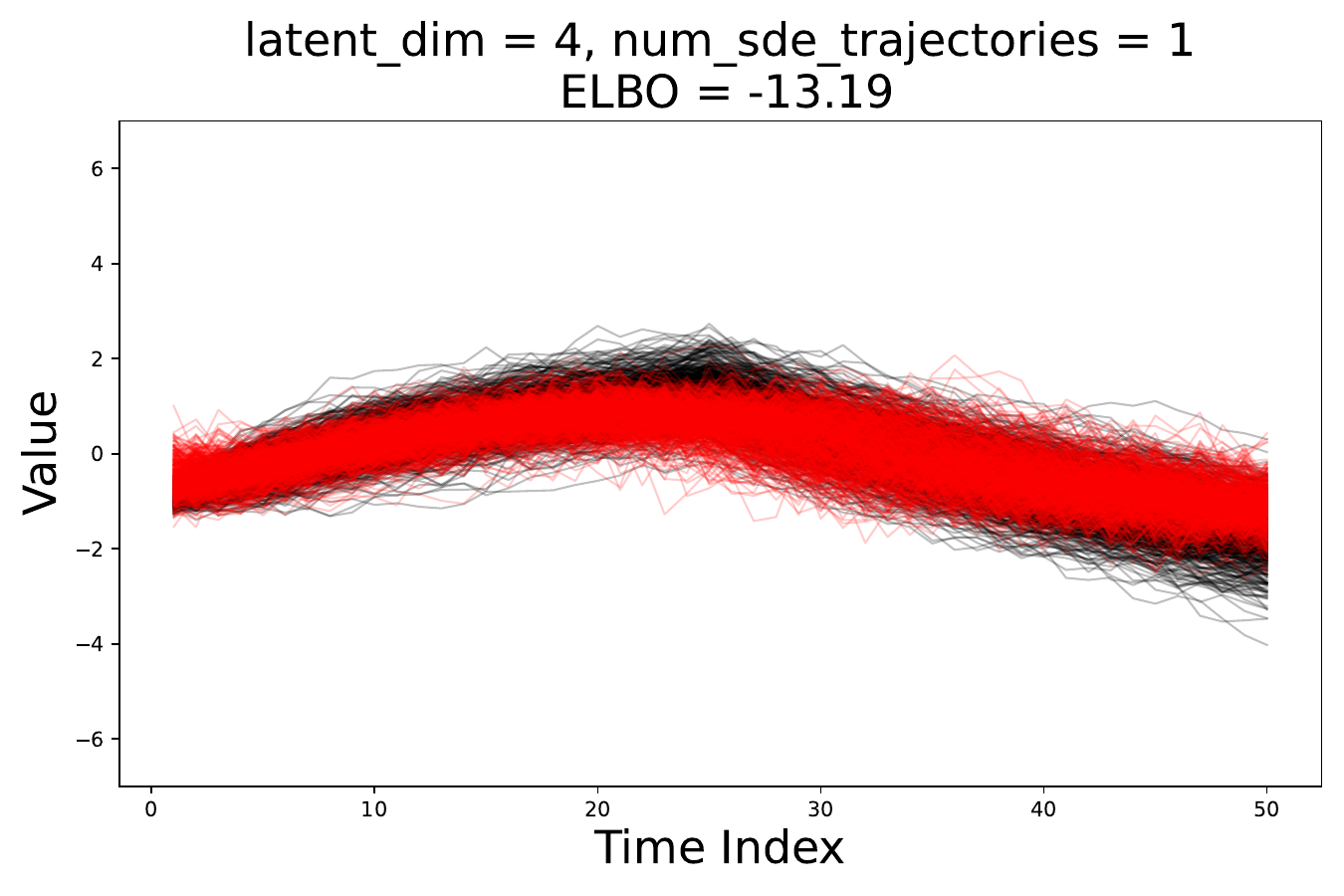}
    \end{subfigure}
    \hfill
    \begin{subfigure}[b]{0.3\textwidth}
        \centering
        \includegraphics[width=\textwidth]{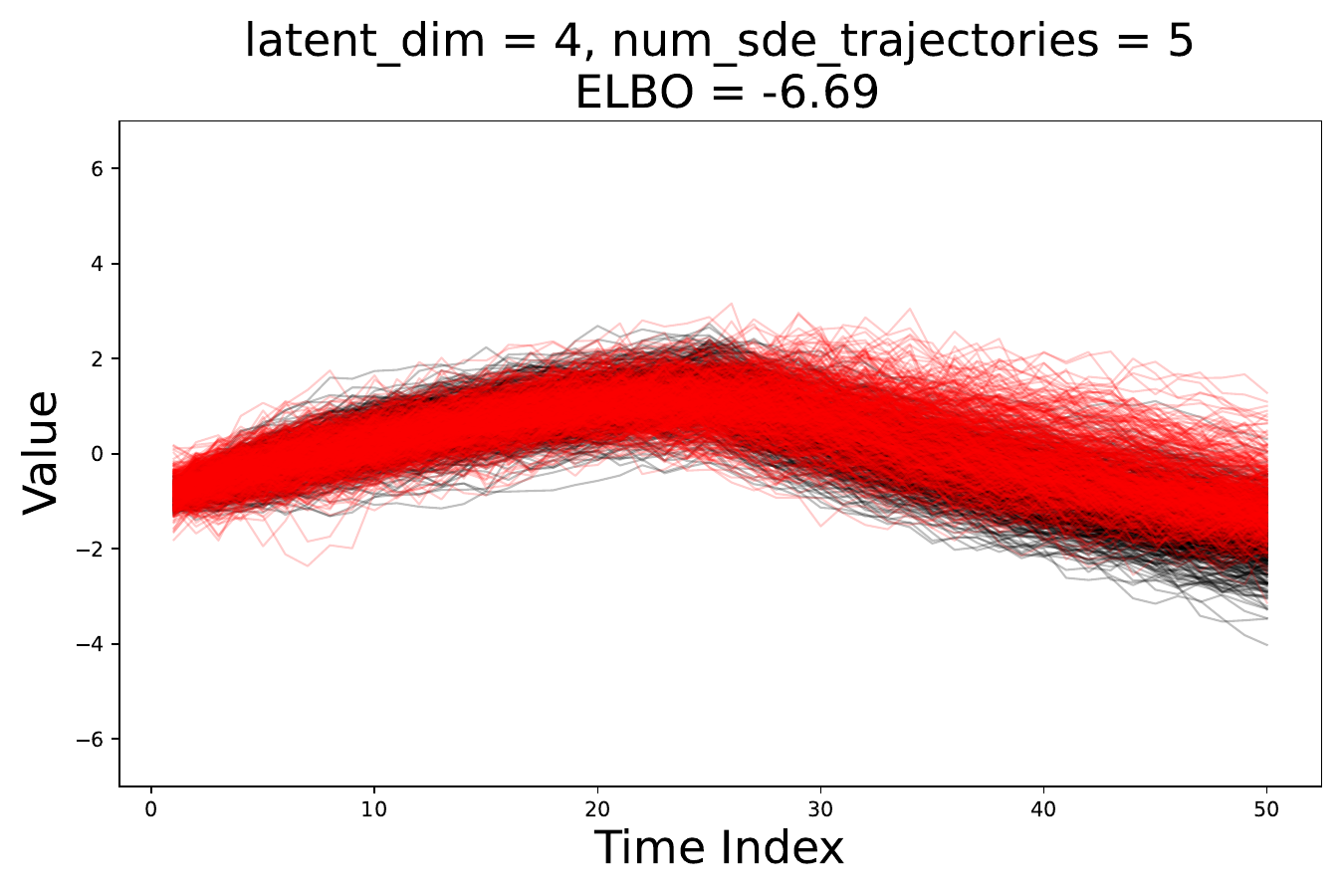}
    \end{subfigure}
    \hfill
    \begin{subfigure}[b]{0.3\textwidth}
        \centering
        \includegraphics[width=\textwidth]{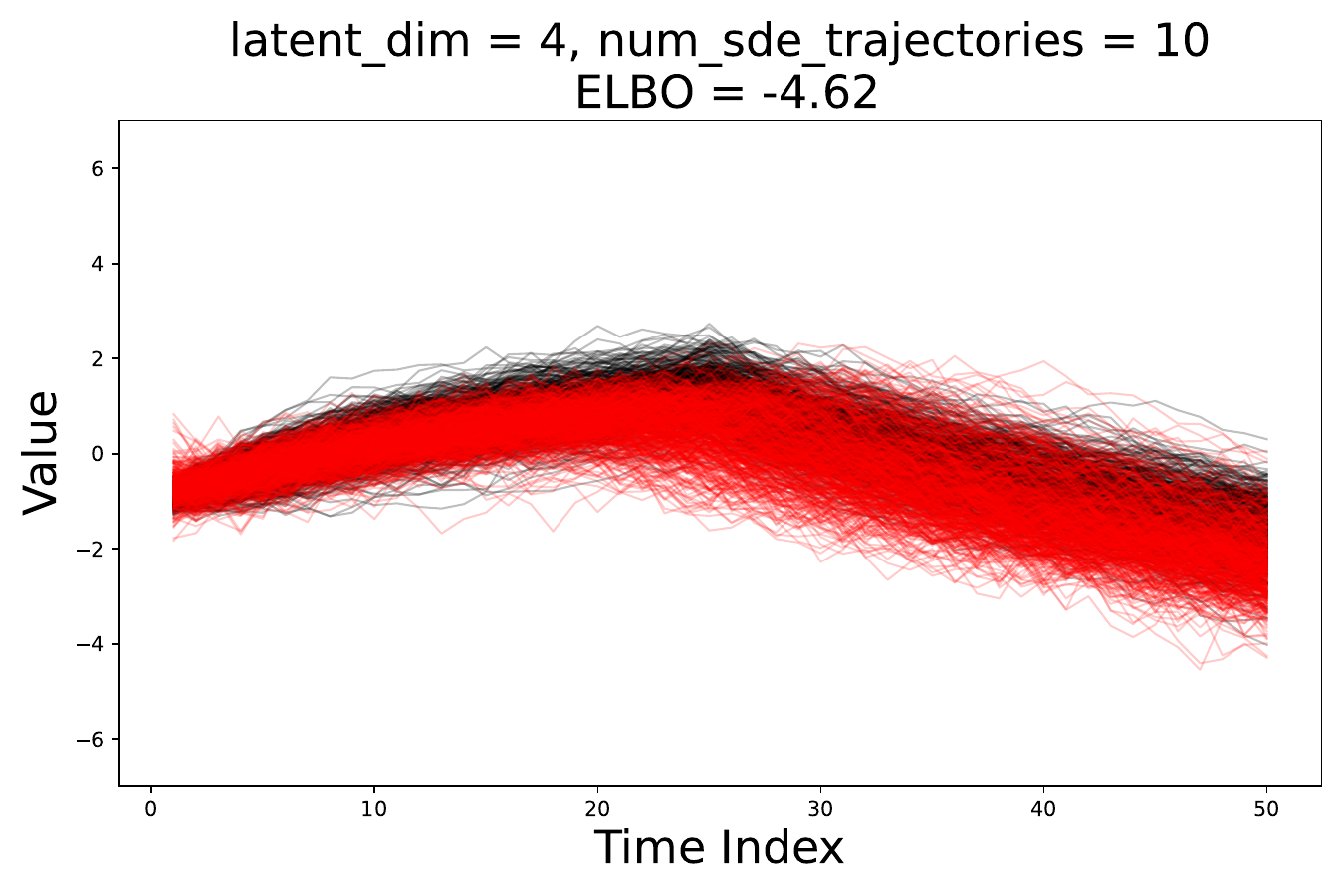}
    \end{subfigure}
    
    \vspace{1em}
    
    \begin{subfigure}[b]{0.3\textwidth}
        \centering
        \includegraphics[width=\textwidth]{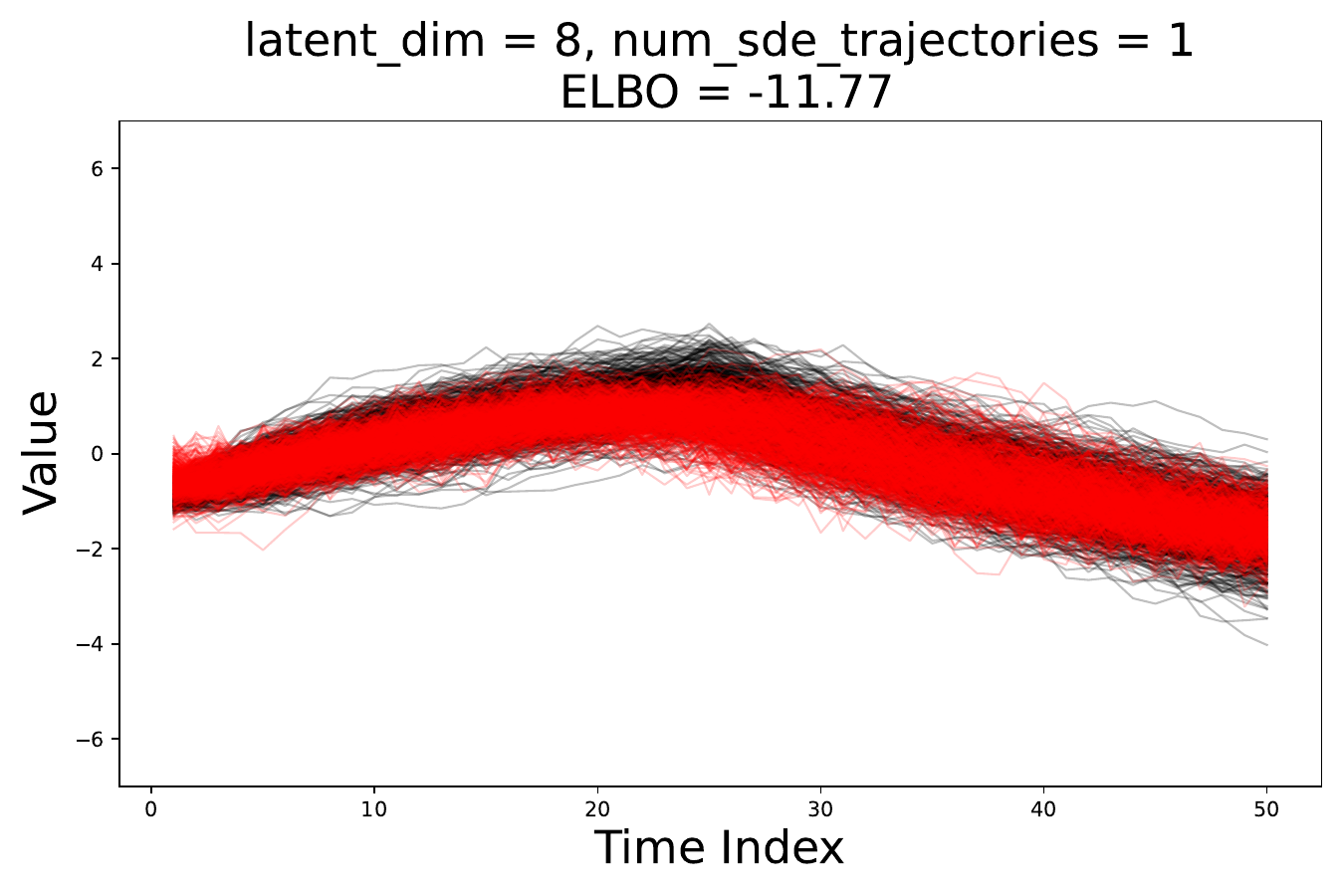}
    \end{subfigure}
    \hfill
    \begin{subfigure}[b]{0.3\textwidth}
        \centering
        \includegraphics[width=\textwidth]{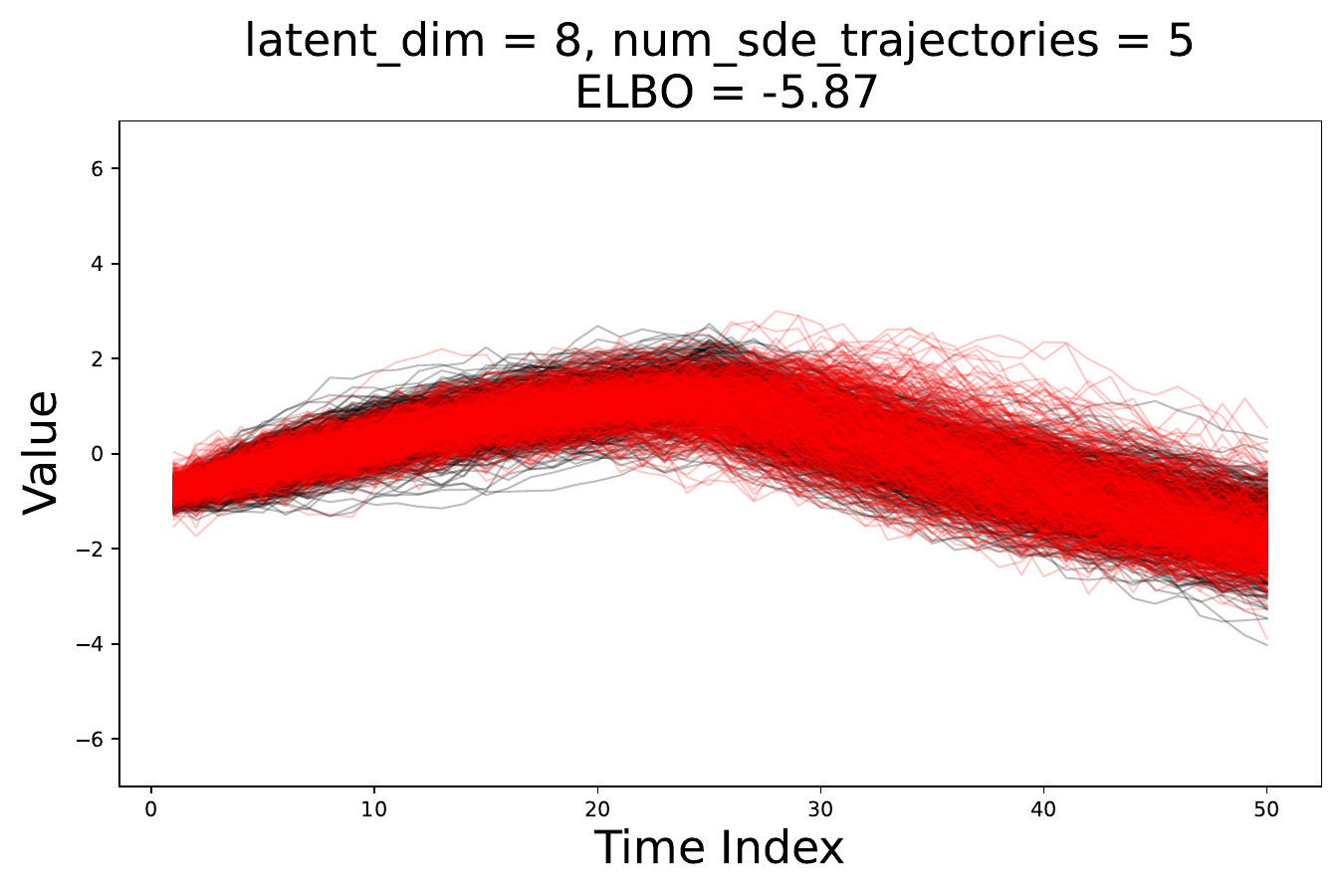}
    \end{subfigure}
    \hfill
    \begin{subfigure}[b]{0.3\textwidth}
        \centering
        \includegraphics[width=\textwidth]{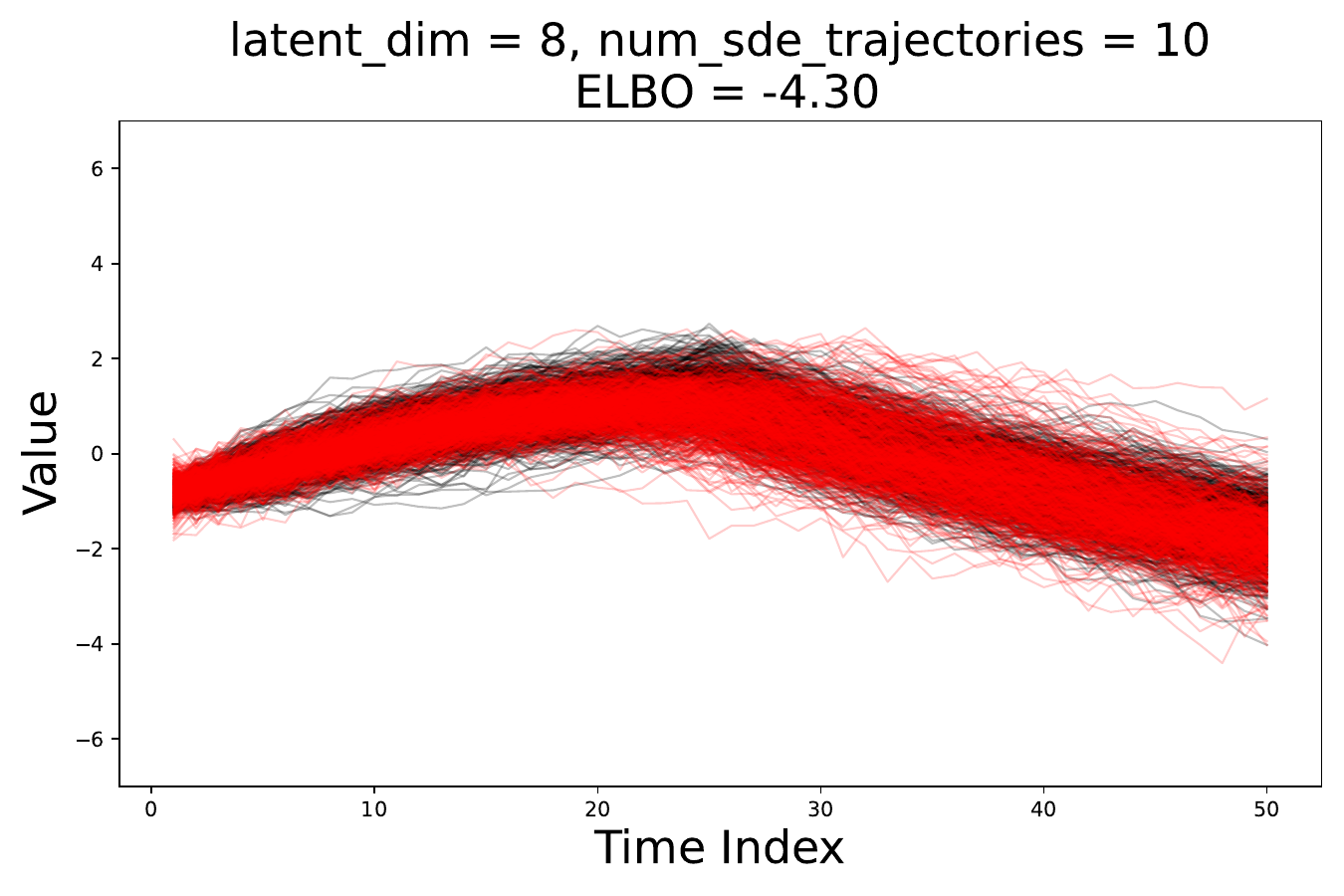}
    \end{subfigure}
    
    \vspace{1em}
    
    \begin{subfigure}[b]{0.3\textwidth}
        \centering
        \includegraphics[width=\textwidth]{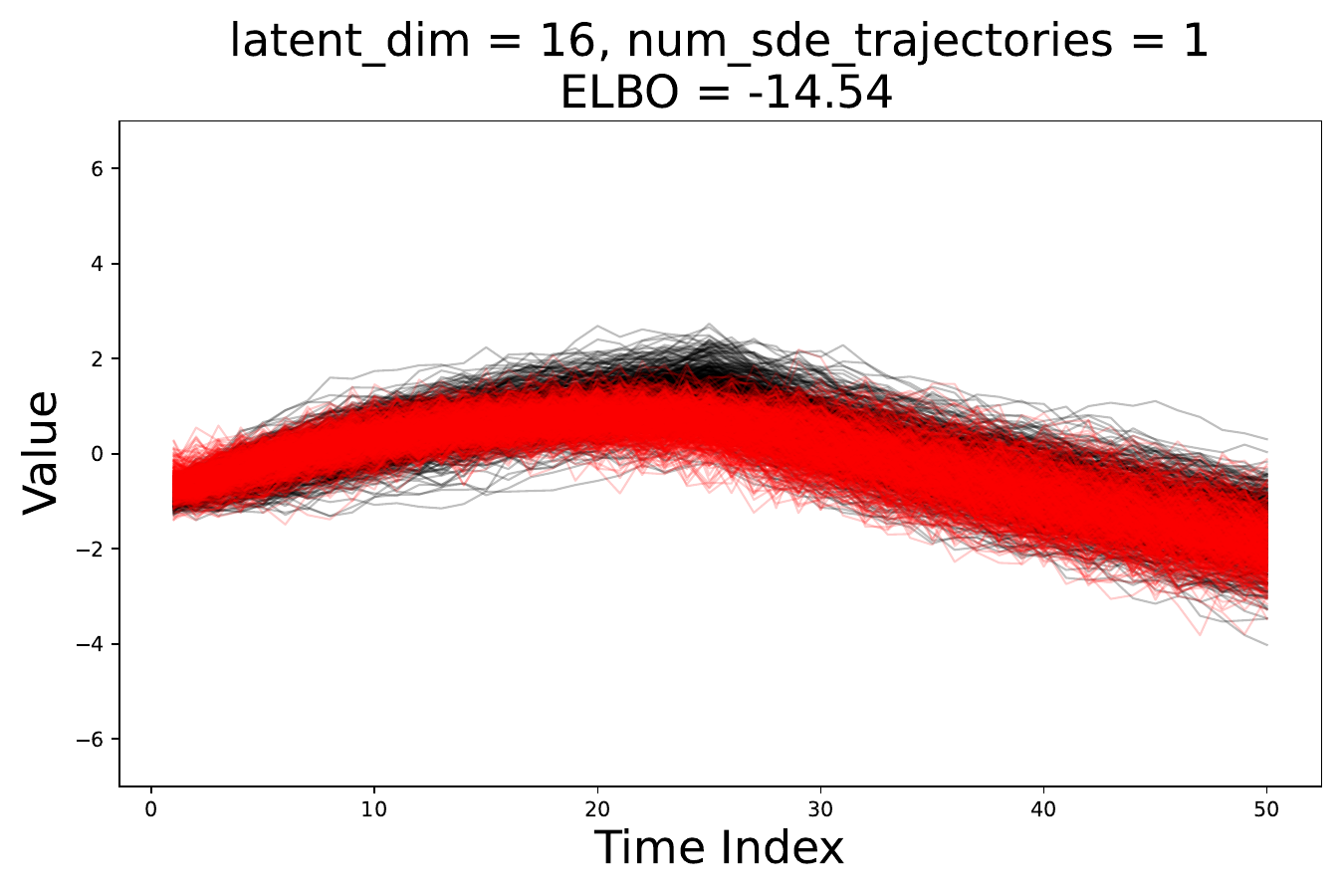}
    \end{subfigure}
    \hfill
    \begin{subfigure}[b]{0.3\textwidth}
        \centering
        \includegraphics[width=\textwidth]{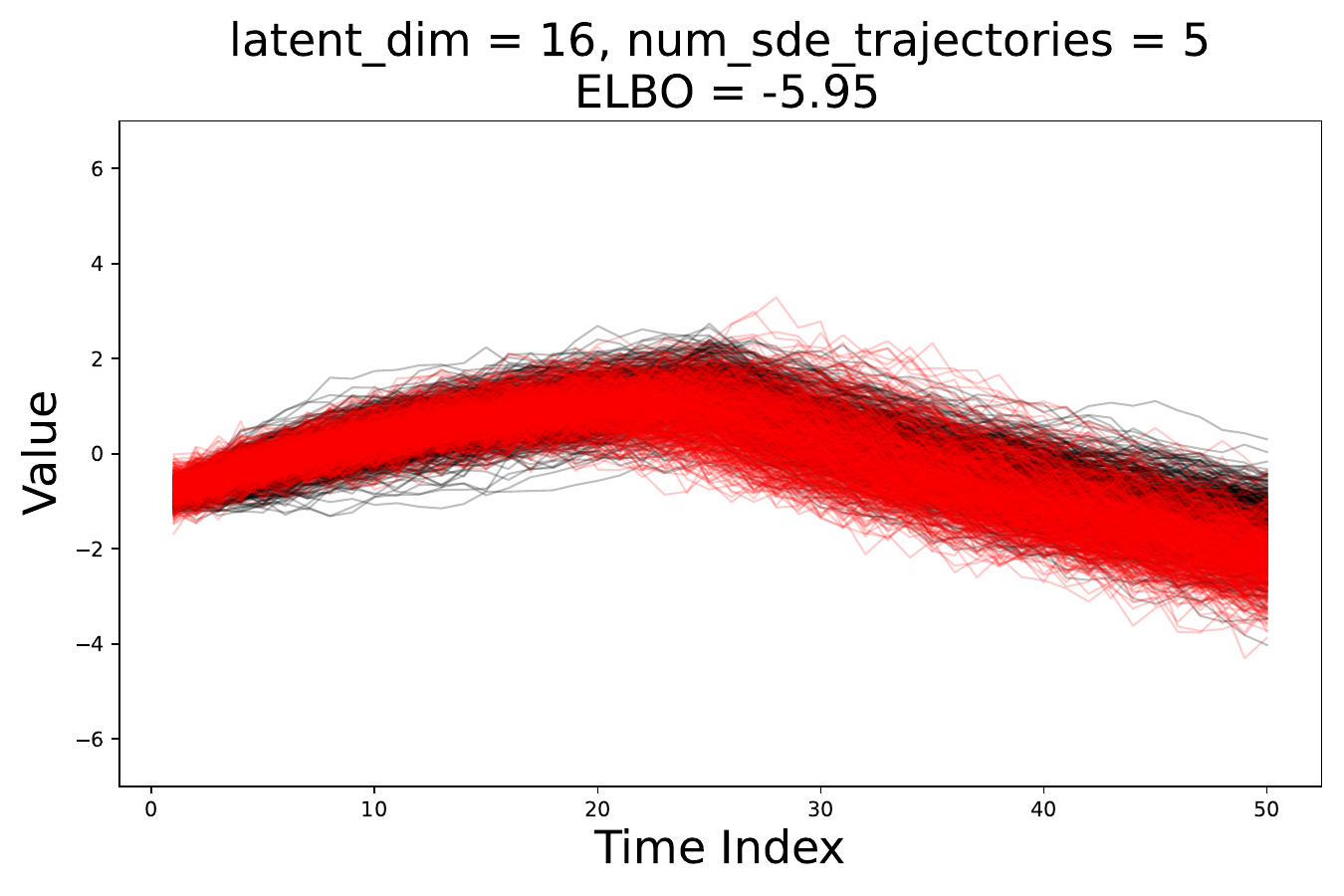}
    \end{subfigure}
    \hfill
    \begin{subfigure}[b]{0.3\textwidth}
        \centering
        \includegraphics[width=\textwidth]{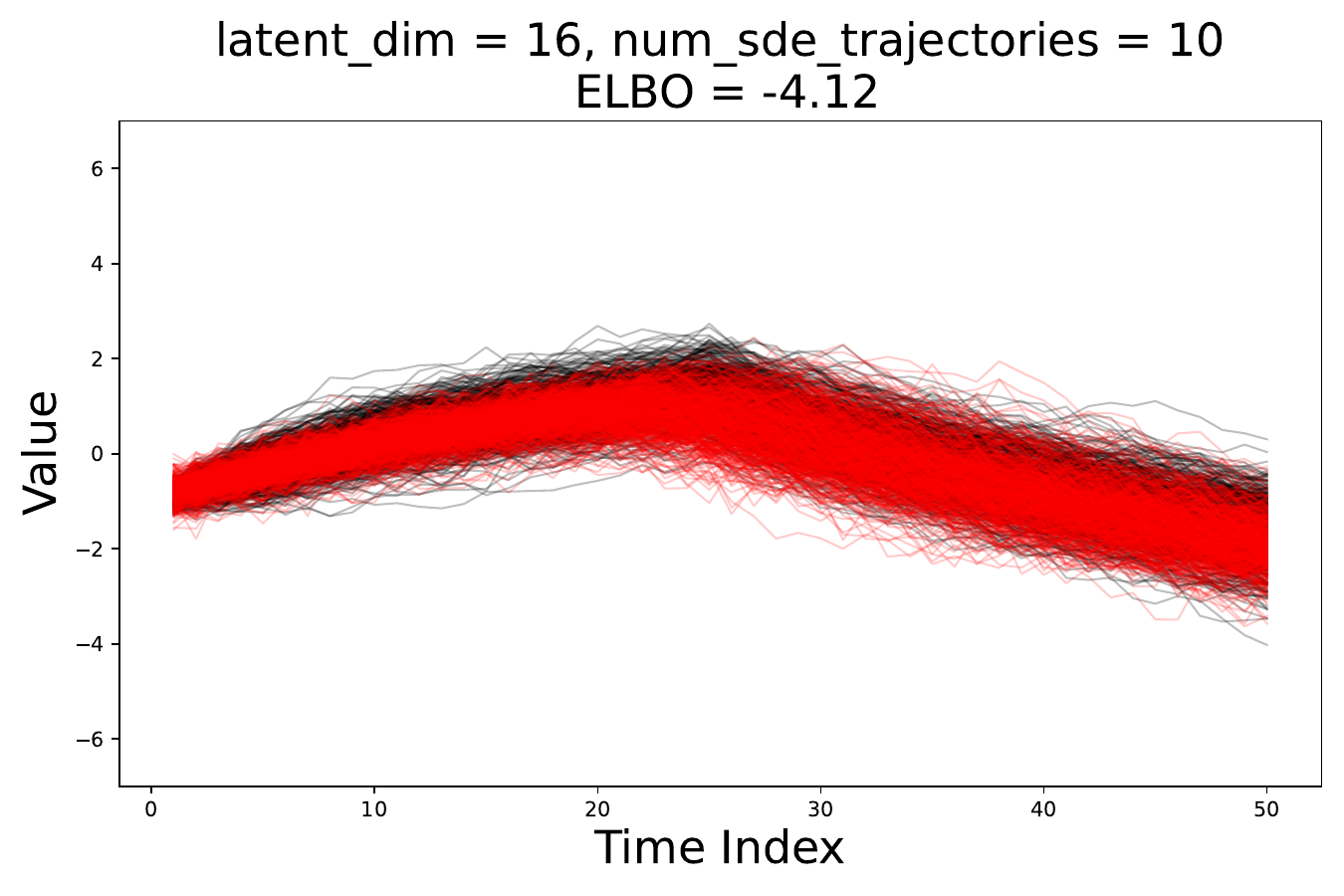}
    \end{subfigure}
    
    \vspace{1em}
    
    \begin{subfigure}[b]{0.3\textwidth}
        \centering
        \includegraphics[width=\textwidth]{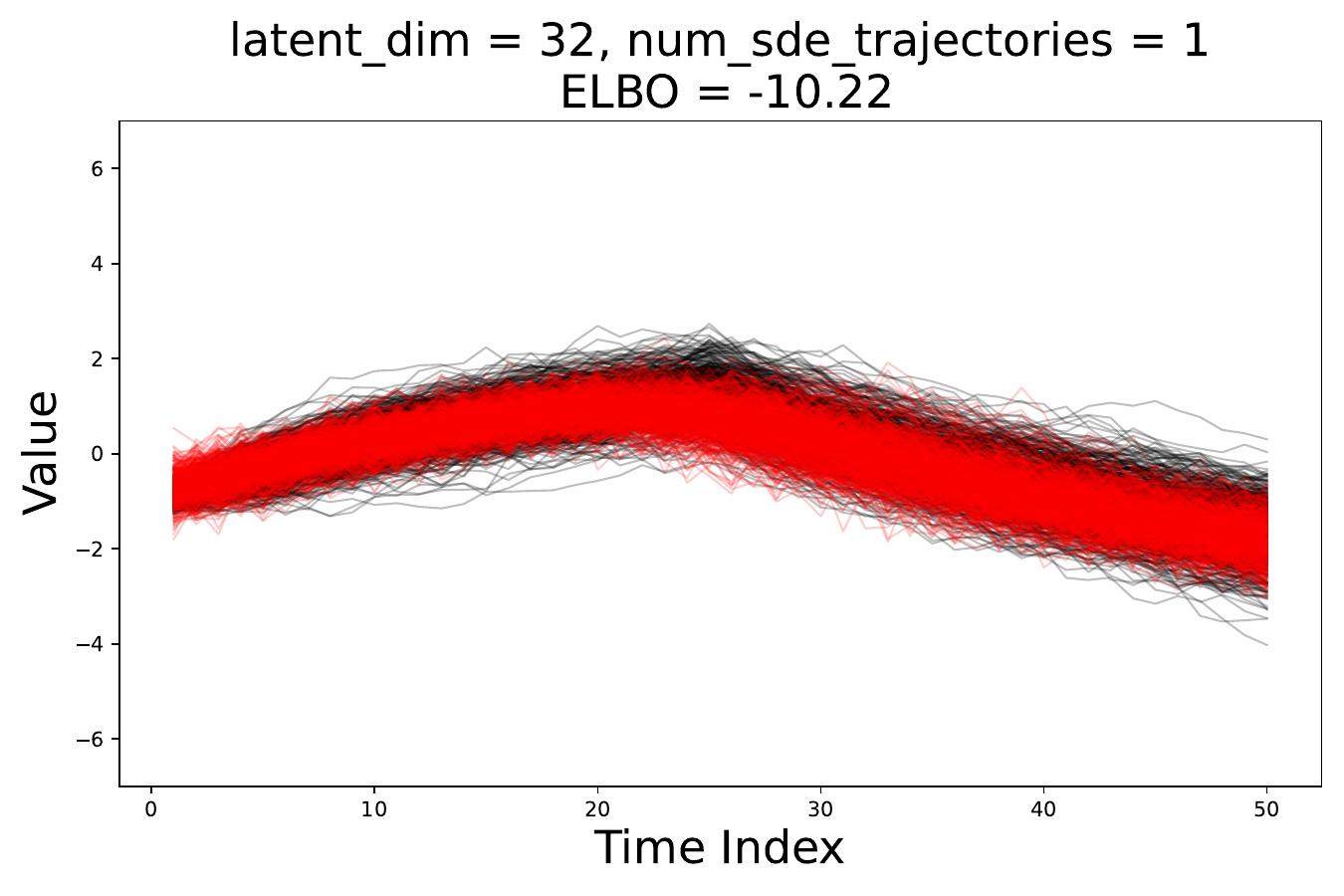}
    \end{subfigure}
    \hfill
    \begin{subfigure}[b]{0.3\textwidth}
        \centering
        \includegraphics[width=\textwidth]{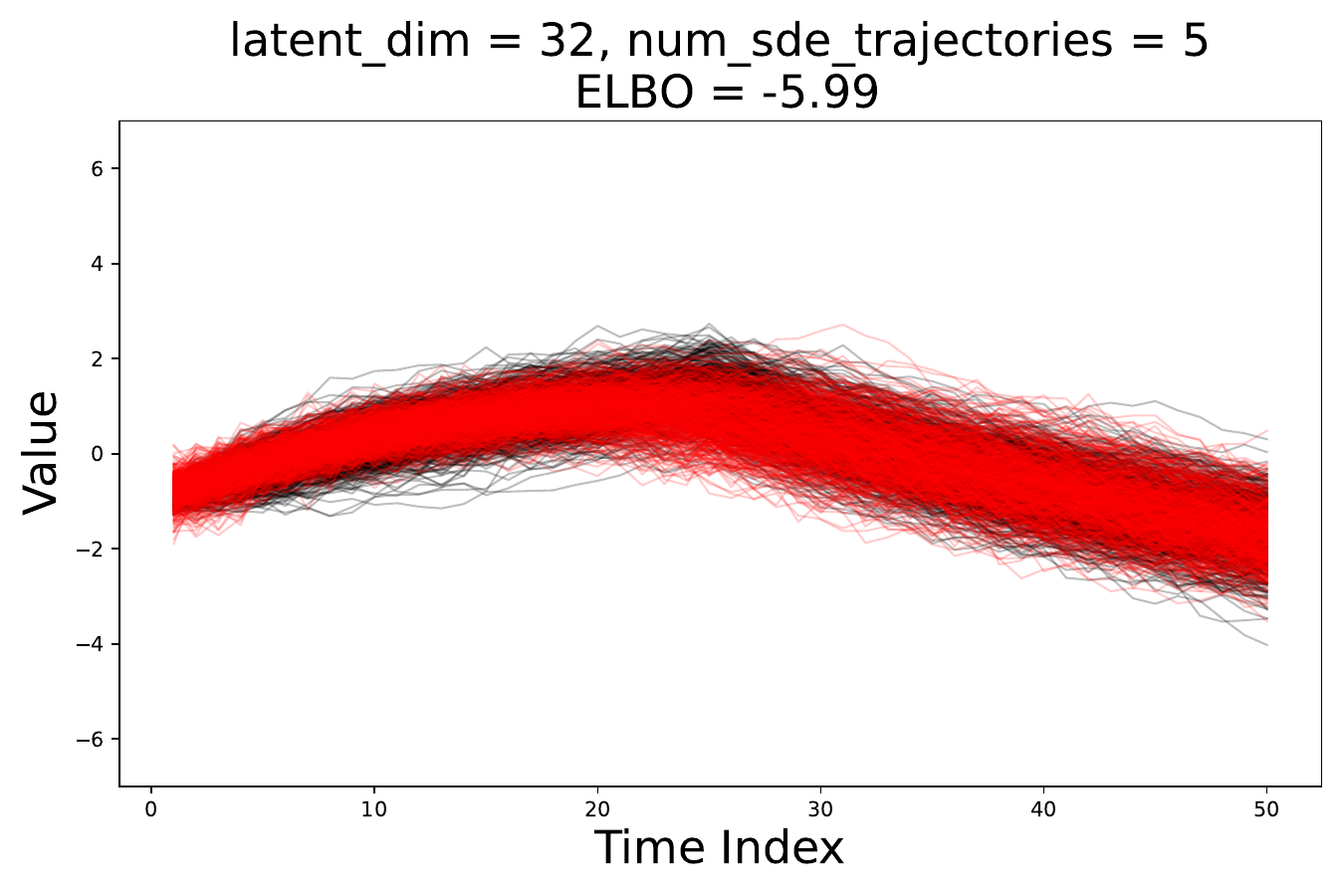}
    \end{subfigure}
    \hfill
    \begin{subfigure}[b]{0.3\textwidth}
        \centering
        \includegraphics[width=\textwidth]{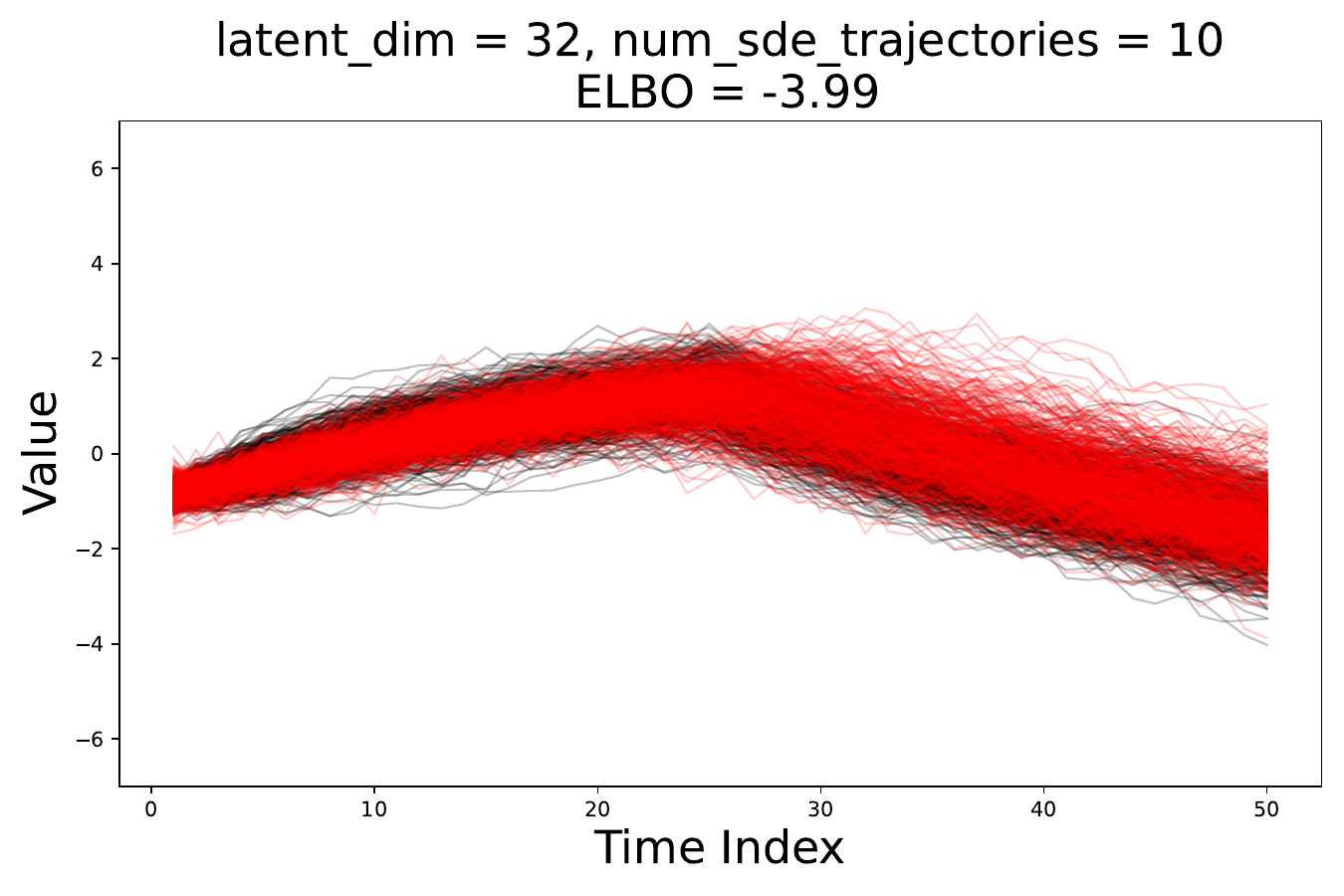}
    \end{subfigure}
    \caption{latent\_dim vs. num\_sde\_trajectories.}
    \label{fig: ablation_study_latent_dim_num_sde_trajectories}
\end{figure}

\subsubsection{NLL Weight vs. Predictive NLL Weight}
In this part of the ablation study, we test to see whether or not the predictive negative log-likelihood regularizer is adding any value to the model performance. We vary the value of the regularization parameter of the predictive NLL regularizer under negligible NLL component (NLL weight = 0.001) and under vanilla NLL component (NLL weight = 1). We can see that while removing the predictive NLL results in larger overall ELBO, the generated trajectories under the configuration of (pred NLL weight = 0) are noiseless, implying that the latent diffusion is meaningless (akin to a latent neural ODE). We can see that by incorporating the predictive NLL component, although the ELBO slightly decreases, the trajectories become noisier and thus are more realistic and better suited to modeling stochastic processes.  

\begin{figure}[htbp]
    \centering
    \begin{subfigure}[b]{0.23\textwidth}
        \centering
        \includegraphics[width=\textwidth]{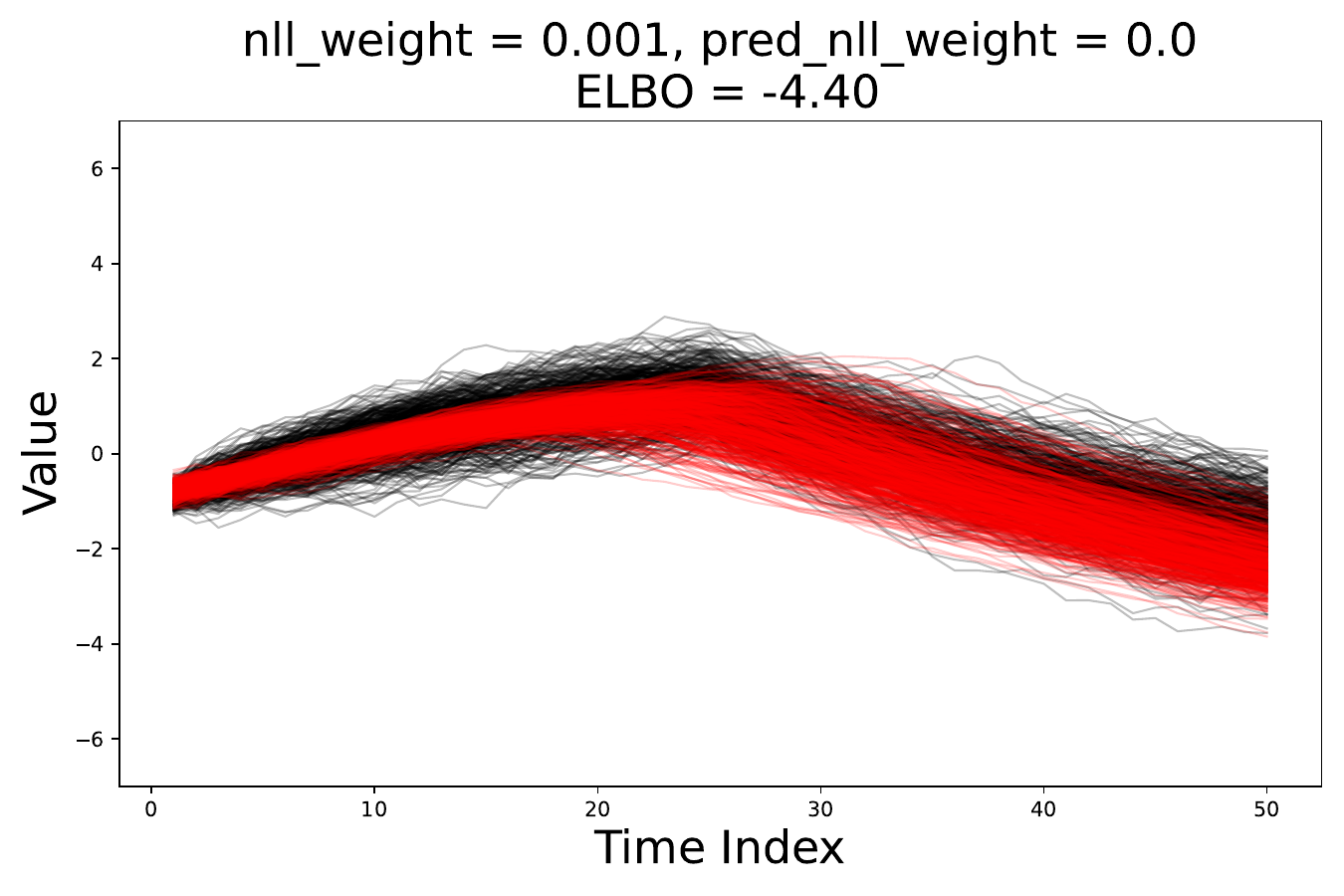}
    \end{subfigure}
    \hfill
    \begin{subfigure}[b]{0.23\textwidth}
        \centering
        \includegraphics[width=\textwidth]{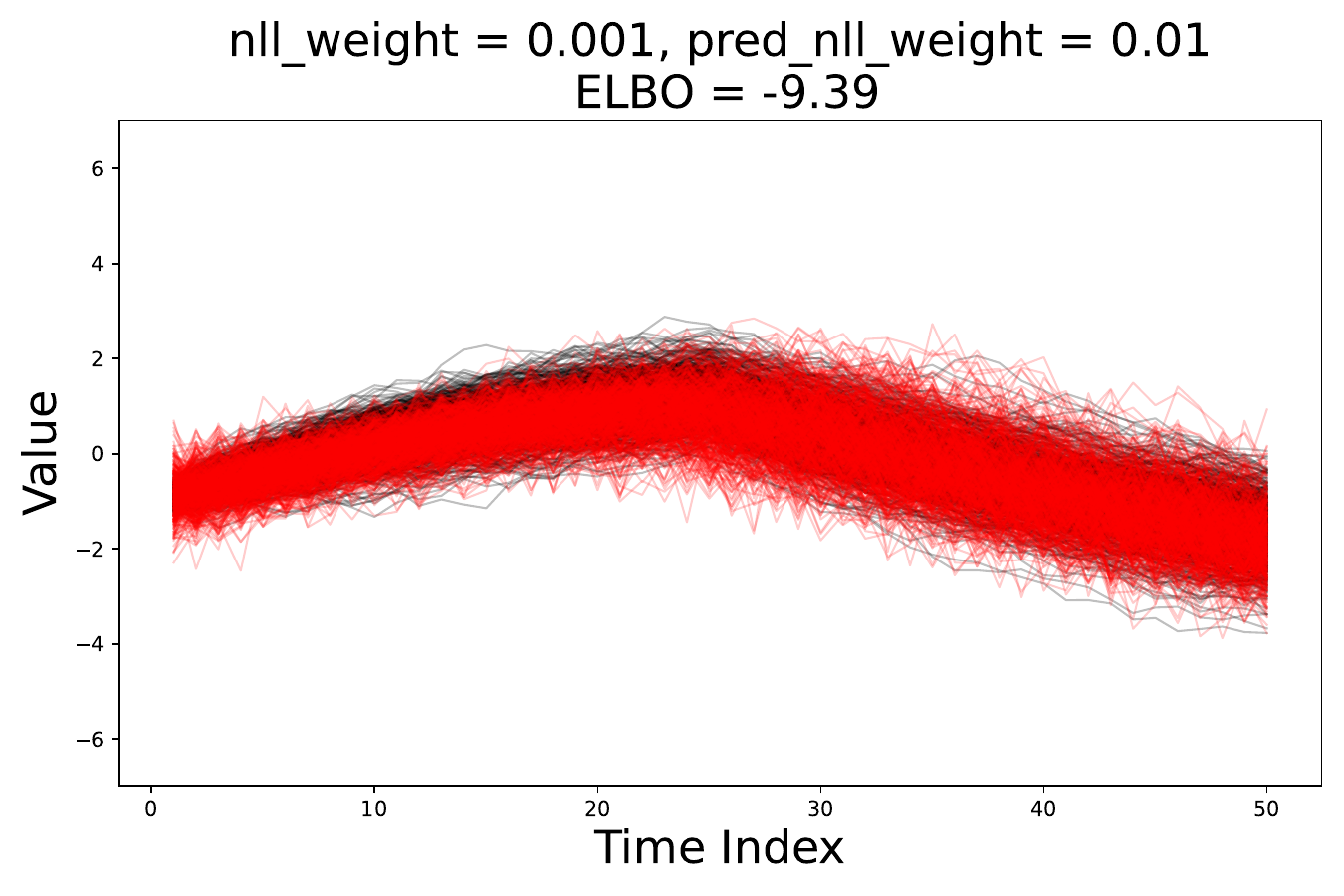}
    \end{subfigure}
    \begin{subfigure}[b]{0.23\textwidth}
        \centering
        \includegraphics[width=\textwidth]{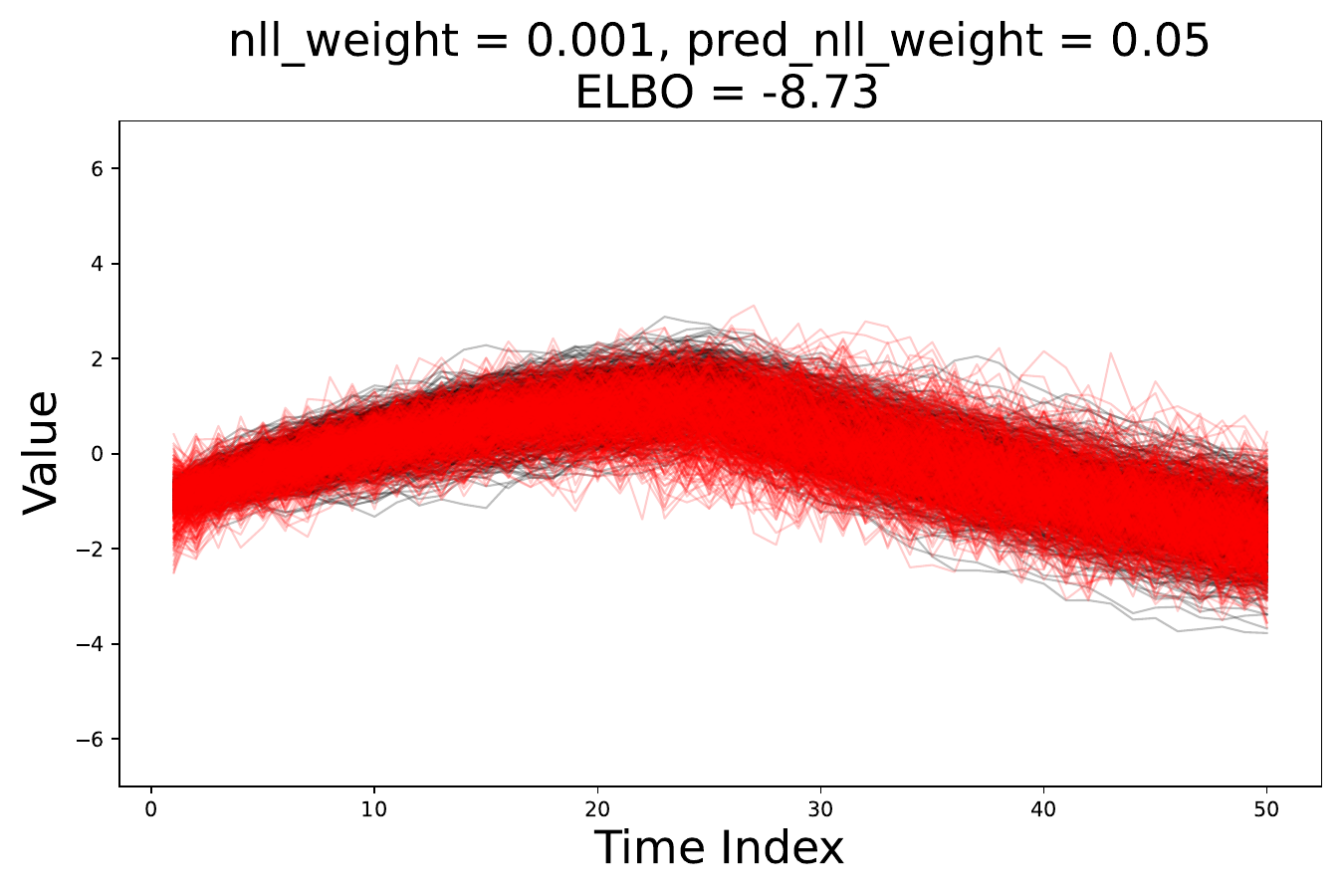}
    \end{subfigure}
    \hfill
    \begin{subfigure}[b]{0.23\textwidth}
        \centering
        \includegraphics[width=\textwidth]{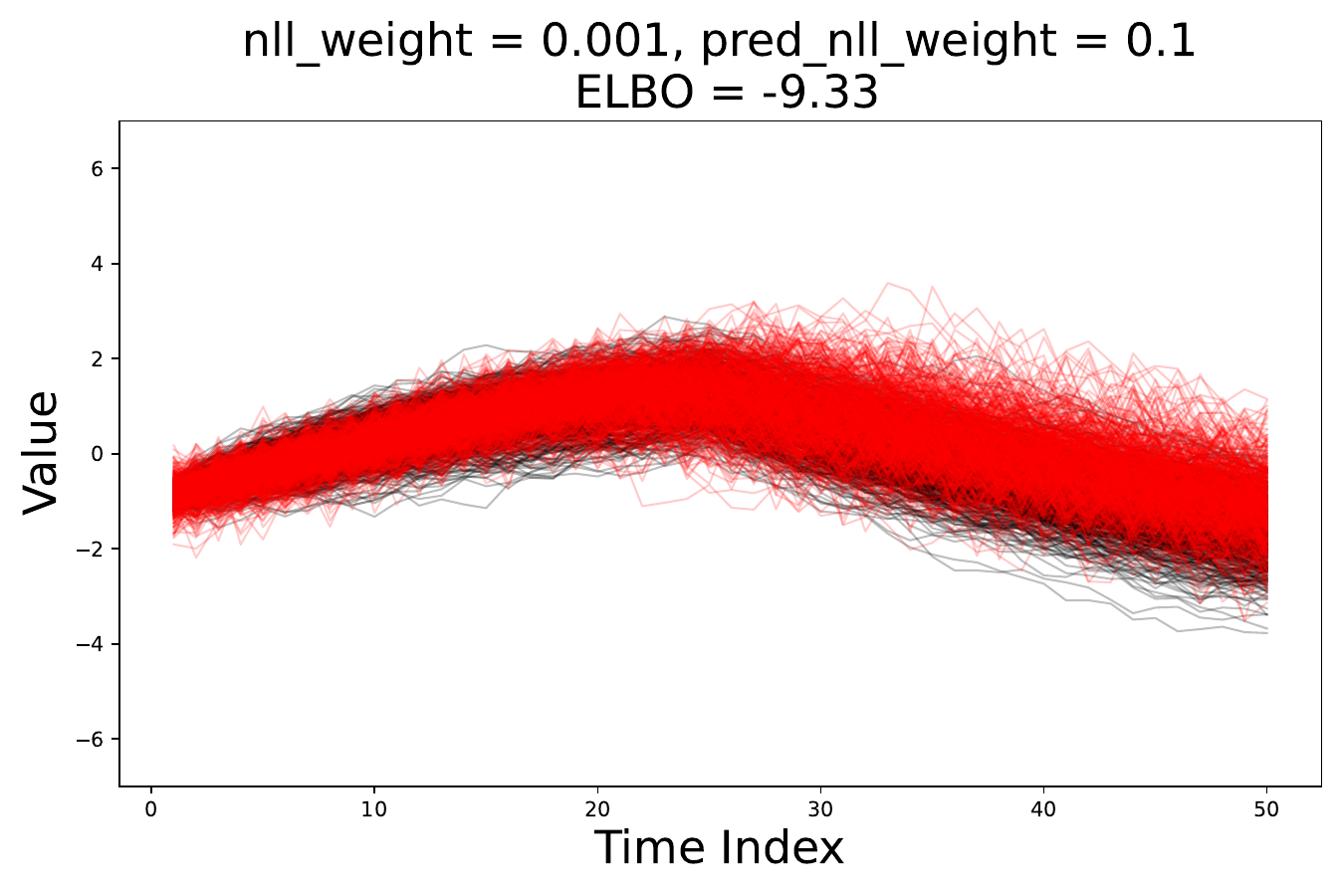}
    \end{subfigure}
    \vspace{1em}

    \begin{subfigure}[b]{0.23\textwidth}
        \centering
        \includegraphics[width=\textwidth]{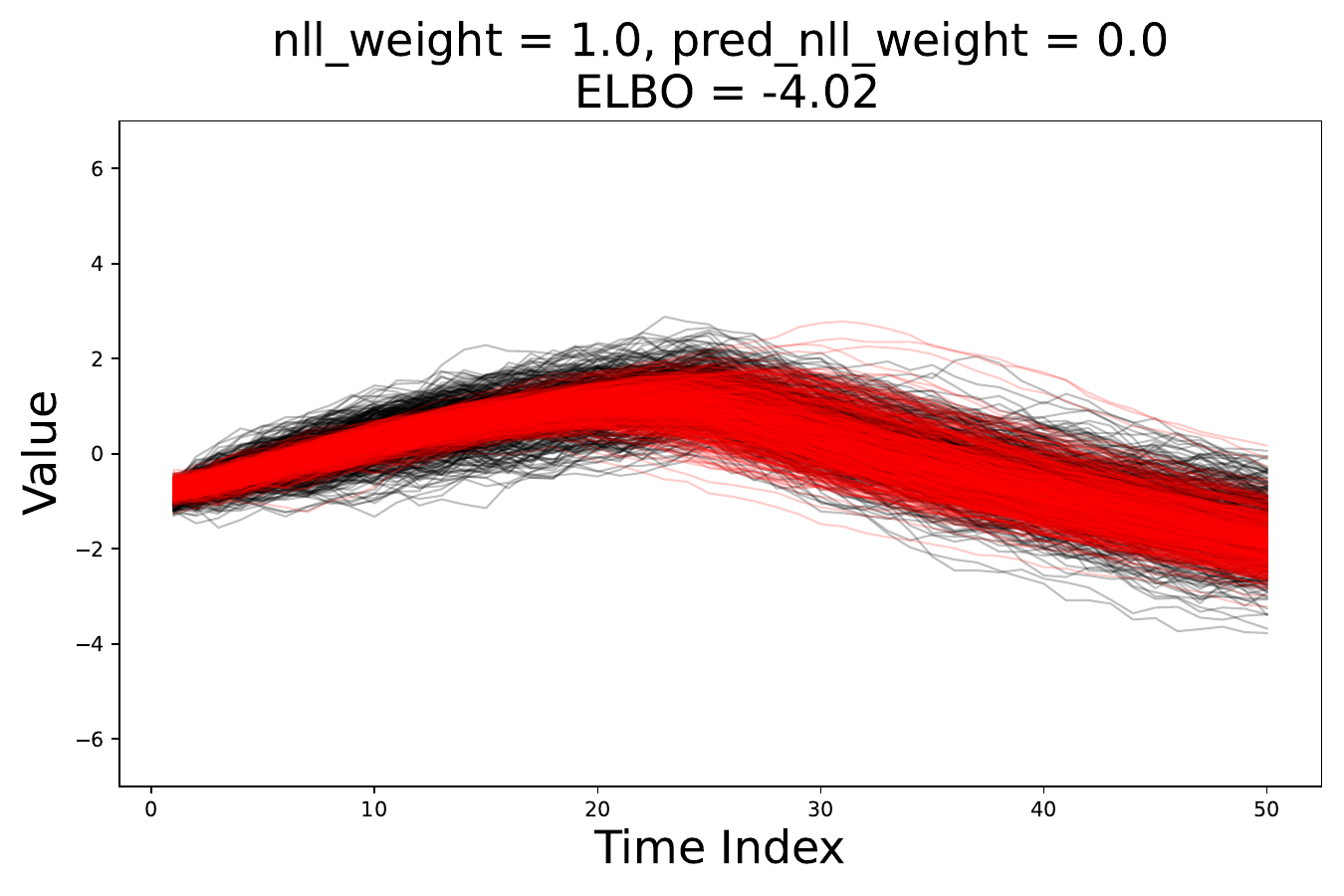}
    \end{subfigure}
    \hfill
    \begin{subfigure}[b]{0.23\textwidth}
        \centering
        \includegraphics[width=\textwidth]{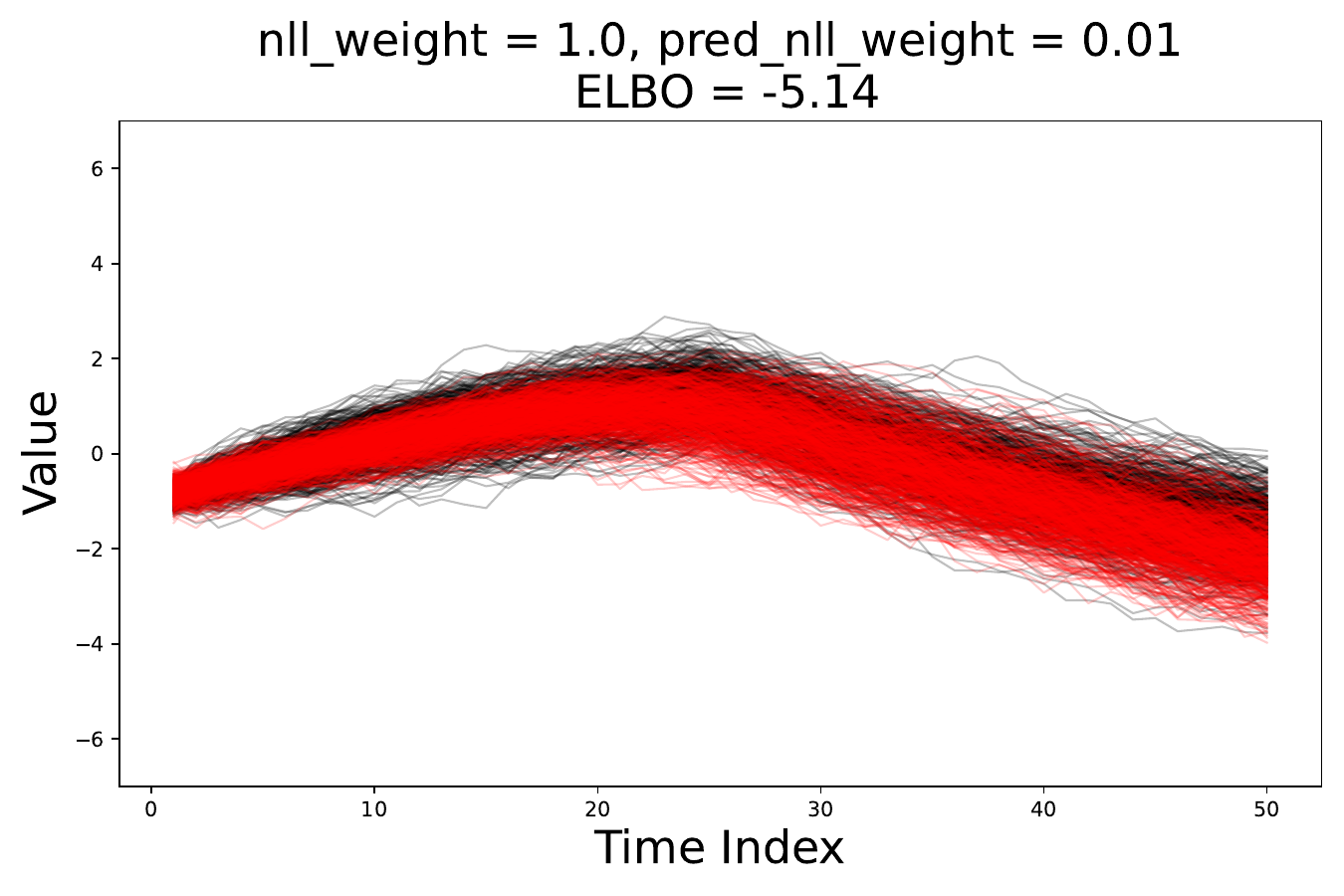}
    \end{subfigure}
    \begin{subfigure}[b]{0.23\textwidth}
        \centering
        \includegraphics[width=\textwidth]{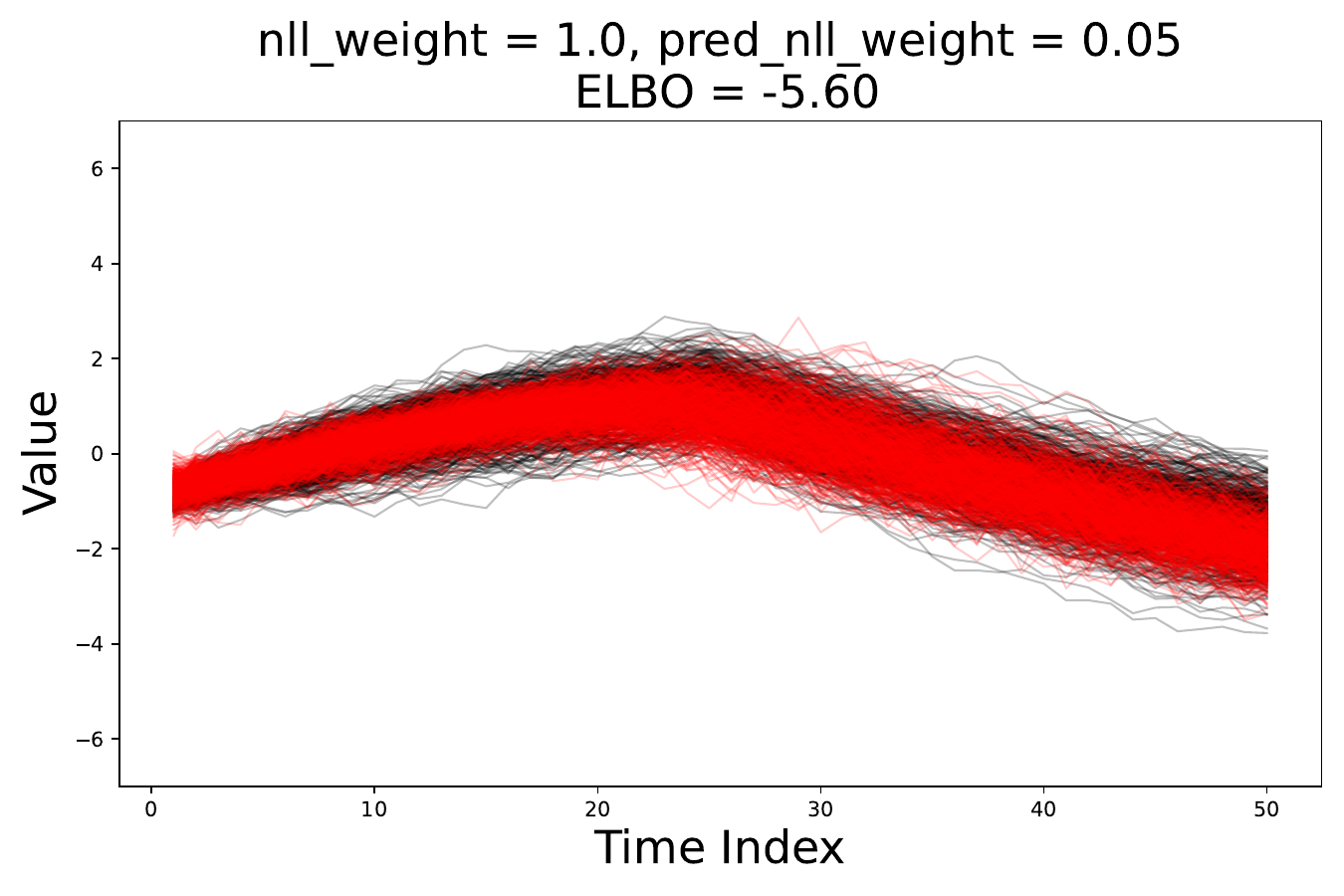}
    \end{subfigure}
    \hfill
    \begin{subfigure}[b]{0.23\textwidth}
        \centering
        \includegraphics[width=\textwidth]{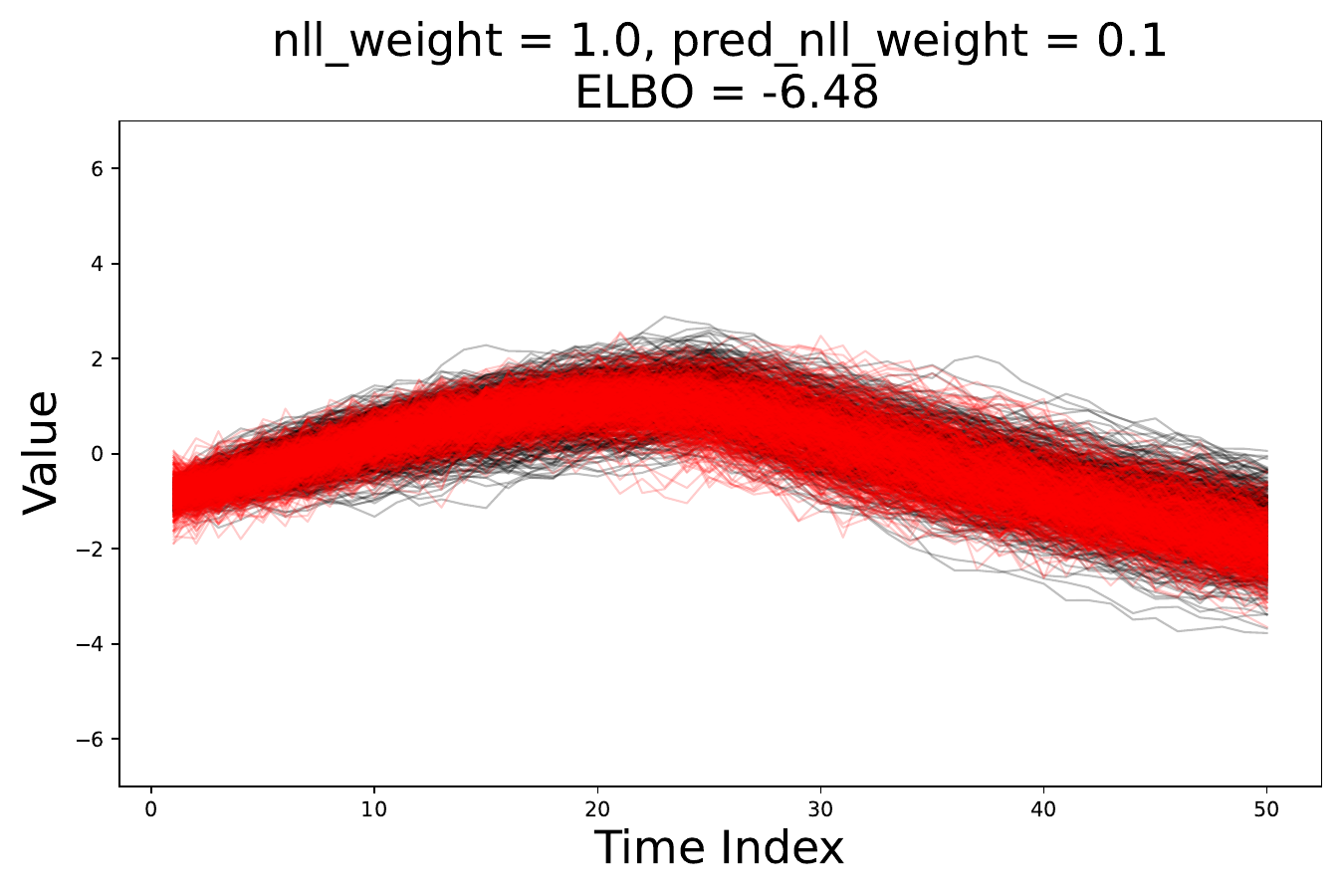}
    \end{subfigure}
    \caption{is\_diffusion\_homoscedastic vs. diffusion\_value vs. train\_diffusion.}
    \label{fig: ablation_study_diffusion}
\end{figure}

\subsubsection{Homoscedastic vs. Heteroscedastic Diffusion}
As part of our ablation study, we investigated three distinct approaches for modeling the latent diffusion in the neural SDE framework: 1.) learnable heteroscedastic diffusion; 2.) learnable homoscedastic diffusion; 3.) fixed homoscedastic diffusion. Our analysis revealed that all three configurations yielded comparable generated time-series, both in terms of trajectory dynamics and corresponding ELBO values. We posit that this similarity in performance can be attributed to the higher dimensionality of the SDE's latent space relative to the time-series data. This dimensional disparity allows even less complex latent dynamics to adequately capture the observed time-series dynamics. It is noteworthy that the heteroscedastic latent diffusion assumption marginally outperformed the other approaches in terms of ELBO. We hypothesize that this superior performance stems from the increased flexibility inherent in the heteroscedastic model.

\begin{figure}[htbp]
    \centering
    \begin{subfigure}[b]{0.45\textwidth}
        \centering
        \includegraphics[width=\textwidth]{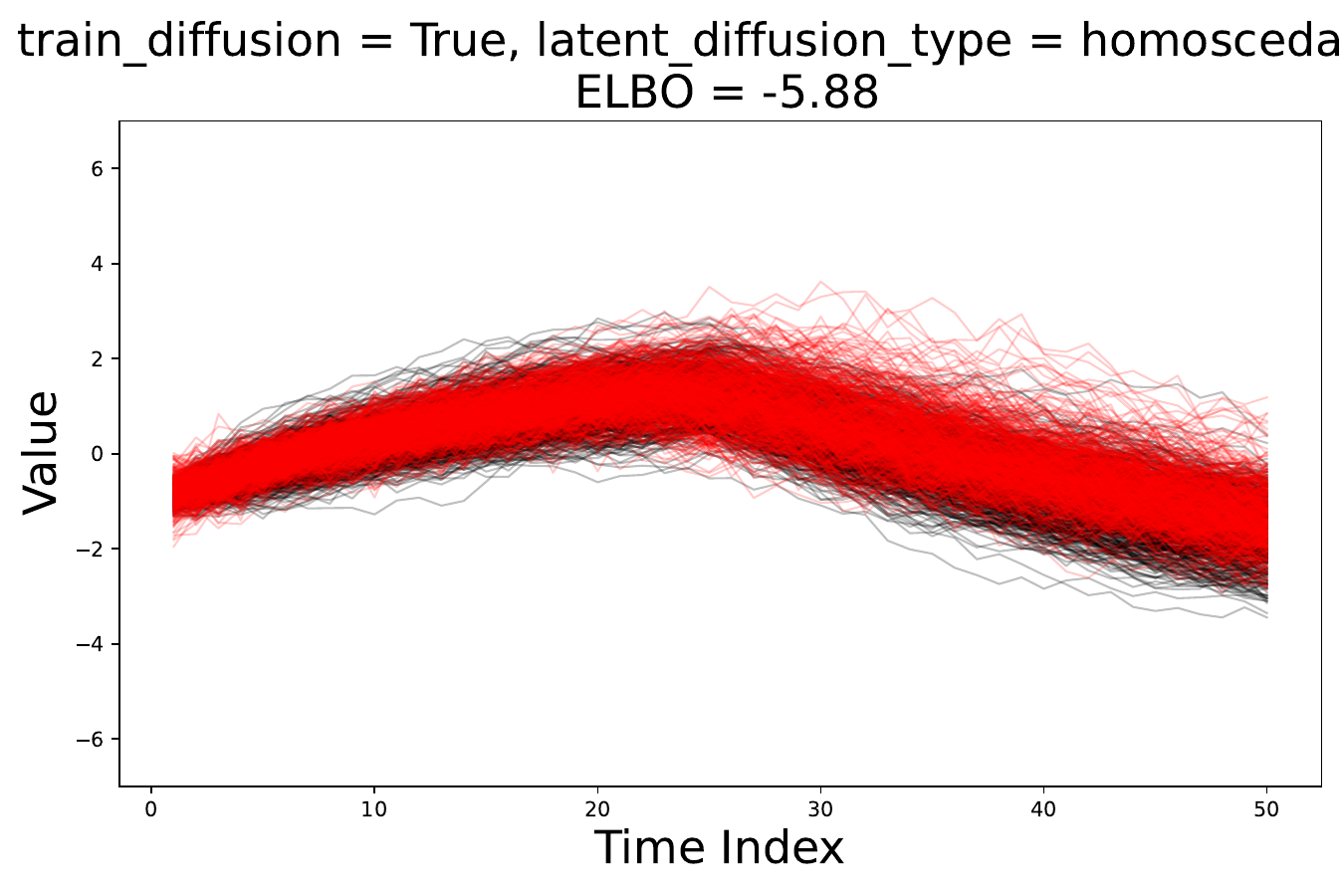}
    \end{subfigure}
    \hfill
    \begin{subfigure}[b]{0.45\textwidth}
        \centering
        \includegraphics[width=\textwidth]{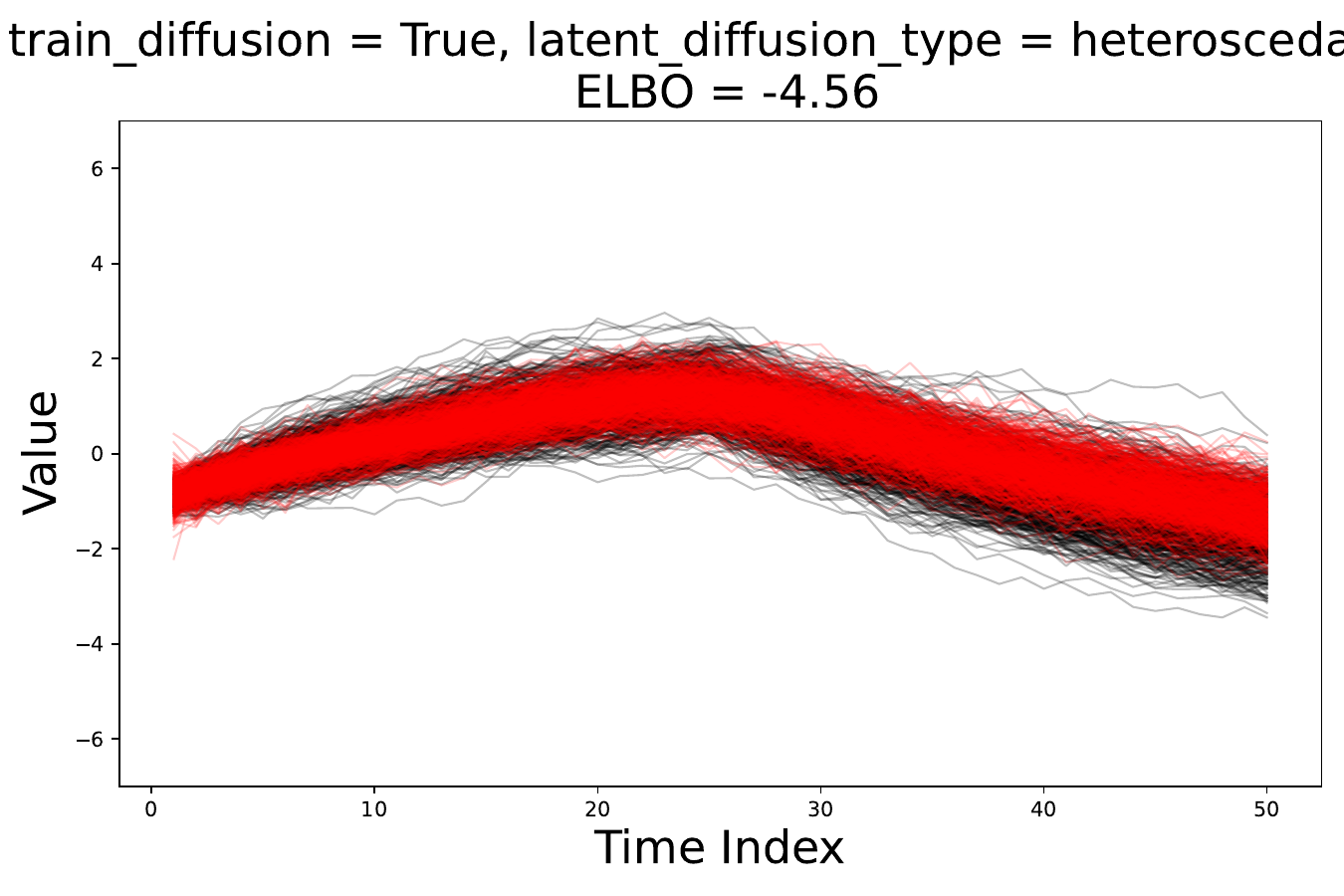}
    \end{subfigure}
    
    \vspace{1em}
        
    \begin{subfigure}[b]{0.3\textwidth}
        \centering
        \includegraphics[width=\textwidth]{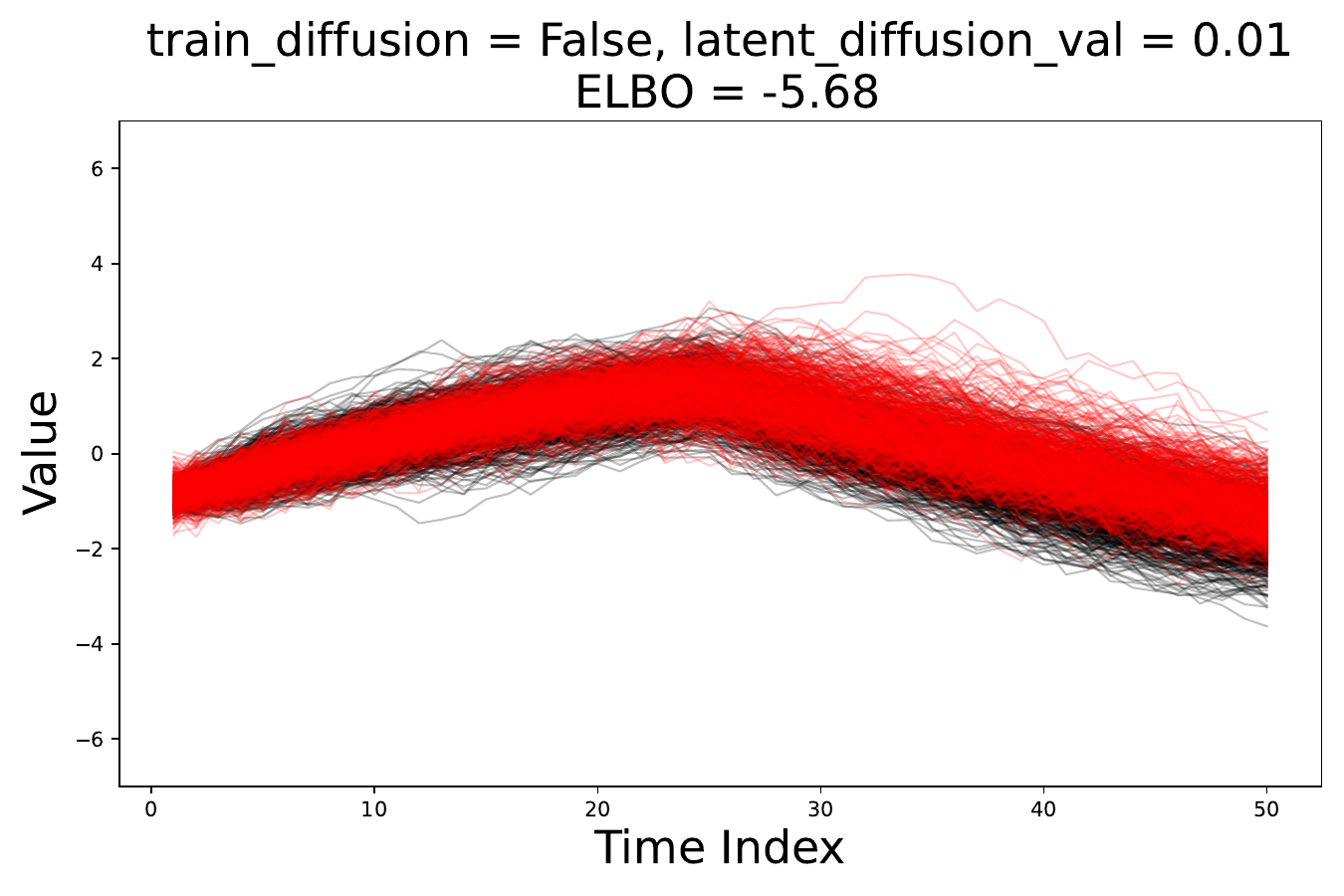}
    \end{subfigure}
    \hfill
    \begin{subfigure}[b]{0.3\textwidth}
        \centering
        \includegraphics[width=\textwidth]{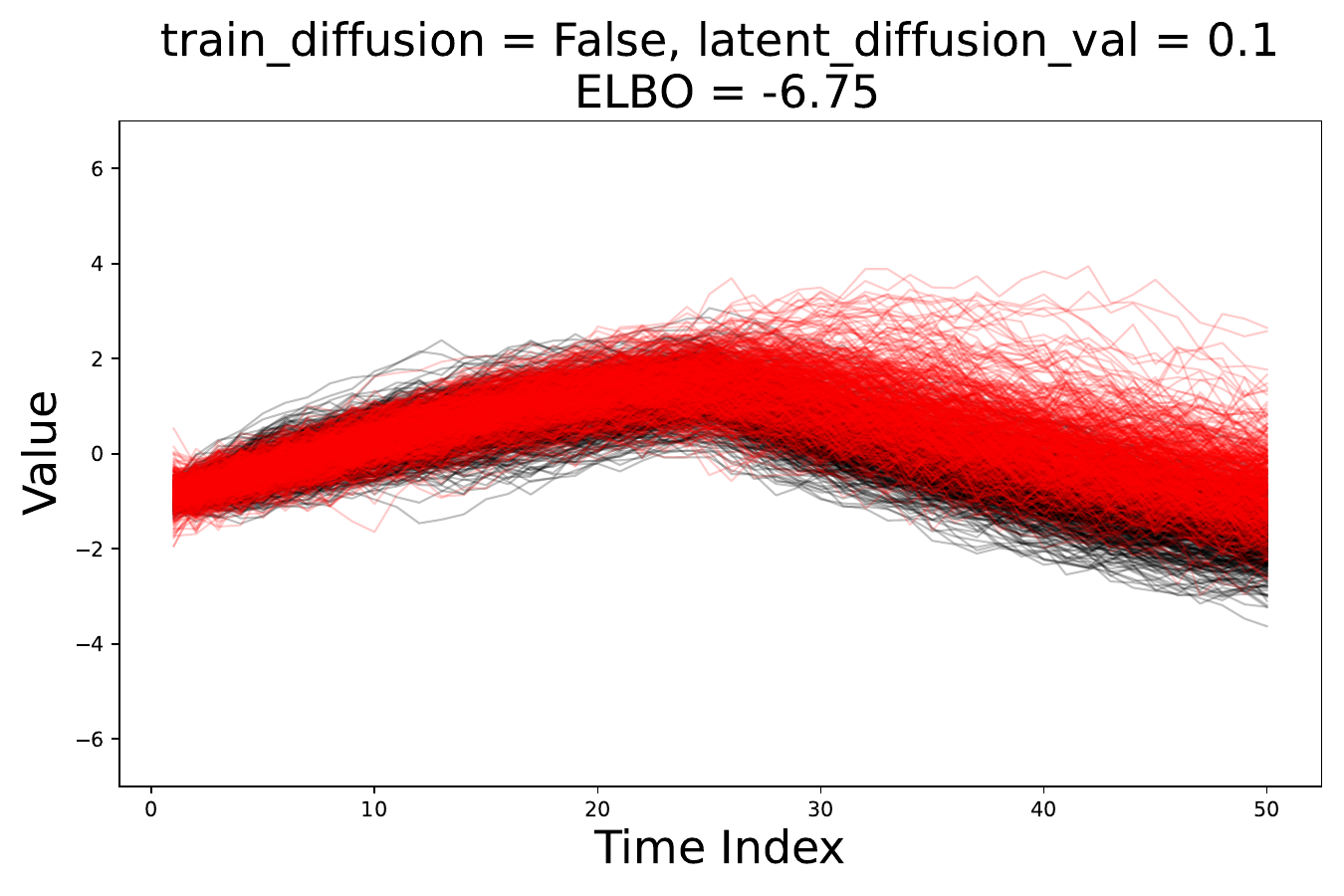}
    \end{subfigure}
    \hfill
    \begin{subfigure}[b]{0.3\textwidth}
        \centering
        \includegraphics[width=\textwidth]{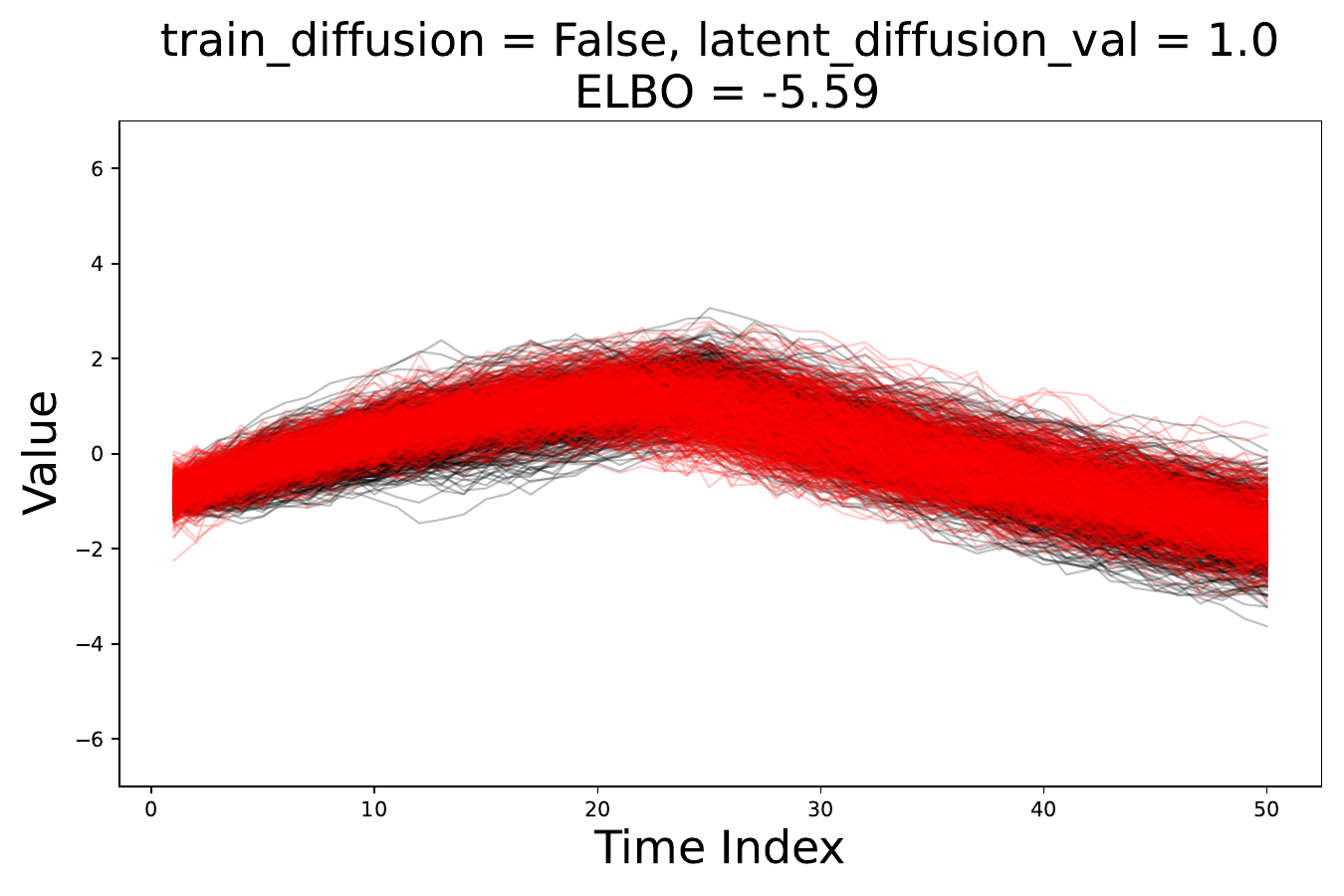}
    \end{subfigure}
    \caption{hidden\_dim\_sde vs. num\_layers\_sde.}
    \label{fig: ablation_study_hidden_dim_sde_num_layers_sde}
\end{figure}

\subsubsection{Decoder Variance vs. Latent Diffusion Value}
Figure \ref{fig: ablation_study_var_decoder_diffusion_val} shows the tradeoff in model performance when varying the variance of the decoder and the variance of the latent diffusion (in the case of a fixed, homoscedastic variance). We can see that decoder variance plays a larger role, as the ELBO is much worse when the decoder variance is too small. The best performance is obtained when the variance of decoder is 0.1 and the latent diffusion is 0.1. 

\begin{figure}[htbp]
    \centering
    \begin{subfigure}[b]{0.3\textwidth}
        \centering
        \includegraphics[width=\textwidth]{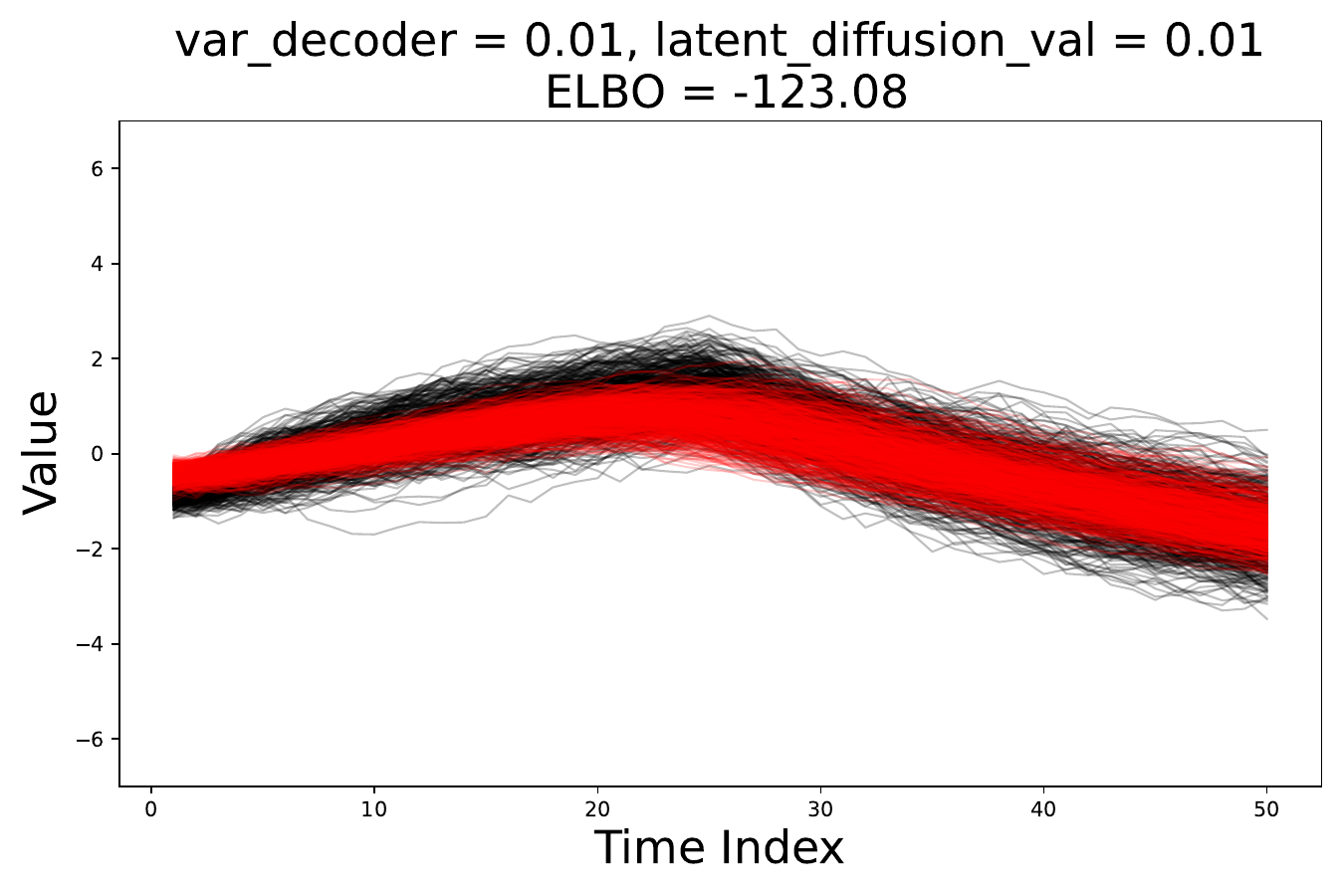}
    \end{subfigure}
    \hfill
    \begin{subfigure}[b]{0.3\textwidth}
        \centering
        \includegraphics[width=\textwidth]{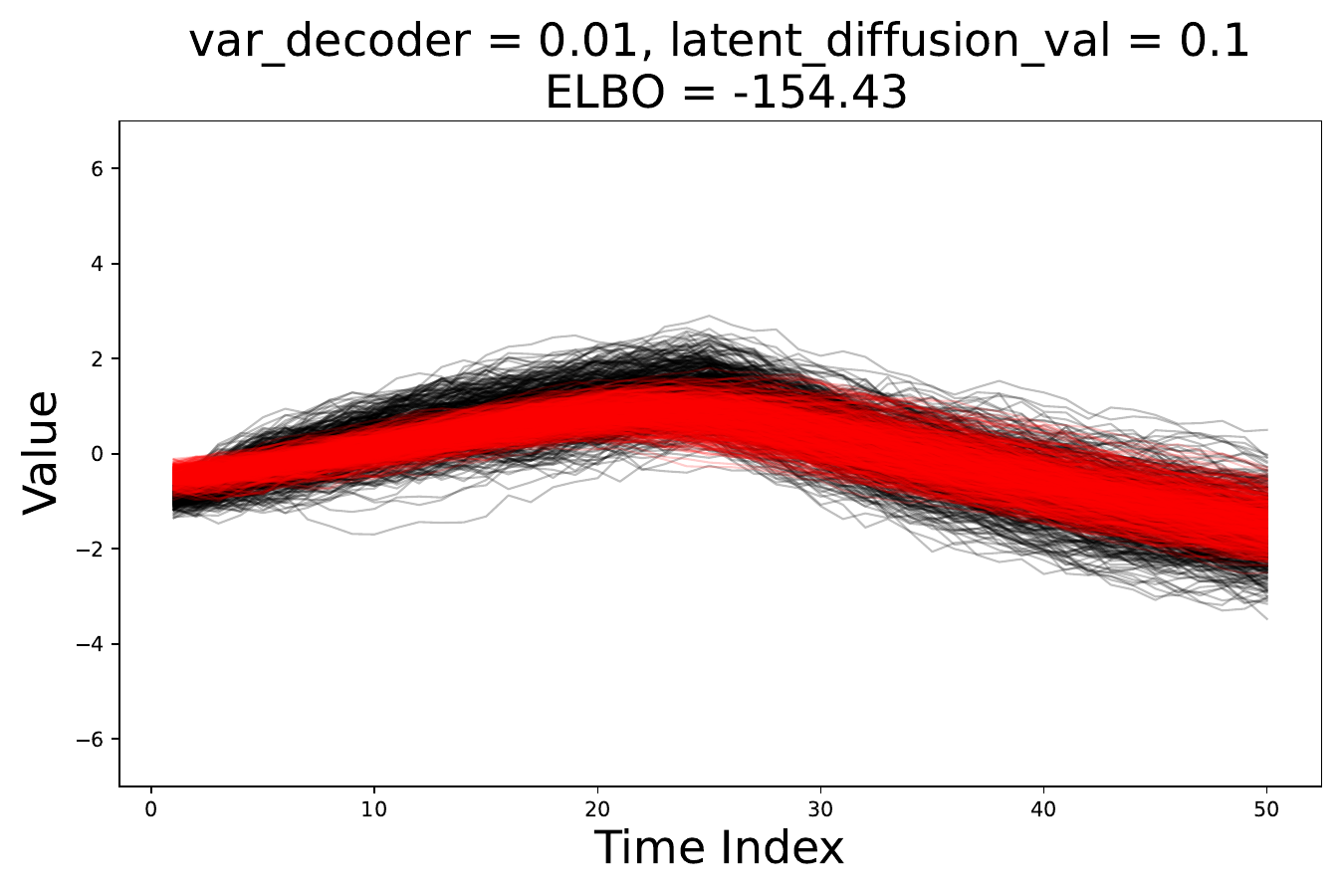}
    \end{subfigure}
    \hfill
    \begin{subfigure}[b]{0.3\textwidth}
        \centering
        \includegraphics[width=\textwidth]{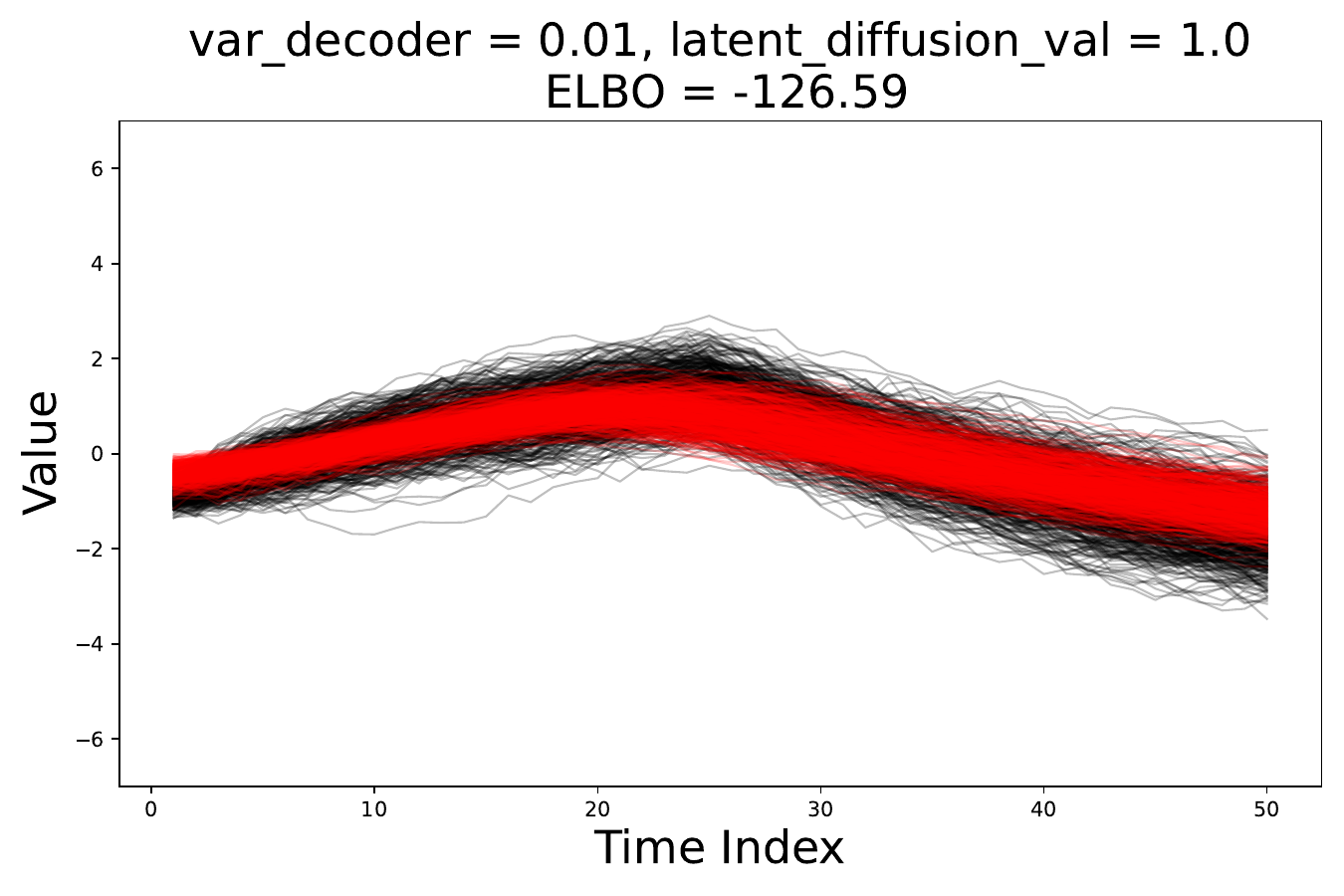}
    \end{subfigure}
    
    \vspace{1em}
    
    \begin{subfigure}[b]{0.3\textwidth}
        \centering
        \includegraphics[width=\textwidth]{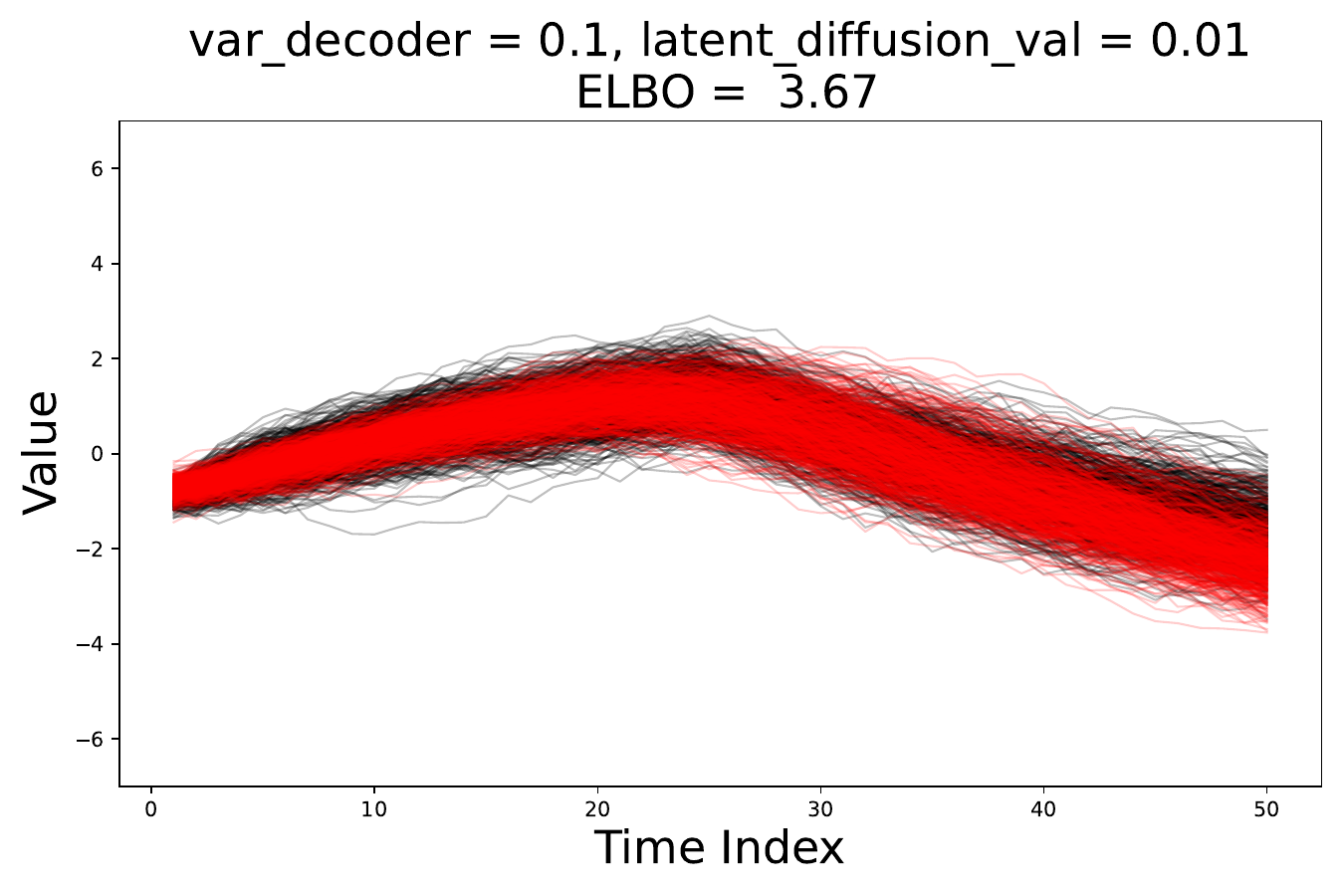}
    \end{subfigure}
    \hfill
    \begin{subfigure}[b]{0.3\textwidth}
        \centering
        \includegraphics[width=\textwidth]{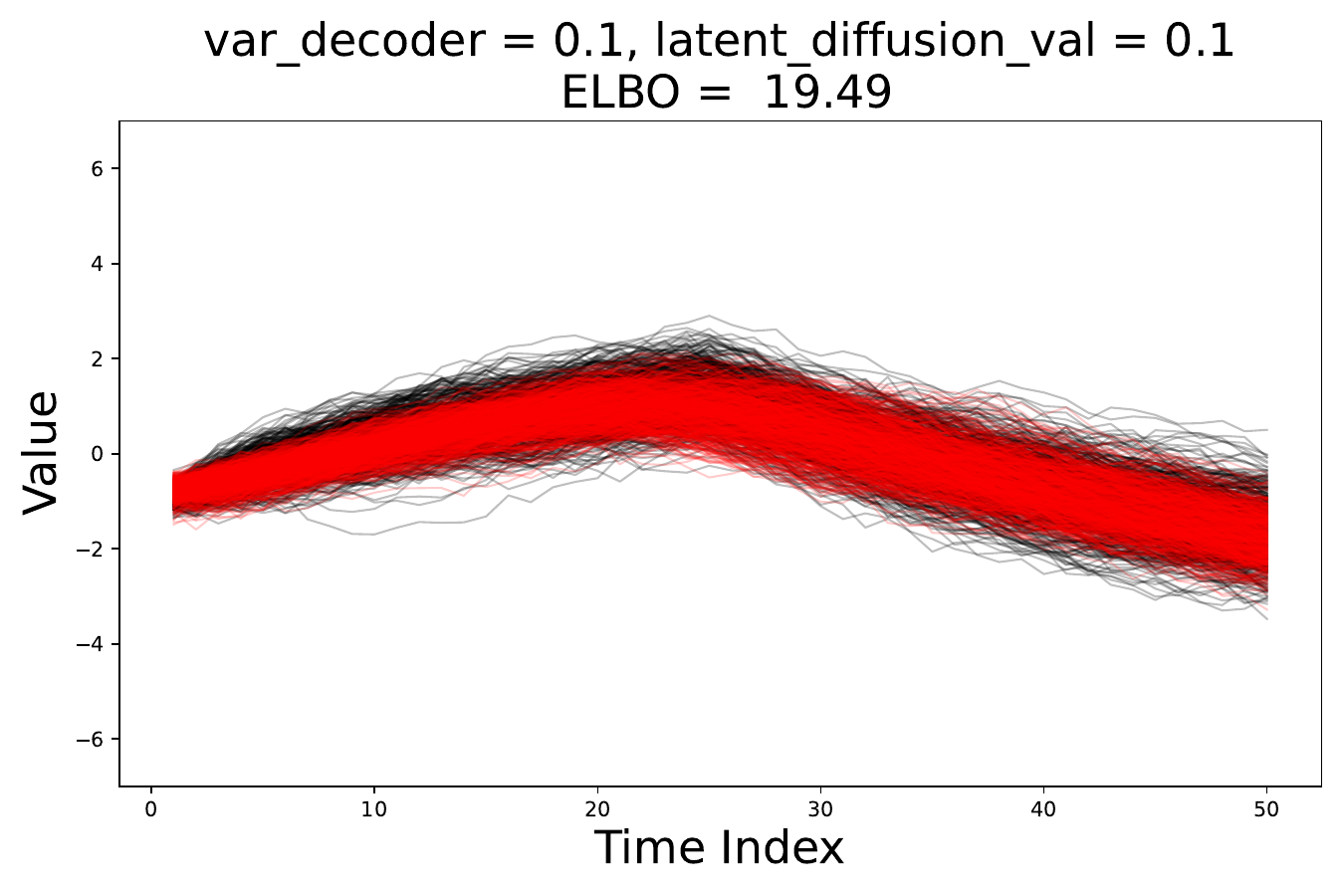}
    \end{subfigure}
    \hfill
    \begin{subfigure}[b]{0.3\textwidth}
        \centering
        \includegraphics[width=\textwidth]{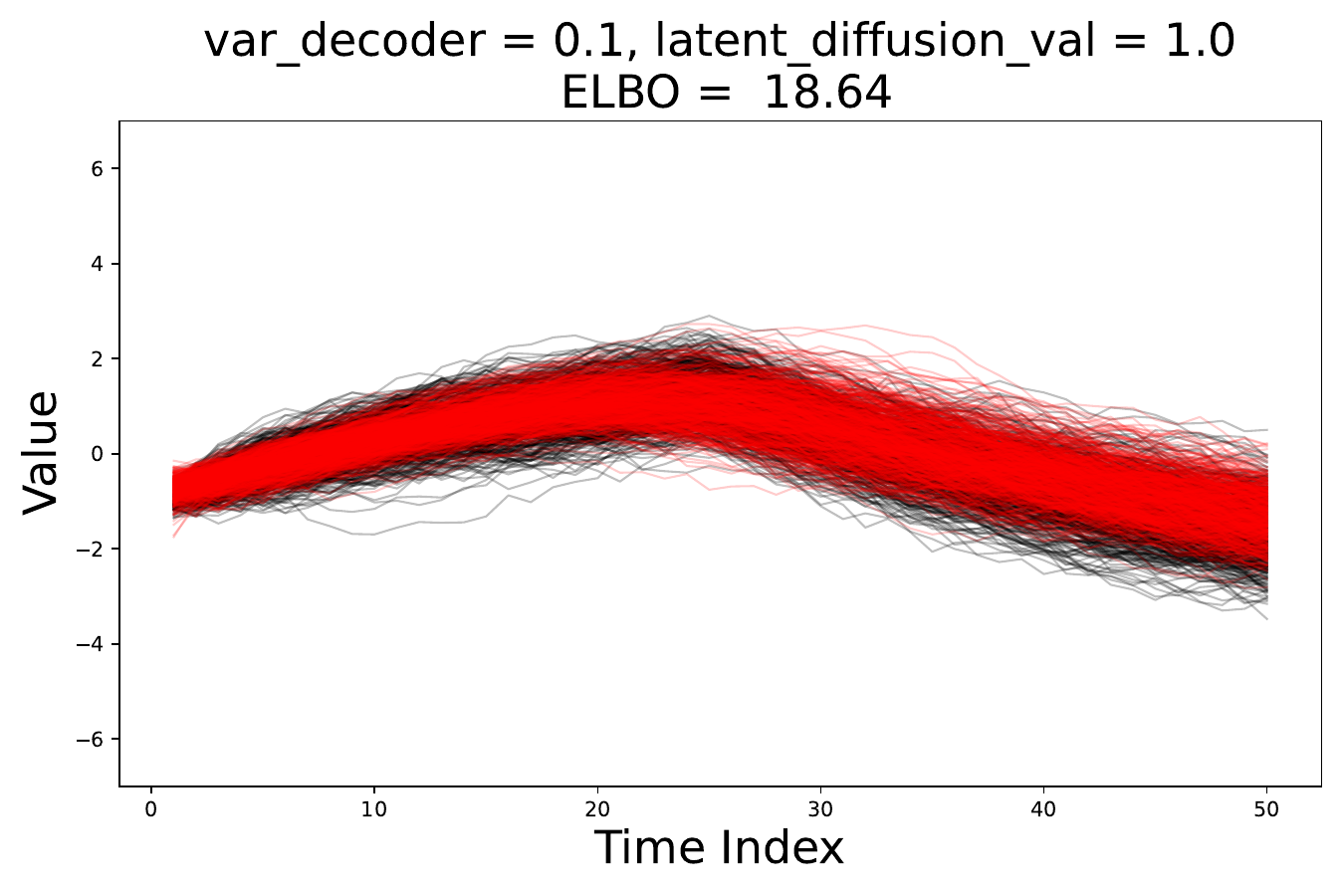}
    \end{subfigure}
    
    \vspace{1em}
    
    \begin{subfigure}[b]{0.3\textwidth}
        \centering
        \includegraphics[width=\textwidth]{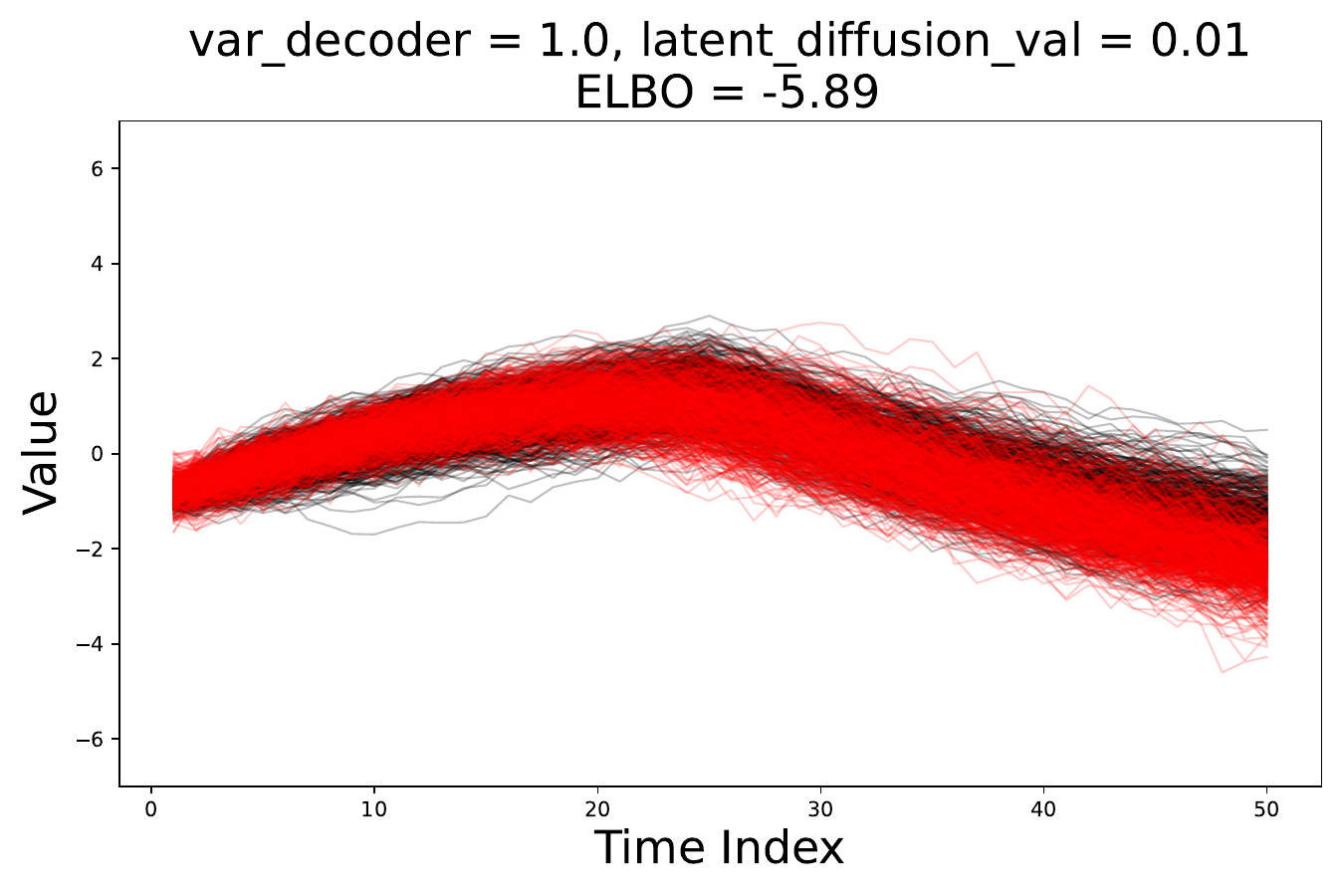}
    \end{subfigure}
    \hfill
    \begin{subfigure}[b]{0.3\textwidth}
        \centering
        \includegraphics[width=\textwidth]{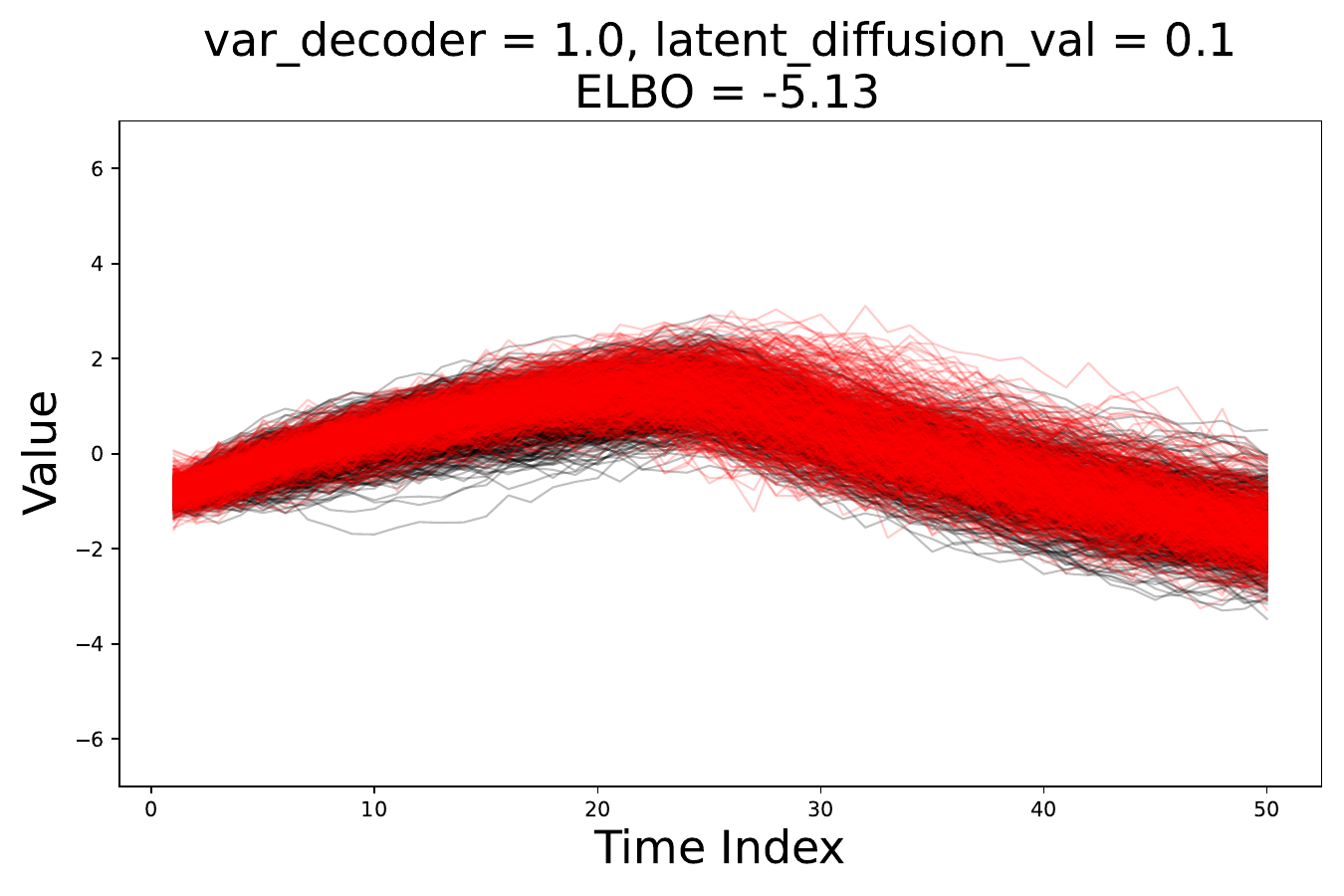}
    \end{subfigure}
    \hfill
    \begin{subfigure}[b]{0.3\textwidth}
        \centering
        \includegraphics[width=\textwidth]{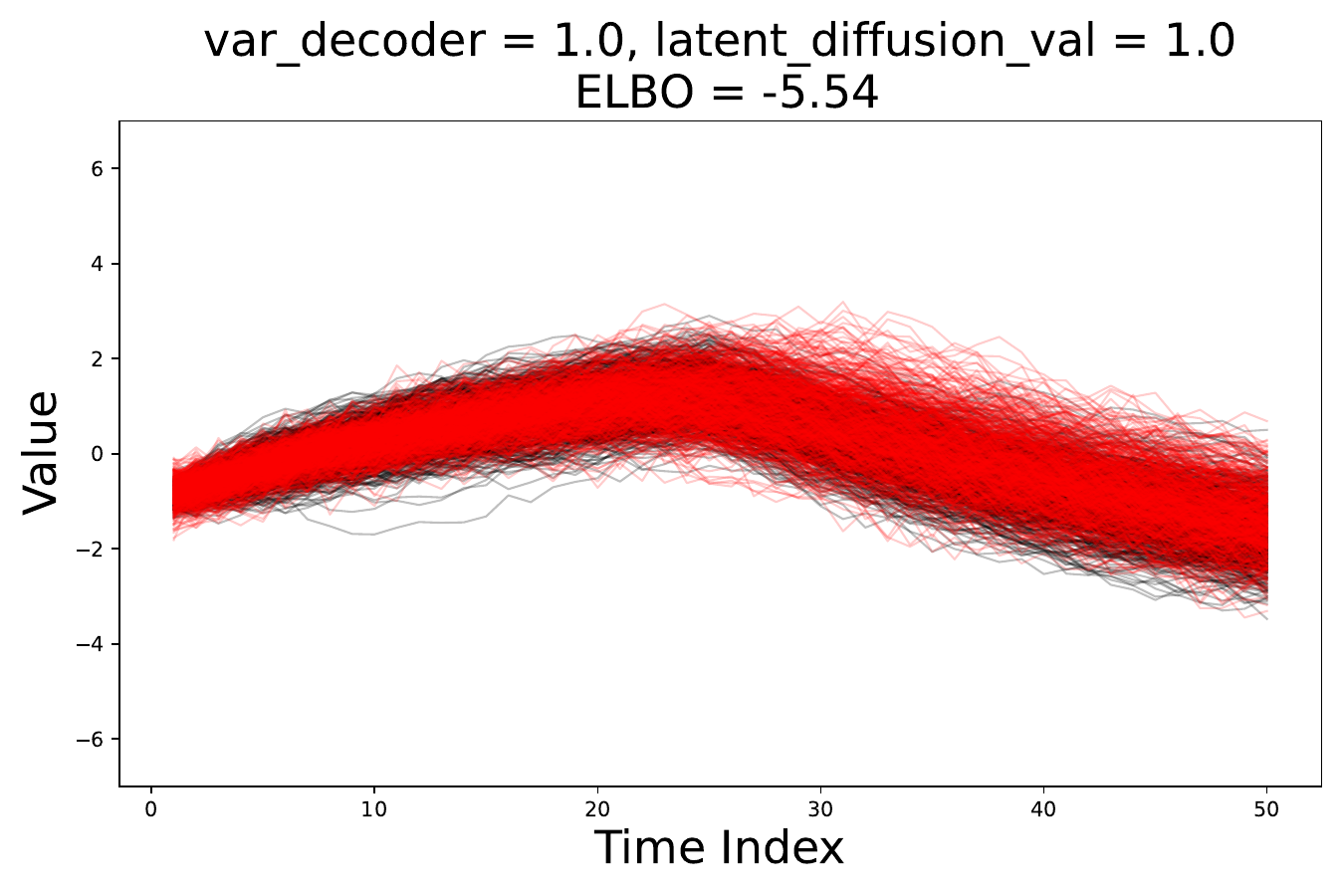}
    \end{subfigure}
    
    \caption{var\_decoder vs. diffusion\_value.}
    \label{fig: ablation_study_var_decoder_diffusion_val}
\end{figure}

\subsection{S\&P 500 Dataset}
\label{ss: real_dataset_ablations}
In this part of the ablation study, we test the need for change points in the real data experiments, with results for different number of SDE layers and different SDE hidden neuron sizes shown in Table \ref{table_ablation_study}. We provide an analysis of the results of this ablation in the main text. 

\begin{table*}[t]
    \caption{Ablation Study on the S\&P500 Sectors Dataset}
    \label{table_ablation_study}
    \footnotesize
    \begin{center}
        \begin{tabular}{C{10mm}C{20mm}|C{9mm}C{9mm}C{9mm}|C{9mm}C{9mm}C{9mm}|C{9mm}C{9mm}C{9mm}}
        \toprule
        \multirow{2}{*}{\parbox{10mm}{\vspace{5mm}\centering Hidden\\Layers}} & \multirow{2}{*}{\vspace{-7mm}Metric} & \multicolumn{3}{c}{Latent Size = 16} & \multicolumn{3}{c}{Latent Size = 32} & \multicolumn{3}{c}{Latent Size = 64} \\
        \cmidrule(r){3-5}
        \cmidrule(lr){6-8}
        \cmidrule(l){9-11}
        & & {\centering \scriptsize{SDE\\(0 CP)}} & {\centering \scriptsize{SDE\\(1 CP)}} & {\centering \scriptsize{SDE\\(2 CPs)}} & {\centering \scriptsize{SDE\\(0 CP)}} & {\centering \scriptsize{SDE\\(1 CP)}} & {\centering \scriptsize{SDE\\(2 CPs)}} & {\centering \scriptsize{SDE\\(0 CP)}} & {\centering \scriptsize{SDE\\(1 CP)}} & {\centering \scriptsize{SDE\\(2 CPs)}} \\
        \midrule
        \multirow{3}{*}{\parbox{10mm}{\vspace{7mm}\centering 1}} & \parbox{20mm}{\vspace{0mm}\centering Marginal $\downarrow$} & $ 0.036 \pm 0.004 $ & $ 0.036 \pm 0.009 $ & $ 0.031 \pm 0.009 $ & $ 0.037 \pm 0.005 $ & $ 0.031 \pm 0.002 $ & $ 0.032 \pm 0.004 $ & $ 0.041 \pm 0.006 $ & $ 0.033 \pm 0.002 $ & $ 0.034 \pm 0.003 $ \\
        & \parbox{20mm}{\vspace{0mm}\centering Classification $\downarrow$} & $ 0.273 \pm 0.147 $ & $ 0.255 \pm 0.164 $ & $ 0.200 \pm 0.046 $ & $ 0.264 \pm 0.142 $ & $ 0.218 \pm 0.078 $ & $ 0.191 \pm 0.105 $ & $ 0.364 \pm 0.104 $ & $ 0.273 \pm 0.111 $ & $ 0.227 \pm 0.104 $ \\
        & \parbox{20mm}{\vspace{0mm}\centering Prediction $\downarrow$} & $ 0.243 \pm 0.209 $ & $ 0.096 \pm 0.006 $ & $ 0.127 \pm 0.030 $ & $ 0.142 \pm 0.020 $ & $ 0.080 \pm 0.017 $ & $ 0.101 \pm 0.018 $ & $ 0.121 \pm 0.029 $ & $ 0.160 \pm 0.138 $ & $ 0.100 \pm 0.046 $ \\
        \midrule
        \multirow{3}{*}{\parbox{10mm}{\vspace{7mm}\centering 2}} & \parbox{20mm}{\vspace{0mm}\centering Marginal $\downarrow$} & $ 0.029 \pm 0.003 $ & $ 0.058 \pm 0.037 $ & $ 0.028 \pm 0.004 $ & $ 0.036 \pm 0.006 $ & $ 0.030 \pm 0.008 $ & $ 0.032 \pm 0.003 $ & $ 0.035 \pm 0.004 $ & $ 0.038 \pm 0.006 $ & $ 0.035 \pm 0.003 $ \\
        & \parbox{20mm}{\vspace{0mm}\centering Classification $\downarrow$} & $ 0.218 \pm 0.053 $ & $ 0.336 \pm 0.084 $ & $ 0.145 \pm 0.093 $ & $ 0.245 \pm 0.130 $ & $ 0.227 \pm 0.119 $ & $ 0.109 \pm 0.062 $ & $ 0.218 \pm 0.133 $ & $ 0.309 \pm 0.060 $ & $ 0.200 \pm 0.117 $ \\
        & \parbox{20mm}{\vspace{0mm}\centering Prediction $\downarrow$} & $ 0.095 \pm 0.029 $ & $ 0.381 \pm 0.292 $ & $ 0.076 \pm 0.013 $ & $ 0.103 \pm 0.054 $ & $ 0.092 \pm 0.018 $ & $ 0.085 \pm 0.030 $ & $ 0.117 \pm 0.100 $ & $ 0.123 \pm 0.028 $ & $ 0.078 \pm 0.020 $ \\
        \midrule
        \multirow{3}{*}{\parbox{10mm}{\vspace{7mm}\centering 3}} & \parbox{20mm}{\vspace{0mm}\centering Marginal $\downarrow$} & $ 0.033 \pm 0.005 $ & $ 0.031 \pm 0.002 $ & $ 0.032 \pm 0.005 $ & $ 0.034 \pm 0.006 $ & $ 0.033 \pm 0.006 $ & $ 0.031 \pm 0.005 $ & $ 0.031 \pm 0.004 $ & $ 0.034 \pm 0.006 $ & $ 0.033 \pm 0.003 $ \\
        & \parbox{20mm}{\vspace{0mm}\centering Classification $\downarrow$} & $ 0.109 \pm 0.089 $ & $ 0.173 \pm 0.145 $ & $ 0.236 \pm 0.034 $ & $ 0.109 \pm 0.084 $ & $ 0.209 \pm 0.084 $ & $ 0.227 \pm 0.076 $ & $ 0.100 \pm 0.067 $ & $ 0.200 \pm 0.124 $ & $ 0.200 \pm 0.084 $ \\
        & \parbox{20mm}{\vspace{0mm}\centering Prediction $\downarrow$} & $ 0.072 \pm 0.014 $ & $ 0.100 \pm 0.079 $ & $ 0.059 \pm 0.008 $ & $ 0.074 \pm 0.018 $ & $ 0.074 \pm 0.018 $ & $ 0.071 \pm 0.028 $ & $ 0.057 \pm 0.006 $ & $ 0.055 \pm 0.008 $ & $ 0.063 \pm 0.014 $ \\
        \bottomrule
        \end{tabular}
    \end{center}
\end{table*}

\end{document}